\theoremstyle{plain}
\newtheorem{theorem}{Theorem}[section]
\newtheorem{lemma}[theorem]{Lemma}
\newtheorem{remark}[theorem]{Remark}
\theoremstyle{definition}
\newtheorem{definition}[theorem]{Definition}
\newtheorem{assumption}[theorem]{Assumption}
\newtheorem{condition}[theorem]{Condition}
\crefname{claim}{claim}{claims}
\crefname{conjecture}{conjecture}{conjectures}
\crefname{assumption}{assumption}{assumptions}
\crefname{condition}{condition}{conditions}
\newlength\aftertitskip     \newlength\beforetitskip
\newlength\interauthorskip  \newlength\aftermaketitskip
\def\maketitle{\par
 \begingroup
   \def\thefootnote{\fnsymbol{footnote}}
   \def\@makefnmark{\hbox to 4pt{$^{\@thefnmark}$\hss}}
   \@maketitle \@thanks
 \endgroup
\setcounter{footnote}{0}
 \let\maketitle\relax \let\@maketitle\relax
 \gdef\@thanks{}\gdef\@author{}\gdef\@title{}\let\thanks\relax}
\def\@startauthor{\noindent \normalsize\bf}
\def\@endauthor{}
\def\@starteditor{\noindent \small {\bf Editor:~}}
\def\@endeditor{\normalsize}
\def\@maketitle{\vbox{\hsize\textwidth
 \linewidth\hsize \vskip \beforetitskip
 {\begin{center} \LARGE\@title \par \end{center}} \vskip \aftertitskip
 {\def\and{\unskip\enspace{\rm and}\enspace}%
  \def\addr{\small\it}%
  \def\email{\hfill\small\tt}%
  \def\name{\normalsize\bf}%
  \def\AND{\@endauthor\rm\hss \vskip \interauthorskip \@startauthor}
  \@startauthor \@author \@endauthor}
}}
\author{\name Zhiyuan Li \email{zhiyuanli@cs.princeton.edu}\\
  \addr{Princeton University}\\ 
  \name  Tianhao Wang \email{tianhao.wang@yale.edu}\\
    \addr{Yale University}\\
\name Sanjeev Arora \email{arora@cs.princeton.edu}\\
  \addr{Princeton University}\\ 
}
\title{What Happens after SGD Reaches Zero Loss? --A Mathematical Framework}
\begin{document}

\maketitle

\begin{abstract}
Understanding the implicit bias of Stochastic Gradient Descent (SGD) is one of the key challenges in deep learning, especially for overparametrized models, where the local minimizers of the loss function $L$ can form a manifold. Intuitively, with a sufficiently small learning rate $\eta$, SGD tracks Gradient Descent (GD) until it gets close to such manifold, where the gradient noise prevents further convergence.  In such regime, \citet{blanc2020implicit} proved that SGD with label noise locally decreases a regularizer-like term, the sharpness of loss, $\tr[\nabla^2 L]$.  
The current paper gives a general framework for such analysis by adapting ideas from~\cite{katzenberger1991solutions}. It allows in principle a complete characterization for the regularization effect of SGD around such manifold---i.e., the "implicit bias"---using a stochastic differential equation (SDE) describing the limiting dynamics of the parameters, which is determined jointly by the loss function and the noise covariance. This yields some new results:  (1) a \emph{global} analysis of the implicit bias valid for $\eta^{-2}$ steps, in contrast to the local analysis of \citet{blanc2020implicit} that is only valid for $\eta^{-1.6}$ steps 
and (2) allowing \emph{arbitrary} noise covariance. 
As an application, we show with arbitrary large initialization, label noise SGD can always escape the kernel regime and only requires $O(\kappa\ln d)$ samples for learning a $\kappa$-sparse overparametrized linear model in $\mathbb{R}^d$~\citep{woodworth2020kernel}, while GD initialized in the kernel regime requires $\Omega(d)$ samples. This upper bound is minimax optimal and improves the previous $\widetilde{O}(\kappa^2)$ upper bound~\citep{haochen2020shape}.
\end{abstract}

\section{Introduction}\label{sec:intro}
The implicit bias underlies the generalization ability of machine learning models trained by stochastic gradient descent (SGD). But it still remains a mystery to mathematically characterize such bias.
We study SGD in the following formulation

\begin{align}\label{eq:SGD}
    x_\eta(k+1) = x_\eta(k) - \eta(\nabla L(x_\eta(k)) 
    + \sqrt{\Xi} \cdot \sigma_{\xi_k}(x_\eta(k)))
\end{align}
where $\eta$ is the learning rate (LR), $L:\RR^D\to\RR$ is the training loss and 
$\sigma(x) = [\sigma_1(x),\sigma_2(x),\ldots,\sigma_{\Xi}(x)]\in
\RR^{D\times\Xi}$ is a deterministic noise function. 
Here $\xi_k$ is sampled uniformly from $\{1,2,\ldots,\Xi\}$ and it satisfies 
$\EE_{\xi_k}[\sigma_{\xi_k}(x)]=0$, $\forall x\in\RR^d$ and $k$.

It is widely believed that large LR (or equivalently, small batch size) helps 
SGD find better minima. 
For instance, some previous works argued that large noise enables SGD to select 
a flatter attraction basin of the loss landscape which potentially benefits 
generalization \citep{li2019towards,jastrzkebski2017three}. 
However, there is also experimental evidence \citep{li2020reconciling} that 
small LR also has equally good implicit bias (albeit with higher training time),
and that is the case studied here.
Presumably low LR precludes SGD jumping between different basins since under 
general conditions this should require  $\Omega(\exp(1/\eta))$ 
steps~\citep{shi2020learning}. 
In other words, there should be a mechanism to reach better generalization while
staying within a single basin. 
For deterministic GD similar mechanisms have been demonstrated in simple 
cases~\citep{soudry2018implicit, lyu2019gradient} and referred to as {\em 
implicit bias of gradient descent}. 
The current paper can be seen as study of implicit bias of Stochastic GD, which 
turns out to be quite different, mathematically.

Recent work \citep{blanc2020implicit} shed light on this direction by analyzing 
effects of stochasticity in the gradient. 
For sufficiently small LR, SGD will reach and be trapped around some manifold of
local minimizers, denoted by $\Gamma$ (see \Cref{fig:two_phase}).  
The effect is shown to be an implicit deterministic drift in a direction 
corresponding to lowering a regularizer-like term along the manifold. 
They showed SGD with label noise locally decreases the sharpness of loss, 
$\tr[\nabla^2 L]$, by $\Theta(\eta^{0.4})$ in $\eta^{-1.6}$ steps.
However, such an analysis is actually {\em local}, since the natural time scale 
of analysis should be $\eta^{-2}$, not $\eta^{-1.6}$. 

The contribution of the current paper is a more general and global analysis of 
this type. 
We introduce a more powerful framework inspired by the classic 
paper~\citep{katzenberger1991solutions}.

\subsection{Intuitive explanation of regularization effect due to SGD}

We start with an intuitive description of the implicit regularization effect 
described in ~\cite{blanc2020implicit}. 
For simplification, we show it for the canonical SDE approximation (See 
Section~\ref{sec:approx_SGD_by_SDE} for more details)
of SGD~\eqref{eq:SGD}~\citep{li2017stochastic,cheng2020stochastic}.
Here $W(t)$ is the standard $\Xi$-dimensional Brownian motion.
The only property about label noise SGD we will use is that the noise covariance
$\sigma\sigma^\top(x) = \nabla ^2L(x)$ for every $x$ in the manifold $\Gamma$ 
(See derivation in Section \ref{sec:implications}).
\begin{align}\label{eq:canonical_SDE_v2}
    \diff \tilde X_\eta(t) = -\eta \nabla L(\tilde X_\eta(t))\diff t + \eta\cdot \sigma(\tilde X_\eta(t))\diff W(t).
\end{align}
\begin{figure}
     \centering
     \begin{subfigure}[b]{0.31\textwidth}
         \centering
         \includegraphics[width=\textwidth]{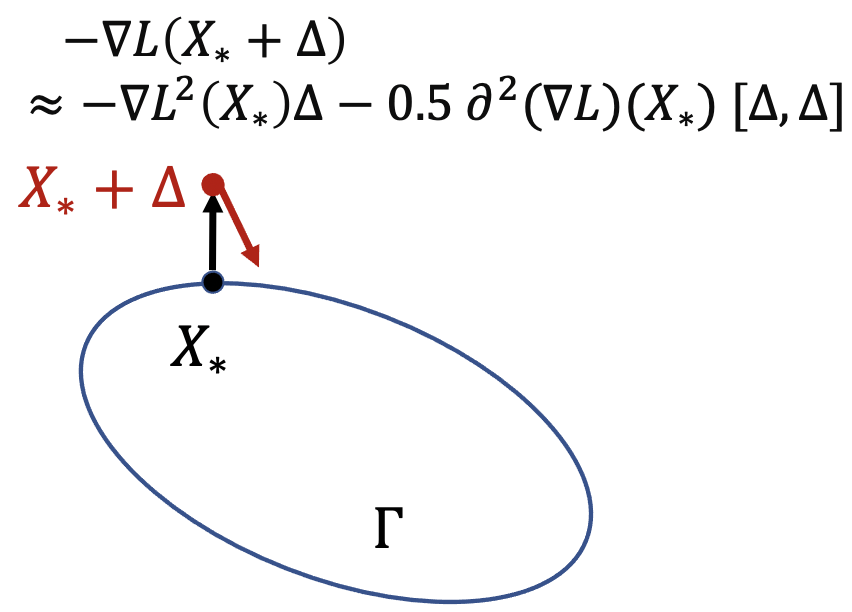}
         \caption{Taylor Expansion of $\nabla L$ }
         \label{fig:taylor}
     \end{subfigure}
     \begin{subfigure}[b]{0.31\textwidth}
         \centering
         \includegraphics[width=\textwidth]{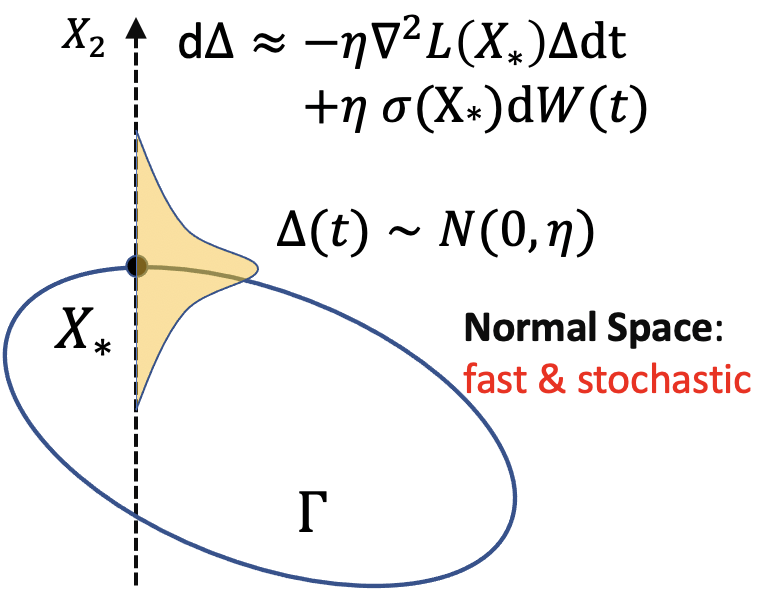}
         \caption{Normal Space Dynamics}
         \label{fig:normal}
     \end{subfigure}
     \begin{subfigure}[b]{0.36\textwidth}
         \centering
         \includegraphics[width=\textwidth]{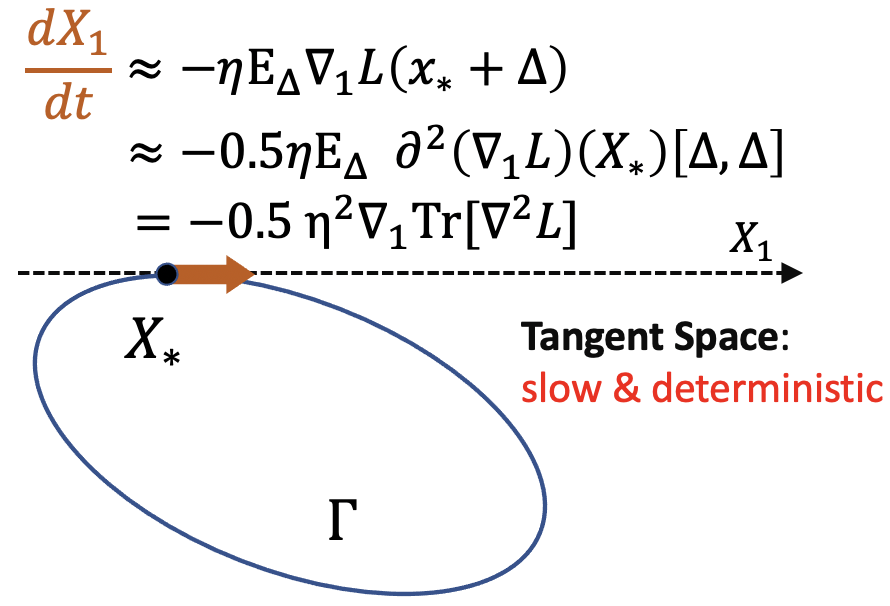}
         \caption{Tangent Space Dynamics}
         \label{fig:tangent}
     \end{subfigure}
        \caption{Illustration for limiting flow in $\RR^2$.  $\Gamma$ is an 1D manifold of minimizers of loss $L$.}
        \label{fig:2d_example}
\end{figure}

Suppose $\tilde X_\eta(0)$ is already close to some local minimizer point $X_*\in\Gamma$. The goal is to show $\tilde X_\eta(t)$ will move in the tangent space and steadily decrease $\tr[\nabla^2L]$. 
At first glance, this seems impossible as the gradient $\nabla L$ vanishes around $\Gamma$, and the noise has zero mean, implying SGD should be like random walk instead of a deterministic drift. 
The key observation of \citet{blanc2020implicit} is that the local dynamics of $\tilde X_\eta(t)$ is completely different in tangent space and normal space  --- the fast random walk in normal space causes $\tilde X_\eta(t)$ to move slowly (with velocity $\Theta(\eta^2)$) but deterministically in certain direction. 
To explain this, letting $\Delta(t) = \tilde X_\eta(t) - X_*$, Taylor expansion of \eqref{eq:canonical_SDE_v2} gives $\diff\Delta(t) \approx -\eta \nabla^2 L(X_*)\Delta \diff t + \eta\sigma(X_*)\diff W(t)$, meaning $\Delta$ is behaving like an Ornstein-Uhlenbeck (OU) process locally in the normal space. 
Its mixing time is $\Theta(\eta^{-1})$ and the stationary distribution is the standard multivariate gaussian in the normal space scaled by $\sqrt{\eta}$ (see \Cref{fig:normal}), because noise covariance $\sigma\sigma^\top=\nabla^2L$.
Though this OU process itself doesn't form any regularization, it activates the second order Taylor expansion of $\nabla L(X_*+\Delta(t))$, i.e., $-\frac{1}{2} \partial^2(\nabla L)(X_*)[\Delta(t),\Delta(t)]$, creating a $\Theta(\eta^2)$ velocity in the tangent space. 
Since there is no push back force in the tangent space, the small velocity accumulates over time, and in a longer time scale of $\Omega(\eta^{-1})$, the time average of the stochastic velocity is roughly the same as the expected velocity when $\Delta$ is sampled from its stationary distribution. 
This simplifies the expression of the velocity in tangent space to $\frac{\eta^2}{2}\nabla_T \tr[\nabla^2 L]$ (see \Cref{fig:tangent}), where $\nabla_T$ means the gradient is only taken in the tangent space.

However, the above approach only gives a local  analysis for $O(\eta^{-1.6})$ time, where the total movement due to implicit regularization is $O(\eta^{2-1.6})=O(\eta^{0.4})$ and thus is negligible  when $\eta\to 0$. 
In order to get a non-trivial limiting dynamics when $\eta\to 0$, a global analysis for $\Omega(\eta^{-2})$ steps is necessary and it cannot be done by Taylor expansion with a single reference point. 
Recent work by~\citet{damian2021label} glues analyses of multiple local phases into a global guarantee that SGD  finds a $(\eps,\gamma)$-stationary point for the regularized loss, but still doesn't show convergence for trajectory when $\eta\to 0$ and cannot deal with general noise types, \emph{e.g.}, noise lying in the tangent space of the manifold. The main technical difficulty here is that it's not clear how to separate the slow and fast dynamics in different spaces and how to only take limit for the slow dynamics, especially when shifting to a new reference point in the Taylor series calculation.

\subsection{Our Approach: Separating the Slow from the Fast}
In this work, we tackle this problem via a different angle. 
First, since the anticipated limiting dynamics is of speed $\Theta(\eta^{2})$, 
we change the time scaling to accelerate~\eqref{eq:canonical_SDE_v2} by 
$\eta^{-2}$ times, which yields
\begin{align}\label{eq:cannonical_SDE_v3}
    \diff X_\eta(t) = -\eta^{-1} \nabla L(X_\eta(t))\diff t 
    + \sigma( X_\eta(t))\diff W(t).
\end{align}
The key idea here is that we only need to track the slow dynamic, or 
equivalently, some \emph{projection} of $X$ onto the manifold $\Gamma$, 
$\Phi(X)$. 
Here $\Phi:\RR^D\to \Gamma$ is some function to be specified and hopefully we 
can simplify the dynamics~\eqref{eq:cannonical_SDE_v3} via choosing suitable 
$\Phi$. 
To track the dynamics of $\Phi(X_\eta)$, we apply Ito's lemma (a.k.a. stochastic
chain rule, see \Cref{lem:ito_lemma}) to \Cref{eq:cannonical_SDE_v3}, which 
yields
\begin{align*}
    \diff \Phi(X_\eta(t)) &=
    -\eta^{-1}\partial\Phi(X_\eta(t))\nabla L(X_\eta(t))\diff t 
    + \partial\Phi(X_\eta(t))\sigma(X_\eta(t)) \diff W(t)\notag\\
    &\qquad + \frac{1}{2} \sum\nolimits_{i,j=1}^D \partial_{ij}\Phi(X_\eta(t))
    (\sigma(X_\eta(t))\sigma(X_\eta(t))^\top)_{ij} \diff t.
\end{align*}
Note the first term $-\eta^{-1}\partial \Phi(X_\eta)\nabla L(X_\eta)$ is going 
to diverge to $\infty$ when $\eta\to 0$, so a natural choice for $\Phi$ is to 
kill the first term. 
Further note $-\partial \Phi(X)\nabla L(X)$ is indeed the directional derivative
of $\Phi$ at $X$ towards $-\nabla L$, killing the first term becomes equivalent 
to making $\Phi$ invariant under Gradient Flow (GF) of $-\nabla L(X)$! 
Thus it suffices to take $\Phi(X)$ to be the limit of GF starting at 
$X$. 
(Formally defined in \Cref{sec:prelim}; see \Cref{lem:phi jacobian grad} for a 
proof of $\partial \Phi(X)\nabla L(X)\equiv 0$.)

Also intuitively $X_\eta$ will be infinitely close to $\Gamma$, i.e., 
$d(X_\eta(t),\Gamma)\to 0$ for any $t>0$ as $\eta\to 0$, so we have 
$\Phi(X_\eta)\approx X_\eta$. 
Thus we can rewrite the above equation as
\begin{align}\label{eq:approx_limit_SDE_v2}
    \diff X_\eta(t) \approx \partial\Phi(X_\eta(t))\sigma(X_\eta(t))\diff W(t) 
    + \frac{1}{2} \sum\nolimits_{i,j=1}^D \partial_{ij}\Phi(X_\eta(t)) 
    (\sigma(X_\eta(t))\sigma(X_\eta(t))^\top)_{ij} \diff t,
\end{align}
and the solution of \eqref{eq:approx_limit_SDE_v2} shall converge to that of the
following (in an intuitive sense):
\begin{align}\label{eq:_limit_SDE_v2}
    \diff X(t) = \partial\Phi(X(t))\sigma(X(t))\diff W(t) + \frac{1}{2} 
    \sum\nolimits_{i,j=1}^D \partial_{ij}\Phi(X(t)) 
    (\sigma(X(t))\sigma(X(t))^\top)_{ij} \diff t,
\end{align}
The above argument for SDE was first formalized and rigorously proved by \citet{katzenberger1991solutions}. It included an  extension of the analysis to the case of asymptotic continuous dynamics (\Cref{thm:katzenberger_theorem_informal}) including SGD with infinitesimal LR, but the result is weaker in this case and no convergence is shown. 
Another obstacle for applying this analysis is that 2nd order partial derivatives of $\Phi$ are unknown. 
We solve these issues in \Cref{sec:limiting_diffusion_of_SGD} and our main result~\Cref{thm:main_result} gives a clean and complete characterization for the implicit bias of SGD with infinitesimal LR in $\Theta(\eta^{-2})$ steps. Finally, our \Cref{crl:label_noise} shows \eqref{eq:_limit_SDE_v2} gives exactly the same regularization as $\tr[\nabla^2 L]$ for label noise SGD.

The \textbf{main contributions} of this paper are summarized as follows.
\begin{enumerate}
    \item In \Cref{sec:limiting_diffusion_of_SGD}, we propose a mathematical framework to study the implicit bias of SGD with infinitesimal LR. Our main theorem (\Cref{thm:main_result}) gives the limiting diffusion of SGD with LR $\eta$ for $\Theta(\eta^{-2})$ steps as $\eta\to 0$ and allows any covariance structure.
	\item In \Cref{sec:implications}, we give limiting dynamics of SGD with isotropic noise and label noise.
    \item In \Cref{sec:label_noise}, we show for any initialization, SGD with label noise achieves $O(\kappa\ln d)$ sample complexity for learning a $\kappa$-sparse overparametrized linear model~\citep{woodworth2020kernel}.
    In this case, the implicit regularizer is a data-dependent weighted $\ell_1$ regularizer,  meaning noise can help reduce the norm and even escape the kernel regime. The $O(\kappa\ln d)$ rate is minimax optimal~\citep{raskutti2012minimax} and improves over  $\tilde{O}(\kappa^2)$ upper bound by \citet{haochen2020shape}. In contrast, vanilla GD requires $\Omega(d)$ samples to generalize in the kernel regime.\\
    For technical contributions, we rigorously prove the convergence of GF for OLM (\Cref{lem:olm_neighbourhood}), unlike many existing implicit bias analyses which have to assume the convergence. 
    We also prove the convergence of limiting flow to the global minimizer of the regularizer (\Cref{lem:gf_optimal}) by a trajectory analysis via our framework. It cannot be proved by previous results~\citep{blanc2020implicit,damian2021label}, as they only assert convergence to stationary point in the best case. 
\end{enumerate}

\section{Related Works}

\paragraph{Loss Landscape of Overparametrized Models} A phenomenon known as mode 
connectivity has been observed that local minimizers of the loss function of a 
neural network are connected by simple paths~\citep{freeman2016topology, 
garipov2018loss,draxler2018essentially}, especially for overparametrized 
models~\citep{venturi2018spurious, liang2018understanding, nguyen2018loss, 
nguyen2019connected}. 
Later this phenomanon is explained under generic assumptions 
by~\citet{kuditipudi2019explaining}. 
Moreover, it has been proved that the local minimizers of an overparametrized 
network form a low-dimensional manifold~\citep{cooper2018loss,
cooper2020critical} which possibly has many components. 
\citet{fehrman2020convergence} proved the convergence rate of SGD to the 
manifold of local minimizers starting in a small neighborhood.

\paragraph{Implicit Bias in Overparametrized Models} Algorithmic regularization has
received great attention in the community~\citep{arora2018optimization,
arora2019implicit,gunasekar2018implicit,gunasekar2018characterizing,
gunasekar2018implicit,soudry2018implicit,li2018algorithmic,li2020towards}. 
In particular, the SGD noise is widely believed to be a promising candidate for 
explaining the generalization ability of modern neural 
networks~\citep{lecun2012efficient,keskar2016large,hoffer2017train,
zhu2018anisotropic,li2019generalization}. 
Beyond the size of noise~\citep{li2019towards,jastrzkebski2017three}, the shape 
and class of the noise also play an important role~\citep{wen2019interplay,
wu2020noisy}. 
It is shown by~\citet{haochen2020shape} that parameter-dependent noise will bias
SGD towards a low-complexity local minimizer.
Similar implicit bias has also been studied for overparametrized nonlinear 
statistical models by~\citet{fan2020understanding}.
Several existing works~\citep{vaskevicius2019implicit,woodworth2020kernel,
zhao2019implicit} have shown that for the quadratically overparametrized linear 
model, i.e., $w=u^{\odot 2}-v^{\odot 2}$ or $w=u\odot v$, gradient descent/flow 
from small initialization implicitly regularizes $\ell_1$ norm and provides 
better generalization when the groundtruth is sparse. 
This is in sharp contrast to the kernel regime, where neural networks trained by
gradient descent behaves like kernel methods~\citep{daniely2017sgd,
jacot2018neural,yang2019scaling}. 
This allows one to prove convergence to zero loss solutions in overparametrized 
settings~\citep{li2018learning,du2018gradient,allen2019convergence,
allen2019learning,du2019gradient,zou2020gradient}, where the learnt function 
minimizes the corresponding RKHS norm~\citep{arora2019fine,chizat2018lazy}.

\paragraph{Modelling Stochastic First-Order Methods with It\^{o} SDE} Apart from 
the discrete-time analysis, another popular approach to study SGD is through the
continuous-time lens using SDE~\citep{li2017stochastic,li2019stochastic,
cheng2020stochastic}. 
Such an approach is often more elegant and can provide fruitful insights like 
the linear scaling rule~\citep{krizhevsky2014one,goyal2017accurate} and the 
intrinsic learning rate~\citep{li2020reconciling}. 
A recent work by~\citet{li2021validity} justifies such SDE approximation. 
\citet{xie2020diffusion} gave a heuristic derivation explaining why SGD favors 
flat minima with SDE approximation.
\citet{wojtowytsch2021stochastic} showed that the invariant distribution of the 
canonical SDE approximation of SGD will collapse to some manifold of minimizers 
and in particular, favors flat minima.
By approximating SGD using a SDE with slightly modified covariance for the 
overparametrized linear model, \citet{pesme2021implicit} relates the strength of
implicit regularization to training speed.

\section{Notation and Preliminaries}\label{sec:prelim}
Given loss $L$, the GF governed by $L$ can be 
described through a mapping $\phi:\RR^D\times [0,\infty)\to \RR^D$ satisfying 
$\phi(x,t) = x - \int_{0}^t \nabla L(\phi(x,s))\diff s$.
We further denote the limiting mapping $\Phi(x)=\lim_{t\to\infty} \phi(x,t)$ 
whenever the limit exists.
We denote $\ind_{\xi}\in\RR^{\Xi}$ as the one-hot vector where $\xi$-th 
coordinate is $1$, and $\ind$ the all 1 vector.
See Appendix~\ref{sec:appendix_preliminary} for a complete clarification of 
notations.

\subsection{Manifold of local minimizers}

\begin{assumption}\label{assump:manifold}
Assume that the loss $L:\RR^D\to\RR$ is a $\cC^3$ function, and that $\Gamma$ is a $(D-M)$-dimensional $\cC^2$-submanifold of $\RR^D$ for some integer $0\leq M\leq D$, where for all $x\in\Gamma$, $x$ is a local minimizer of $L$ and $\rank(\nabla^2L(x))=M$.
\end{assumption}

\begin{assumption}\label{assump:neighborhood}
Assume that $U$ is an open neighborhood of $\Gamma$  satisfying that gradient flow starting in $U$ converges to some point in $\Gamma$, i.e., $\forall x\in U$, $\Phi(x)\in \Gamma$.
(Then $\Phi$ is $\cC^2$ on $U$ by \citet{Falconer_1983}.)
\end{assumption}

{\bf When does such a manifold exist?} The vast overparametrization in modern deep learning is a major reason for the set of global minimizers to appear as a Riemannian manifold (possibly with multiple connected components), instead of isolated ones. 
Suppose all global minimizers interpolate the training dataset, i.e., $\forall x\in\RR^D$, $L(x)=\min_{x'\in\RR^D} L(x')$ implies $f_i(x)=y_i$ for all $i\in [n]$, then by preimage theorem~\citep{banyaga2013lectures}, the manifold $\Gamma:=\{x\in\RR^D\mid f_i(x)=y_i,\forall i\in[n]\}$ is of dimension $D-n$ if the Jacobian matrix $[\nabla f_1(x),\ldots,\nabla f_n(x)]$ has rank $n$ for all $x\in\Gamma$. Note this condition is equivalent to that NTK at $x$ has full rank, which is very common in literature.

\section{Limiting Diffusion of SGD}\label{sec:limiting_diffusion_of_SGD}

In \Cref{sec:recap_katzenberger} we first recap the main result of \citet{katzenberger1991solutions}. In \Cref{sec:computing_second_derivative} we derive the closed-form expressions of $\partial \Phi$ and $\partial^2 \Phi$. 
We present our main result in \Cref{sec:main_result}.
We remark that sometimes we omit the dependency on $t$ to make things clearer.

\subsection{Recap of Katzenberger's Theorem}\label{sec:recap_katzenberger}
Let $\{A_n\}_{n\geq1}$ be a sequence of integrators, where each $A_n:\RR\to \RR$ is a non-decreasing function with $A_n(0)=0$. 
Let $\{Z_n\}_{n\geq 1}$ be a sequence of $\RR^{|\Xi|}$-valued stochastic processes defined on $\RR$. 
Given loss function $L$ and noise covariance function $\sigma$, 
we consider the following stochastic process:
\begin{align}\label{eq:katzenberger_process}
    X_n(t) = X(0) + \int_0^t \sigma(X_n(s)\diff Z_n(s) + \int_0^t 
	- \nabla L(X_n(s)) \diff A_n(s)
\end{align}

In particular, when the integrator sequence $\{A_n\}_{n\geq 1}$ increases infinitely fast, meaning that $\forall\epsilon>0,\inf_{t\geq 0} (A_n(t+\epsilon) - A_n(t))\to\infty$ as $n\to\infty$, we call \eqref{eq:katzenberger_process} a \emph{Katzenberger process}. 

One difficulty for directly studying the limiting dynamics of $X_n(t)$ is that 
the point-wise limit as $n\to \infty$ become discontinuous at $t=0$ if 
$X(0)\notin \Gamma$. 
The reason is that clearly $\lim_{n\to\infty}X_n(0) =X(0)$, but for any $t>0$, 
since $\{A_n\}_{n\geq 1}$ increases infinitely fast, one can prove 
$\lim_{n\to \infty}X_n(t)\in \Gamma$! 
To circumvent this issue, we consider $Y_n(t) = X_n(t) - \phi(X(0),A_n(t)) 
+ \Phi(X(0))$. 
Then for each $n\geq 1$, we have $Y_n(0)=\Phi(X(0))$ and
$\lim_{n\to \infty} Y_n(t) = \lim_{n\to\infty} X_n(t)$. 
Thus $Y_n(t)$ has the same limit on $(0,\infty)$ as $X_n(t)$, but the limit of 
the former is further continuous at $t=0$.
\begin{theorem}[Informal version of \Cref{thm:limit_diffusion_main}, 
\citealt{katzenberger1991solutions}]\label{thm:katzenberger_theorem_informal}
Suppose the loss $L$, manifold $\Gamma$ and neighborhood $U$ satisfies 
\Cref{assump:manifold,assump:neighborhood}.    
Let $\{X_n\}_{n\geq 1}$ be a sequence of Katzenberger process with 
$\{A_n\}_{n\geq1},\{Z_n\}_{n\geq 1}$.
Let $Y_n(t)=X_n(t)-\phi(X(0),A_n(t))+\Phi(X_0)$.
Under technical assumptions, it holds that if $(Y_n, Z_n)$ converges to some 
$(Y,W)$ in distribution, where $\{W(t)\}_{t\geq 0}$ is the standard Brownian 
motion, then $Y$ stays on $\Gamma$ and admits
\begin{align}\label{eq:katzenberger_general_v2}
    Y(t) &= Y(0) + \int_0^{t} \partial\Phi(Y)\sigma(Y)\diff W(s) 
     + \frac{1}{2} \sum\nolimits_{i,j=1}^D \int_0^{t} \partial_{ij}\Phi(Y) 
	 (\sigma(Y)\sigma(Y)^\top)_{ij}\diff s.
\end{align}
\end{theorem}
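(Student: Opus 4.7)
The plan is to bootstrap from the finite-$n$ Katzenberger SDE~\eqref{eq:katzenberger_process} to the stated limit by composing $X_n$ with the map $\Phi$, which is designed precisely to annihilate the divergent drift. Since $\Phi$ is the limit of gradient flow, $\Phi(\phi(x,s))=\Phi(x)$ for every $x\in U$ and $s\geq 0$; differentiating at $s=0$ yields the key identity $\partial\Phi(x)\nabla L(x)\equiv 0$ on $U$ (this is exactly \Cref{lem:phi jacobian grad} referenced in the intro). By \Cref{assump:neighborhood} and Falconer's result, $\Phi$ is $\cC^2$ on $U$, so It\^{o}'s lemma applies to $\Phi(X_n)$. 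Applying it to~\eqref{eq:katzenberger_process} gives
\begin{align*}
\Phi(X_n(t)) &= \Phi(X_n(0)) + \int_0^{t}\partial\Phi(X_n)\sigma(X_n)\,\diff Z_n(s)\\
&\qquad - \int_0^{t}\partial\Phi(X_n)\nabla L(X_n)\,\diff A_n(s) + \tfrac{1}{2}\sum\nolimits_{i,j}\int_0^{t}\partial_{ij}\Phi(X_n)(\sigma\sigma^\top)_{ij}(X_n)\,\diff[Z_n]_{s},
\end{align*}
and the $\diff A_n$-term vanishes identically. This is the crucial step that removes the $\eta^{-1}$-scale blow-up and leaves a well-behaved equation which, structurally, already matches~\eqref{eq:katzenberger_general_v2}.

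Next I would connect $\Phi(X_n)$ to $Y_n$. Because $\{A_n\}$ increases infinitely fast, for every $t>0$, $\phi(X(0),A_n(t))\to\Phi(X_0)$, so the deterministic correction in the definition of $Y_n$ kills the initial transient; furthermore a Katzenberger-type entrance argument (using infinite-speed integrators together with local convergence of GF to $\Gamma$ from \Cref{assump:neighborhood}, plus a stopping time keeping $X_n$ in a compact subset of $U$) shows $d(X_n(t),\Gamma)\to 0$ uniformly on compacts of $(0,\infty)$, hence $\Phi(X_n(t))-X_n(t)\to 0$. Combining these two facts, $\Phi(X_n(t))-Y_n(t)\to 0$ in probability for every $t>0$, and $\Phi(X_n(0))=Y_n(0)=\Phi(X_0)$ by definition. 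Thus the It\^{o} expansion above, read off for $Y_n$ up to vanishing error, gives the pre-limit analogue of~\eqref{eq:katzenberger_general_v2}.

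Finally, I would pass to the limit using the Kurtz--Protter theory of weak convergence of stochastic integrals. The hypothesis $(Y_n,Z_n)\Rightarrow(Y,W)$ together with uniform tightness of the sequence of integrators $\{Z_n\}$ (a technical assumption that is folded into ``technical assumptions'' in the statement) allows one to replace the integrals $\int \partial\Phi(X_n)\sigma(X_n)\,\diff Z_n$ and $\int \partial_{ij}\Phi(X_n)(\sigma\sigma^\top)_{ij}\,\diff[Z_n]_s$ by $\int \partial\Phi(Y)\sigma(Y)\,\diff W$ and $\int \partial_{ij}\Phi(Y)(\sigma\sigma^\top)_{ij}\,\diff s$ respectively, using continuity of $\partial\Phi,\partial^2\Phi,\sigma$ on $U$ and the fact that $[W]_s=sI$. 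That $Y$ stays on $\Gamma$ follows from $\Phi(X_n(t))\to Y(t)$ together with $d(X_n(t),\Gamma)\to 0$, so that $Y(t)\in\Gamma$ for all $t\geq 0$ (including $t=0$, since $Y(0)=\Phi(X_0)\in\Gamma$). The main obstacles are (i) establishing the entrance/stability estimate that keeps $X_n$ in $U$ long enough for $\Phi$ to be well-defined along the whole trajectory, which requires a careful Lyapunov argument combining the GF contraction near $\Gamma$ with tail bounds on $\int \sigma\,\diff Z_n$, and (ii) the uniform tightness/good-sequence condition needed to invoke weak convergence of stochastic integrals, which is where the unstated ``technical assumptions'' in~\Cref{thm:katzenberger_theorem_informal} do their work.
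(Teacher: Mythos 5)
Your sketch reproduces the intuitive derivation in Section 1.2 of the paper and is consistent with Katzenberger's original argument; the paper itself does not reprove this theorem but cites \citet{katzenberger1991solutions} (Theorem 6.3), so there is no in-paper proof to compare against beyond that motivating sketch. Your three main moves — apply It\^{o}'s lemma to $\Phi(X_n)$, kill the $\diff A_n$-drift via $\partial\Phi\nabla L\equiv 0$ on $U$ (\Cref{lem:phi jacobian grad}), and pass to the limit via weak convergence of stochastic integrals after showing $\Phi(X_n)-Y_n\to 0$ — are exactly the right skeleton, and you correctly flag the two genuine obstacles (stopping times to keep $X_n$ in a compact $K\subset U$ so that $\Phi\in\cC^2$ along the trajectory, and a Kurtz--Protter/uniform-tightness condition for the integrals).

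Two points worth sharpening, because they are precisely where the informal statement silently hides work. First, $Z_n$ is a pure-jump semimartingale, so the form of It\^{o}'s lemma you wrote (with a clean $\tfrac12\partial^2\Phi\,\diff[Z_n]$ term and no further correction) is only the \emph{continuous-martingale} version; for jump integrators the expansion carries an extra remainder $\sum_{s\le t}\bigl(\Phi(X_n(s))-\Phi(X_n(s-))-\partial\Phi(X_n(s-))\Delta X_n(s)-\tfrac12\partial^2\Phi(X_n(s-))[\Delta X_n(s),\Delta X_n(s)]\bigr)$, and Condition~\ref{cond:5_1} (vanishing jump sizes) is exactly what makes this remainder negligible. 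Second, the paper is explicit (Section 4.1, the paragraph following \Cref{thm:katzenberger_theorem_informal}, and \Cref{thm:limit_diffusion_main}) that the informal statement glosses over the stopped-process issue: one only obtains weak limits of $Y_n^{\mu_n(K)}$, and the unstopped convergence asserted informally actually requires extra control on $\mu_n(K)$, which motivates the authors' own \Cref{thm:v2 limit diffusion of SGD}. Your ``entrance/stability estimate'' item (i) gestures at this but as phrased it sounds like a pathwise Lyapunov estimate on $X_n$, whereas what Katzenberger actually uses is relative compactness of the triples $(Y_n^{\mu_n(K)},Z_n,\mu_n(K))$ plus characterization of limit points — a distributional, not a pathwise, argument.
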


Indeed, SGD \eqref{eq:SGD} can be rewritten into a Katzenberger process as  in the following lemma.
\begin{restatable}{lemma}{lemsgdasasypmtoticallycontinuousdynamic}\label{lem:sgd_as_asymptotically_continuous_dynamic}
	Let $\{\eta_n\}_{n=1}^\infty$ be any positive sequence with 
	$\lim_{n\to\infty}\eta_n =0$, $A_n(t) =\eta_n\lfloor t/\eta^2_n\rfloor$, and
	$Z_n(t) = \eta_n\sum_{k=1}^{\lfloor t/\eta^2_n\rfloor} \sqrt{\Xi}
	(\ind_{\xi_k} - \frac{1}{\Xi}\ind)$, where $\xi_1,\xi_2,\ldots \overset{\iid}{\sim} \textrm{Unif}([\Xi])$. 
	Then with the same initialization $X_n(0) = x_{\eta_n}(0)\equiv X(0)$, $X_n(k\eta_n^2)$ defined by $\eqref{eq:katzenberger_process}$ is a Katzenberger process and is equal to $x_{\eta_n}(k)$ defined in~\eqref{eq:SGD} with LR equal to $\eta_n$ for all $k\geq 1$. 
	Moreover, the counterpart of \eqref{eq:katzenberger_general_v2} is 
	\begin{align}\label{eq:limiting_diffusion_sgd1}
		Y(t)=\Phi(X(0))+\int_{0}^{t}\partial\Phi(Y)\sigma(Y)\diff W(s)+\frac{1}{2}\int_{0}^{t}\partial^2\Phi(Y)[\Sigma(Y)]\diff s,
	\end{align}
	where $\Sigma\equiv \sigma\sigma^\top$ and $\{W(t)\}_{t\geq 0}$ is a $\Xi$-dimensional standard Brownian motion.
\end{restatable}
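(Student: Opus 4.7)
The plan is to break the statement into three parts and verify them in order: first, that $\{X_n\}$ with the stated $A_n,Z_n$ qualifies as a Katzenberger process; second, that evaluating $X_n$ at the grid points $t=k\eta_n^2$ reproduces the SGD iterates $x_{\eta_n}(k)$; third, that invoking \Cref{thm:katzenberger_theorem_informal} and a change of Brownian motion reduces \eqref{eq:katzenberger_general_v2} to \eqref{eq:limiting_diffusion_sgd1}.

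For the first part, $A_n$ is non-decreasing with $A_n(0)=0$, and the bound $A_n(t+\varepsilon)-A_n(t)\ge \eta_n(\lfloor\varepsilon/\eta_n^2\rfloor-1)\ge \varepsilon/\eta_n-2\eta_n$ shows that $\inf_{t\ge 0}(A_n(t+\varepsilon)-A_n(t))\to\infty$ for every $\varepsilon>0$, so $\{A_n\}$ increases infinitely fast. For the second part, I would observe that $A_n$ and $Z_n$ are pure-jump processes supported on the grid $\{k\eta_n^2\}_{k\ge 0}$, so the integrals in \eqref{eq:katzenberger_process} collapse to finite sums and the increment of $X_n$ across $t=k\eta_n^2$ equals $-\eta_n\nabla L(X_n((k\eta_n^2)^-)) + \eta_n\sqrt{\Xi}\,\sigma(X_n((k\eta_n^2)^-))(\ind_{\xi_k}-\tfrac{1}{\Xi}\ind)$. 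The zero-mean assumption $\EE_{\xi_k}[\sigma_{\xi_k}(x)]=0$ translates into the identity $\sigma(x)\ind=0$, which cancels the $\tfrac{1}{\Xi}\ind$ term and identifies the increment with the SGD update $-\eta_n(\nabla L+\sqrt{\Xi}\,\sigma_{\xi_k})$. An induction on $k$ then yields $X_n(k\eta_n^2)=x_{\eta_n}(k)$.

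For the third part, Donsker's invariance principle applied to the bounded, centred, i.i.d. increments $\sqrt{\Xi}(\ind_{\xi_k}-\tfrac{1}{\Xi}\ind)$ gives $Z_n\Rightarrow B$, where $B$ is a $\Xi$-dimensional Brownian motion whose diffusion matrix is the one-step covariance $C:=I-\tfrac{1}{\Xi}\ind\ind^\top$. Writing $B=C^{1/2}W$ in law for a standard $\Xi$-dimensional $W$ supplies the driving process required by \Cref{thm:katzenberger_theorem_informal}; joint tightness of $(Y_n,Z_n)$ and the existence of a limit $Y$ then follow from the Katzenberger framework as cited. Substituting $B=C^{1/2}W$ into \eqref{eq:katzenberger_general_v2}, the martingale term becomes $\int_0^t\partial\Phi(Y)\sigma(Y)C^{1/2}\,dW(s)$, whose quadratic variation is $\partial\Phi(Y)\sigma(Y)C\sigma(Y)^\top\partial\Phi(Y)^\top\,ds$. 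Because $\sigma(x)\ind=0$, we have $\sigma(x)C\sigma(x)^\top=\sigma(x)\sigma(x)^\top=\Sigma(x)$, so this martingale has the same law as $\int_0^t\partial\Phi(Y)\sigma(Y)\,dW(s)$ with $W$ a standard Brownian motion, and the same identity collapses the drift correction $\tfrac12\sum_{ij}\partial_{ij}\Phi(Y)(\sigma C\sigma^\top)_{ij}$ to $\tfrac12\partial^2\Phi(Y)[\Sigma(Y)]$, producing \eqref{eq:limiting_diffusion_sgd1}.

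The main obstacle is the last step: $Z_n$ does not converge to a standard Brownian motion but to one with singular covariance $C$, so a naive application of Katzenberger's theorem produces a diffusion whose coefficients appear different from those in \eqref{eq:limiting_diffusion_sgd1}. The resolution is the algebraic identity $\sigma C\sigma^\top=\sigma\sigma^\top$, which is exactly a restatement of the zero-mean condition on $\sigma_{\xi_k}$; once this identity is in hand, both the martingale and Ito-correction terms collapse to the clean form of \eqref{eq:limiting_diffusion_sgd1}, while tightness and continuous-mapping arguments are routine bookkeeping within the Katzenberger framework.
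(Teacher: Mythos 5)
Your proof is correct and follows the paper's two-part structure: (i) an algebraic check that the Katzenberger process, read along the grid $t=k\eta_n^2$, reproduces the SGD recursion, and (ii) an application of the functional CLT to pass to the limiting diffusion. The meaningful difference is in step (ii), where you are more careful than the published proof. The paper asserts that $Z_n$ ``will converge in distribution to a Brownian motion'' and that one may then ``substitute $Z$ with the standard Brownian motion $W$,'' but $Z_n(t)$ lives entirely in the hyperplane $\{z:\ind^\top z=0\}$, so its limit is a degenerate Brownian motion with covariance $C=I-\Xi^{-1}\ind\ind^\top$, not a standard one. You make this explicit and supply the algebraic identity $\sigma C\sigma^\top=\sigma\sigma^\top$ (equivalently $\sigma\ind=0$, i.e.\ the zero-mean hypothesis on $\sigma_\xi$) that shows the degeneracy is harmless: both the martingale quadratic variation and the It\^o correction term collapse to the same expressions one would get from a standard Brownian motion, yielding exactly \eqref{eq:limiting_diffusion_sgd1}. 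This is the right resolution and closes a small gap the paper leaves implicit. One minor caveat, shared with the paper's own computation: the Katzenberger increment is $-\eta_n\nabla L+\eta_n\sqrt{\Xi}\sigma_{\xi}$ while \eqref{eq:SGD} has $-\eta_n\nabla L-\eta_n\sqrt{\Xi}\sigma_{\xi_k}$, so the pathwise equality of $X_n(k\eta_n^2)$ and $x_{\eta_n}(k)$ holds only after flipping the sign of $\sigma$; this is harmless since every hypothesis on $\sigma$ is sign-symmetric, but it is worth flagging when reading the statement literally.
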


However, there are two obstacles preventing us from directly applying \Cref{thm:katzenberger_theorem_informal} to SGD. 
First, the stochastic integral in \eqref{eq:limiting_diffusion_sgd1} depends on the derivatives of $\Phi$, $\partial\Phi$ and $\partial_{ij}\Phi$, but \citet{katzenberger1991solutions} did not give their dependency on loss $L$. 
To resolve this, we explicitly calculate the derivatives of $\Phi$ on $\Gamma$ in terms of the derivatives of $L$ in \Cref{sec:computing_second_derivative}. 

The second difficulty comes from the convergence of $(Y_n,Z_n)$ which we assume as granted for brevity in \Cref{thm:katzenberger_theorem_informal}. 
In fact, the full version of \Cref{thm:katzenberger_theorem_informal} (see \Cref{thm:limit_diffusion_main}) concerns the stopped version of $Y_n$ with respect to some compact $K\subset U$, i.e., $Y_n^{\mu_n(K)}(t) = Y_n(t\wedge \mu_n(K))$ where $\mu_n(K)$ is the stopping time of $Y_n$ leaving $K$. 
As noted in \citet{katzenberger1991solutions}, we need the convergence of $\mu_n(K)$ for $Y_n^{\mu_n(K)}$ to converge, which is a strong condition and difficult to prove in our cases. 
We circumvent this issue by proving \Cref{thm:v2 limit diffusion of SGD}, a user-friendly interface for the original theorem in \citet{katzenberger1991solutions}, and it only requires the information about the limiting diffusion. 
Building upon these, we present our final result as \Cref{thm:main_result}.

\subsection{Closed-Form expression of the limiting diffusion}\label{sec:computing_second_derivative}
We can calculate the derivatives of $\Phi$ by relating to those of $L$.
Here the key observation is the invariance of $\Phi$ along the trajectory of GF. The proofs of this section are deferred into \Cref{sec:explicit_formula_proof}.

\begin{restatable}{lemma}{lemjacobian}\label{lem:jacobian}
For any $x\in\Gamma$, $\partial \Phi(x)$ is the orthogonal projection matrix onto tangent space $ T_x(\Gamma)$.
\end{restatable}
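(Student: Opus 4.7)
\textbf{Proof plan for \Cref{lem:jacobian}.} Fix $x\in\Gamma$. The plan is to determine $\partial\Phi(x)$ separately on the tangent space $T_x\Gamma$ and the normal space $N_x\Gamma$, using two invariance properties of $\Phi$: it is constant along gradient flow trajectories, and it restricts to the identity on $\Gamma$.

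First I would handle the tangent directions. Since every point of $\Gamma$ is a fixed point of the gradient flow, $\Phi$ restricts to the identity on $\Gamma$. For any $v\in T_x\Gamma$, pick a $\cC^1$ curve $\gamma:(-\epsilon,\epsilon)\to\Gamma$ with $\gamma(0)=x$ and $\dot\gamma(0)=v$, so that $\Phi(\gamma(s))=\gamma(s)$ for all $s$. Differentiating at $s=0$ gives $\partial\Phi(x)v=v$, so $\partial\Phi(x)$ acts as the identity on $T_x\Gamma$.

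Next I would handle the normal directions via a second-order argument. The excerpt promises $\partial\Phi(y)\nabla L(y)\equiv 0$ on $U$ (the announced \Cref{lem:phi jacobian grad}), since $\Phi$ is preserved along gradient flow lines. Differentiating this identity at $y=x$ in an arbitrary direction $v\in\RR^D$ yields
\begin{equation*}
\partial^2\Phi(x)[v,\nabla L(x)] \;+\; \partial\Phi(x)\,\nabla^2 L(x)\,v \;=\; 0.
\end{equation*}
Since $x\in\Gamma$ implies $\nabla L(x)=0$, the first term vanishes, and I conclude $\partial\Phi(x)\,\nabla^2L(x)=0$, i.e.\ $\partial\Phi(x)$ annihilates the range of $\nabla^2 L(x)$.

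The final step is to identify the range of $\nabla^2 L(x)$ with the normal space $N_x\Gamma$. I would show $T_x\Gamma=\ker\nabla^2L(x)$ as follows: for $v\in T_x\Gamma$, the curve $\gamma$ above satisfies $\nabla L(\gamma(s))\equiv 0$, so differentiating yields $\nabla^2 L(x)v=0$; hence $T_x\Gamma\subseteq\ker\nabla^2 L(x)$. By \Cref{assump:manifold} both subspaces have dimension $D-M$, so they coincide. Because $\nabla^2 L(x)$ is symmetric, its range is the orthogonal complement of its kernel, i.e.\ $\mathrm{Range}(\nabla^2 L(x))=N_x\Gamma$. Combining with the tangent-space computation, $\partial\Phi(x)$ is the identity on $T_x\Gamma$ and zero on $N_x\Gamma$, which is exactly the orthogonal projection onto $T_x\Gamma$.

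The proof is essentially mechanical once one invokes $\partial\Phi\cdot\nabla L\equiv 0$; the only mild subtlety is verifying $T_x\Gamma=\ker\nabla^2 L(x)$, which I expect to be the main conceptual step — but it follows cleanly from the dimension match between the manifold and the Hessian kernel guaranteed by \Cref{assump:manifold}. Differentiability of $\Phi$ at $x$ (required to make the derivative manipulations legal) is provided by \Cref{assump:neighborhood} via the cited Falconer result.
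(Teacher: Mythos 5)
Your proof is correct. The tangent-space half is identical to the paper's: parametrize a curve in $\Gamma$ through $x$, use $\Phi|_\Gamma=\mathrm{id}$, differentiate. The normal-space half is a genuinely different (and arguably cleaner) route. The paper proves $\partial\Phi(x)u=0$ for $u\in T_x^\perp(\Gamma)$ by moving off the manifold along the curve $t\mapsto x+t\nabla^2 L(x)^\dagger u$, observing $\nabla L = tu + o(t)$ there, dividing $\partial\Phi\cdot\nabla L=0$ by $t$, and taking $t\to 0$ using continuity of $\partial\Phi$. You instead differentiate the identity $\partial\Phi(y)\nabla L(y)\equiv 0$ once more in $y$ and evaluate at $x\in\Gamma$, where $\nabla L(x)=0$ kills the $\partial^2\Phi$ term and leaves $\partial\Phi(x)\nabla^2L(x)=0$. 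Your approach needs $\Phi\in\cC^2$ rather than $\cC^1$, but the paper has established $\cC^2$ regularity anyway (via the Falconer citation), so this is fine; what it buys you is avoiding the choice of the clever approach direction $\nabla^2L^\dagger u$ and the $o(t)$ bookkeeping. You also make explicit the identification $\mathrm{Range}(\nabla^2 L(x)) = T_x^\perp(\Gamma)$ via the dimension count $D-M$ plus $T_x\Gamma\subseteq\ker\nabla^2L(x)$, which the paper asserts in passing ("the assumption that $\nabla^2L(x)$ is full-rank when restricted on $T_x^\perp(\Gamma)$") and proves separately as a standalone lemma; your justification is exactly that lemma's argument, so no gap there.
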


To express the second-order derivatives compactly, we introduce the notion of \emph{Lyapunov operator}. 

\begin{definition}[Lyapunov Operator]\label{def:lyapunov_operator}
   For a symmetric matrix $H$, we define $W_H=\{\Sigma \in \RR^{D\times D} \mid \Sigma = \Sigma^\top, H H^\dagger \Sigma = \Sigma =\Sigma  H H^\dagger \}$ and  \emph{Lyapunov Operator} $\mathcal{L}_H:W_H\to W_H$ as $\mathcal{L}_H(\Sigma) = H^\top \Sigma + \Sigma H$. 
   It's easy to verify $\mathcal{L}^{-1}_H$ is well-defined on $W_H$.
\end{definition}

\begin{lemma}\label{lem:drift_term}
Let $x$ be any point in $\Gamma$ and $\Sigma=\Sigma(x)=\sigma\sigma^\top(x) 
\in\RR^{D\times D}$ be the noise covariance at $x$\footnote{For notational 
convenience, we drop dependency on $x$.}. 
Then $\Sigma$ can be decomposed as
$\Sigma =  \Sigma_\parallel + \Sigma_\perp + \Sigma_{\parallel,\perp} + 
\Sigma_{\perp,\parallel}$, where $\Sigma_\parallel:= \partial \Phi \Sigma 
\partial\Phi$, $\Sigma_\perp:= (I_D-\partial \Phi) \Sigma (I_D-\partial \Phi)$ 
and $\Sigma_{\parallel,\perp}= \Sigma_{\perp,\parallel}^\top = \partial\Phi 
\Sigma (I_D-\partial \Phi)$ are the noise covariance in tangent space, normal 
space and across both spaces, respectively. 
Then it holds that
\begin{align}\label{eq:compute_second_derivative_phi}
    \! \! \! \partial^2 \Phi[\Sigma] =\! 
    -(\nabla^2L)^\dagger \partial^2 (\nabla L)\left[\Sigma_{\parallel}\right]
    -\!\partial \Phi \partial^2 (\nabla L)\left[\mathcal{L}^{-1}_{\nabla^2 L}
	(\Sigma_\perp) \right] 
    -\! 2 \partial \Phi\partial^2 (\nabla L)\left[(\nabla^2L)^\dagger 
	\Sigma_{\perp,\parallel}\right].  
\end{align}
\end{lemma}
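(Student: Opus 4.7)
My plan is to derive a Lyapunov-type equation for each coordinate $\partial^2\Phi^k$ at $x\in\Gamma$, solve it block-by-block in the tangent/normal decomposition, and then contract with $\Sigma$. I start from $\partial\Phi(y)\,\nabla L(y)\equiv 0$ on $U$, which holds because $\Phi$ is invariant under gradient flow (this identity was already invoked in the introduction). Differentiating twice in $y$ and evaluating at $x\in\Gamma$, where $\nabla L(x)=0$, yields the matrix identity
\begin{align*}
\nabla^2L\,\partial^2\Phi^k+\partial^2\Phi^k\,\nabla^2L=-B^k,\qquad B^k_{jl}:=\sum_i (\partial_i\Phi^k)(x)\,\partial_{ijl}L(x),
\end{align*}
i.e.\ $\mathcal{L}_H(\partial^2\Phi^k)=-B^k$ with $H:=\nabla^2L(x)$, and $\operatorname{tr}(B^k Y)=(\partial\Phi\cdot\partial^2(\nabla L)[Y])_k$ for any matrix $Y$.

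Next, setting $P:=\partial\Phi(x)$ and $Q:=I_D-P$ (tangent and normal projections, by \Cref{lem:jacobian}), I sandwich the Lyapunov equation by $P$ and $Q$ on each side. Using $HP=PH=0$ and $HH^\dagger=Q$, the $PQ$/$QP$ sandwiches give $P\,\partial^2\Phi^k\,Q=-P B^k(\nabla^2L)^\dagger$ and $Q\,\partial^2\Phi^k\,P=-(\nabla^2L)^\dagger B^k P$; the $QQ$ sandwich, after noting $QB^kQ\in W_H$, gives $Q\,\partial^2\Phi^k\,Q=-\mathcal{L}_H^{-1}(QB^kQ)$. The $PP$ sandwich collapses to the consistency condition $PB^kP=0$, which I verify independently by showing that $\partial^2(\nabla L)[u,v]$ is normal for tangent $u,v$ — this follows from differentiating $\nabla L\equiv 0$ twice along a smooth 2-parameter family in $\Gamma$.

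The tangent-tangent block $P\,\partial^2\Phi^k\,P$ is not determined by Step 2 and constitutes the main obstacle, since $\mathcal{L}_H$ is degenerate along tangent directions. To recover it I differentiate the identity $\Phi(y)=y$ for $y\in\Gamma$ twice along a 2-parameter family in $\Gamma$ with prescribed tangent vectors $u,v$; combining with the relation $\nabla^2L\cdot\partial^2_{st}\gamma=-\partial^2(\nabla L)[u,v]$ (obtained from the same family above) gives $u^\top P\,\partial^2\Phi^k\,P\,v=-\bigl((\nabla^2L)^\dagger\,\partial^2(\nabla L)[u,v]\bigr)_k$ for all tangent $u,v$. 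This uses the extra geometric input that $\Phi$ is the identity on $\Gamma$, not merely that $\partial\Phi\cdot\nabla L\equiv 0$.

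Finally, I assemble $\partial^2\Phi^k[\Sigma]=\operatorname{tr}(\partial^2\Phi^k\cdot\Sigma)$ by decomposing $\Sigma=\Sigma_\parallel+\Sigma_\perp+\Sigma_{\parallel,\perp}+\Sigma_{\perp,\parallel}$ and pairing each piece with the matching block via cyclicity of trace. The $\Sigma_\parallel$ piece, combined with Step 3, produces $-(\nabla^2L)^\dagger\partial^2(\nabla L)[\Sigma_\parallel]$; the $\Sigma_\perp$ piece, after invoking self-adjointness of $\mathcal{L}_H$ on $W_H$ to shift $\mathcal{L}_H^{-1}$ from $B^k$ onto $\Sigma_\perp$, yields $-\partial\Phi\,\partial^2(\nabla L)[\mathcal{L}_H^{-1}(\Sigma_\perp)]$; and the two cross blocks combine via symmetry of $\partial^2(\nabla L)$ in its two input slots into $-2\,\partial\Phi\,\partial^2(\nabla L)[(\nabla^2L)^\dagger\Sigma_{\perp,\parallel}]$. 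Beyond the tangent-block issue, the remaining work is bookkeeping, but care is needed with the identities $HH^\dagger=H^\dagger H=Q$ and with tracking on which side $P$ or $Q$ acts.
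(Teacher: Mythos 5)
Your proposal is correct, and it reaches \eqref{eq:compute_second_derivative_phi} by a genuinely different route than the paper. The paper never writes down your master Sylvester equation $\nabla^2L\,\partial^2\Phi^k+\partial^2\Phi^k\,\nabla^2L=-B^k$; instead it computes the slices $\partial^2\Phi(x)[v]$ for tangent $v$ by differentiating the projector identities $P^\perp(t)H(t)=H(t)$ and $P(t)^2=P(t)$ along a curve in $\Gamma$ (\Cref{lem:Phi 2nd-order tangent}), and obtains the normal--normal contribution by a separate Taylor-expansion argument along $x+t\nabla^2L(x)^\dagger u$ combined with $\partial^2\Phi[\nabla L,\nabla L]=-\partial\Phi\,\nabla^2L\,\nabla L$ (\Cref{lem:Phi 2nd-order normal,lem:Lyapunov_Phi_2nd_order_normal}). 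Your approach is more unified and algebraic: one differentiation of $\partial\Phi\cdot\nabla L\equiv 0$ determines the $PQ$, $QP$ and $QQ$ blocks of every $\partial^2\Phi^k$ at once, and it makes transparent exactly where the Lyapunov operator enters (as the inverse of $\mathcal{L}_H$ on $W_H$) and why the tangent--tangent block is the one piece not seen by $\mathcal{L}_H$ — which you correctly supply from $\Phi|_\Gamma=\mathrm{id}$, landing on the same second-fundamental-form expression $-(\nabla^2L)^\dagger\partial^2(\nabla L)[u,v]$ that the paper extracts from the $(2,1)$ block of $P'(0)$. The paper's curve-based computation, in exchange, only ever takes single $t$-derivatives of $\cC^1$ quantities, whereas your second differentiation of $\partial\Phi\cdot\nabla L$ nominally asks for $\partial^3\Phi$, and \Cref{eq:Phi_C2} only gives $\Phi\in\cC^2$; this is easily patched — at $x\in\Gamma$ the term $\langle\partial_j\nabla\Phi^k(y),\nabla L(y)\rangle$ is differentiable with derivative $(\partial^2\Phi^k\,\nabla^2L)_{jl}$ because $\nabla L(x)=0$ and $\partial^2\Phi$ is continuous, so no third derivative of $\Phi$ is actually needed — but you should say so explicitly. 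Your trace bookkeeping in the final assembly (self-adjointness of $\mathcal{L}_H$ on $W_H$, the factor $2$ from the two cross blocks, and the consistency check $PB^kP=0$) all verify.
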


\subsection{Main Result}\label{sec:main_result}
Now we are ready to present our main result. 
It's a direct combination of \Cref{thm:v2 limit diffusion of SGD} and 
\Cref{lem:drift_term}. 

\begin{theorem}\label{thm:main_result}
	Suppose the loss function $L$, the manifold of local minimizer $\Gamma$ and 
	the open neighborhood $U$ satisfy 
	\Cref{assump:manifold,assump:neighborhood}, and $x_\eta(0)=x(0)\in U$ for 
	all $\eta>0$. 
	If SDE \eqref{eq:limiting_diffusion_final} has a global solution $Y$ with 
	$Y(0)=x(0)$ and $Y$ never leaves $U$, i.e., $\PP[Y(t)\in U, 
	\forall t\ge 0]=1$, then for any $T> 0$, $x_\eta(\lfloor T/\eta^2\rfloor)$ 
	converges in distribution to $Y(T)$ as $\eta\to 0$.
	\begin{align}\label{eq:limiting_diffusion_final}
		\diff Y(t) &= \underbrace{\Sigma_{\parallel}^{\frac{1}{2}}(Y) 
		\diff W(t)}_{\textrm{\emph{Tangent Noise}}}
		- \underbrace{\frac{1}{2}\nabla^2L(Y)^\dagger \partial^2 (\nabla L)(Y)
		\left[\Sigma_{\parallel}(Y)\right] \diff t}_{\textrm{\emph{Tangent Noise 
		Compensation}}}\\
        & - \frac{1}{2}\partial\Phi(Y) \bigg(2 \underbrace{\partial^2(\nabla L)(Y)
		\left[\nabla^2L(Y)^\dagger \Sigma_{\perp,\parallel}(Y)\right]}_{
		\textrm{\emph{Mixed Regularization}}}
        + \underbrace{\partial^2(\nabla L)(Y)\left[\mathcal{L}^{-1}_{\nabla^2 L}
		(\Sigma_\perp(Y)) \right]}_{\textrm{\emph{Normal Regularization}}}\bigg)
		\diff t, \notag
	\end{align}
	where $\Sigma\equiv \sigma\sigma^\top$ and $\Sigma_\parallel, \Sigma_\perp, 
	\Sigma_{\perp,\parallel}$ are defined in \Cref{lem:drift_term}.
\end{theorem}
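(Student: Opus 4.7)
The plan is to view \Cref{thm:main_result} as the result of plugging the explicit formula for $\partial^2\Phi[\Sigma]$ from \Cref{lem:drift_term} into the abstract limiting SDE \eqref{eq:limiting_diffusion_sgd1} produced by the Katzenberger machinery, and then matching the diffusion coefficient using \Cref{lem:jacobian}. The bulk of the work has already been done: \Cref{lem:sgd_as_asymptotically_continuous_dynamic} casts SGD as a Katzenberger process, \Cref{thm:v2 limit diffusion of SGD} upgrades Katzenberger's original result into a statement that needs only information about the limiting SDE (rather than a hard-to-verify stopping-time convergence), and \Cref{lem:drift_term} evaluates the second derivative tensor of $\Phi$ in closed form. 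So my role in the proof is mostly assembly and a careful identification of the diffusion term.

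First, I would apply \Cref{lem:sgd_as_asymptotically_continuous_dynamic} with a sequence $\eta_n\to 0$: this identifies $x_{\eta_n}(k)$ with the Katzenberger process $X_n(k\eta_n^2)$ driven by the integrator $A_n(t)=\eta_n\lfloor t/\eta_n^2\rfloor$ and the piecewise-constant martingale $Z_n$, both of which satisfy the growth/convergence requirements of the Katzenberger framework. Next, under \Cref{assump:manifold,assump:neighborhood} and the hypothesis that the target SDE has a global solution $Y$ that never leaves $U$, \Cref{thm:v2 limit diffusion of SGD} implies that the process $Y_n(t) = X_n(t) - \phi(X(0),A_n(t)) + \Phi(X(0))$ converges in distribution to the solution of \eqref{eq:limiting_diffusion_sgd1}, i.e.\ to
\begin{equation*}
Y(t) = \Phi(X(0)) + \int_0^t \partial\Phi(Y)\sigma(Y)\diff W(s) + \tfrac{1}{2}\int_0^t \partial^2\Phi(Y)[\Sigma(Y)]\diff s .
\end{equation*}
In particular, evaluating at $t=T$ and noting $Y_n(T) - X_n(T) = \Phi(X(0)) - \phi(X(0),A_n(T)) \to 0$ (since $A_n(T)\to\infty$ and the GF starting at $x(0)\in U$ converges to $\Phi(X(0))$), one gets convergence in distribution of $x_{\eta_n}(\lfloor T/\eta_n^2\rfloor)$ to $Y(T)$.

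It remains to rewrite this limiting SDE in the form stated in \eqref{eq:limiting_diffusion_final}. For the drift, I substitute the three-term expansion of $\partial^2\Phi[\Sigma]$ from \Cref{lem:drift_term}; the first term $-(\nabla^2L)^\dagger \partial^2(\nabla L)[\Sigma_\parallel]$ becomes the ``tangent noise compensation,'' while the remaining two terms combine into the ``mixed'' and ``normal'' regularization drifts (the overall factor of $\partial\Phi$ outside the last two terms is already present in \Cref{lem:drift_term}). For the diffusion, I use \Cref{lem:jacobian}: since $\partial\Phi(Y)$ is the orthogonal projection onto $T_Y(\Gamma)$, the stochastic integral $\int_0^t \partial\Phi(Y)\sigma(Y)\diff W$ has quadratic variation $\int_0^t \partial\Phi(Y)\Sigma(Y)\partial\Phi(Y)\diff s = \int_0^t \Sigma_\parallel(Y)\diff s$, which is exactly the quadratic variation of $\int_0^t \Sigma_\parallel^{1/2}(Y)\diff \widetilde W$ for a standard $D$-dimensional Brownian motion $\widetilde W$. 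The main subtlety here (and the step where I expect to be most careful) is that Katzenberger's theorem only guarantees convergence in distribution, so I am free to replace the $\Xi$-dimensional driving Brownian motion by a $D$-dimensional one as long as the law of the solution is preserved; this is legitimate because weak solutions of an SDE are determined by the drift and the diffusion \emph{matrix} $\Sigma_\parallel$, not by the particular factorization of it. With this matching in hand, \eqref{eq:limiting_diffusion_sgd1} takes precisely the form \eqref{eq:limiting_diffusion_final}, completing the proof.
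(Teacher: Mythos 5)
Your assembly plan is the right one and largely mirrors the paper's own proof: invoke \Cref{lem:sgd_as_asymptotically_continuous_dynamic} to cast SGD as a Katzenberger process, apply \Cref{thm:v2 limit diffusion of SGD} for the convergence, and then use \Cref{lem:drift_term} plus the factorization argument to match the stated SDE. The one step you skip is, however, exactly the non-trivial bridge: the hypothesis of the theorem gives a global solution $Y$ to \eqref{eq:limiting_diffusion_final}, but \Cref{thm:v2 limit diffusion of SGD} requires a global solution to the SDE \eqref{eq:limiting_diffusion_sgd1} with drift $\tfrac{1}{2}\partial^2\Phi[\Sigma]$. These two SDEs share coefficients \emph{only on $\Gamma$} (that is what \Cref{lem:drift_term} says; it is stated for $x\in\Gamma$), and a priori nothing forces a solution of \eqref{eq:limiting_diffusion_final} to remain on $\Gamma$. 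Substituting \Cref{lem:drift_term} into the drift, as you do at the end, is only valid after you already know $Y(t)\in\Gamma$ for all $t$, so the argument as written is circular.

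The paper closes this gap before doing anything else: using \Cref{thm:limit diffusion of SGD}, which asserts that the Katzenberger limit (stopped on any compact $K\subset U$) stays on $\Gamma$ almost surely, together with weak uniqueness for the locally-Lipschitz coefficients, it first shows $\PP[Y(t)\in\Gamma,\ \forall t\ge 0]=1$ via a compact exhaustion $U=\cup_m K_m$. Only then can one legitimately replace the drift in \eqref{eq:limiting_diffusion_final} by $\tfrac{1}{2}\partial^2\Phi[\Sigma]$ and the diffusion $\Sigma_\parallel^{1/2}$ by $\partial\Phi\,\sigma$ (your observation that the diffusion \emph{matrix}, not its factorization, determines the weak solution is correct and is exactly how the paper handles this), thereby identifying the hypothesized $Y$ with a solution of \eqref{eq:limiting_diffusion_sgd1} and making \Cref{thm:v2 limit diffusion of SGD} applicable. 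Note also that one cannot sidestep this by arguing $\Gamma$ is ``obviously'' invariant under \eqref{eq:limiting_diffusion_final}: the tangent-noise-compensation term $-\tfrac12(\nabla^2L)^\dagger\partial^2(\nabla L)[\Sigma_\parallel]$ lies in the normal space $T_Y^\perp\Gamma$, so the drift is \emph{not} tangent to $\Gamma$, and invariance is only true because this normal component exactly cancels the Itô curvature drift --- a fact whose cleanest proof is the detour through Katzenberger's theorem that the paper (and you) rely on.
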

Based on the above theorem, the limiting dynamics of SGD can be understood as 
follows: (a) the \textbf{Tangent Noise}, 
$\Sigma_{\parallel}^{1/2}(Y)\diff W(t)$, is preserved, and the second term 
of~\eqref{eq:limiting_diffusion_final} can be viewed as the necessary 
\textbf{Tangent Noise Compensation} for the limiting dynamics to stay on 
$\Gamma$. 
Indeed, \Cref{lem:first_term_only_depend_gamma} shows that the value of the 
second term only depends on $\Gamma$ itself, i.e., it's same for all loss $L$ 
which locally defines the same $\Gamma$. 
(b) The noise in the normal space is killed since the limiting dynamics always 
stay on $\Gamma$. 
However, its second order effect (It\^o correction term) takes place as a vector
field on $\Gamma$, which induces the \textbf{Noise Regularization} and 
\textbf{Mixed Regularization} term, corresponding to the mixed and normal noise 
covariance respectively.

\begin{remark}
In \Cref{sec:katzenberger_interface} we indeed prove a stronger version of 
\Cref{thm:main_result} that the sample paths of SGD converge in distribution, 
i.e., let $\tilde{x}_\eta(t)=x_\eta(\lfloor t/\eta^2 \rfloor)$, then 
$\tilde{x}_\eta$ weakly converges to $Y$ on $[0,T]$.
Moreover, we only assume the existence of a global solution for ease of 
presentation. 
As long as there exists a compact $K\subseteq \Gamma$ such that $Y$ stays in $K$
on $[0,T]$ with high probability,  \Cref{thm:v2 limit diffusion of SGD} still 
provides the convergence of SGD iterates (stopped at the boundary of $K$) before 
time $T$ with high probability.
\end{remark}

\section{Implications and Examples}\label{sec:implications}

In this section, we derive the limiting dynamics for two notable noise types, where we fix the expected loss $L$ and the noise distribution, and only drive $\eta$ to 0. The proofs are deferred into \Cref{appsec:proof_of_implication}.

\paragraph{Type I: Isotropic Noise.} Isotropic noise means $\Sigma(x)\equiv I_D$ for any $x\in \Gamma$ \citep{shi2020learning}. 
The following theorem shows that the limiting diffusion with isotropic noise can be viewed as a Brownian Motion plus Riemannian Gradient Flow with respect to the pseudo-determinant of $\nabla^2 L$. 

\begin{restatable}[Limiting Diffusion for Isotropic Noise]{corollary}{crlisotropicnoise}\label{crl:isotropic_noise}
If $\Sigma\equiv I_D$ on $\Gamma$, SDE \eqref{eq:limiting_diffusion_final} is then 
	\begin{align}\label{eq:isotropic_noise}
		\diff Y(t) =  
		\underbrace{\partial \Phi(Y)\diff W
		- \frac{1}{2}\nabla^2L(Y)^\dagger \partial^2 (\nabla L)(Y)\left[\partial \Phi(Y)\right]\diff t}_{\textrm{\emph{Brownian Motion on Manifold}}}
        - \underbrace{\frac{1}{2} {\partial \Phi(Y) \nabla (\ln |\nabla^2 L(Y)|_+) }\diff t}_{\textrm{\emph{Normal Regularization}}}
	\end{align}
where $|\nabla^2 L(Y)|_+ = \lim_{\alpha\to 0} \frac{|\nabla^2L(Y)+\alpha I_D|}{\alpha^{D-\rank(\nabla^2L(Y))}}$ is the \emph{pseudo-determinant} of $\nabla^2 L(Y)$. $|\nabla^2 L(Y)|_+$ is also equal to the sum of log of non-zero eigenvalue values of $\nabla^2 L(Y)$.
\end{restatable}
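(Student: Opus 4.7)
My approach is to specialize the general SDE \eqref{eq:limiting_diffusion_final} of \Cref{thm:main_result} term by term, using the fact (from \Cref{lem:jacobian}) that $P:=\partial\Phi(Y)$ is the orthogonal projection onto $T_Y\Gamma$, so $P=P^\top=P^2$.

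First I would decompose the noise covariance under $\Sigma=I_D$. Substituting into the decomposition in \Cref{lem:drift_term} gives $\Sigma_\parallel = PI_DP = P$, $\Sigma_\perp = (I_D-P)^2 = I_D - P$, and $\Sigma_{\parallel,\perp} = P(I_D-P) = 0$, so the Mixed Regularization term in \eqref{eq:limiting_diffusion_final} vanishes identically. Since $P$ is a symmetric projection it is itself a PSD square root of $\Sigma_\parallel$, hence the tangent noise collapses to $P\diff W$ and the Tangent Noise Compensation becomes $-\frac{1}{2}(\nabla^2 L)^\dagger \partial^2(\nabla L)[P]\diff t$; these two pieces group into the ``Brownian motion on manifold'' term. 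For the Normal Regularization I would evaluate $\mathcal{L}^{-1}_{\nabla^2 L}(I_D - P)$ in closed form: because $\nabla^2 L$ is symmetric with kernel $T_Y\Gamma$, one has $(\nabla^2 L)(\nabla^2 L)^\dagger = (\nabla^2 L)^\dagger(\nabla^2 L)=I_D-P$, and the ansatz proportional to $(\nabla^2 L)^\dagger$ lies in $W_{\nabla^2 L}$ and solves the Lyapunov equation by inspection; uniqueness on $W_{\nabla^2 L}$ follows from positive definiteness of $\nabla^2 L$ restricted to the normal space.

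The remaining step is the analytic identity
\[
\bigl(\partial^2(\nabla L)(Y)\bigl[(\nabla^2 L(Y))^\dagger\bigr]\bigr)_k \;=\; \partial_k \ln |\nabla^2 L(Y)|_+ \qquad \text{on } \Gamma,
\]
which rewrites the resulting drift as a manifold gradient of the log pseudo-determinant. By \Cref{assump:manifold} the Hessian $\nabla^2 L$ has locally constant rank $M$ near $\Gamma$, so the standard log-det formula extends to pseudo-determinants: $\partial_k \ln|H|_+ = \tr(H^\dagger \partial_k H)$. Expanding via $\partial_k(\nabla^2 L)_{ij}=\partial_{ijk}L$ together with symmetry of $(\nabla^2 L)^\dagger$ produces the left-hand side; multiplying by $P$ and combining with the first step assembles \eqref{eq:isotropic_noise}.

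The main obstacle I expect is justifying the constant-rank pseudo-determinant identity rigorously, since $(\nabla^2 L)^\dagger$ is only continuous where the rank of $\nabla^2 L$ does not drop. Two natural routes are: (i) differentiate inside the limit $\ln|H|_+ = \lim_{\alpha\to 0}\bigl[\ln\det(H+\alpha I_D)-(D-M)\ln\alpha\bigr]$ and cancel the $(D-M)/\alpha$ singularity using constant rank; or (ii) locally smoothly diagonalize $H$ along $\Gamma$ via a smooth orthonormal frame (available because the $M$ positive and $D-M$ zero eigenvalues are uniformly gap-separated on $\Gamma$), then differentiate $\sum_{\lambda_i>0}\ln\lambda_i$ using first-order eigenvalue perturbation. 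Once this identity is in hand, the rest is bookkeeping.
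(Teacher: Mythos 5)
Your proof is correct and follows essentially the same route as the paper: set $\Sigma_\parallel=\partial\Phi$, $\Sigma_\perp=I_D-\partial\Phi$, $\Sigma_{\parallel,\perp}=0$ in the decomposition of \Cref{lem:drift_term}, observe $\mathcal{L}^{-1}_{\nabla^2L}(I_D-\partial\Phi)=\tfrac12(\nabla^2L)^\dagger$, and reduce to the identity $\partial^2(\nabla L)\bigl[(\nabla^2L)^\dagger\bigr]=\nabla\ln|\nabla^2L|_+$. The only (minor) difference is that the paper obtains this last identity by citing \citet{holbrook2018differentiating} for $\nabla|A|_+ = |A|_+A^\dagger$ and applying the chain rule, whereas you sketch a direct derivation of the same formula via the constant-rank limit or a smooth eigenframe; both justifications are valid given \Cref{assump:manifold}.
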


\paragraph{Type II: Label Noise.} When doing SGD for $\ell_2$-regression on dataset $\{(z_i,y_i)\}_{i=1}^n$, adding label noise ~\citep{blanc2020implicit,damian2021label} means replacing the true label at iteration $k$, $y_{i_k}$, by a fresh noisy label $\tilde{y}_{i_k}:=y_{i_k} + \delta_k$, where $\delta_k\overset{\iid}{\sim}\textrm{Unif}\{-\delta,\delta\}$ for some constant $\delta>0$.
Then the corresponding loss becomes $\frac{1}{2}(f_{i_k}(x)-\tilde{y}_{i_k})^2$, where $f_{i_k}(x)$ is the output of the model with parameter $x$ on data $z_{i_k}$. 
So the label noise SGD update is 
\begin{align}\label{eq:label_noise_sgd}
    x_{k+1} &= x_k - \eta/2 \cdot\nabla_x\left(f_{i_k}(x_k) - y_{i_k} + \delta_{k}\right)^2= x_k - \eta(f_{i_k}(x_k) - y_{i_k} + \delta_{k})\nabla_x f_{i_k}(x_k).
\end{align}
Suppose the model can achieve the global minimum of the loss 
$L(x):= \frac{1}{2}\EE[ (f_{i}(x)-\tilde{y}_{i})^2]$ at $x_*$, then the model 
must interpolate the whole dataset, i.e., $f_i(x_*)=y_i$ for all $i\in [n]$, and
thus here the manifold $\Gamma$ is a subset of $\{x\in\RR^D \mid f_i(x)=y_i,
\ \forall i\in [n]\}$. 
Here the key property of the label noise used in previous works is $\Sigma(x) = 
\frac{\delta^2}{n}\sum_{i=1}^n \nabla_x f_i(x)\nabla_x f_i(x)^\top = \delta^2
\nabla^2 L(x)$. 
Lately, \citet{damian2021label} further generalizes the analysis to other losses, e.g., logistic loss and exponential loss, as long as they satisfy $\Sigma(x) = c\nabla^2 L(x)$ for some constant $c>0$.

In sharp contrast to the delicate discrete-time analysis 
in~\citet{blanc2020implicit} and \citet{damian2021label}, the following 
corollary recovers the same result but with much simpler analysis -- taking 
derivatives is all you need. 
Under our framework, we no longer need to do Taylor expansion manually nor 
carefully control the infinitesimal variables of different orders together. 
It is also worth mentioning that our framework immediately gives a global 
analysis of $\Theta(\eta^{-2})$ steps for SGD, far beyond the local coupling 
analysis in previous works. 
In \Cref{sec:label_noise}, we will see how such global analysis allows us to 
prove a concrete generalization upper bound in a non-convex problem, the 
overparametrized linear model~\citep{woodworth2020kernel,haochen2020shape}.
\begin{restatable}[Limiting Flow for Label Noise]{corollary}{crllabelnoise}
\label{crl:label_noise}
If $\Sigma\equiv c\nabla^2 L$ on $\Gamma$ for some constant $c>0$, 
SDE~\eqref{eq:limiting_diffusion_final} can be simplified 
into~\eqref{eq:limit_flow_label_noise} where the regularization is from the 
noise in the normal space.
	\begin{align}\label{eq:limit_flow_label_noise}
		\diff Y(t) =  -1/4 \cdot \partial\Phi(Y(t)) \nabla\tr[c\nabla^2 L(Y(t))]
		\diff t.
	\end{align}
\end{restatable}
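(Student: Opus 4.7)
\textbf{Proof plan for Corollary \ref{crl:label_noise}.}
The plan is to reduce the general SDE in \Cref{thm:main_result} by exploiting the structural identity $\Sigma = c\nabla^2 L$ on $\Gamma$. First I would observe that on $\Gamma$ the kernel of $\nabla^2 L$ is exactly the tangent space $T_Y\Gamma$, so by \Cref{lem:jacobian} we have $\nabla^2 L(Y)\,\partial\Phi(Y)=0=\partial\Phi(Y)\,\nabla^2 L(Y)$. Consequently $\Sigma\,\partial\Phi = c\nabla^2 L\,\partial\Phi = 0$, which forces
\[
\Sigma_{\parallel}=\partial\Phi\,\Sigma\,\partial\Phi=0,\qquad \Sigma_{\parallel,\perp}=\Sigma_{\perp,\parallel}^{\top}=\partial\Phi\,\Sigma(I-\partial\Phi)=0,\qquad \Sigma_{\perp}=\Sigma=c\nabla^2 L.
\]
Plugging these into \eqref{eq:limiting_diffusion_final} immediately kills the Tangent Noise term, the Tangent Noise Compensation term, and the Mixed Regularization term, leaving only the Normal Regularization term.

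Next I would invert the Lyapunov operator on $\Sigma_\perp$. Writing $P_\perp:=I_D-\partial\Phi=\nabla^2 L(Y)(\nabla^2 L(Y))^{\dagger}$ for the normal projection and using that $\nabla^2 L\cdot P_\perp = \nabla^2 L = P_\perp\cdot \nabla^2 L$, I would check that $\mathcal{L}_{\nabla^2 L}(\tfrac{c}{2}P_\perp)=\tfrac{c}{2}(\nabla^2 L + \nabla^2 L)=c\nabla^2 L$, and that $\tfrac{c}{2}P_\perp$ lies in the domain $W_{\nabla^2 L}$ of \Cref{def:lyapunov_operator}. Hence $\mathcal{L}^{-1}_{\nabla^2 L}(\Sigma_\perp)=\tfrac{c}{2}(I-\partial\Phi)$. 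Using the identity $\partial^2(\nabla L)(Y)[S] = \nabla_Y\tr(\nabla^2 L(Y)\,S)$ for any matrix $S$ treated as constant, the surviving drift becomes
\[
-\tfrac{1}{2}\partial\Phi(Y)\,\partial^2(\nabla L)(Y)\bigl[\tfrac{c}{2}(I-\partial\Phi(Y))\bigr]
= -\tfrac{c}{4}\,\partial\Phi(Y)\,\nabla\tr(\nabla^2 L(Y)) + \tfrac{c}{4}\,\partial\Phi(Y)\,\partial^2(\nabla L)(Y)[\partial\Phi(Y)].
\]

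The main technical obstacle I anticipate is showing that the second piece above vanishes on $\Gamma$, i.e., $\partial\Phi(Y)\,\partial^2(\nabla L)(Y)[\partial\Phi(Y)]=0$. I would handle this by a tangent-variation argument: since $\nabla^2 L(Y)\,\partial\Phi(Y)\equiv 0$ for every $Y\in\Gamma$, the scalar function $g(Y):=\tr(\partial\Phi(Y)\nabla^2 L(Y))$ is identically zero on $\Gamma$, so its derivative in any tangent direction $u$ at $Y_0\in\Gamma$ vanishes. By the product rule,
\[
0 = \tr\bigl(\dot P(0)\,\nabla^2 L(Y_0)\bigr) + u^{\top}\partial^2(\nabla L)(Y_0)[\partial\Phi(Y_0)],
\]
where $\dot P(0)$ is the directional derivative of $\partial\Phi$ along $u$ within $\Gamma$. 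Differentiating $P^2=P$ yields $P\dot P P = 0$ and $(I-P)\dot P(I-P)=0$, so the normal-normal block of $\dot P(0)$ is zero; since $\nabla^2 L(Y_0)$ is supported on the normal space, $\tr(\dot P(0)\,\nabla^2 L(Y_0))=0$, and hence $u^{\top}\partial^2(\nabla L)(Y_0)[\partial\Phi(Y_0)]=0$ for every tangent $u$. This gives the desired vanishing and collapses the drift to $-\tfrac{1}{4}\,\partial\Phi(Y)\nabla\tr[c\nabla^2 L(Y)]\,dt$, completing the proof.
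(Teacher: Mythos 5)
Your proof is correct, and the preliminary steps (computing $\Sigma_\parallel=\Sigma_{\perp,\parallel}=0$, $\Sigma_\perp=c\nabla^2L$, and $\mathcal{L}^{-1}_{\nabla^2 L}(c\nabla^2 L)=\tfrac{c}{2}(I-\partial\Phi)$) coincide with the paper's. Where you diverge is in the key identity. After separating $\partial^2(\nabla L)[I-\partial\Phi]=\nabla\tr(\nabla^2L)-\partial^2(\nabla L)[\partial\Phi]$, you must show $\partial\Phi\,\partial^2(\nabla L)[\partial\Phi]=0$ (equivalently, $v^\top\partial^2(\nabla L)[\partial\Phi]=\langle\partial\Phi,\partial(\nabla^2L)[v]\rangle=0$ for tangent $v$). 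You obtain this by differentiating the identically-zero function $\tr(\partial\Phi(Y)\nabla^2 L(Y))$ along a tangent direction and checking that $\tr(\dot P\,\nabla^2 L)=0$ because $(I-P)\dot P(I-P)=0$ while $\nabla^2L$ is supported on the normal space; this is a self-contained, hands-on computation. The paper instead appeals to a geometric fact: $\partial(\nabla^2 L)[v]$ lies in the tangent space $\{A\nabla^2L+\nabla^2LA^\top\}$ of the symmetric rank-$M$ matrix variety, and every such element has zero Frobenius inner product with $\partial\Phi$ because $\partial\Phi\nabla^2L=\nabla^2L\partial\Phi=0$. Both establish the same vanishing. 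The paper's route is shorter and exposes the invariant geometric content (the constant-rank constraint along $\Gamma$ makes $\partial\Phi$ orthogonal to the Hessian's variation), while yours is more elementary and avoids invoking the structure of the tangent space of the rank stratum, which in the paper is also used a bit loosely for arbitrary $v$ rather than only tangent $v$ -- a slip your proof sidesteps since you restrict to tangent directions from the outset.
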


\section{Provable Generalization Benefit with Label Noise}\label{sec:label_noise}

In this section, we show provable benefit of label noise in generalization using our framework (\Cref{thm:limit_diffusion_main}) in a concrete setting, the \emph{overparametrized linear models} (OLM)~\citep{woodworth2020kernel}. 
While the existing implicit regularization results for Gradient Flow often relates the generalization quality to initialization, e.g., \citet{woodworth2020kernel} shows that for OLM, small initialization corresponds to the \emph{rich} regime and prefers solutions with small $ \ell_1$ norm while large initialization corresponds to the \emph{kernel} regime and prefers solutions with small $ \ell_2$ norm, our result \Cref{thm:olm_main} surprisingly proves that even if an OLM is initialized in the kernel regime, label noise SGD can still help it escape and then enter the rich regime by minimizing its weighted $\ell_1$ norm. 
When the groundtruth is $\kappa$-sparse, this provides a $\widetilde{O}(\kappa\ln d)$ vs $\Omega(d)$ sample complexity separation between SGD with label noise and GD when both initialized in the kernel regime. 
Here $d$ is the dimension of the groundtruth. 
The lower bound for GD in the kernel regime is folklore, but for completeness, we state the result as \Cref{thm:olm_kernel_lower_bound} in \Cref{sec:olm_kernel_regime} and append its proof in \Cref{appsec:olf_proof_of_lower_bound}.

\begin{restatable}{theorem}{thmolmmain}\label{thm:olm_main}
	In the setting of OLM, suppose the groundtruth is $\kappa$-sparse and $n\ge \Omega(\kappa \ln d)$ training data are sampled from either i.i.d. Gaussian or Boolean distribution. 
	Then for any initialization $x_{\text{init}}$ (except a zero-measure set) and any $\eps>0$, there exist $\eta_0,T>0$ such that for any $\eta<\eta_0$, OLM trained with label noise SGD~\eqref{eq:label_noise_sgd} with LR equal to $\eta$ for $\lfloor T/\eta^2\rfloor$ steps returns an $\eps$-optimal solution, with probability of $1-e^{-\Omega(n)}$ over the randomness of the training dataset.
\end{restatable}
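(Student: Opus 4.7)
The plan is to combine three ingredients: (i) the general SDE limit of \Cref{thm:main_result} specialized to label noise via \Cref{crl:label_noise}; (ii) an explicit computation of the resulting regularizer for the OLM together with a trajectory analysis of its Riemannian gradient flow on the interpolation manifold---the two results advertised in the introduction as \Cref{lem:olm_neighbourhood} and \Cref{lem:gf_optimal}; and (iii) a standard compressed-sensing argument identifying the minimizer of that regularizer with the $\kappa$-sparse groundtruth once $n\gtrsim \kappa\ln d$.

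First I would set up the OLM with parameters $x=(u,v)\in\RR^{2d}$, implicit weight $w(x)=u^{\odot 2}-v^{\odot 2}$, and loss $L(x)=\tfrac{1}{2n}\sum_i(\langle z_i, w(x)\rangle - y_i)^2$. The interpolation set $\Gamma\subseteq\{x:Zw(x)=y\}$ is a $\cC^\infty$ submanifold of $\RR^{2d}$ off a measure-zero set of degenerate $x$, which both verifies \Cref{assump:manifold} and explains the ``except a zero-measure set'' caveat on $x_{\mathrm{init}}$; \Cref{lem:olm_neighbourhood} then supplies the open neighborhood $U$ on which $\Phi$ is defined, verifying \Cref{assump:neighborhood}. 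Since label noise satisfies $\sigma\sigma^\top=\delta^2\nabla^2 L$ on $\Gamma$, \Cref{crl:label_noise} reduces the limiting dynamics to the Riemannian gradient flow of $R(x):=\tfrac{\delta^2}{4}\tr[\nabla^2 L(x)]$ on $\Gamma$. Using $\nabla_w L\equiv 0$ on $\Gamma$, a direct computation gives $R(x)=\tfrac{\delta^2}{n}\sum_j \|Z_{\cdot,j}\|^2(u_j^2+v_j^2)$; combined with the elementary identity $u_j^2+v_j^2\ge 2|u_j^2-v_j^2|=2|w_j|$, this exhibits $R$ on $\Gamma$ as the natural quadratic surrogate for the data-dependent weighted $\ell_1$ norm $\sum_j \|Z_{\cdot,j}\|^2 |w_j|$.

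I would then invoke \Cref{lem:gf_optimal} to conclude that $Y(t)$ converges as $t\to\infty$ to some $x_\infty\in\Gamma$ with $w(x_\infty)=\widehat w$, the global minimizer of this weighted-$\ell_1$ problem subject to $Zw=y$. For i.i.d.\ Gaussian or Boolean designs the column norms $\|Z_{\cdot,j}\|^2/n$ concentrate around a common constant with probability at least $1-e^{-\Omega(n)}$, and the restricted-nullspace / restricted-eigenvalue properties over $\kappa$-sparse vectors hold as soon as $n\ge \Omega(\kappa\ln d)$; standard compressed-sensing theory then gives $\widehat w=w^\ast$ exactly. Finally, I would pick $T$ large enough that $\|w(Y(T))-w^\ast\|$ is small enough to bound the population loss by $\eps/2$, and then invoke the weak convergence $x_\eta(\lfloor T/\eta^2\rfloor)\Rightarrow Y(T)$ provided by \Cref{thm:main_result} to choose $\eta_0$ so that the additional discretization gap contributes at most $\eps/2$ for every $\eta<\eta_0$.

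The \textbf{main obstacle} is the trajectory analysis underlying \Cref{lem:gf_optimal}: a Riemannian gradient flow of $R$ on $\Gamma$ is a priori only guaranteed to reach a stationary point, and $R$ is non-convex in the implicit weight $w$ because of the redundant $(u,v)$ parameterization---every coordinate with $u_j=v_j=0$ is a fixed point of the flow, and even non-degenerate coordinates could in principle get stuck at non-global minima of the $\ell_1$ surrogate on the nonlinear submanifold $\Gamma$. The zero-measure initialization exclusion is exactly what rules these spurious limits out: generic $x_{\mathrm{init}}$ keeps every coordinate non-degenerate along the flow, so the induced dynamics on $w$ behaves like a mirror-descent-type flow that can be coupled to the convex weighted $\ell_1$ minimization problem. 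A secondary task is ensuring $Y(t)\in U$ throughout $[0,T]$, which I expect to follow from monotonicity of $R(Y(t))$ along the flow together with coercivity of $R$ on the relevant sublevel sets of $\Gamma$, allowing \Cref{thm:main_result} to be applied without truncation.
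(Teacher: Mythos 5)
Your roadmap coincides almost step-for-step with the paper's: verify \Cref{assump:manifold,assump:neighborhood} for the OLM (\Cref{lem:olm_manifold_and_hessian_rank,lem:olm_neighbourhood}), specialize \Cref{thm:main_result} to label noise via \Cref{crl:label_noise} to obtain the Riemannian gradient flow of $R(x)\propto\tr[\nabla^2L(x)]$ on $\Gamma$, prove the flow converges to the minimizer of $R$ on $\Gamma$ (\Cref{lem:gf_optimal}), identify that minimizer with the sparse groundtruth by a convex-geometry/compressed-sensing argument (\Cref{lem:olm_unique_minimizer}), and then pick $T$ and $\eta_0$. You also correctly flag the hard part: $\Gamma$ is non-compact, the stationary points of the limiting flow sit on $\partial\Gamma$, and the flow is not even defined, let alone differentiable, there, so generic strict-saddle-avoidance machinery does not apply.

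Two points, the second of which is a genuine gap. First, the inequality $u_j^2+v_j^2\ge 2|u_j^2-v_j^2|$ is false (take $v_j=0$); you only have $u_j^2+v_j^2\ge|w_j|$, with equality exactly at the canonical parametrization, which is the form used in \Cref{lem:olm_LP}. Second, and more substantively, your ``secondary task'' of showing $Y(t)\in U$ for all finite $t$ cannot be discharged by ``monotonicity of $R$ plus coercivity on sublevel sets.'' The set $U=(\RR\setminus\{0\})^D$ is open and the optimum $x^*=\psi(w^*)$ has $u^*_jv^*_j=0$ for \emph{every} coordinate, so $R$ is in fact \emph{minimized on} $\partial U$; coercivity keeps $Y(t)$ bounded but does nothing to keep coordinates away from zero, which is exactly what \Cref{thm:main_result} requires. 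The paper instead uses a conserved/controlled quantity: along the Riemannian flow each product $u_j(t)v_j(t)$ satisfies $\tfrac{\diff}{\diff t}(u_jv_j)=-s_j\,u_jv_j$ for a fixed $s_j>0$, so $|u_j(t)v_j(t)|=|u_j(0)v_j(0)|e^{-s_jt}>0$ at every finite $t$ (\Cref{lem:olm_gf_infinite_time}), which is what ensures $Y(t)\in U$ and that the flow is defined on all of $[0,\infty)$ before one ever invokes the weak convergence of SGD.
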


The proof roadmap of \Cref{thm:olm_main} is the following:
\begin{enumerate}
	\item Show \Cref{assump:manifold} is satisfied, i.e., the set of local minimizers, $\Gamma$, is indeed a manifold and the hessian $\nabla^2L(x)$ is non-degenerate on $\Gamma$ (by \Cref{lem:olm_manifold_and_hessian_rank}); 
	\item Show \Cref{assump:neighborhood} is satisfied, i.e., $\Phi(U)\subset \Gamma$ (by \Cref{lem:olm_neighbourhood});  
	\item Show the limiting flow~\eqref{eq:limit_flow_label_noise} converges to the minimizer of the regularizer (by \Cref{lem:gf_optimal});  
	\item Show the minimizer of the regularizer recovers the groundtruth (by \Cref{lem:olm_unique_minimizer}).
\end{enumerate}

Our setting is more general than \citet{haochen2020shape}, which assumes 
$w^*\in\{0,1\}^d$ and their reparametrization can only express positive linear functions, i.e., $w=u^{\odot 2}$. Their $\widetilde{O}(\kappa^2)$ rate is achieved with a delicate three phase LR schedule, while our $O(\kappa \ln d)$ rate only uses a constant LR.	

\paragraph{Setting:} 
Let $\{(z_i,y_i)\}_{i\in[n]}$ be the training dataset where $z_1,\ldots,z_n 
\overset{\iid}{\sim}\Unif(\{\pm1\}^d)$ or $\cN(0,I_d)$ and each $y_i = 
\langle z_i, w^*\rangle$ for some unknown $w^*\in\RR^d$. 
We assume that $w^*$ is $\kappa$-sparse for some $\kappa<d$.
Denote $x=\binom{u}{v}\in\RR^{D}=\RR^{2d}$, and we will use $x$ and $(u,v)$ 
exchangeably as the parameter of functions defined on $\RR^D$ in the sequel.
For each $i\in[n]$, define $f_i(x) = f_i(u,v)=z_i^\top(u^{\odot 2}-v^{\odot2})$. 
Then we fit $\{(z_i,y_i)\}_{i\in[n]}$ with an overparametrized model through the
following loss function:
\begin{align}\label{eq:olm_loss}
    L(x) = L(u,v) = \tfrac{1}{n}\sum\nolimits_{i=1}^n \ell_i(u,v),\quad 
	\text{where }\ell_i(u,v) = \tfrac{1}{2}(f_i(u,v)-y_i)^2.
\end{align}
It is straightforward to verify that $\nabla^2 L(x) = \frac{4}{n} \sum_{i=1}^n 
\binom{z_i\odot u}{-z_i\odot v} \binom{z_i\odot u}{-z_i\odot v}^\top,\ \forall 
x\in\Gamma$.
For simplicity, we define $Z = (z_1, \ldots, z_n)^\top\in\RR^{n\times 
d}$ and $Y=(y_1,\ldots,y_n)^\top\in\RR^n$. 
Consider the following manifold:
\begin{align}\label{eq:U_Gamma}
    \Gamma = \left\{x=(u^\top,v^\top)^\top \in U: Z(u^{\odot 2} - v^{\odot 2}) = 
	Y\right\},\quad \text{where } U = (\RR\setminus\{0\})^{D}.
\end{align}
We verify that the above loss function $L$ and manifold $\Gamma$ satisfy 
\Cref{assump:manifold} by \Cref{lem:olm_manifold_and_hessian_rank}, and that the
neighborhood $U$ and $\Gamma$ satisfy \Cref{assump:neighborhood} by 
\Cref{lem:olm_neighbourhood}.
\begin{restatable}{lemma}{olmmanifoldandhessianrank}\label{lem:olm_manifold_and_hessian_rank}
Consider the loss $L$ defined in \eqref{eq:olm_loss} and manifold $\Gamma$ 
defined in \eqref{eq:U_Gamma}. 
If data is full rank, i.e., $\rank(Z)=n$, then it holds that
(a). $\Gamma$ is a smooth manifold of dimension $D-n$; 
(b). $\rank(\nabla^2 L(x)) = n$ for all $ x\in \Gamma$.
In particular, $\rank(Z)=n$ holds with probability 1 for 
Gaussian distribution and with probability $1-c^d$ for Boolean distribution for 
some constant $c\in(0,1)$.
\end{restatable}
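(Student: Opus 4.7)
The plan is to derive (a) and (b) from one common observation: on $U=(\RR\setminus\{0\})^D$, the $2d\times d$ matrix $M(x)=\binom{\mathrm{diag}(u)}{-\mathrm{diag}(v)}$ has full column rank $d$, since each diagonal block is invertible. Both (a) and (b) will then reduce to the hypothesis $\rank(Z)=n$ once the right factorization is written down.

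For (a) I would apply the preimage (regular value) theorem to the smooth map $F:U\to\RR^n$ defined by $F(u,v)=Z(u^{\odot 2}-v^{\odot 2})-Y$, for which $\Gamma=F^{-1}(0)$. Entry-wise differentiation gives
\[
\partial F(x)=2\bigl[Z\,\mathrm{diag}(u),\ -Z\,\mathrm{diag}(v)\bigr]=2\,Z\,M(x)^\top,
\]
and since $M(x)^\top\in\RR^{d\times 2d}$ has full row rank $d$ on $U$, we obtain $\rank(\partial F(x))=\rank(Z)=n$ at every $x\in U$. Hence $0$ is a regular value of $F$, so $\Gamma$ is a smooth submanifold of $U$ of codimension $n$, i.e., of dimension $D-n$. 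For (b), the Hessian factors as $\nabla^2L(x)=\tfrac{4}{n}A(x)A(x)^\top$ with $A(x)=M(x)\,Z^\top\in\RR^{D\times n}$, because the $i$-th column of $M(x)Z^\top$ is exactly $M(x)z_i=\binom{z_i\odot u}{-z_i\odot v}$. Since $M(x)$ is injective and $Z^\top$ has rank $n$, $A(x)$ has rank $n$, so $\rank(\nabla^2 L(x))=\rank(AA^\top)=n$.

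For the probability claims: in the Gaussian case the bad event $\{\rank(Z)<n\}$ is contained in a finite union of proper algebraic subvarieties (each defined by the vanishing of some $n\times n$ minor of $Z$) and hence has Lebesgue measure zero, giving $\rank(Z)=n$ almost surely. In the Boolean case I would argue row-by-row using the elementary fact that any $k$-dimensional subspace $V\subseteq\RR^d$ contains at most $2^k$ points of $\{\pm 1\}^d$: $V$ projects isomorphically onto some choice of $k$ coordinates, so an element of $V\cap\{\pm 1\}^d$ is determined by its values on those coordinates. Conditioning on $z_1,\dots,z_{i-1}$ yields $\PP[z_i\in\mathrm{span}(z_1,\dots,z_{i-1})]\le 2^{i-1-d}$, and a union bound gives $\PP[\rank(Z)<n]\le 2^{n-d}=c^d$ with $c=2^{(n-d)/d}\in(0,1)$ whenever $n<d$, which is guaranteed in the overparametrized regime studied here.

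The main obstacle is really only bookkeeping: setting up the factorizations $\partial F=2ZM^\top$ and $A=MZ^\top$ in a way that makes the rank reductions to $\rank(Z)=n$ transparent. The one mildly non-routine ingredient is the combinatorial bound $|V\cap\{\pm 1\}^d|\le 2^{\dim V}$ used for the Boolean probability estimate; everything else is standard linear algebra plus a one-line invocation of the preimage theorem.
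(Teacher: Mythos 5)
Your proof is correct. For the main claims (a) and (b) it follows the same route as the paper: apply the preimage theorem, and observe that since every coordinate of $(u,v)$ is nonzero on $U$, the rank of both the Jacobian $[\nabla f_1,\ldots,\nabla f_n]$ and the Hessian outer-product factor reduce to $\rank(Z)$. Your factorizations $\partial F(x)=2ZM(x)^\top$ and $\nabla^2 L(x)=\tfrac{4}{n}(M(x)Z^\top)(M(x)Z^\top)^\top$ with $M(x)=\binom{\mathrm{diag}(u)}{-\mathrm{diag}(v)}$ of full column rank $d$ just package that observation cleanly; the paper phrases it vector-by-vector. The genuine difference is in the probability claims: the paper handles the Gaussian case in one line and cites \citet{kahn1995probability} for the Boolean case, while you give self-contained elementary arguments for both. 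Your Boolean bound $\PP[\rank(Z)<n]\le 2^{n-d}$, via $|V\cap\{\pm1\}^d|\le 2^{\dim V}$ and a row-by-row union bound, is correct and is arguably a better fit here than the citation, since the Kahn--Komlós--Szemerédi result is stated for square $\pm1$ matrices, whereas the rectangular $n\le d$ regime relevant to this paper admits your direct argument. A small caveat worth noting: the constant $c=2^{(n-d)/d}$ in your phrasing depends on the ratio $n/d$ and is only bounded away from $1$ when $n/d$ is; this is harmless in the overparametrized regime $n\ll d$ used throughout the paper but should be flagged if the lemma is read in isolation.
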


\begin{restatable}{lemma}{lemolmneighbourhood}\label{lem:olm_neighbourhood}
Consider the loss function $L$ defined in \eqref{eq:olm_loss}, manifold $\Gamma$
and its open neighborhood defined in \eqref{eq:U_Gamma}.
For  gradient flow $\frac{\diff x_t}{\diff t} = -\nabla L(x_t)$ starting at any 
$x_0\in U$, it holds that $\Phi(x_0)\in \Gamma$.
\end{restatable}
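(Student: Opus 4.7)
The plan is to exploit the reparameterization $w = u^{\odot 2} - v^{\odot 2}$ and a coordinate-wise conservation law to reduce the $(u,v)$-gradient flow to a mirror (equivalently, preconditioned) gradient flow of a convex quadratic on $w$. Writing $F(w) := \frac{1}{n}\|Zw - Y\|^2$ and $g_j(w) := \frac{2}{n}[Z^\top(Zw - Y)]_j = [\nabla F(w)]_j$, the chain rule gives $\partial L/\partial u_j = u_j g_j(w)$ and $\partial L/\partial v_j = -v_j g_j(w)$, so the gradient flow reads $\dot u_j = -u_j g_j$ and $\dot v_j = v_j g_j$. Two consequences follow at once: $\tfrac{d}{dt}(u_j v_j) = 0$, so $u_j(t) v_j(t) \equiv c_j := u_j(0) v_j(0)$ with $c_j \neq 0$ because $x_0 \in U$; and $u_j(t) = u_j(0)\exp(-\int_0^t g_j(w_s)\,ds)$ (similarly for $v_j$), so signs and nonzero-ness of every coordinate are preserved and the trajectory stays inside $U$ for all $t \geq 0$.

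The identity $u_j v_j \equiv c_j$ together with $w_j = u_j^2 - v_j^2$ gives $u_j^2 + v_j^2 = \sqrt{w_j^2 + 4 c_j^2}$, and a direct computation yields the effective $w$-dynamics $\dot w_j = 2(u_j \dot u_j - v_j \dot v_j) = -2\sqrt{w_j^2 + 4 c_j^2}\,[\nabla F(w)]_j$. This is a preconditioned gradient flow of the convex quadratic $F$ whose positive diagonal preconditioner is bounded below by $2\min_j |c_j| > 0$. Introducing $\theta_j := \sinh^{-1}(w_j/(2|c_j|))$ turns it into a standard continuous-time mirror flow $\dot\theta = -\tfrac{1}{2}\nabla F(w(\theta))$, whose dual mirror map $\Psi$ is strictly convex and coercive with $\nabla \Psi$ a bijection on $\RR^d$.

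Convergence is then extracted from continuous-time mirror descent. Since $\rank(Z) = n$, the solution set $S := \{w : Zw = Y\}$ is a nonempty affine subspace with $\inf_S F = 0$. For any $w^\star \in S$, a routine calculation shows $\tfrac{d}{dt} D_\Psi(w^\star \| w(t)) = -2\langle \nabla F(w), w - w^\star\rangle \le -2(F(w) - F(w^\star)) \le 0$ by convexity of $F$, so $D_\Psi(w^\star \| w(t))$ is monotonically decreasing; coercivity of $\Psi$ then forces $w(t)$ to stay in a bounded set. Combining boundedness with $\dot F(w(t)) = -2\sum_j \sqrt{w_j^2 + 4c_j^2}\, g_j^2 \le 0$ and uniform continuity yields $F(w(t)) \to 0$, hence every limit point of $w(t)$ lies in $S$. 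Upgrading this to pointwise convergence $w(t) \to w_\infty$ follows from strict convexity of $\Psi$ and the descent of $D_\Psi(w_\infty \| w(t))$ for the unique Bregman projection $w_\infty$ of $w(0)$ onto $S$.

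Finally, because $c_j \neq 0$, the relation $u_j^2 = \tfrac{1}{2}(w_j + \sqrt{w_j^2 + 4 c_j^2}) > 0$ shows that $w_j \mapsto (u_j, v_j)$ on each fixed sign branch is a smooth diffeomorphism, so $w(t) \to w_\infty$ implies $(u(t), v(t)) \to (u_\infty, v_\infty)$ with all coordinates nonzero, and $Z w_\infty = Y$ gives $\Phi(x_0) = (u_\infty^\top, v_\infty^\top)^\top \in \Gamma$. The main technical obstacle is the trajectory-level convergence step: decrease of $F$ and boundedness of $D_\Psi(w^\star \| w(t))$ are routine, but promoting these to the pointwise limit $w(t) \to w_\infty$ requires either the strict-convexity / coercivity argument above or, alternatively, a {\L}ojasiewicz inequality applied to the real-analytic loss $L$ restricted to a compact subset of $U$; the remaining steps are essentially bookkeeping once $w_\infty$ is in hand.
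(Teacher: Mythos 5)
Your proof is correct, but it takes a genuinely different route from the paper's. The paper works entirely in the ambient $(u,v)$ space: it uses the invariance of $u_j(t)v_j(t)$ along gradient flow (your first observation) together with the AM--GM bound $u^{\odot 2}+v^{\odot 2}\ge 2|u\odot v|$ and $\lambda_{\min}(ZZ^\top)>0$ to establish a Polyak--\L{}ojasiewicz inequality $\|\nabla L(x(t))\|_2^2 \ge C\,L(x(t))$ along the trajectory; this immediately gives $\|\dot x(t)\|_2 \lesssim -\frac{d}{dt}\sqrt{L(x(t))}$, hence finite trajectory length, hence convergence, and then a separate lemma identifies all stationary points of $L$ in $U$ as global minimizers in $\Gamma$. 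You instead pass to $w = u^{\odot 2}-v^{\odot 2}$, recognize the induced dynamics $\dot w_j = -2\sqrt{w_j^2+4c_j^2}\,[\nabla F(w)]_j$ as a mirror flow under the hyperbolic mirror map $\theta_j = \sinh^{-1}\bigl(w_j/(2|c_j|)\bigr)$, and run the Bregman-descent machinery: $D_\Psi(w^\star\|w(t))$ is non-increasing for every $w^\star$ with $Zw^\star=Y$, coercivity of $\Psi$ keeps $w(t)$ bounded (and hence $(u,v)$ in a compact subset of $U$ via $u_j^2+v_j^2=\sqrt{w_j^2+4c_j^2}$), $\nabla F(w(t))\to 0$ by the $L^2$-plus-Lipschitz argument, and monotonicity of $D_\Psi(\bar w\|w(t))$ for a limit point $\bar w\in S$ upgrades subsequential to full convergence. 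Both routes are sound. The paper's is shorter and self-contained (a single PL inequality), and meshes directly with the remaining structure of \Cref{sec:olm_final_result}, which also proves a PL-type inequality for the limiting Riemannian flow. Your route is heavier but more structural: it rederives the known fact that overparametrized linear GF is a mirror flow, and as a by-product pins down $w_\infty$ as the Bregman projection of $w(0)$ onto $\{Zw=Y\}$, which the lemma does not actually require. One small inessential point: your constant factors drift (e.g.\ $\dot\theta = -2\nabla F$ rather than $-\tfrac12\nabla F$ with your normalizations), but this does not affect any of the qualitative conclusions.
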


\begin{remark}
In previous works~\citep{woodworth2020kernel,azulay2021implicit}, the 
convergence of gradient flow is only assumed. 
Recently \citet{pesme2021implicit} proved it for a specific initialization, 
\emph{i.e.}, $u_j=v_j=\alpha, \forall j\in [n]$ for some $\alpha>0$. 
\Cref{lem:olm_neighbourhood} completely removes the technical assumption.
\end{remark}

Therefore, by the result in the previous section, the implicit regularizer on 
the manifold is $R(x)=\tr(\Sigma(x))=\tr(\delta^2\nabla^2 L(x))$.
Without loss of generality, we take $\delta=1$.
Hence, it follows that
\begin{align}\label{eq_reg_genlinear}
    R(x) = \frac{4}{n} \sum\nolimits_{j=1}^D \left(\sum\nolimits_{i=1}^n 
	z_{i,j}^2\right) (u_j^2 + v_j^2).
\end{align}
The limiting behavior of label noise SGD is described by a Riemannian gradient 
flow on $\Gamma$ as follows:
\begin{align}\label{eq:olm_projected_gf}
  	\diff x_t = -1/4\cdot\partial\Phi(x_t)\nabla R(x_t)\diff t, \text{ with } 
  	x_0=\Phi(x_{\text{init}})\in\Gamma.
\end{align}
The goal is to show that the above limiting flow will converge to the underlying
groundtruth $x^*=\binom{u^*}{v^*}$ where $(u^*,v^*) = ([w^*]_+^{\odot 1/2},
[-w^*]_+^{\odot 1/2})$.

\subsection{Limiting Flow Converges to Minimizers of Regularizer}\label{sec:olm_final_result}
\newcommand{\lag}[1]{\mathcal{L}(#1)}

In this subsection we show limiting flow \eqref{eq:limit_flow_label_noise} 
starting from anywhere on $\Gamma$ converges to the minimizer of regularizer 
$R$~(by \Cref{lem:gf_optimal}). 
The proof contains two parts: (a) the limiting flow converges; (b) the limit 
point of the flow cannot be sub-optimal stationary points. 
These are indeed the most technical and difficult parts of proving the 
$O(\kappa\ln d)$ upper bound, where the difficulty comes from the fact that the 
manifold $\Gamma$ is not compact, and the stationary points of the limiting flow
are in fact all located on the boundary of $\Gamma$. 
However, the limiting flow itself is not even defined on the boundary of the 
manifold $\Gamma$. 
Even if we can extend $\partial\Phi(\cdot)\nabla R(\cdot)$ continuously to 
entire $\RR^D$, the continuous extension is not everywhere differentiable. 

Thus the non-compactness of $\Gamma$ brings challenges for both (a) and (b). 
For (a), the convergence for standard gradient flow is often for free, as long 
as the trajectory is bounded and the objective is analytic or smooth and 
semialgebraic. 
The latter ensures the so-called Kurdyka-\L{}ojasiewicz~(KL) 
inequality~\citep{lojasiewicz1963topological}, which implies finite trajectory 
length and thus the convergence. 
However, since our flow does not satisfy those nice properties, we have to show 
that the limiting flow satisfies Polyak-\L{}ojasiewicz condition (a special case
of KL condition)~\citep{polyak1964gradient} via careful 
calculation~(by \Cref{lem:olm_PL_condition}). 

For (b), the standard analysis based on center stable manifold theorem shows 
that gradient descent/flow converges to strict saddle (stationary point with at
least one negative eigenvalue in hessian) only for a zero-measure set of 
initialization~\citep{lee2016gradient,lee2017first}. 
However, such analyses cannot deal with the case where the flow is not 
differentiable at the sub-optimal stationary point. 
To circumvent this issue, we prove the non-convergence to sub-optimal stationary
points with a novel approach: we show that for any stationary point $x$, 
whenever there exists a descent direction of the regularizer $R$ at $x$, we can 
construct a potential function which increases monotonically along the flow 
around $x$, while the potential function is equal to $-\infty$ at $x$, leading 
to a contradiction. (See proof of \Cref{lem:gf_optimal}.) 

\begin{restatable}{lemma}{lemgfoptimal}\label{lem:gf_optimal}
Let $\{x_t\}_{t\geq0}\subseteq\RR^D$ be generated by the flow defined in \eqref{eq:olm_projected_gf} with any initialization $x_0\in\Gamma$.
Then $x_\infty = \lim_{t\to\infty} x_t$ exists.
Moreover, $x_\infty=x^*$ is the optimal solution of \eqref{eq_ln_sgd_reg}.
\end{restatable}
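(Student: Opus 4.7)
The statement decomposes into two sub-claims: (a) the trajectory $\{x_t\}_{t\ge 0}$ converges to some $x_\infty\in\overline{\Gamma}$, and (b) this limit equals the unique minimizer $x^\ast$ of $R$ on $\Gamma$ identified in Lemma~\ref{lem:olm_unique_minimizer}. I would address them in that order, with (b) being the substantive part.

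\textbf{Convergence (a).} Since $\partial\Phi(x)$ is the orthogonal projection onto the tangent space of $\Gamma$, $\frac{d}{dt}R(x_t) = -\tfrac14\|\partial\Phi(x_t)\nabla R(x_t)\|^2\le 0$. Because $R(x) = \tfrac4n\sum_j D_j(u_j^2+v_j^2)$ with $D_j>0$ almost surely, sublevel sets of $R$ inside $\Gamma$ are bounded, so the trajectory is bounded. Invoking the Polyak--\L{}ojasiewicz inequality $\|\partial\Phi(x)\nabla R(x)\|^2 \ge \mu(R(x)-R^\ast)$ from Lemma~\ref{lem:olm_PL_condition} then gives exponential decay $R(x_t)-R^\ast \le (R(x_0)-R^\ast)e^{-\mu t/4}$. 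Using $\|\dot x_t\| = \tfrac12\sqrt{-\dot R(x_t)}$, the total trajectory length $\int_0^\infty\|\dot x_t\|\,dt$ is finite, so $x_\infty := \lim_{t\to\infty} x_t$ exists in $\overline{\Gamma}$.

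\textbf{Identification (b).} Argue by contradiction: assume $x_\infty\ne x^\ast$. By continuity of $R$ and part (a) we get $R(x_\infty) = R^\ast$, so $x_\infty$ is a minimizer of $R$ on $\overline{\Gamma}$ distinct from $x^\ast$; by the weighted-$\ell_1$ structure behind Lemma~\ref{lem:olm_unique_minimizer}, $x_\infty$ must sit on $\partial\Gamma$ and must disagree with $x^\ast$ in a specific way---typically $u_{j_0,\infty} = 0$ for some $j_0$ where $u^\ast_{j_0}>0$, or the analogous statement for $v$. The heart of the argument is then a multiplicative identity: expanding $-\tfrac14\partial\Phi\nabla R = -\tfrac14(I - N(N^\top N)^{-1}N^\top)\nabla R$ with the normal basis $N$ to $\Gamma$ yields
\begin{equation*}
\dot u_j = u_j\,G_j(x),\qquad \dot v_j = v_j\,H_j(x),\qquad G_j(x) = 2(Z^\top\lambda(x))_j - \tfrac{2}{n}D_j,
\end{equation*}
(and a symmetric formula for $H_j$), where $\lambda(x)$ is the Lagrange multiplier. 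Take the potential $\psi(x) := \log|u_{j_0}(x)|$, so that $\psi(x_\infty)=-\infty$ but $\tfrac{d}{dt}\psi(x_t) = G_{j_0}(x_t)$. The sub-optimality mismatch translates via the KKT conditions at $x^\ast$ versus the limiting $\lambda(x_\infty)$ into $G_{j_0}(x_\infty) > 0$; by continuity of $\lambda(\cdot)$ up to $x_\infty$ (which holds as long as $N(x_\infty)^\top N(x_\infty)$ is invertible, verified by checking that only a few coordinates degenerate), we have $G_{j_0}(x_t)\ge \delta > 0$ for all sufficiently large $t$. Hence $\psi(x_t)$ is eventually monotonically \emph{increasing} along the trajectory, contradicting $\psi(x_t)\to -\infty$.

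\textbf{Main obstacle.} Part (b) is the difficult step. Because the vector field $\partial\Phi\nabla R$ does not extend smoothly across $\partial\Gamma$, the classical center-stable-manifold argument of~\citet{lee2016gradient} for generic avoidance of saddle points does not apply at the limit. The multiplicative structure $\dot u_j = u_j G_j(x)$---a special feature of the OLM reparametrization $w = u^{\odot 2}-v^{\odot 2}$---is what rescues the argument via a logarithmic potential; the technical bottleneck is showing the continuity of $\lambda(\cdot)$ up to $x_\infty$ and establishing the correct sign of the KKT violation $G_{j_0}(x_\infty)>0$, which together require a careful unpacking of the optimality conditions for the induced weighted-$\ell_1$ minimization $\min_{Zw=Y}\sum_j D_j|w_j|$.
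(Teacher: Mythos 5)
Your overall strategy (PL-based finite-length convergence plus a logarithmic potential to rule out sub-optimal boundary stationary points) is the same as the paper's, and you correctly identify the multiplicative structure $\dot u_j = u_j G_j(x)$ and the continuity of $\lambda(\cdot)$ as the technical crux. However, there is a genuine gap at the hinge between your parts (a) and (b). The PL inequality of \Cref{lem:olm_PL_condition} is only \emph{local}: it holds in a neighbourhood of each point $x^\ast$ with $F(x^\ast)=0$, and with the reference value $R(x^\ast)$ of \emph{that} stationary point, not the global minimum of $R$ on $\Gamma$. The flow has sub-optimal stationary points on $\partial\Gamma$ (any $x$ with $u_jv_j=0$ for all $j$ satisfying the sign conditions of \Cref{lem:olm_F_equal_0}), so your claimed global exponential decay $R(x_t)-R^\ast\le(R(x_0)-R^\ast)e^{-\mu t/4}$ and the ensuing deduction ``$R(x_\infty)=R^\ast$'' are unjustified. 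Indeed, if that deduction were available, part (b) would be immediate from the uniqueness in \Cref{lem:olm_unique_minimizer} and your potential-function argument would be redundant; conversely, your setup ``$x_\infty$ is a minimizer of $R$ distinct from $x^\ast$'' already contradicts uniqueness, so the contradiction you then derive is against a vacuous hypothesis. The correct framing (and the paper's) is: local PL around the limit point gives finite trajectory length, hence $x_\infty$ exists with $F(x_\infty)=0$ but possibly $R(x_\infty)>R^\ast$; the potential argument is then needed precisely to exclude this sub-optimal case.

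Two further points need attention. First, global existence: the flow lives on the open set $\Gamma\subset U=(\RR\setminus\{0\})^D$ and could a priori reach $\partial\Gamma$ in finite time; the paper rules this out by computing $\frac{\diff}{\diff t}(u_jv_j)=-s_ju_jv_j$ along the flow, so $|u_j(t)v_j(t)|=|u_j(0)v_j(0)|e^{-s_jt}>0$ for all finite $t$ (\Cref{lem:olm_gf_infinite_time}); your proposal omits this. Second, your single-coordinate potential $\log|u_{j_0}|$ requires $G_{j_0}(x_\infty)>0$ for a \emph{specific} $j_0$, but the descent-direction/KKT computation only yields strict positivity of the \emph{weighted aggregate} $\sum_{j>q}w^\ast_j\bigl[\ldots\bigr]$; the paper therefore uses the potential $\varphi(x)=\sum_{j>q}w^\ast_j\bigl[\ln u_j^2\,\ind\{w^\ast_j>0\}-\ln v_j^2\,\ind\{w^\ast_j<0\}\bigr]$. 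A single coordinate with the right sign does exist by pigeonhole, but you would still need to identify whether it is the $u$- or $v$-component and verify the sign persists along the trajectory, so the aggregate form is the cleaner route.
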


\subsection{Minimizer of the Regularizer Recovers the Sparse Groundtruth}\label{sec:olm_minimizer}
Note $\frac{1}{n}\sum\nolimits_{i=1}^n z_{i,j}^2= 1$ when $z_{i,j} 
\overset{iid}{\sim} \Unif\{-1,1\}$, and we can show minimizing $R(x)$ on 
$\Gamma$, \eqref{eq_ln_sgd_reg}, is equivalent to finding the minimum $\ell_1$ 
norm solution of \Cref{eq:olm_loss}. 
Standard results in sparse recovery imply that minimum $\ell_1$ norm solution 
recovers with the sparse groundtruth. 
The gaussian case is more complicated but still can be proved with techniques 
from~\citet{tropp2015convex}.
\begin{align}\label{eq_ln_sgd_reg}
    \begin{aligned}
    \text{minimize}\quad & R(x)=\frac{4}{n}\sum\nolimits_{j=1}^d
	\left(\sum\nolimits_{i=1}^n z_{i,j}^2\right) (u_j^2 + v_j^2),\\
    \text{subject to}\quad & Z (u^{\odot 2} - v^{\odot 2}) = Z w^*.
    \end{aligned}
\end{align}

\begin{lemma}\label{lem:olm_unique_minimizer}
Let $z_1,\ldots,z_n\overset{\iid}{\sim} \Unif(\{\pm1\}^d)$ or $\cN(0,I_d)$. 
Then there exist some constants $C,c>0$ such that if 
$n\geq C\kappa\ln d$, then with probability at least $1-e^{-cn}$, the optimal 
solution of \eqref{eq_ln_sgd_reg}, $(\hat u,\hat v)$, is unique up to sign flips
of each coordinate and recovers the groundtruth, i.e., $\hat u^{\odot 2} -
\hat v^{\odot 2}=w^*$.
\end{lemma}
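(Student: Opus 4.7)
The plan is to reduce~\eqref{eq_ln_sgd_reg} to a weighted $\ell_1$ minimization in the single variable $w := u^{\odot 2} - v^{\odot 2}$. The elementary inequality $u_j^2 + v_j^2 \ge |u_j^2 - v_j^2| = |w_j|$, with equality iff $u_j v_j = 0$, implies that any minimizer $(\hat u, \hat v)$ of~\eqref{eq_ln_sgd_reg} must satisfy $\hat u_j \hat v_j = 0$ for every $j$, and that $\hat w := \hat u^{\odot 2} - \hat v^{\odot 2}$ is a minimizer of
\begin{equation}\label{eq:wl1plan}
\min_w \sum_{j=1}^d D_j |w_j| \quad\text{s.t.}\quad Zw = Zw^*, \qquad D_j := \tfrac{1}{n}\sum_{i=1}^n z_{ij}^2.
\end{equation}
Conversely, given the unique minimizer $\hat w$ of~\eqref{eq:wl1plan}, the pair $(\hat u_j, \hat v_j)$ is determined up to the sign of its nonzero entry via $|\hat u_j| = \sqrt{[\hat w_j]_+}$ and $|\hat v_j| = \sqrt{[-\hat w_j]_+}$. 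So it suffices to show that $w^*$ is the unique minimizer of~\eqref{eq:wl1plan} with probability at least $1 - e^{-cn}$.

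For the Boolean case, $D_j \equiv 1$ identically and~\eqref{eq:wl1plan} is standard basis pursuit with a $\pm 1$ design. Since the rows $z_i \overset{\iid}{\sim}\Unif(\{\pm 1\}^d)$ are mean-zero, isotropic, and subgaussian, $Z/\sqrt{n}$ satisfies the restricted isometry property of order $2\kappa$ with constant below $\sqrt{2}-1$ on an event of probability $\ge 1 - e^{-cn}$ provided $n \ge C\kappa \ln(d/\kappa)$; this in turn is known to imply exact recovery of every $\kappa$-sparse vector by basis pursuit, so $w^*$ is the unique minimizer. For the Gaussian case, concentration of $\chi^2$ variables and a union bound give that $D_j \in [\tfrac12, \tfrac32]$ for all $j \in [d]$ with probability $\ge 1 - 2de^{-cn} \ge 1 - e^{-c'n}$ once $n \ge C\kappa \ln d$. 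On this event, the weighted objective in~\eqref{eq:wl1plan} is sandwiched between $\tfrac12\|w\|_1$ and $\tfrac32\|w\|_1$, so the weighted null space property needed to certify $w^*$ as the unique minimizer is implied by the strengthened unweighted NSP $\sum_{j \in S}|h_j| < \tfrac13 \sum_{j \in S^c}|h_j|$ for all $h \in \ker Z$ and $|S| \le \kappa$. This follows from RIP of $Z/\sqrt{n}$ with a slightly smaller constant, which holds for Gaussian designs with probability $\ge 1 - e^{-cn}$ when $n \ge C\kappa \ln d$.

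The main obstacle is the Gaussian case, where the coupling between the weights $D_j$ and the design matrix $Z$ (both functions of the same entries $z_{ij}$) prevents a direct black-box application of basis pursuit recovery to~\eqref{eq:wl1plan}. The resolution is to condition on the high-probability event that all $D_j$ lie in $[\tfrac12,\tfrac32]$: on this event the weighted problem is pinched between two unweighted $\ell_1$ problems, so uniqueness for the unweighted problem transfers to the weighted one via the sandwich above, and a union bound combines this weight event with the independent RIP event. An alternative route is Tropp's convex recovery framework~\citep{tropp2015convex}: the statistical dimension of the weighted $\ell_1$ descent cone at a $\kappa$-sparse point is within a constant factor of that of the unweighted one, both being $O(\kappa\ln(d/\kappa))$, and Gordon's escape-through-a-mesh inequality then yields exact recovery once $n \ge C\kappa \ln d$. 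Chaining these steps back through the reduction to $(u,v)$ gives the claimed uniqueness up to coordinate-wise sign flips and the recovery identity $\hat u^{\odot 2} - \hat v^{\odot 2} = w^*$.
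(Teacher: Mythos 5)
Your high-level strategy matches the paper's: reduce the $(u,v)$-problem to a weighted $\ell_1$ problem in $w=u^{\odot 2}-v^{\odot 2}$ (the paper's \Cref{lem:olm_LP} does exactly your reduction, using the inequality $u_j^2+v_j^2\ge|w_j|$ with equality iff $u_jv_j=0$), then handle Boolean vs. Gaussian separately. Where you diverge is in the key technical tool. For the Boolean case the paper plugs into Tropp's statistical-dimension bound for the $\ell_1$ descent cone (Example 6.2 and Theorem 6.3 of Tropp 2015); you invoke RIP of $\pm 1$ designs. These are interchangeable. For the Gaussian case, you and the paper handle the coupling between the weights $D_j$ and the design $Z$ in genuinely different ways: the paper bounds the Gaussian width of the \emph{union} of weighted-$\ell_1$ descent cones over all weight vectors in $[2,6]^d$, directly showing any $p$ in that union satisfies $\|p_{S^c}\|_1\le 3\sqrt{\kappa}\|p_S\|_2$ and then applying the conic singular value machinery; you instead observe that $D_j\in[\tfrac12,\tfrac32]$ reduces the weighted null-space property to a strengthened unweighted NSP with constant $1/3$, which follows from RIP with a sufficiently small restricted-isometry constant. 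The "3" in both arguments is the same ratio of upper to lower weight bounds. Your route is more off-the-shelf and avoids the explicit descent-cone-union geometry. One small wording issue: you say the RIP event is "independent" of the weight event --- it is not (both are functions of the same $z_{ij}$), but this is harmless since a union bound does not require independence; the rest is fine.
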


\subsection{Lower Bound for Gradient Descent in the Kernel Regime}\label{sec:olm_kernel_regime}

In this subsection we show GD needs at least $\Omega(d)$ samples to learn OLM, when initialized in the kernel regime. 
This lower bound holds for all learning rate schedules and numbers of steps. 
This is in sharp contrast to the $\widetilde O(\kappa\ln d)$ sample complexity upper bound of SGD with label noise.
Following the setting of kernel regime in \citep{woodworth2020kernel}, we consider the limit of $u_0=v_0=\alpha\ind$, with $\alpha\to \infty$. It holds that $f_i(u_0,v_0)=0$ and $\nabla f_i(u_0,v_0) = [\alpha  z_i,-\alpha  z_i]$ for each $i\in[n]$. 
Standard convergence analysis for NTK (Neural Tangent Kernel, \citet{jacot2018neural}) shows that upon convergence, the distance traveled by parameter converges to $0$, and thus the learned model shall converge in function space, so is the generalization performance. For ease of illustration, we directly consider the lower bound for test loss when the NTK is fixed throughout the training. 

\begin{restatable}{theorem}{thmolmkernellowerbound}\label{thm:olm_kernel_lower_bound}
Assume $z_1,\ldots,z_n\overset{\iid}{\sim} \cN(0,I_d)$ and $y_i=z_i^\top w^*$, 
for all $i\in[n]$. 
Define the loss with linearized model as $L(x)=\sum_{i=1}^n (f_i(x_0)+ 
\inner{\nabla f_i(x_0)}{x-x_0}-y_i)^2$, where $x=\binom{u}{v}$ and 
$x_0=\binom{u_0}{v_0}=\alpha\binom{\ind}{\ind}$. 
Then for any groundtruth $w^*$, any learning rate schedule 
$\{\eta_t\}_{t\geq 1}$, and any fixed number of steps $T$, the expected $\ell_2$
loss of $x(T)$ is at least $(1-\frac{n}{d})\norm{w^*}_2^2$, where $x(T)$ is the 
$T$-th iterate of GD on $L$, i.e., $x(t+1) = x(t)-\eta_t \nabla L(x(t))$, for 
all $t\ge 0$.
\end{restatable}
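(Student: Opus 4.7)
The plan is to exploit the explicit quadratic structure of the linearized loss together with the symmetry of the initialization $u_0 = v_0 = \alpha \ind$ to show that, regardless of the LR schedule, the effective hypothesis computed by GD is confined to an $n$-dimensional subspace of $\RR^d$ determined only by the data; the bound then falls out of a standard Gaussian projection calculation. First, I would collect the feature Jacobian at $x_0$ as $J = [\alpha Z,\,-\alpha Z] \in \RR^{n\times 2d}$, so that the loss reads $L(x) = \|J(x - x_0) - Zw^*\|_2^2$ and $\nabla L(x) = 2 J^\top(J(x - x_0) - Zw^*) \in \mathrm{rowspan}(J)$ for every $x$. A one-line induction then gives $x(t) - x_0 \in \mathrm{rowspan}(J)$ for every $t \ge 0$ and every choice of $\{\eta_t\}$. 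Because $\mathrm{rowspan}(J) = \{(a^\top, -a^\top)^\top : a \in \mathrm{rowspan}(Z)\}$, the $u$- and $v$-increments produced by GD are exactly equal and opposite.

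Next, I would introduce the effective weight $\hat w(t) := \alpha\bigl((u(t) - u_0) - (v(t) - v_0)\bigr) \in \RR^d$, under which the linearized prediction on a fresh input $z$ becomes $z^\top \hat w(t)$. The invariance above immediately yields $\hat w(T) \in \mathrm{rowspan}(Z)$. Since the test covariate is $z \sim \cN(0, I_d)$, the expected $\ell_2$ prediction error of the linearized model equals $\EE_z[(z^\top(\hat w(T) - w^*))^2] = \|\hat w(T) - w^*\|_2^2$, and so, deterministically in $Z$,
\begin{equation*}
\|\hat w(T) - w^*\|_2^2 \;\ge\; \|(I_d - P_Z)\, w^*\|_2^2,
\end{equation*}
where $P_Z$ is the orthogonal projector onto $\mathrm{rowspan}(Z)$.

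Taking expectation over the training design, rotational invariance of the i.i.d.\ Gaussian ensemble $\{z_i\}$ implies that for $n \le d$ the subspace $\mathrm{rowspan}(Z)$ is uniformly distributed on the Grassmannian of $n$-planes in $\RR^d$, so $\EE[P_Z] = (n/d)\, I_d$ and the previous display gives $\EE\|\hat w(T) - w^*\|_2^2 \ge (1 - n/d)\|w^*\|_2^2$; the regime $n > d$ is vacuous because the right-hand side is non-positive.

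Conceptually this is not a deep argument. The main \emph{obstacle}, such as it is, is a small bookkeeping step: matching the theorem's normalization $\nabla f_i(x_0) = [\alpha z_i, -\alpha z_i]$ with the definition of $\hat w$ so that the effective predictor lands on exactly the row span of $Z$ rather than some rescaling of it. It is worth emphasizing that the argument never uses the $\alpha \to \infty$ kernel limit, nor any assumption that GD converges or drives the training loss to zero: the subspace constraint $\hat w(T) \in \mathrm{rowspan}(Z)$ holds for every finite $\alpha > 0$, every LR schedule, and every step count $T$, which is why the lower bound is unconditional over $\{\eta_t\}$ and $T$.
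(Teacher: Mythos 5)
Your proof is correct, and it takes a somewhat different route from the paper's. Both arguments hinge on the same structural fact --- that GD iterates of the linearized quadratic stay in the row span of the Jacobian at $x_0$, so the effective predictor lives in the $n$-dimensional $\mathrm{rowspan}(Z)$ --- but you and the paper exploit the Gaussian symmetry in opposite places. The paper argues at the level of the \emph{algorithm}: it observes that GD is a rotationally equivariant procedure, uses this to replace the fixed target $w^*$ by a uniformly random one of the same norm, swaps the order of expectation by Fubini, and then, for each fixed design, computes the expected energy of a random $w^*$ outside an $\le n$-dimensional subspace. You instead keep $w^*$ fixed and average over the \emph{design}: since $Z$ has i.i.d.\ Gaussian rows, $\mathrm{rowspan}(Z)$ is a Haar-uniform $n$-plane (a.s.), so $\EE_Z[P_Z] = (n/d)I_d$ and the bound $\EE_Z\|(I_d-P_Z)w^*\|_2^2 \ge (1-n/d)\|w^*\|_2^2$ follows immediately. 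Your route is shorter and avoids invoking the meta-level property of algorithmic equivariance; the paper's route is slightly more modular in that the first half (equivariance $\Rightarrow$ can randomize $w^*$) would extend verbatim to any rotationally equivariant learner, at which point only the rank-$\le n$ support constraint is needed. Both give the same constant and the same bound, and your write-up is careful about the normalization of $\hat w$ and about why only $\mathrm{rowspan}(Z)$, not the particular iterate, matters.
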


\section{Conclusion and Future Work}
We propose a mathematical framework to study the implicit bias of SGD with infinitesimal LR. 
We show that with arbitrary noise covariance, $\Theta(\eta^{-2})$ steps of SGD converge to a limiting diffusion on certain manifold of local minimizer, as the LR $\eta\to 0$.
For specific noise types, this allows us to recover and strengthen results regarding implicit bias in previous works with much simpler analysis.
In particular, we show a sample complexity gap between label noise SGD and GD in the kernel regime for a overparametrized linear model, justifying the generalization benefit of SGD. 
For the future work, we believe our framework can be applied to analyze the implicit bias of SGD in more complex models towards better understanding of the algorithmic regularization induced by stochasticity.
It will be valuable to extend our method to other stochastic optimization algorithms, e.g., ADAM, SGD with momentum.

\section*{Acknowledgement}

We thank Yangyang Li for pointing us to \citet{katzenberger1991solutions}. We also thank Wei Zhan, Jason Lee and Lin Chen for helpful discussions. 

The authors acknowledge support from NSF, ONR, Simons Foundation, Schmidt Foundation, Mozilla
Research, Amazon Research, DARPA and SRC. ZL is also supported by Microsoft Research PhD
Fellowship.

\bibliography{reference}
\bibliographystyle{iclr2022_conference}

\clearpage
\appendix

\section{Preliminaries on Stochastic Processes}\label{sec:appendix_preliminary}
We first clarify the notations in this paper.
For any integer $k$, we denote $\cC^k$ as the set of the $k$ times continuously 
differentiable functions. 
We denote $a\wedge b = \min\{a,b\}$.
For any vector $u,v$ and $\alpha\in\RR$, we define $[u\odot v]_i = u_iv_i$ and 
$[v^{\odot \alpha}]_i = v_i^\alpha$.
For any matrix $A$, we denote its pseudo inverse by $A^\dagger$.
For mapping $F:\RR^D\to\RR^D$, we denote the \emph{Jacobian} of $F$ at $x$ by 
$\partial F(x)\in\RR^{D\times D}$ where the $(i,j)$-th entry is 
$\partial_j F_i(x)$. 
We also use $\partial F(x)[u]$ and $\partial^2F(x)[u,v]$ to denote the first and
second order directional derivative of $F$ at $x$ along the derivation of $u$ 
(and $v$). 
We abuse the notation of $\partial^2F$ by viewing it a linear mapping defined on
$\RR^{D}\otimes \RR^D\cong \RR^{D^2}$, in the sense that $\partial^2F(x)[\Sigma]
=\sum_{i,j=1}^D \partial^2F(x)[e_i,e_j]\Sigma_{ij}$, for any 
$\Sigma\in\RR^{D\times D}$. 
For any submanifold $\Gamma\subset\RR^D$ and $x\in\Gamma$, we denote by 
$T_x(\Gamma)$ the tangent space of $\Gamma$ at $x$ and $T_x^\perp(\Gamma)$ the 
normal space of $\Gamma$ at $x$. 

Next, we review a few basics of stochastic processes that will be useful for 
proving our results, so that our paper will be self-contained. 
We refer the reader to classics~like \citet{karatzas2014brownian, 
billingsley2013convergence,pollard1984convergence} for more systematic 
derivations.

Throughout the rest of this section, let $\cE$ be a Banach space equipped with 
norm $\|\cdot\|$, e.g., $(\RR,|\cdot|)$ and $(\RR^D,\|\cdot\|_2)$.

\subsection{\Cadlag\ Function and Metric}

\begin{definition}[\Cadlag\ function]\label{def: cadlag}
Let $T\in[0,\infty]$. A function $g:[0,T)\to E$ is \emph{\cadlag} if for all $t\in[0,T)$ it is right-continuous at $t$ and its left limit $g(t-)$ exists. Let $\cD_{\cE}[0,T)$ be the set of all \cadlag\ function mapping $[0,T)$ into $\cE$. We also use $\cD_{\cE}[0,T)$ to denote the set of all continuous function mapping $[0,T)$ into $\cE$. By definition,  $\cC_{\cE}[0,T)  \subset \cD_{\cE}[0,T)$.
\end{definition}

\begin{definition}[Continuity modulus]\label{def: continuity modulus}
For any function $f:[0,\infty)\to\cE$ and any interval $I\subseteq[0,\infty)$, we define
\begin{align*}
    \omega(f;I) = \sup_{s,t\in I} \|f(s) - f(t)\|.
\end{align*}
For any $N\in\NN$ and $\theta>0$, we further define the continuity modulus of continuous $f$ as
\begin{align*}
    \omega_N(f,\theta) = \sup_{0\leq t\leq t+\theta\leq N} \{ \omega(f;[t,t+\theta]) \}.
\end{align*}
Moreover, the continuity modulus of \cadlag\ $f\in \cD_{\cE}[0,\infty)$ is defined as
\begin{align*}
    \omega_N'(f,\theta) = \inf\left\{\max_{i\leq r} \omega(f;[t_{i-1},t_i):0\leq t_0<\cdots<t_r=N, \inf_{i<r}(t_i-t_{i-1})\geq \theta\right\}.
\end{align*}
\end{definition}

\begin{definition}[Jump]\label{def: jump}
For any $g\in \cD_\cE[0,T)$, we define the jump of $g$ at $t$ to be
\begin{align*}
    \Delta g(t) = g(t) - g(t-).
\end{align*}
For any $\delta>0$, we define $h_\delta:[0,\infty)\to[0,\infty)$ by 
\begin{align*}
    h_\delta(r) = \begin{cases}
    0 & \text{if } r\leq\delta\\
    1 - \delta/r & \text{if } r\geq\delta
    \end{cases}.
\end{align*}
We then further define $J_\delta: \cD_{\RR^D}[0,\infty)\to \cD_{\RR^D}[0,\infty)$ \citep{katzenberger1991solutions} as
\begin{align}\label{eq:J delta}
    J_\delta(g) (t) = \sum_{0<s\leq t} h_\delta(\|\Delta g(s)\|) \Delta g(s).
\end{align}
\end{definition}

\begin{definition}[Skorokhod metric on $\cD_\cE[0,\infty)$]\label{def: skohorod metric}
For each finite $T>0$ and each pair of functions $f,g\in \cD_\cE[0,\infty)$, define $d_T(f,g)$ as the infimum of all those values of $\delta$ for which there exist grids $0\leq t_0<t_1<\cdots<t_m$ and $0<s_0<s_1<\cdots<\cdots <s_m$, with $t_k,s_k\geq T$, such that $|t_i-s_i|\leq\delta$ for $i=0,\ldots,k$, and
\begin{align*}
    \|f(t)-g(s)\|\leq \delta \qquad \text{if } (t,s)\in[t_i,t_{i+1})\times [s_i,s_{i+1})
\end{align*}
for $i=0,\ldots,k-1$. The \emph{Skorokhod metric} on $\cD_\cE[0,\infty)$ is defined to be 
\begin{align*}
    d(f,g) = \sum_{T=1}^\infty 2^{-T} \min\{1, d_T(f,g)\}.
\end{align*}
\end{definition}

\subsection{Stochastic Processes and Stochastic Integral}

Let $(\Omega,\cF, \{\cF_t\}_{t\geq0},\PP)$ be a filtered probability space.

\begin{definition}[Cross variation]\label{def: cross variation}
Let $X$ and $Y$ be two $\{\cF_t\}_{t\geq 0}$-adapted stochastic processes such that $X$ has sample paths in $\cD_{\RR^{D\times e}}[0,\infty)$ and $Y$ has samples paths in $\cD_{\RR^e}[0,\infty)$, then the cross variation of $X$ and $Y$ on $(0,t]$, denoted by $[X,Y](t)$, is defined to be the limit of
\begin{align*}
    \sum_{i=0}^{m-1} (X(t_{i+1}) - X(t_i)) ( Y(t_{i+1}) - Y(t_i))
\end{align*}
in probability as the mesh size of $0=t_0<t_1<\cdots<t_m=t$ goes to 0, if it exists. Moreover, for $Y$ itself, we write
\begin{align*}
    [Y] = \sum_{i=1}^e [Y_i,Y_i]
\end{align*}
\end{definition}

\begin{definition}[Martingale]\label{def: martingale}
Let $\{X(t)\}_{t\geq0}$ be a $\{\cF_t\}_{t\geq 0}$-adapted stochastic process. If for all $0\leq s\leq t$, it holds that
\begin{align*}
    \EE[X(t)\mid \cF_s] = X(s),
\end{align*}
then $X$ is called a martingale.
\end{definition}

\begin{definition}[Local martingale]\label{def: local martingale}
Let $\{X(t)\}_{t\geq 0}$ be a $\{\cF_t\}_{t\geq 0}$-adapted stochastic process. If there exists a sequence of $\{\cF_t\}_{t\geq 0}$-stopping time, $\{\tau_k\}_{k\geq 0}$, such that
\begin{itemize}
    \item $\PP[\tau_k<\tau_{k+1}]=1$, $\PP[\lim_{k\to\infty}\tau_k=\infty]=1$,
    
    \item and $\{X^{\tau_k}(t)\}_{t\geq 0}$ is a $\{\cF_t\}_{t\geq 0}$-adapted martingale,
\end{itemize}
then $X$ is called a local martingale.
\end{definition}

\begin{definition}[Semimartingale]\label{def: semimartingale}
Let $\{X(t)\}_{t\geq 0}$ be a $\{\cF_t\}_{t\geq 0}$-adapted stochastic process. 
If there exists a local martingale $\{M(t)\}_{t\geq 0}$ and a \cadlag\ $\{\cF_t\}_{t\geq 0}$-adapted process $\{A(t)\}_{t\geq 0}$ with bounded total variation that $X(t)=M(t)+A(t)$, 
then $X$ is called a semimartingale.
\end{definition}

\begin{definition}[It\^o's Stochastic Integral]\label{def:stochastic_integral}
If $\{X(t)\}_{t\geq 0}$ and $\{Y(t)\}_{t\geq 0}$ are adapted stochastic processes, $X$ has sample paths in $D_{\RR^{d\times e}}[0, \infty)$, sample paths in $D_{\RR^e}[0, \infty)$ and $Y$ is a semimartingale, then the integral $\int_s^t XdY$ is defined, as the limit of $\sum_{i=0}^{ n-1} X(r_i)(Y(r_{i+1}) -Y(r_i))$ where $s = r_0 < r_1 < \ldots < r_n = t$, the limit being in probability as the mesh size goes to $0$. Standard results in stochastic calculus imply that this limit exists. We call $X$ the \emph{integrand} and $Y$ the \emph{integrator}.
\end{definition}

Since all deterministic process are adapted, the above definition of integral also makes sense for deterministic functions and is a generalization of standard Riemman-Stieltjes Integral. The difference is that in the above It\^o's Stochastic Integral we use the left-end value of the integrand but the existence of Riemman-Stieltjes Integral requires the limit exists for any point within the interval. When $X$ and $Y$ don't jump together, Riemman-Stieltjes Integral exists and coincides with the It\^o's Integral.

\begin{lemma}[It\^o's Lemma]\label{lem:ito_lemma}
Let $\{X(t)\}_{t\geq 0}$ be defined through the following It\^o drift-diffusion process:
\begin{align*}
    \diff X(t) = \mu(t) \diff t + \sigma(t)\diff W(t).
\end{align*}
where $\{W(t)\}_{t\geq 0}$ is the standard Brownian motion.
Then for any twice differentiable function $f$, it holds that
\begin{align*}
    \diff f(t,X(t)) = \left(\frac{\partial f}{\partial t} + (\nabla_x f)^\top \mu_t + \frac{1}{2}\tr[\sigma^\top \nabla_x^2 f \sigma]\right)\diff t + (\nabla_x f)^\top \sigma(t)\diff W(t).
\end{align*}
\end{lemma}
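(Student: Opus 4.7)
The proof is by a partitioning argument combined with a second-order Taylor expansion. Ordinary chain rule fails here because Brownian motion has nonzero quadratic variation: heuristically $(\diff W(t))^2 = \diff t$, so squared increments of $X$ contribute at first order in $\diff t$ and the quadratic term in the Taylor series cannot be discarded.

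First I would fix $t>0$ and a partition $0=t_0<t_1<\cdots<t_n=t$ of mesh $\|\pi_n\|\to 0$. Writing $\Delta_i t=t_{i+1}-t_i$, $\Delta_i X=X(t_{i+1})-X(t_i)$, and $\Delta_i W=W(t_{i+1})-W(t_i)$, I decompose
\begin{equation*}
    f(t,X(t))-f(0,X(0)) = \sum_{i=0}^{n-1}\bigl[f(t_{i+1},X(t_{i+1}))-f(t_i,X(t_i))\bigr],
\end{equation*}
and apply Taylor's theorem to each summand to first order in $t$ and second order in $x$ about $(t_i,X(t_i))$, obtaining
\begin{equation*}
    \partial_t f\,\Delta_i t + (\nabla_x f)^\top \Delta_i X + \tfrac{1}{2}(\Delta_i X)^\top \nabla_x^2 f\,\Delta_i X + R_i,
\end{equation*}
with all derivatives evaluated at $(t_i,X(t_i))$ and $R_i$ a Taylor remainder. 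I then substitute $\Delta_i X \approx \mu(t_i)\Delta_i t + \sigma(t_i)\Delta_i W$ and track the order of each resulting contribution.

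The sums $\sum_i \partial_t f\,\Delta_i t$ and $\sum_i (\nabla_x f)^\top \Delta_i X$ converge, as $\|\pi_n\|\to 0$, to $\int_0^t \partial_t f\,\diff s$ and $\int_0^t (\nabla_x f)^\top\mu\,\diff s + \int_0^t (\nabla_x f)^\top\sigma\,\diff W$ respectively; the latter is precisely the definition of the It\^o stochastic integral from \Cref{def:stochastic_integral}. For the quadratic term, after substituting the approximation for $\Delta_i X$, only the purely Brownian piece $\sum_i (\sigma(t_i)\Delta_i W)^\top \nabla_x^2 f\,(\sigma(t_i)\Delta_i W)$ survives in the limit: any summand containing a factor of $\Delta_i t$ is of order at least $\|\pi_n\|^{3/2}$ in $L^2$, hence vanishes.

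The main obstacle is showing that this surviving quadratic sum converges in probability to $\int_0^t \tr[\sigma^\top\nabla_x^2 f\,\sigma]\,\diff s$. My plan is to write it coordinatewise as $\sum_{i,j,k}(\sigma^\top\nabla_x^2 f\,\sigma)_{jk}(t_i)\,(\Delta_i W)_j(\Delta_i W)_k$ and use the quadratic variation identity $[W_j,W_k](t)=\delta_{jk}t$: decompose each product $(\Delta_i W)_j(\Delta_i W)_k$ into its conditional mean $\delta_{jk}\Delta_i t$ plus a mean-zero fluctuation, and use independence of increments to bound the variance of the fluctuation sum by $O(\|\pi_n\|\cdot t)$, which vanishes. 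The remaining deterministic Riemann sum $\sum_i \tr[\sigma^\top\nabla_x^2 f\,\sigma](t_i)\Delta_i t$ then converges to the claimed integral by continuity of the integrand. Finally, the Taylor remainders $R_i$ and the discarded cross terms are controlled by uniform continuity of $\nabla_x^2 f$ on compact sets (using continuity of sample paths of $X$) together with $\EE\|\Delta_i W\|^2 = O(\Delta_i t)$, so $\sum_i R_i \to 0$ in probability. Collecting limits and taking $\|\pi_n\|\to 0$ produces the integrated form of the claim, which in differential notation is exactly the stated equation.
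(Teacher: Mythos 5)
The paper never proves this lemma: it is stated in the preliminaries as a classical result, with the reader referred to standard references such as \citet{karatzas2014brownian}, so there is no ``paper proof'' to compare against. Your sketch is the standard textbook argument --- partition the interval, Taylor-expand to first order in $t$ and second order in $x$, substitute the increments of the SDE, use the quadratic-variation identity $[W_j,W_k](t)=\delta_{jk}t$ to turn the surviving quadratic sum into $\int_0^t \tr[\sigma^\top \nabla^2_x f\,\sigma]\diff s$, and kill the cross terms and remainders --- and it is correct in outline. If you were to write it out fully, two points need tightening: (i) the hypothesis should be $f\in\cC^{1,2}$ (continuous first derivative in $t$, continuous second derivatives in $x$), since your remainder control relies on uniform continuity of $\nabla_x^2 f$; and (ii) the $L^2$ and variance estimates you invoke (the $O(\|\pi_n\|^{3/2})$ bounds and the $O(\|\pi_n\|\cdot t)$ fluctuation variance) presuppose boundedness of $\mu$, $\sigma$, and the relevant derivatives of $f$ along the path, which in general requires an explicit localization by stopping times (stopping when $\|X\|$, $\|\mu\|$, or $\|\sigma\|$ exceeds a level $N$, proving the identity for the stopped process, then letting $N\to\infty$); relatedly, the replacement $\Delta_i X\approx\mu(t_i)\Delta_i t+\sigma(t_i)\Delta_i W$ is itself an approximation whose error must be absorbed into the remainders when $\mu,\sigma$ are general adapted processes rather than constant on each subinterval. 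These are exactly the technicalities handled in the classical references, so your plan is the right one.
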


\subsection{Weak Convergence for Stochastic Processes}
Let $(\cD_\cE[0,\infty),\cA, d)$ be a metric space equipped with a $\sigma$-algebra $\cA$ and the Skorokhod metric defined in the previous subsection.

Let $\{X_n\}_{n\geq 0}$ be a sequence of stochastic processes on a sequence of probability spaces $\{(\Omega_n,\cF_n,\PP_n)\}_{n\geq 0}$ such that each $X_n$ has sample paths in $\cD_\cE[0,\infty)$. 
Also, let $X$ be a stochastic process on $(\Omega,\cF,\PP)$ with sample paths on $\cD_\cE[0,\infty)$.

\begin{definition}[Weak convergence]\label{def: weak convergence}
A sequence of stochastic process$\{X_n\}_{n\geq0}$ is said to \emph{converge in distribution} or \emph{weakly converge} to $X$ (written as $X_n\Rightarrow X$) if and only if for all $\cA$-measurable, bounded, and continuous function $f:\cD_\cE[0,\infty)\to\RR$, it holds that 
\begin{align}\label{eq:weak convergence}
    \lim_{n\to\infty} \EE\left[f(X_n)\right] = \EE\left[f(X)\right].
\end{align}
\end{definition}

Though we define weak convergence for a countable sequence of stochastic processes, but it is still valid if we index the stochastic processes by real numbers, e.g., $\{X_\eta\}_{\eta\geq 0}$, and consider the weak convergence of $X_\eta$ as $\eta\to 0$. 
This is because the convergence in \eqref{eq:weak convergence} is for a sequence of real numbers, which is also well-defined if we replace $\lim_{n\to \infty}$ by $\lim_{\eta\to 0}$.

\begin{definition}[$\delta$-Prohorov distance]\label{def: delta prohorov metric}
Let $\delta>0$. For any two probability measures $P$ and $Q$ on a metric space with metric $d$, let $(X,Y)$ be a coupling such that $P$ is the marginalized law of $X$ and $Q$ that of $Y$. We define
\begin{align*}
    \rho^\delta(P,Q) = \inf\{\epsilon>0:\exists(X,Y), \PP[d(X,Y)\geq\epsilon]\leq\delta\}.
\end{align*}
\end{definition}
Note this distance is not a metric because it does not satisfy triangle inequality.

\begin{definition}[Prohorov metric]\label{def: prohorov metric}
For any two probability measures $P$ and $Q$ on a metric space with metric $d$, let $(X,Y)$ be a coupling such that $P$ is the marginalized law of $X$ and $Q$ that of $Y$. 
Denote the marginal laws of $X$ and $Y$ by $\cL(X)$ and $\cL(Y)$ respectively.
We define the Prohorov metric as
\begin{align*}
    \rho(P,Q) = \inf\{\epsilon>0:\exists(X,Y), \cL(X)= P, \cL(Y)=Q, \PP[d(X,Y)\geq\epsilon]\leq\eps\}.
\end{align*}
\end{definition}

It can be shown that $ X_n\Rightarrow X$ is equivalent to $\lim_{n\to \infty}\rho(X_n,X)=0$.

\begin{theorem}[Skorokhod Representation Theorem]\label{thm:Skorokhod_representation}
Suppose $P_n,n=1,2,\ldots$ and $P$ are probability measures on $\cE$ such that $P_n\Rightarrow P$. 
Then there is a probability space $(\Omega,\cF,\PP)$ on which are defined $\cE$-valued random variables $X_n,n=1,2,\ldots$ and $X$ with distributions $P_n$ and $P$ respectively, such that $\lim_{n\to\infty} X_n = X$ a.s.
\end{theorem}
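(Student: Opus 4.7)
The plan is to carry out the classical Skorokhod construction, taking $(\Omega, \cF, \PP) = ([0,1], \cB([0,1]), \lambda)$ with Lebesgue measure and building the random variables $X_n, X$ as pointwise limits of piecewise-constant approximations coming from nested Borel partitions of $\cE$ into small $P$-continuity sets. I will assume $\cE$ is separable, which is the relevant regime for the paper (the Skorokhod space $\cD_\cE[0,\infty)$ is Polish when $\cE$ is); in the non-separable case one first reduces to the separable support of $P$ using tightness of weakly convergent sequences on Polish spaces.

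The first step is to construct, for each $k \geq 1$, a countable Borel partition $\cE = \bigsqcup_i B_{k,i}$ with $\mathrm{diam}(B_{k,i}) < 2^{-k}$ and $P(\partial B_{k,i}) = 0$ for every $i$, nested in the sense that the level-$(k{+}1)$ partition refines the level-$k$ partition. Separability yields a countable cover by balls of radius $2^{-k-1}$; for each center only countably many radii place positive $P$-mass on the corresponding sphere, so radii can be perturbed slightly to make every ball a $P$-continuity set; disjointifying in a fixed enumeration and intersecting with the level-$(k{-}1)$ partition yields the desired nested family. Choose representative points $x_{k,i} \in B_{k,i}$.

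For each $n$ and each $k$, partition $[0,1]$ from left to right into intervals $I_{n,k,i}$ of length $P_n(B_{k,i})$, arranged in the fixed enumeration and consistent with the nested refinement; similarly build $I_{k,i}$ of length $P(B_{k,i})$ for the limit measure. Define $X_n^{(k)}(\omega) = x_{k,i}$ on $I_{n,k,i}$ and $X^{(k)}(\omega) = x_{k,i}$ on $I_{k,i}$. Because each $B_{k,i}$ is a $P$-continuity set, the Portmanteau theorem gives $P_n(B_{k,i}) \to P(B_{k,i})$ for every $i$, so the interval endpoints converge and $X_n^{(k)}(\omega) \to X^{(k)}(\omega)$ for every $\omega$ outside a countable set of limit endpoints, which has Lebesgue measure zero. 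Nestedness and $\mathrm{diam}(B_{k,i}) < 2^{-k}$ give $d(X^{(k+1)}(\omega), X^{(k)}(\omega)) < 2^{-k}$ uniformly, so $X(\omega) := \lim_k X^{(k)}(\omega)$ exists for every $\omega$ and has law $P$. A diagonal choice of levels $k(n) \to \infty$ slowly enough that $\sum_i |P_n(B_{k(n),i}) - P(B_{k(n),i})| \to 0$ then yields $d(X_n^{(k(n))}, X) \to 0$ almost surely.

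The main obstacle is that $X_n^{(k(n))}$ carries only a \emph{discretized} law (supported on the countable set $\{x_{k(n),i}\}$), not $P_n$ itself, so a final coupling step is needed to produce $X_n$ with marginal exactly $P_n$. I enlarge the probability space to $[0,1]^2$ with product Lebesgue measure and, within each cell $I_{n,k(n),i}$, use the second coordinate $U \sim \mathrm{Unif}[0,1]$ as the randomness driving a measurable selection of a sample from the conditional law $P_n(\cdot \mid B_{k(n),i})$; the resulting $X_n$ has exact marginal $P_n$, differs from $X_n^{(k(n))}$ by at most $2^{-k(n)}$ in $d$, and hence still converges almost surely to $X$. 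Beyond this, the only delicate point is consistency of the cell enumeration across $n$ and $k$: one must fix a single enumeration at the outset and never reorder, so that convergence of endpoints on $[0,1]$ really yields pointwise convergence of the $X_n^{(k)}$; this is pure bookkeeping but is what makes the construction give almost-sure, rather than merely in-distribution, convergence.
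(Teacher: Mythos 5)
The paper does not prove this statement at all: it is quoted as a classical theorem and deferred to the cited textbooks (Billingsley, Pollard), so the only meaningful comparison is with the standard textbook argument, which your construction essentially reproduces (nested countable partitions into $P$-continuity cells of shrinking diameter, left-to-right interval packing of $[0,1]$ according to cell masses, a completeness argument for $X=\lim_k X^{(k)}$, and a final auxiliary-uniform randomization to upgrade the discretized law to exactly $P_n$). Those ingredients are set up correctly, and the side caveats (separability, completeness of $\cE$ so the Cauchy sequence $X^{(k)}(\omega)$ converges, and realizability of the conditional laws $P_n(\cdot\mid B_{k(n),i})$ as Borel images of a uniform variable) are harmless in the paper's setting, where everything is Polish.

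There is, however, a genuine gap at the diagonal step. The condition that $k(n)\to\infty$ slowly enough that $\sum_i |P_n(B_{k(n),i})-P(B_{k(n),i})|\to 0$ does not yield $d(X_n^{(k(n))},X)\to 0$ almost surely; in fact it does not even force $\lambda\{X_n^{(k(n))}\neq X^{(k(n))}\}\to 0$. With countably many cells, what matters is the cumulative shift of the interval endpoints relative to the \emph{individual} cell lengths, not the total $\ell^1$ error: at a level with $N$ cells of $P$-mass $1/N$ each, letting $P_n$ move mass $1/N$ from the last cell to the first gives $\sum_i|P_n(B_i)-P(B_i)|=2/N\to 0$, yet every interval is translated by a full cell width, so the two discretized variables disagree on a set of Lebesgue measure $1-1/N$, and the representative points of adjacent cells need not be close in $\cE$. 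Moreover, even if the disagreement probability did vanish, that only gives convergence in probability for each $n$, not almost sure convergence, which is what the theorem asserts. The repair is available inside your own construction: use the nestedness not only for the Cauchy property of $X^{(k)}$ but to propagate agreement from a \emph{fixed} coarse level. For a.e.\ fixed $\omega$ and each fixed $k$, $\omega$ eventually (in $n$) lies in the $P_n$-interval of the same level-$k$ cell as its $P$-interval; by nestedness, for every $k'\geq k$ the value $X_n^{(k')}(\omega)$ then lies in that level-$k$ cell, hence within roughly $2^{-k+2}$ of $X(\omega)$. This gives almost sure convergence of $X_n^{(k(n))}$ to $X$ for \emph{any} choice $k(n)\to\infty$, with no rate condition (alternatively, diagonalize on the disagreement measure itself, choosing $k(n)$ so that $\lambda\{X_n^{(k(n))}\neq X^{(k(n))}\}\leq 2^{-n}$ and invoking Borel--Cantelli). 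As written, the key almost-sure step is unjustified and the stated sufficient condition is not sufficient.
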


The main convergence result in \citet{katzenberger1991solutions} (\Cref{thm:limit_diffusion_main}) are in the sense of Skorokhod metric in~\Cref{def: skohorod metric}, which is harder to understand and use compared to the more common \emph{uniform metric} (\Cref{def: uniform metric}). However, convergence in Skorokhod metric and uniform metric indeed coincide with each other when the limit is in $\cC_{\RR^D}[0,\infty)$, i.e., the continuous functions.

\begin{definition}[Uniform metric on $\cD_\cE[0,\infty)$]\label{def: uniform metric}
 For each finite $T>0$ and each pair of functions $f,g\in \cD_\cE[0,T)$, the \emph{uniform metric} is defined to be
 \begin{align*}
     d_U(f,g;T) = \sup_{t\in[0,T)} \|f(t) - g(t)\|.
 \end{align*}
 The \emph{uniform metric} on $\cD_\cE[0,\infty)$ is defined to be 
\begin{align*}
    d_U(f,g) = \sum_{T=1}^\infty 2^{-T} \min\{1, d_U(f,g;T)\}.
\end{align*}
\end{definition}

\begin{lemma}[Problem 7, Section 5, \citet{pollard1984convergence}]\label{lem:two_metric_coincide}
If $X_n\Rightarrow X$ in the Skorokhod sense, and $X$ has sample paths in $\cC_{\RR^D}[0,\infty)$, then $X_n\Rightarrow X$ in the uniform metric.	
\end{lemma}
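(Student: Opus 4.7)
The plan is a two-step reduction. First I would establish a deterministic fact: if $f_n, f \in \cD_\cE[0,\infty)$ with $d(f_n, f) \to 0$ and $f \in \cC_\cE[0,\infty)$, then $d_U(f_n, f; T) \to 0$ for every finite $T$. Then I would upgrade this to the probabilistic statement via the Skorokhod Representation Theorem.

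For the deterministic step, fix $T > 0$. From \Cref{def: skohorod metric}, $d_{T+1}(f_n, f) < \delta$ furnishes matching grids $\{t_i\}_{i=0}^k$ and $\{s_i\}_{i=0}^k$ reaching past $T+1$, with $|t_i - s_i| \leq \delta$ and $\|f_n(t) - f(s)\| \leq \delta$ whenever $(t,s)$ lies in paired intervals. For any $t \in [0, T]$ belonging to $[t_i, t_{i+1})$, choose $s = s_i + \frac{t - t_i}{t_{i+1} - t_i}(s_{i+1} - s_i) \in [s_i, s_{i+1})$ by linear interpolation. A short calculation using $|t_j - s_j| \leq \delta$ for $j = i, i+1$ gives $|s - t| \leq 3\delta$, since paired interval lengths can differ by at most $2\delta$. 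Hence
\begin{align*}
\|f_n(t) - f(t)\| \;\leq\; \|f_n(t) - f(s)\| + \|f(s) - f(t)\| \;\leq\; \delta + \omega(f;[0,T+1])\text{-modulus at scale }3\delta,
\end{align*}
and since $f$ is uniformly continuous on the compact $[0, T+1]$, both terms vanish as $\delta \to 0$. This gives $d_U(f_n, f; T) \to 0$, and summing the weights in \Cref{def: uniform metric} yields $d_U(f_n, f) \to 0$.

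For the probabilistic step, invoke \Cref{thm:Skorokhod_representation} on the Polish space $(\cD_\cE[0,\infty), d)$ to realize versions $\tilde X_n \stackrel{d}{=} X_n$ and $\tilde X \stackrel{d}{=} X$ on a common probability space with $d(\tilde X_n, \tilde X) \to 0$ almost surely. Since $X$, and hence $\tilde X$, has continuous sample paths almost surely, the deterministic fact applies pathwise to yield $d_U(\tilde X_n, \tilde X) \to 0$ almost surely. Finally, for any bounded $h : \cD_\cE[0,\infty) \to \RR$ which is continuous with respect to $d_U$, continuity and almost sure uniform convergence give $h(\tilde X_n) \to h(\tilde X)$ almost surely, and the bounded convergence theorem yields $\EE[h(\tilde X_n)] \to \EE[h(\tilde X)]$. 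Because these expectations depend only on the laws, $\EE[h(X_n)] \to \EE[h(X)]$, which is exactly weak convergence under the uniform metric per \Cref{def: weak convergence}.

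The main obstacle is the deterministic step: the Skorokhod metric allows nontrivial time reparametrization, and one must absorb this freedom into the modulus of continuity of the limit $f$. The key geometric observation is that paired intervals in the Skorokhod grid can differ in length by at most $2\delta$, so linear interpolation selects a matched $s$ within $3\delta$ of $t$; continuity of $f$ then renders this tiny time shift harmless. Without continuity, the matched $s$ could straddle a jump of $f$ and the argument would genuinely fail---reflecting the fact that Skorokhod and uniform topologies really differ off the continuous paths.
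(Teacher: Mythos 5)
Your proposal is correct. Note that the paper does not prove this lemma at all---it imports it by citation (Problem 7, Section 5 of Pollard), so there is no in-paper proof to compare against; your two-step argument (a deterministic lemma showing that the Skorokhod time-distortion can be absorbed into the modulus of continuity of the continuous limit, followed by the Skorokhod representation theorem and bounded convergence) is exactly the standard solution to that exercise. The bookkeeping in the deterministic step checks out: with $|t_i-s_i|,|t_{i+1}-s_{i+1}|\leq\delta$ the paired interval lengths differ by at most $2\delta$, so your interpolated $s$ satisfies $|s-t|\leq 3\delta$ and lies in $[0,T+1]$ once $\delta<1/3$, after which uniform continuity of $f$ on $[0,T+1]$ finishes it, and the weighted sum in \Cref{def: uniform metric} converges by truncating the tail. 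One point worth being explicit about: continuity of $h$ with respect to $d_U$ does not by itself give measurability (the uniform metric on $\cD_{\RR^D}[0,\infty)$ is non-separable and its Borel $\sigma$-algebra is strictly larger than $\cA$), but this causes no gap because \Cref{def: weak convergence} already restricts the test functions to be $\cA$-measurable, so $h(\tilde X_n)$ and $h(\tilde X)$ are genuine random variables and your bounded-convergence step is valid.
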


\begin{remark}
We shall note the uniform metric defined above is weaker than $\sup_{t\in[0,\infty)} \|f(t) - g(t)\|$. 
Convergence in the uniform metric on $[0,\infty]$ defined in \Cref{def: uniform metric} is equivalent to convergence in the uniform metric on each compact set $[0,T]$ for $T\in \mathbb{N}^+$. 
The same holds for the Skorokhod topology.
\end{remark}

\section{Limiting Diffusion of SGD}

In this section, we give a complete derivation of the limiting diffusion of SGD.
Here we use $\Rightarrow$ to denote the convergence in distribution. 
For any $U\subseteq\RR^D$, we denote by $\mathring{U}$ its interior. For linear space $S$, we use $S^\perp$ to denote its orthogonal complement. 

First, as mentioned in \Cref{assump:neighborhood}, we verify that the mapping $\Phi$ is $\cC^2$ in \Cref{eq:Phi_C2}. In \Cref{sec:approx_SGD_by_SDE} we discuss how different time scalings could affect the coefficients in SDE~\eqref{eq:canonical_SDE_v2} and \eqref{eq:cannonical_SDE_v3}. Then we check the necessary conditions for applying the results in \citet{katzenberger1991solutions} in \Cref{sec:katzenberger_conditions} and recap the corresponding theorem for the asymptotically continuous case in \Cref{sec:katzenberger_results}. 
Finally, we provide a user-friendly interface for Katzenberger's theorem in \Cref{sec:katzenberger_interface}.

\begin{restatable}[Implication of \citet{Falconer_1983}]{lemma}{eqPhiC}\label{eq:Phi_C2}
Under \Cref{assump:neighborhood},  $\Phi$ is $\cC^2$ on $U$.	
\end{restatable}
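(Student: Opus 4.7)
The plan is to deduce the statement by directly invoking the smoothness theorem of \citet{Falconer_1983} for limit maps of flows near a normally attracting fixed-point manifold. Concretely, the gradient flow $\phi(x,t)$ is generated by the vector field $-\nabla L$, which is $\cC^2$ because $L \in \cC^3$ by \Cref{assump:manifold}. Every $x \in \Gamma$ is a fixed point of this flow (since $\nabla L(x) = 0$ at a local minimizer), so $\Gamma$ is an invariant submanifold of fixed points, and by \Cref{assump:neighborhood} the open set $U$ is contained in the basin of attraction of $\Gamma$. What remains is to check that $\Gamma$ is \emph{normally attracting} for the flow, which is the hypothesis Falconer needs to transfer the $\cC^2$ regularity of the vector field to the limit map $\Phi$.

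The key step, therefore, is an eigenvalue analysis of the linearization $-\nabla^2 L(x)$ at $x \in \Gamma$. First, differentiating the identity $\nabla L \equiv 0$ along $\Gamma$ shows $\nabla^2 L(x)\, v = 0$ for every $v \in T_x \Gamma$, so $T_x\Gamma \subseteq \ker \nabla^2 L(x)$. Since $\dim T_x\Gamma = D - M$ and $\rank \nabla^2 L(x) = M$ by \Cref{assump:manifold}, this inclusion is an equality: $\ker \nabla^2 L(x) = T_x\Gamma$ and the $M$ nonzero eigenvalues of $\nabla^2 L(x)$ all live in the normal space $T_x^\perp \Gamma$. Because $x$ is a local minimizer, these eigenvalues are strictly positive, so the linearization $-\nabla^2 L(x)$ of the flow has $M$ strictly negative eigenvalues transverse to $\Gamma$ and zero eigenvalues along $\Gamma$. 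This is precisely the normal hyperbolicity / normal attraction condition required by Falconer, and an elementary compactness argument on the open neighborhood gives uniform exponential contraction locally.

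With these hypotheses verified, the result of \citet{Falconer_1983} applies: the map $x \mapsto \lim_{t\to\infty} \phi(x,t) = \Phi(x)$ inherits the regularity of the generating vector field on the basin of attraction, hence $\Phi \in \cC^2(U)$. The only genuinely non-trivial part of the argument is the eigenvalue analysis identifying $\ker \nabla^2 L(x)$ with $T_x \Gamma$, which is forced by combining the dimension of $\Gamma$ with the rank assumption; I do not anticipate a substantive obstacle beyond quoting Falconer's theorem, since all the qualitative input hypotheses (regularity, minimizer, rank, convergence to $\Gamma$) are exactly what \Cref{assump:manifold,assump:neighborhood} provide.
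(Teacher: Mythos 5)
Your approach matches the paper's: both invoke \citet{Falconer_1983} to get differentiability of the limit map of a dynamical system near a normally attracting manifold of fixed points, and the eigenvalue analysis you supply (identifying $\ker\nabla^2 L(x) = T_x\Gamma$ via the rank hypothesis, with the $M$ transverse eigenvalues strictly positive at a local minimizer) is exactly the hypothesis-verification that the paper's one-line proof leaves implicit. The only cosmetic difference is that the paper applies Falconer's Theorem 5.1 to the discrete time-one map $f=\phi(\cdot,1)$ (whose Jacobian on $\Gamma$ is $e^{-\nabla^2 L}$, with spectrum $\{1\}$ tangentially and inside the unit disk transversally), whereas you phrase the normal-attraction condition directly for the flow; these are equivalent.
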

\begin{proof}[Proof of \Cref{eq:Phi_C2}]
Applying Theorem 5.1 of \citet{Falconer_1983} with $f(\cdot)=\phi(\cdot,1)$ suffices.
\end{proof}

\subsection{Approximating SGD by SDE}\label{sec:approx_SGD_by_SDE}
Let's first clarify how we derive the SDEs, \eqref{eq:canonical_SDE_v2} and \eqref{eq:cannonical_SDE_v3}, that approximate SGD \eqref{eq:SGD} under different time scalings. 
Recall $W(t)$ is $\Xi$-dimensional Brownian motion and that $\sigma(X):\RR^D\to \RR^{D\times \Xi}$ is a deterministic noise function.
As proposed by \cite{li2017stochastic}, one approach to approximate SGD \eqref{eq:SGD} by SDE is to consider the following SDE:
\begin{align*}
    \diff X(t) = -\nabla L(X(t))\diff t + \sqrt{\eta} \sigma(X(t)) \diff W(t),
\end{align*}
where  the time correspondence is $t=k\eta$, i.e., $X(k\eta)\approx x_\eta(k)$.

Now rescale the above SDE by considering $\tilde X(t) = X(t\eta)$, which then yields
\begin{align*}
    \diff \tilde X(t) = \diff X(t\eta) &= - \nabla L(X(t\eta)) \diff (t\eta) + \sqrt{\eta} \sigma(X(t\eta)) \diff W(t\eta)\\
    &= -\eta\nabla L(X(t\eta))\diff t + \sqrt{\eta} \sigma(X(t\eta)) \diff W(t\eta).
\end{align*}
Now we define $W'(t) = \frac{1}{\sqrt{\eta}}W(t\eta)$, and it's easy to verify that $W'(t)$ is also a $\Xi$-dimensional brownian motion, which means $W'\overset{d}{=}W$, \emph{i.e.}, $W$ and $W'$ have the same sample paths in $\cC_{\RR^d}[0,\infty)$. Thus
\begin{align*}
    \diff \tilde X(t) &= -\eta \nabla L(X(t\eta))\diff t + \eta\sigma(X(t\eta))\diff W'(t)\\
    &= -\eta\nabla L(\tilde X(t))\diff t + \eta\sigma(\tilde X(t)) \diff W'(t),
\end{align*}
where the time correspondence is $t=k$, i.e., $\tilde X(k) \approx x_\eta(k)$.
The above SDE is exactly the same as~\eqref{eq:canonical_SDE_v2}.

Then, to accelerate the above SDE by $\eta^{-2}$ times, let's define $\bar X(t) = \tilde X(t/\eta^2)$.
Then it follows that
\begin{align*}
    \diff\bar X(t) = \diff \tilde X(t/\eta^2) &= -\eta\nabla L(\tilde X(t/\eta^2))\diff t/\eta^2 + \eta\sigma(\tilde X(t/\eta^2)) \diff W(t/\eta^2)\\
    &= -\frac{1}{\eta} \nabla L(\bar X(t))\diff t + \sigma(\bar X(t)) \diff \left( \eta W(t/\eta^2)\right)
\end{align*}
Again note that $ \eta W(t/\eta^2) \overset{d}{=} W(t)$ in sample paths and thus is also a $\Xi$-Brownian motion.
Here the time correspondence is $t=k\eta^2$, i.e., evolving for constant time with the above SDE approximates $\Omega(1/\eta^2)$ steps of SGD.
In this way, we derive SDE \eqref{eq:cannonical_SDE_v3} in the main context.

\subsection{Necessary Conditions}\label{sec:katzenberger_conditions}


Below we collect the necessary conditions imposed on $\{Z_n\}_{n\geq 1}$ and $\{A_n\}_{n\geq 1}$ in \citet{katzenberger1991solutions}.
Recall that we consider the following stochastic process
\begin{align*}
    X_n(t) = X(0) + \int_0^t \sigma(X_n(s))\diff Z_n(s) 
    - \int_0^t \nabla L(X_n(s))\diff A_n(s).
\end{align*}
For any stopping time $\tau$, the stopped process is defined as $X_n^\tau(t) = 
X_n(t\wedge\tau)$.
For any compact $K\subset U$, we define the stopping time of $X_n$ leaving $K$ 
as $\lambda_n(K) = \inf\{t\geq 0\mid X_n(t-)\notin \mathring{K} \text{ or } 
X_n(t)\notin \mathring{K}\}$.

\begin{condition}\label{cond:asymptoticall_continuous}
The integrator sequence $\{A_n\}_{n\geq 1}$ is \emph{asymptotically continuous}: 
	$\sup\limits_{t>0} |A_n(t)-A_n(t-)|\Rightarrow 0$ where $A_n(t-) = \lim_{s\to t-}A_n(s)$ is the left limit of $A_n$ at $t$.
\end{condition}
\begin{condition}\label{cond:C5_2}
The integrator sequence $\{A_n\}_{n\geq1}$ \emph{increases infinitely fast}: $\forall\epsilon>0$, $\inf\limits_{t\ge 0} (A_n(t+\epsilon) - A_n(t)) \Rightarrow \infty$.
\end{condition}

\begin{condition}[Eq.(5.1), \citealt{katzenberger1991solutions}]\label{cond:5_1}
For every $T>0$, as $n\to\infty$, it holds that
\begin{align*}
    \sup_{0<t\leq T\wedge\lambda_n(K)} \|\Delta Z_n(t)\|_2 \Rightarrow 0.
\end{align*}
\end{condition}

\begin{condition}[Condition 4.2, \citealt{katzenberger1991solutions}]\label{cond:4_2}
For each $n\geq 1$, let $Y_n$ be a $\{\cF_t^n\}$-semimartingale with sample paths in $\cD_{\RR^D}[0,\infty)$. 
Assume that for some $\delta>0$ (allowing $\delta=\infty$) and every $n\geq 1$ there exist stopping times $\{\tau_n^m\mid m\geq 1\}$ and a decomposition of $Y_n-J_{\delta}(Y_n)$ into a local martingale $M_n$ plus a finite variation process $F_n$ such that $\PP[\tau_n^m \leq m]\leq 1/m$, $\{[M_n](t\wedge \tau_n^m)+T_{t\wedge\tau_n^m}(F_n)\}_{n\geq 1}$ is uniformly integrable for every $t\geq 0$ and $m\geq 1$, and
\begin{align*}
    \lim_{\gamma\to 0} \limsup_{n\to\infty} \PP\left[\sup_{0\leq t\leq T} (T_{t+\gamma}(F_n) - T_t(F_n)) > \epsilon\right] = 0,
\end{align*}
for every $\epsilon>0$ and $T>0$, where $T_t(\cdot)$ denotes total variation on the interval $[0,t]$.
\end{condition}

\begin{lemma}\label{lem:verify_condition}
For SGD iterates defined using the notation in \Cref{lem:sgd_as_asymptotically_continuous_dynamic}, the sequences $\{A_n\}_{n\geq 1}$ and $\{Z_n\}_{n\geq 1}$ satisfy Condition \ref{cond:asymptoticall_continuous}, \ref{cond:C5_2}, \ref{cond:5_1} and \ref{cond:4_2}.
\end{lemma}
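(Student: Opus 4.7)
The strategy is to verify each of the four conditions directly from the explicit definitions $A_n(t)=\eta_n\lfloor t/\eta_n^2\rfloor$ and $Z_n(t)=\eta_n\sqrt{\Xi}\sum_{k=1}^{\lfloor t/\eta_n^2\rfloor}(\ind_{\xi_k}-\tfrac{1}{\Xi}\ind)$, using only the facts that $\eta_n\to 0$ and that the increments $\ind_{\xi_k}-\tfrac{1}{\Xi}\ind$ are i.i.d., mean zero, and bounded. Since every bound we obtain will be deterministic (or almost sure), convergence in probability will come for free.

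First I would handle the two conditions on $A_n$. For \Cref{cond:asymptoticall_continuous}, the only jumps of $A_n$ occur at the points $k\eta_n^2$ and each has size exactly $\eta_n$, so $\sup_{t>0}|A_n(t)-A_n(t-)|=\eta_n\to 0$ deterministically. For \Cref{cond:C5_2}, I would use the elementary estimate
\begin{align*}
A_n(t+\epsilon)-A_n(t)=\eta_n\bigl(\lfloor(t+\epsilon)/\eta_n^2\rfloor-\lfloor t/\eta_n^2\rfloor\bigr)\ge \eta_n\bigl(\epsilon/\eta_n^2-1\bigr)=\epsilon/\eta_n-\eta_n,
\end{align*}
which is independent of $t$ and tends to $+\infty$.

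Next I would verify \Cref{cond:5_1} on $Z_n$. Each jump of $Z_n$ has norm at most $\eta_n\sqrt{\Xi}\,\|\ind_{\xi_k}-\tfrac{1}{\Xi}\ind\|_2\le \eta_n\sqrt{\Xi-1}$, so $\sup_{t>0}\|\Delta Z_n(t)\|_2\to 0$ deterministically and hence in probability, regardless of the stopping time $\lambda_n(K)$.

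The main step is \Cref{cond:4_2}, though it is still short. Fix $\delta>0$. Since the jumps of $Z_n$ are bounded by $\eta_n\sqrt{\Xi-1}$, for all $n$ large enough this bound is below $\delta$, so $h_\delta$ annihilates every jump and $J_\delta(Z_n)\equiv 0$. Thus $Z_n-J_\delta(Z_n)=Z_n$, and I claim this is a martingale with respect to the natural filtration $\cF_t^n=\sigma(\xi_k:k\le\lfloor t/\eta_n^2\rfloor)$: the increments are i.i.d. with mean zero by the uniform distribution of $\xi_k$. Take the decomposition $M_n=Z_n$, $F_n\equiv 0$, and stopping times $\tau_n^m=2m$ (so $\PP[\tau_n^m\le m]=0\le 1/m$). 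A direct computation of the quadratic variation gives
\begin{align*}
[Z_n](t)=\eta_n^2\Xi\sum_{k=1}^{\lfloor t/\eta_n^2\rfloor}\Bigl\|\ind_{\xi_k}-\tfrac{1}{\Xi}\ind\Bigr\|_2^2=(\Xi-1)\eta_n^2\lfloor t/\eta_n^2\rfloor\le (\Xi-1)t,
\end{align*}
so $[M_n](t\wedge\tau_n^m)$ is uniformly bounded by $(\Xi-1)\cdot 2m$, hence trivially uniformly integrable in $n$. The finite-variation displacement condition is trivial because $F_n\equiv 0$, so $T_{t+\gamma}(F_n)-T_t(F_n)=0$.

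I do not expect any real obstacle: once the three deterministic bounds (size of jumps of $A_n$, growth rate of $A_n$, size of jumps of $Z_n$) and the martingale property of $Z_n$ are in hand, each condition is either immediate or a single-line estimate. The only minor subtlety is ensuring the statement of \Cref{cond:4_2} is satisfied for every $\delta>0$ rather than just for small $\delta$; this is handled by noting that the conclusion $J_\delta(Z_n)\equiv 0$ only needs to hold for all sufficiently large $n$, which is exactly what $\eta_n\sqrt{\Xi-1}<\delta$ provides.
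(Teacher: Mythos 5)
Your proof is correct and follows essentially the same route as the paper's: both verify Conditions \ref{cond:asymptoticall_continuous} and \ref{cond:C5_2} from the explicit formula for $A_n$, Condition \ref{cond:5_1} from the deterministic jump-size bound on $Z_n$, and Condition \ref{cond:4_2} by identifying $Z_n$ as a martingale with vanishing finite-variation part and bounding the quadratic variation by a constant times $t$. Your version is marginally more careful in a few spots -- you use the exact jump size $\eta_n\sqrt{\Xi-1}$ instead of the paper's crude $2\eta_n\sqrt{\Xi}$, you spell out that $J_\delta(Z_n)\equiv 0$ for large $n$ (or equivalently one could take $\delta=\infty$ as the condition permits), you explicitly pick $\tau_n^m=2m$, and you explicitly note that the total-variation requirement is vacuous when $F_n\equiv 0$ -- but these are details the paper leaves implicit, not a different argument.
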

\begin{proof}[Proof of \Cref{lem:verify_condition}]
\Cref{cond:asymptoticall_continuous} is obvious from the definition of $\{A_n\}_{n\geq 1}$.

Next, for any $\epsilon>0$ and $t\in[0,T]$, we have
\begin{align*}
    A_n(t+\epsilon) - A_n(t) &= \eta_n \cdot \left\lfloor \frac{t+\epsilon}{\eta_n^2} \right\rfloor - \eta_n \cdot \left\lfloor \frac{t}{\eta_n^2} \right\rfloor \geq \frac{t+\epsilon -\eta_n^2}{\eta_n} - \frac{t}{\eta_n}= \frac{\epsilon - \eta_n^2}{\eta_n},
\end{align*}
which implies that $\inf_{0\leq t\leq T} (A_n(t+\epsilon)-A_n(t))> \epsilon/(2\eta_n)$ for small enough $\eta_n$. Then taking $n\to\infty$ yields the Condition \ref{cond:C5_2}.

For Condition \ref{cond:5_1}, note that 
\begin{align*}
    \Delta Z_n(t) = \begin{cases}
    \eta_n \sqrt{\Xi} (\ind_{\xi_k}-\frac{1}{\Xi}\ind) & \text{if } t=k\cdot\eta_n^2,\\
    0 & \text{otherwise}.
    \end{cases}
\end{align*}
Therefore, we have $\|\Delta Z_n(t)\|_2\leq 2\eta_n\sqrt{\Xi}$ for all $t > 0$. 
This implies that $\|\Delta Z_n(t)\|_2\to 0$ uniformly over $t>0$ as $n\to \infty$, which verifies Condition \ref{cond:5_1}.

We proceed to verify Condition \ref{cond:4_2}. 
By the definition of $Z_n$, we know that $\{Z_n(t)\}_{t\geq 0}$ is a jump process with independent increments and thus is a martingale. 
Therefore, by decomposing $Z_n = M_n + F_n$ with $M_n$ being a local martingale and $F_n$ a finite variation process, we must have $F_n=0$ and $M_n$ is $Z_n$ itself. 
It then suffices to show that $[M_n](t\wedge\tau_n^m)$ is uniformly integrable for every $t\geq 0$ and $m\geq 1$. 
Since $M_n$ is a pure jump process, we have
\begin{align*}
    [M_n](t\wedge\tau_n^m) &= \sum_{0<s\leq t\wedge\tau_n^m} 
    \|\Delta M_n(s)\|_2^2 \leq \sum_{0<s\leq t} \|\Delta M_n(s)\|_2^2\\
    &= \sum_{k=1}^{\lfloor t/\eta_n^2\rfloor} \left\|\eta_n\sqrt{\Xi}\left(
    \ind_{\xi_k} - \frac{1}{\Xi}\ind\right)\right\|_2^2 
    \leq 4\Xi\sum_{k=1}^{\lfloor t/\eta_n^2\rfloor} \eta_n^2 
    \leq 4\Xi t.
\end{align*}
This implies that $[M_\eta](t\wedge\tau_\eta^m)$ is universally bounded by $4t$, and thus $[M_\eta](t\wedge\tau_\eta^m)$ is uniformly integrable. 
This completes the proof.
\end{proof}

\lemsgdasasypmtoticallycontinuousdynamic*
\begin{proof}[Proof of \Cref{lem:sgd_as_asymptotically_continuous_dynamic}]
    For any $n\geq 1$, it suffices to show that given $X_n(k\eta_n^2) = x_{\eta_n}(k)$, we further have $X_n((k+1)\eta_n^2) = x_{\eta_n}(k+1)$. 
    By the definition of $X_n(t)$ and note that $A_n(t),Z_n(t)$ are constants on $[k\eta_n^2,(k+1)\eta_n^2)$, we have that $X_n(t)=X_n(k\eta_n^2)$ for all $t\in[k\eta_n^2,(k+1)\eta_n^2)$, and therefore 
\begin{align*}
        &X_n((k+1)\eta_n^2)-X_n(k\eta_n^2)\\
         =&  -\int_{k\eta_n^2}^{(k+1)\eta_n^2}\nabla L(X_n(t))\diff A_n(t) + \int_{k\eta_n^2}^{(k+1)\eta_n^2} \sigma(X_n(t)) \diff Z_n(t)  \\
        =&  -\nabla L(X_n(k\eta_n^2)) (A_n((k+1)\eta_n^2)-A_n(k\eta_n^2)) 
        + \sigma(X_n(k\eta_n^2))( Z_n((k+1)\eta_n^2) - Z_n(k\eta_n^2)  )\\
        =&  - \eta_n\nabla L(X_n(k\eta_n^2)) + \eta_n\sqrt{\Xi} \sigma_{\xi_k}
        (X_n(k\eta_n^2))\\
        =& - \eta_n\nabla L(x_{\eta_n}(k)) + \eta_n\sqrt{\Xi} 
        \sigma_{\xi_k}(x_{\eta_n}(k)) = x_{\eta_n}(k+1)-x_{\eta_n}(k)
    \end{align*}
    where the second equality is because $A_n(t)$ and $Z_n(t)$ are constant on interval $[k\eta_n^2,(k+1)\eta_n^2)$. 
    This confirms the alignment between $\{X_n(k\eta_n^2)\}_{k\geq 1}$ and $\{x_{\eta_n}(k)\}_{k\geq 1}$. 
    
    For the second claim, note that $\sigma(x)\EE Z_n(t)\equiv0$ for all $x\in\RR^D,t\ge 0$ (since the noise has zero-expectation) and that $\{Z_n(t)-\EE Z_n(t)\}_{t\geq 0}$ will converge in distribution to a Brownian motion by the classic functional central limit theorem (see, for example, Theorem 4.3.5 in \citet{whitt2002stochastic}). 
    Thus, the limiting diffusion of $X_n$ as $n\to\infty$ can be obtained by substituting $Z$ with the standard Brownian motion $W$ in \eqref{eq:katzenberger_general}. 
    This completes the proof.
\end{proof}

\subsection{Katzenberger's Theorem for Asymptotically Continuous Case}\label{sec:katzenberger_results}

The full Katzenberger's theorem deals with a more general case, which only requires the sequence of intergrators to be \emph{asymptotically continuous}, thus including SDE~\eqref{eq:cannonical_SDE_v3} and SGD~\eqref{eq:SGD} with $\eta$ goes to $0$. 

To describe the results in \citet{katzenberger1991solutions}, we first introduce some definitions. 
For each $n\geq 1$, let $(\Omega^n,\cF^n,\{\cF_t^n\}_{t\geq 0},\PP)$ be a filtered probability space, $Z_n$ an $\RR^e$-valued cadlag $\{\cF_t^n\}$-semimartingale with $Z_n(0)=0$ and $A_n$ a real-valued cadlag $\{\cF_t^n\}$-adapted nondecreasing process with $A_n(0)=0$. 
Let $\sigma_n: U\to\MM(D,e)$ be continuous with $\sigma_n\to\sigma$ uniformly on compact subsets of $U$. 
Let $X_n$ be an $\RR^D$-valued \cadlag\ $\{\cF_t^n\}$-semimartingale satisfying, for all compact $K\subset U$,
\begin{align}\label{eq:large_drift}
    X_n(t) = X(0) + \int_0^t \sigma(X_n)\diff Z_n + \int_0^t -\nabla L(X_n) \diff A_n
\end{align}
for all $t\leq\lambda_n(K)$ where $\lambda_n(K) = \inf\{t\geq 0\mid X_n(t-)\notin\mathring{K} \text{ or } X_n(t)\notin\mathring{K}\}$ is the stopping time of $X_n$ leaving $K$.

\begin{theorem}[Theorem 6.3, \citealt{katzenberger1991solutions}]\label{thm:limit_diffusion_main}
Suppose $X(0)\in U$, \Cref{assump:manifold,assump:neighborhood}, 
\Cref{cond:asymptoticall_continuous}, \ref{cond:C5_2}, \ref{cond:5_1} and 
\ref{cond:4_2} hold. 
For any compact $K\subset U$, define $\mu_n(K) = \inf\{t\geq 0\mid Y_n(t-)\notin
\mathring{K} \text{ or } Y_n(t)\notin\mathring{K}\}$, 
then the sequence $\{(Y_n^{\mu_n(K)},Z_n^{\mu_n(K)},\mu_n(K)\}$ is relatively 
compact in $\cD_{\RR^{D\times e}}[0,\infty)\times[0,\infty)$. 
If $(Y,Z,\mu)$ is a limit point of this sequence under the skorohod 
metric~(\Cref{def: skohorod metric}), then $(Y,Z)$ is a continuous 
semimartingale, $Y(t)\in\Gamma$ for every $t\geq0$ a.s., $\mu\geq\inf\{t\geq 0
\mid Y(t)\notin\mathring{K}\}$ a.s. and $Y(t)$ admits
\begin{align}\label{eq:katzenberger_general}
    Y(t) &= Y(0) + \int_0^{t\wedge\mu} \partial\Phi(Y(s))\sigma(Y(s))\diff Z(s) 
    \notag\\
    &\qquad + \frac{1}{2} \sum_{i,j=1}^D\sum_{k,l=1}^e\int_0^{t\wedge\mu} 
    \partial_{ij}\Phi(Y(s))\sigma(Y(s))_{ik}\sigma(Y(s))_{jl}\diff [Z_k,Z_l](s).
\end{align}
\end{theorem}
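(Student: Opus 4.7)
The plan is to implement Katzenberger's strategy of separating the fast gradient-flow dynamics from the slow manifold motion by applying Itô's formula to $\Phi(X_n)$. The key algebraic observation is that because $\Phi$ is constant along trajectories of $-\nabla L$, one has $\partial\Phi(x)\,\nabla L(x)\equiv 0$ on $U$ (Lemma~\ref{lem:phi jacobian grad}), so the potentially diverging drift $-\nabla L\,dA_n$ is annihilated. Applying Itô's formula (valid since $\Phi\in\mathcal{C}^2$ on $U$ by \Cref{eq:Phi_C2}) to $\Phi(X_n^{\lambda_n(K)})$ gives, up to jump corrections,
\begin{equation*}
\Phi(X_n(t)) = \Phi(X_n(0)) + \int_0^t \partial\Phi(X_n)\,\sigma(X_n)\,dZ_n + \tfrac{1}{2}\sum_{i,j,k,l}\int_0^t \partial_{ij}\Phi(X_n)\,\sigma(X_n)_{ik}\sigma(X_n)_{jl}\,d[Z_k,Z_l].
\end{equation*}
This is already the target integral identity, modulo replacing $X_n$ by the limit $Y$ and controlling the discrepancy $Y_n - \Phi(X_n)$.

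Next I would establish relative compactness of $\{(Y_n^{\mu_n(K)}, Z_n^{\mu_n(K)}, \mu_n(K))\}_n$ in $\mathcal{D}_{\mathbb{R}^{D\times e}}[0,\infty)\times[0,\infty)$. The inputs are: (i) uniform boundedness of $Y_n^{\mu_n(K)}$ from the stopping at $\mu_n(K)$ combined with compactness of $K$; (ii) \Cref{cond:4_2}, which decomposes $Z_n - J_\delta(Z_n) = M_n + F_n$ and yields uniform integrability of $[M_n](t\wedge\tau_n^m)$ and asymptotic vanishing of small increments of the total variation of $F_n$; and (iii) \Cref{cond:5_1}, which forces the maximum jump of $Z_n$ on $[0,T\wedge\lambda_n(K)]$ to vanish in probability, so any limit of $Z_n$ lies in $\mathcal{C}_{\mathbb{R}^e}[0,\infty)$. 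An Aldous-type tightness criterion applied to the Itô expansion above then gives a subsequential continuous semimartingale limit $(Y,Z)$.

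To identify the limit, I would apply Skorokhod's representation theorem to realize $(Y_n, Z_n, \mu_n(K))\to(Y,Z,\mu)$ almost surely on a common probability space, and then show two decoupling estimates. First, $Y_n(t) - \Phi(X_n(t)) = \Phi(X(0)) - \phi(X(0), A_n(t))\to 0$ uniformly on $[\varepsilon, T]$, because $A_n(t)\to\infty$ on any $t\geq\varepsilon>0$ by \Cref{cond:C5_2} and $\Phi(X(0)) = \lim_{s\to\infty}\phi(X(0),s)$. Second, since $-\nabla L$ is a retraction onto $\Gamma$ on $U$ and $A_n$ increases infinitely fast, $d(X_n(t),\Gamma)\to 0$ uniformly on $[\varepsilon, T\wedge\lambda_n(K)]$, hence $\Phi(X_n(t)) - X_n(t) \to 0$ there as well and $Y(t)\in\Gamma$ for all $t>0$. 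Passing to the limit in the Itô expansion then reduces to invoking the Kurtz--Protter stable-convergence theorem for stochastic integrals together with continuity of $\partial\Phi$, $\partial^2\Phi$, $\sigma$ on $K$; the inequality $\mu\geq \inf\{t\geq 0 : Y(t)\notin\mathring K\}$ is a standard consequence of lower-semicontinuity of first exit times for continuous paths.

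The main obstacle is the behavior near $t=0$, where $X(0)$ need not lie on $\Gamma$. Neither $X_n$ nor $\Phi(X_n)$ alone produces a limit that is continuous at $0$: $X_n(0)=X(0)$ but $X_n(t)$ is instantly absorbed onto $\Gamma$ for any $t>0$, while $\Phi(X_n(0))\in\Gamma$ jumps discontinuously from $X(0)$. The correction $Y_n := X_n - \phi(X(0), A_n) + \Phi(X(0))$ is engineered precisely so that $Y_n(0)=\Phi(X(0))$ yet $Y_n(t) - \Phi(X_n(t))\to 0$ for $t>0$. Verifying rigorously that this correction contributes nothing to the limiting stochastic and drift integrals, uniformly in $n$ on compact time intervals, is where \Cref{cond:4_2}, \Cref{cond:5_1}, and the attracting behavior of the gradient flow on $U$ all combine, and is the technical heart of the argument.
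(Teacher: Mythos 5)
This statement is Theorem 6.3 of \citet{katzenberger1991solutions}, which the paper imports as a citation without reproving; there is no internal proof to compare against, so I evaluate your sketch against Katzenberger's original argument.

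Your outline captures the architecture correctly: killing the diverging drift via $\partial\Phi\,\nabla L\equiv 0$ under It\^o's formula, the $Y_n$ correction to manufacture continuity at $t=0$, tightness from \Cref{cond:4_2} and \Cref{cond:5_1} together with stopping at the boundary of $K$, passage to the limit in the stochastic integrals by a good-sequence-of-semimartingales argument (a Kurtz--Protter style step; Katzenberger uses preparatory lemmas in the same spirit), and the role of \Cref{cond:C5_2} in absorbing the process onto $\Gamma$ for $t>0$.

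Two issues. First, the identity $Y_n(t) - \Phi(X_n(t)) = \Phi(X(0)) - \phi(X(0),A_n(t))$ is not an identity: the definition of $Y_n$ gives $Y_n(t) - X_n(t) = \Phi(X(0)) - \phi(X(0),A_n(t))$, and you must \emph{separately} establish $X_n(t)-\Phi(X_n(t))\to 0$ before combining the two; as written, the equality presupposes $X_n(t)\in\Gamma$, which is what you are trying to prove. Second, and more substantively, the claim that ``$d(X_n(t),\Gamma)\to 0$ uniformly on $[\varepsilon,T\wedge\lambda_n(K)]$ because $-\nabla L$ retracts $U$ onto $\Gamma$ and $A_n$ increases infinitely fast'' does not follow on its own: the stochastic increments $\sigma(X_n)\,dZ_n$ continually perturb $X_n$ away from $\Gamma$, and showing that the fast deterministic drift dominates those perturbations \emph{uniformly over a time interval} requires quantitative Lyapunov-type estimates in a tube around $\Gamma$, not merely the qualitative fact that the deterministic gradient flow converges from $U$. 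You label this ``the technical heart,'' which is honest, but the justification you give is precisely the kind that fails here, and the rest of the sketch (tightness, identification, the exit-time inequality) is routine given this step. Acknowledging the gap is not the same as bridging it.
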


We note that by \Cref{lem:two_metric_coincide}, convergence in distribution under skorohod metric is equivalent to convergence in distribution under uniform metric~\Cref{def: uniform metric}, therefore in the rest of the paper we will only use the uniform metric in the rest of the paper, e.g., whenever we mention Prohorov metric and $\delta$-Prohorov distance, the underlying metric is the uniform metric.

\subsection{A User-friendly Interface for Katzenberger's Theorem}\label{sec:katzenberger_interface}

Based on the \Cref{lem:verify_condition}, we can immediately apply \Cref{thm:limit_diffusion_main} to obtain the following limiting diffusion of SGD.

\begin{theorem}\label{thm:limit diffusion of SGD}
Let the manifold $\Gamma$ and its open neighborhood $U$ satisfy 
\Cref{assump:manifold,assump:neighborhood}.
Let $K\subset U$ be any compact set and fix some $x_0\in K$. 
Consider the SGD formulated in \Cref{lem:sgd_as_asymptotically_continuous_dynamic} where $X_{\eta_{n}}(0)\equiv x_0$.
Define
\begin{align*}
    Y_{\eta_{n}}(t) = X_{\eta_{n}}(t) - \phi(X_{\eta_{n}}(0), A_{\eta_{n}}(t)) + \Phi(X_{\eta_{n}}(0))
\end{align*}
and $\mu_{\eta_{n}}(K) = \min\{t \in \NN \mid Y_{\eta_{n}}(t)\notin \mathring{K}\}$.
Then the sequence $\{(Y_{\eta_{n}}^{\mu_{\eta_{n}}(K)},Z_{\eta_{n}},\mu_{\eta_{n}}(K))\}_{n\geq 1}$  is relatively compact in $\cD_{\RR^D\times\RR^n}[0,\infty)\times[0,\infty]$. 
Moreover, if $(Y,Z,\mu)$ is a limit point of this sequence, it holds that $Y(t)\in\Gamma$ a.s for all $t\geq 0$, $\mu\ge \inf\{t\ge 0\mid Y(t)\notin \mathring{K}\}$ and $Y(t)$ admits 
\begin{align}\label{eq:limit diffusion sgd}
    Y(t) = \int_{s=0}^{t\wedge \mu}\partial\Phi(Y(s))\sigma(Y(s)) \diff W(s) 
    + \int_{s=0}^{t\wedge \mu}\frac{1}{2}\sum_{i,j=1}^D \partial_{ij}\Phi(Y(s)) 
    (\sigma(Y(s))\sigma(Y(s))^\top)_{ij} \diff s
\end{align}
where $\{W(s)\}_{s\geq0}$ is the standard Brownian motion and $\sigma(\cdot)$ is 
as defined in \Cref{lem:sgd_as_asymptotically_continuous_dynamic}.
\end{theorem}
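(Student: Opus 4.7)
The plan is to apply Katzenberger's Theorem (\Cref{thm:limit_diffusion_main}) as a black box to the specific integrator pair $(A_{\eta_n}, Z_{\eta_n})$ constructed in \Cref{lem:sgd_as_asymptotically_continuous_dynamic}, and then perform two cleanup steps: identify the limiting integrator $Z$ as a Brownian motion, and simplify the resulting cross-variation drift into the clean It\^o correction that appears in \eqref{eq:limit diffusion sgd}.

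First, I would invoke \Cref{lem:verify_condition}, which already verifies \Cref{cond:asymptoticall_continuous,cond:C5_2,cond:5_1,cond:4_2} for $(A_{\eta_n}, Z_{\eta_n})$. Combined with \Cref{assump:manifold,assump:neighborhood}, every hypothesis of \Cref{thm:limit_diffusion_main} is in force, so we immediately inherit the relative compactness of $\{(Y_{\eta_n}^{\mu_{\eta_n}(K)}, Z_{\eta_n}^{\mu_{\eta_n}(K)}, \mu_{\eta_n}(K))\}_{n\geq 1}$ together with the structural properties of any subsequential limit $(Y, Z, \mu)$: $Y$ is a continuous semimartingale with $Y(t)\in\Gamma$ a.s., $\mu \geq \inf\{t\geq 0 : Y(t) \notin \mathring{K}\}$, and $Y$ satisfies \eqref{eq:katzenberger_general} driven by $Z$. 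The unrolled definition gives $Y_{\eta_n}(0) = x_0 - \phi(x_0, 0) + \Phi(x_0) = \Phi(x_0) \in \Gamma$ for every $n$, pinning the starting point of any limit $Y$ at the deterministic point $\Phi(x_0)$.

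Second, I would identify $Z$ with a standard Brownian motion (after a harmless covariance adjustment). Writing $Z_{\eta_n}(t) = \eta_n \sum_{k=1}^{\lfloor t/\eta_n^2\rfloor} v_k$ with $v_k := \sqrt{\Xi}(\ind_{\xi_k} - \Xi^{-1}\ind)$ i.i.d., centered, with covariance $\Sigma_v := I_\Xi - \Xi^{-1}\ind\ind^\top$, Donsker's functional CLT (Theorem~4.3.5 of Whitt, already invoked in the proof of \Cref{lem:sgd_as_asymptotically_continuous_dynamic}) gives $Z_{\eta_n} \Rightarrow W^\star$ where $W^\star$ is Brownian with covariance $\Sigma_v$. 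The zero-mean assumption on the noise yields $\sigma(x)\ind = \sum_\xi \sigma_\xi(x) \equiv 0$, so $\sigma(x)\Sigma_v\sigma(x)^\top = \sigma(x)\sigma(x)^\top$ and $\sigma(Y)\diff W^\star$ has the same law as $\sigma(Y)\diff W$ for a standard $\Xi$-dimensional Brownian motion $W$. Substituting $[W^\star_k, W^\star_l]_s = (\Sigma_v)_{kl} s$ into the cross-variation term of \eqref{eq:katzenberger_general} and contracting indices reduces the drift to
\begin{align*}
\frac{1}{2}\sum_{i,j,k,l}\int_0^{t\wedge\mu}\partial_{ij}\Phi(Y)\sigma_{ik}(Y)\sigma_{jl}(Y)(\Sigma_v)_{kl}\diff s = \frac{1}{2}\sum_{i,j}\int_0^{t\wedge\mu}\partial_{ij}\Phi(Y)(\sigma\sigma^\top)_{ij}(Y)\diff s,
\end{align*}
exactly matching \eqref{eq:limit diffusion sgd}.

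The main obstacle I anticipate is not the algebra above, but the bookkeeping required to import \citet{katzenberger1991solutions}: both the continuous-mapping-style passage of weak convergence through the stochastic integral $\int \cdot \diff Z_{\eta_n}$ (whose integrators have jumps) and the compatibility between the stopping time $\mu_n(K)$ defined on $Y_n$ and the stopped equation \eqref{eq:katzenberger_general}. The auxiliary process $Y_n = X_n - \phi(x_0, A_n) + \Phi(x_0)$ is exactly the device Katzenberger introduces to remove the $t=0$ boundary layer and make relative compactness available, and their dedicated stochastic-integral convergence theorem for asymptotically continuous semimartingale integrators handles the first issue. I would cite these results rather than reprove them, so that the only genuinely new work is the Donsker identification and the $\sigma \ind = 0$ simplification outlined above.
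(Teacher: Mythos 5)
Your proposal is correct and follows the route the paper itself (implicitly) takes: verify Conditions~\ref{cond:asymptoticall_continuous}--\ref{cond:4_2} via \Cref{lem:verify_condition}, apply \Cref{thm:limit_diffusion_main} to get relative compactness and the equation \eqref{eq:katzenberger_general} for any subsequential limit, and then identify $Z$ via a functional CLT. The one place you are noticeably more careful than the paper is the Donsker step. The increments $v_k = \sqrt{\Xi}(\ind_{\xi_k}-\Xi^{-1}\ind)$ have covariance $\Sigma_v = I_\Xi - \Xi^{-1}\ind\ind^\top$, a rank-$(\Xi-1)$ projection rather than the identity, so $Z_{\eta_n}$ converges to a \emph{non-standard} Brownian motion $W^\star$; the paper's proof of \Cref{lem:sgd_as_asymptotically_continuous_dynamic} simply asserts that one may substitute a standard Brownian motion $W$, while you supply the missing justification that the zero-mean assumption gives $\sigma(x)\ind=0$, hence $\sigma\Sigma_v\sigma^\top = \sigma\sigma^\top$ and the cross-variation drift contracts to $\sum_{ij}\partial_{ij}\Phi\,(\sigma\sigma^\top)_{ij}$. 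For the martingale term it is even cleaner than ``same law'': one can realize $W^\star = (I_\Xi-\Xi^{-1}\ind\ind^\top)W$ pathwise from a standard $W$, so $\sigma(Y)\diff W^\star = \sigma(Y)\diff W$ exactly and no weak-uniqueness argument is needed at this stage. This is a correct and welcome tightening of the paper's terse remark.
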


However, the above theorem is hard to parse and cannot be directly applied if we want to further study the implicit bias of SGD through this limiting diffusion.
Therefore, we develop a user-friendly interface to it in below.
In particular, \Cref{thm:main_result} is the a special case of 
\Cref{thm:v2 limit diffusion of SGD}. In \Cref{thm:main_result}, we replace 
$\partial\Phi(Y(t))\sigma(Y(t))$ with $\Sigma_\parallel^{\frac{1}{2}}(Y(t))$ to 
simplify the equation, since $\partial\Phi(Y(t))\sigma(Y(t))\left(\partial\Phi
(Y(t))\sigma(Y(t))\right)^\top = \Sigma_\parallel(Y(t))$ and thus this change 
doesn't affect the distribution of the sample paths of the solution.

\newcommand{\tY}{\widetilde{Y}}
\newcommand{\tmu}{\tilde{\mu}}

\begin{theorem}\label{thm:v2 limit diffusion of SGD}
Under the same setting as \Cref{thm:limit diffusion of SGD}, we change the integer index back to $\eta>0$ with a slight abuse of notation. 
For any stopping time $\mu$ and stochastic process $\{Y(t)\}_{t\geq 0}$ such that $\mu\ge \inf\{t\ge 0\mid Y(t)\notin \mathring{K}\}$, $Y(0)=\Phi(x_0)$ and that $(Y,\mu)$ satisfy \Cref{eq:limit diffusion sgd} for some standard Brownian motion $W$.
For any compact set $K\subseteq U$ and $T>0$, define $\mu(K) = \inf\{t\geq 0\mid Y(t)\notin \mathring{K}\}$ and $\delta=\PP(\mu(K)\leq T)$. 
Then for any $\epsilon>0$, it holds for all sufficiently small LR $\eta$ that:
\begin{align}
    \rho^{2\delta}(Y_\eta^{\mu_\eta(K)\wedge T}, Y^{\mu(K)\wedge T})\leq \epsilon,
\end{align}
which means there is a coupling between the distribution of the stopped processes $Y_\eta^{\mu_\eta(K)\wedge T}$ and $Y^{\mu(K)\wedge T}$, such that the uniform metric between them is smaller than $\eps$ with probability at least $1-2\delta$. In other words, $\lim_{\eta\to 0}\rho^{2\delta}(Y_\eta^{\mu_\eta(K)\wedge T}, Y^{\mu(K)\wedge T})=0$.

Moreover, when $\{Y(t)\}_{t\geq 0}$ is  a global solution to the following limiting diffusion
\begin{align*}
    Y(t) = \int_{s=0}^t \partial\Phi(Y(s)) \sigma(Y(s)) \diff W(s) + \int_{s=0}^t \frac{1}{2} \sum_{i,j=1}^D \partial_{ij}\Phi(Y(s)) (\sigma(Y(s))\sigma(Y(s))^\top)_{ij} \diff s
\end{align*}
and $Y$ never leaves $U$, i.e. $\PP[\forall t\ge 0, Y(t)\in U]=1$, it holds that $Y_\eta^T$ converges in distribution to $Y^T$ as $\eta\to 0$ for any fixed $T>0$.
\end{theorem}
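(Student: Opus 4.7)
The plan is to promote the Katzenberger-style relative compactness statement (\Cref{thm:limit diffusion of SGD}) into the quantitative $\delta$-Prohorov bound of the present theorem, by combining it with pathwise uniqueness of the limiting SDE on the compact set $K$ and a Skorohod representation argument that lets me replace weak convergence with almost-sure uniform closeness on paths.

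First I would collect the pieces from the previous subsection. \Cref{lem:sgd_as_asymptotically_continuous_dynamic} casts SGD as the Katzenberger process $X_\eta$; \Cref{lem:verify_condition} verifies \Cref{cond:asymptoticall_continuous,cond:C5_2,cond:5_1,cond:4_2}; and \Cref{thm:limit diffusion of SGD} then says that the family $\{(Y_\eta^{\mu_\eta(K)}, Z_\eta, \mu_\eta(K))\}_{\eta>0}$ is relatively compact in the Skorohod topology and that every subsequential limit $(Y', Z', \mu')$ is a continuous process with $Z'$ a standard Brownian motion, $Y'(t)\in \Gamma$ for all $t$, $\mu'\geq \inf\{t\geq 0 : Y'(t)\notin \mathring K\}$, and $Y'$ satisfying \eqref{eq:limit diffusion sgd}. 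Because every limit has continuous paths, \Cref{lem:two_metric_coincide} upgrades convergence to the uniform metric, and then \Cref{thm:Skorokhod_representation} lets me build a coupling on one probability space in which the convergence $Y_{\eta_k}^{\mu_{\eta_k}(K)}\to Y'$ holds almost surely in uniform distance.

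The next step is to identify the limit uniquely. The coefficients of \eqref{eq:limit diffusion sgd} are $\cC^1$ on $U$ (using $\Phi\in \cC^2$ from \Cref{eq:Phi_C2} and continuity of $\sigma$), hence locally Lipschitz, so pathwise uniqueness holds for the SDE up to the exit time from any compact $K\subset U$. Both $Y$ (given in the theorem) and any subsequential limit $Y'$ start at $\Phi(x_0)$ and solve the same stopped SDE, so they have identical law up to exit from $K$. In particular $Y'^{\mu'\wedge T}$ has the same law as $Y^{\mu(K)\wedge T}$, and the whole family $\{Y_\eta^{\mu_\eta(K)\wedge T}\}$ converges in distribution to $Y^{\mu(K)\wedge T}$ in the uniform metric.

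To convert this into the quantitative $2\delta$-Prohorov bound, I fix the Skorohod coupling above and decompose on the event $A=\{\mu(K)> T\}$, which has probability $1-\delta$. Continuity of $Y$ and compactness of $K$ give a sub-compact $K_0\subset \mathring K$ containing $Y([0,T])$ on $A$; uniform closeness of the coupled paths then forces $\mu_\eta(K)>T$ for all small enough $\eta$ on an event of probability at least $1-\delta-o(1)$, on which $Y_\eta^{\mu_\eta(K)\wedge T}=Y_\eta^T$ is within $\epsilon$ of $Y^{\mu(K)\wedge T}=Y^T$. Absorbing the $o(1)$ term into the additional $\delta$ yields the claimed $\rho^{2\delta}$ estimate. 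For the global statement, I pick an exhaustion $K_1\subset K_2\subset \cdots$ of $U$ by compacts with $\bigcup_m \mathring K_m\supseteq Y([0,T])$ almost surely; then $\delta_m:=\PP(\mu(K_m)\leq T)\downarrow 0$ by the no-exit hypothesis, and applying the first part along this exhaustion gives $Y_\eta^T\Rightarrow Y^T$. The main obstacle I anticipate is the bookkeeping around the two stopping times $\mu_\eta(K)$ and $\mu(K)$: they need not converge as functionals of paths, so the argument must go through the buffer $K_0\subset \mathring K$ plus uniform convergence, rather than any naive continuous-mapping claim for the exit time.
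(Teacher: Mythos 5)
Your plan follows the paper's skeleton closely: Katzenberger relative compactness (\Cref{thm:limit diffusion of SGD}), locally Lipschitz coefficients to get weak/pathwise uniqueness of the stopped SDE and hence identify the subsequential limit's law, Skorokhod representation to upgrade to almost-sure uniform-metric convergence, and a decomposition on the no-exit event to obtain the $\rho^{2\delta}$ bound, followed by an exhaustion of $U$ by compacts for the global statement. The one substantive departure is how you pass from the stopped iterates $Y_\eta^{\mu_\eta(K)\wedge T}$ to the unstopped ones $Y_\eta^T$: you propose to show directly that $\mu_\eta(K)>T$ on the good event, by placing the limit path $Y'([0,T])$ inside a sub-compact $K_0\subset\mathring K$ so that uniform closeness forbids $Y_\eta$ from touching $\partial K$. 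The paper never needs such a claim for the first statement (it simply notes that on $\cE_T=\{\tilde\mu(K)>T\}$ the stopped limit $\tilde Y^{\tilde\mu\wedge T}$ equals $\tilde Y^{\tilde\mu(K)\wedge T}$); for the second statement it instead applies the first to a \emph{fattened} compact $K'=K+B_{2\eps'}(0)$ and uses the geometric fact that any path exiting $K'$ is at sup-distance at least $\eps'$ from the path $Y^{\mu(K)\wedge T}\subseteq K$, which forces $\mu_\eta(K')>T$ whenever the stopped processes are $\eps'$-close. The two buffer arguments do the same job, but the outer fattening supplies a \emph{uniform} margin $\eps'$, whereas there is no single $K_0\subset\mathring K$ containing $Y'([0,T])$ for all $\omega$ in $\{\mu'(K)>T\}$; paths in that event can come arbitrarily close to $\partial K$. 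Your argument is salvageable by running it $\omega$-wise (each $\omega$ has its own positive margin $\rho(\omega)$ to $\partial K$) and invoking dominated convergence to make the bad-event probability $o(1)$, which is presumably what your ``$1-\delta-o(1)$'' means, but the single-$K_0$ sentence as written is a gap.

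Two smaller inaccuracies worth noting: (i) the Skorokhod coupling is between $Y_\eta^{\mu_\eta(K)}$ and the subsequential limit $Y'$, not the given $Y$; and the correct identification is $Y'^{\mu'(K)\wedge T}\overset{d}{=}Y^{\mu(K)\wedge T}$ rather than ``$Y'^{\mu'\wedge T}$ has the same law as $Y^{\mu(K)\wedge T}$''---the latter may include dynamics past exit from $K$ since \Cref{thm:limit diffusion of SGD} only guarantees $\mu'\geq\mu'(K)$. (ii) You assert convergence of the whole family from the subsequence argument; the paper closes this via a separate contradiction step using relative compactness, which you should flag rather than take for granted.
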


For clarity, we break the proof of \Cref{thm:v2 limit diffusion of SGD} into two parts, devoted to the two claims respectively.

\begin{proof}[Proof of the first claim of \Cref{thm:v2 limit diffusion of SGD}]

First, \Cref{thm:limit diffusion of SGD} guarantees there exists a stopping time $\tmu$ and a stochastic process $\{\tY(t)\}_{t\geq0}$ such that \begin{enumerate}
    \item $(\tY, \tmu)$ satisfies \Cref{eq:limit diffusion sgd};
    \item $\tY\in\Gamma$ \emph{a.s.};
    \item  $\tmu\geq \tmu(K) := \inf\{t\geq 0\mid \tY(t)\notin \mathring{K}\}$.
\end{enumerate} 
The above conditions imply that $\tY^{\tmu(K)}\in \Gamma$ \emph{a.s.}.
Since the coefficients in \Cref{eq:limit diffusion sgd} are locally Lipschitz, we claim that $(\tY^{\tmu(K)},\tmu(K))\overset{d}{=} (Y^{\mu(K)}, \mu(K))$.
To see this, note that for any compact $K\subseteq U$, the noise function $\sigma$, $\partial\Phi$ and $\partial^2\Phi$ are all Lipschitz on $K$, thus we can extend their definitions to $\RR^D$ such that the resulting functions are still locally Lipschitz. 
Based on this extension, applying classic theorem on weak uniqueness (e.g., Theorem 1.1.10, \citealt{hsu2002stochastic}) to the extended version of \Cref{eq:limit diffusion sgd} yields the equivalence in law. 
Thus we only need to prove the first claim for $\tY$.

Let $\cE_T$ be the event such that $\tmu(K)>T$ on $\cE_T$.
Then restricted on $\cE_T$, we have $\tY(T\wedge \tmu) = \tY(T\wedge \tmu(K))$ as $\tmu\geq\tmu(K)$ holds a.s. 
We first prove the claim for any convergent subsequence of $\{Y_\eta\}_{\eta> 0}$.

Now, let $\{\eta_m\}_{m\geq 1}$ be a sequence of LRs such that $\eta_m\to 0$ and $Y_{\eta_m}^{\mu_{\eta_m}(K)}\Rightarrow \tY^{\tmu}$ as $m\to\infty$. 
By applying the Skorohod representation theorem, we can put $\{Y_{\eta_m}\}_{m\geq 1}$ and $\tY$ under the same probability space such that $Y_{\eta_m}^{\mu_{\eta_m}(K)}\to \tY^{\tmu}$ a.s. in the Skorohod metric, or equivalently the uniform metric (since $\tY^{\tmu}$ is continuous) i.e.,
\begin{align*}
    d_U(Y_{\eta_m}^{\mu_{\eta_m}(K)}, \tY^{\tmu}) \to 0, a.s.,
\end{align*}
which further implies that for any $\epsilon>0$, there exists some $N>0$ such that for all $m>N$,
\begin{align*}
    \PP\left[d_U(Y_{\eta_m}^{\mu_{\eta_m}(K)\wedge T}, \tY^{\tmu\wedge T})\geq \epsilon\right]\leq \delta.
\end{align*}
Restricted on $\cE_T$, we have $d_U(Y_{\eta_m}^{\mu_{\eta_m}(K)\wedge T},\tY^{\tmu\wedge T}) = d_U(Y_{\eta_m}^{\mu_{\eta_m}(K)\wedge T},\tY^{\tmu(K)\wedge T})$, and it follows that for all $m>N$, 
\begin{align*}
    \PP\left[d_U(Y_{\eta_m}^{\mu_{\eta_m}(K)\wedge T}, \tY^{\tmu(K)\wedge T})\geq \epsilon\right] &\leq \PP\left[\{d_U(Y_{\eta_m}^{\mu_{\eta_m}(K)\wedge T}, \tY^{\tmu(K)\wedge T})\geq \epsilon\} \cap\cE_T\right] + \PP\left[\cE_T^c\right]\\
    &= \PP\left[\{d_U(Y_{\eta_m}^{\mu_{\eta_m}(K)\wedge T}, \tY^{\tmu\wedge T})\geq\epsilon\}\cap\cE_T\right] + \PP[\cE_T^c]\\
    &\leq \PP\left[d_U(Y_{\eta_m}^{\mu_{\eta_m}(K)\wedge T}, \tY^{\tmu\wedge T})\geq\epsilon\right] + \PP[\cE_T^c]\\
    &\leq 2\delta,
\end{align*}
where we denote the complement of $\cE_T$ by $\cE_T^c$.

By the definition of the Prohorov metric in Definition \ref{def: prohorov metric}, we then get $\rho^{2\delta}(Y_{\eta_m}^{\mu_{\eta_m}(K)\wedge T}, \tY^{\tmu(K)\wedge T})\leq \epsilon$ for all $m>N$. 
Therefore, we have
\begin{align*}
    \lim_{m\to\infty} \rho^{2\delta}(Y_{\eta_m}^{\mu_{\eta_m}(K)\wedge T}, \tY^{\tmu(K)\wedge T}) = 0.
\end{align*}

Now we claim that it indeed holds that $\lim_{\eta\to 0}\rho^{2\delta}(Y_\eta^{\mu_\eta(K)\wedge T}, \tY^{\tmu(K)\wedge T})=0$. 
We prove this by contradiction. 
Suppose otherwise, then there exists some $\epsilon>0$ such that for all $\eta_0>0$, there exists some $\eta<\eta_0$ with $\rho^{2\delta}(Y_\eta^{\mu_\eta(K)\wedge T}, \tY^{\tmu(K)\wedge T})>\epsilon$. 
Consequently, there is a sequence $\{\eta_m\}_{m\geq 1}$ satisfying $\lim_{m\to\infty}\eta_m=0$ and $\rho^{2\delta}(Y_{\eta_m}^{\mu_{\eta_m}(K)}, \tY^{\tmu(K)\wedge T})>\epsilon$ for all $m$. 
Since $\{(Y_{\eta_m}^{\mu_{\eta_m}(K)\wedge T},Z_{\eta_m},\mu_{\eta_m}(K))\}_{m\geq 1}$ is relatively compact, there exists a subsequence (WLOG, assume it is the original sequence itself) converging to $(\tY^{\tmu\wedge T},W,\tmu)$ in distribution. 
However, repeating the exactly same argument as above, we would have $\rho^{2\delta}(Y_{\eta_m}^{\mu_{\eta_m}(K)\wedge T}, \tY^{\tmu(K)\wedge T})\leq \epsilon$ for all sufficiently large $m$, which is a contradiction. This completes the proof.
\end{proof}

\begin{proof}[Proof of the second claim of \Cref{thm:v2 limit diffusion of SGD}]
We will first show there exists a sequence of compact set $\{K_m\}_{m\ge1}$ such that $\cup_{m=1}^\infty K_m = U$ and $K_m\subseteq K_{m+1}$.
 For $m\in\mathbb{N}^+$, we define $H_m= U\setminus(B_{1/m}(0)+\RR^D\setminus U)$ and $K_m = \overline{H_m} \cap B_m(0)$. By definition it holds that $\forall m<m'$, $H_m\subseteq H_{m'}$ and $K_m \subseteq K_{m'}$. Moreover, since $K_m$ is bounded and closed, $K_m$ is compact for every $m$. Now we claim $\cup_{m=1}^\infty K_m = U$. Note that $\cup_{m=1}^\infty K_m=\cup_{m=1}^\infty \overline{H_m} \cap B_m(0)= \cup_{m=1}^\infty \overline{H_m} $. $\forall x\in U$, since $U$ is open, we know $d_U(x,\RR^D\setminus U)>0$, thus there exists $m_0\in\mathbb{N}^+$, such that $\forall m\ge m_0$, $x\notin (B_{1/m}(0)+\RR^D\setminus U)$ and thus $x\in H_m$, which implies $x\in \cup_{m=1}^\infty \overline{H_m}$. On the other hand, $\forall x\in \RR^D\setminus U$, it holds that $x\in (B_{1/m}(0)+\RR^D\setminus U)$ for all $m\in\mathbb{N}^+$, thus $x\notin H_m\subset K_m$. 

Therefore, since $Y\in U$  and is continuous almost surely, random variables $\lim_{m\to \infty}\mu(K_m) =\infty$ a.s., which implies $\mu(K_m)$ converges to $\infty$ in distribution, i,e,, $\forall \delta>0,T>0$, $\exists m\in\mathbb{N}^+$, such that $\forall K \supseteq K_m$, it holds $\PP[\mu(K)\le T]\le \delta$. 

Now we will show for any $T>0$ and $\eps>0$, there exists $\eta_0$ such that $\rho^\eps(Y^T,Y^T_\eta)\le \eps$ for all $\eta\le \eta_0$. 
Fixing any $T>0$, for any $\eps>0$, let $\delta = \frac{\eps}{4}$, then from above we know exists compact set $K$, such that $\PP(\mu(K)\leq T)\leq \delta$. We further pick $
    K' = K+B_{2\epsilon'}(0)$, where $\eps'$ can be any real number satisfying $0<\eps'< \eps$ and $K'\subseteq U$. Such $\eps'$ exists since  $U$ is open.
Note $K\subseteq K'$, we have $\PP(\mu(K')\leq T)\le \PP(\mu(K)\leq T)\le \delta$. 
Thus by the first claim of \Cref{thm:v2 limit diffusion of SGD}, there exists $\eta_0>0$, such that for all $\eta\le \eta_0$, we have $ \rho^{2\delta}(Y_\eta^{\mu_\eta(K')\wedge T}, Y^{\mu(K')\wedge T})\leq 2^{-\lceil T\rceil}\epsilon'$. 

Note that  $\rho^{\delta}(Y^{\mu(K)\wedge T}, Y^{\mu(K')\wedge T})= 0$, so we have  for all $\eta\leq \eta_0$, 
\begin{align*}
 \rho^{3\delta}(Y^{\mu(K)\wedge T}, Y_\eta^{\mu_\eta(K')\wedge T})\leq 2^{-\lceil T\rceil}\eps'.
\end{align*}
By the definition of $\delta$-Prohorov distance in \Cref{def: delta prohorov metric}, we can assume $(Y^{\mu(K)\wedge T}, Y_\eta^{\mu_\eta(K')\wedge T})$ is already the coupling such that $\PP\left[d_U(Y^{\mu(K)\wedge T}, Y_\eta^{\mu_\eta(K')\wedge T})\geq 2^{-\lceil T\rceil}\epsilon'\right]\leq 3\delta$. Below we want to show $\rho^{3\delta}(Y^{\mu(K)\wedge T}, Y_\eta^{T})\leq 2^{-\lceil T\rceil} \eps'$.
Note that for all $t\ge 0$, $Y^{\mu(K)\wedge T}(t)\in K$, thus we know if  $\mu_\eta(K')\le T$, then 
\begin{align*}
    d_U(Y^{\mu(K)\wedge T}, Y_\eta^{\mu_\eta(K')\wedge T} )
    &\ge 2^{-\lceil T\rceil}\norm{Y^{\mu(K)\wedge T}(\mu_\eta(K')) -Y_\eta^{\mu_\eta(K')\wedge T}(\mu_\eta(K')}_2 \\
    &\ge 2^{-\lceil T\rceil}d_U(K, \RR^d/K')\\
    &\ge 2^{-\lceil T\rceil} \eps'.
\end{align*}

On the other hand, if $\mu_\eta(K')> T$, then $Y_\eta^{T} = Y_\eta^{\mu_\eta(K')\wedge T}$. 
Thus we can conclude that $d_U(Y^{\mu(K)\wedge T}, Y_\eta^{T})\geq 2^{-\lceil T\rceil}\epsilon' $ implies $d_U(Y^{\mu(K)\wedge T}, Y_\eta^{\mu_\eta(K')\wedge T})\geq 2^{-\lceil T\rceil}\epsilon'$.  
Therefore, we further have
\begin{align*}
    \PP\left[d_U(Y^{\mu(K)\wedge T}, Y_\eta^{T})\geq 2^{-\lceil T\rceil}\epsilon'\right]
     \le \PP\left[d_U(Y^{\mu(K)\wedge T}, Y_\eta^{\mu_\eta(K')\wedge T})\geq 2^{-\lceil T\rceil}\epsilon'\right]
     \le 3\delta,
\end{align*}
that is, 
\begin{align*}
    \rho^{3\delta}(Y^{\mu(K)\wedge T}, Y_\eta^{T})\leq 2^{-\lceil T\rceil}\epsilon'.
\end{align*}

Finally, since $\rho^{\delta}(Y^{T}, Y^{\mu(K)\wedge T})= 0$, we have for all $\eta\le\eta_0$, 
\[\rho^{\eps}(Y^{T},Y_\eta^{T}) =\rho^{4\delta}(Y^{T},Y_\eta^{T})\le \rho^{3\delta}(Y^{\mu(K)\wedge T}, Y_\eta^{T})+ \rho^{\delta}(Y^{T}, Y^{\mu(K)\wedge T}) \le 2^{-\lceil T\rceil}\eps' + 0\le \eps,\]
which completes the proof.
\end{proof}

Now, we provide the proof of \Cref{thm:main_result} as a direct application of \Cref{thm:v2 limit diffusion of SGD}.
\begin{proof}[Proof of \Cref{thm:main_result}]
We first prove that $Y$ never leaves $\Gamma$, i.e., $\PP[Y(t)\in\Gamma, \forall t\geq 0] = 1$. By the result of \Cref{thm:limit diffusion of SGD}, we know that for each compact set $K\subset \Gamma$, $Y^{\mu(K)}$ stays on $\Gamma$ almost surely, where $\mu(K):=\inf\{t\geq 0\mid \tY(t)\notin \mathring{K}\}$ is the earliest time that $Y$ leaves $K$.
In other words, for all compact set $K\subset \Gamma$, $\PP[ \exists t\ge 0,  Y(t)\notin \Gamma, Y(t)\in K] =0$. 
Let $\{K_m\}_{m\geq 1}$ be any sequence of compact sets such that $\cup_{m\ge 1} K_m = U$ and $K_m\subset U$, e.g., the ones constructed in the proof of the second claim of \Cref{thm:v2 limit diffusion of SGD}.
Therefore, we have
\begin{align*}
    \PP[\exists t\ge 0,  Y(t) \notin\Gamma] = \PP[\exists t\ge 0, Y(t)\notin\Gamma, Y(t)\in U] \leq \sum_{m=1}^\infty \PP[ \exists t\ge 0,  Y(t)\notin \Gamma, Y(t)\in K_m]=0,
\end{align*}
which means $Y$ always stays on $\Gamma$.

Then recall the decomposition of $\Sigma = \Sigma_{\parallel} + \Sigma_{\perp} + \Sigma_{\parallel,\perp} + \Sigma_{\perp,\parallel}$ as defined in \Cref{lem:drift_term}.
Since $Y$ never leaves $\Gamma$, by \Cref{lem:drift_term}, we can rewrite \Cref{eq:limiting_diffusion_final} as
\begin{align*}
    \diff Y(t) &= \Sigma_{\parallel}^{1/2} \diff W(t) + \partial^2\Phi( Y(t))[\Sigma(Y(t))] \diff t \notag\\
    &= \partial\Phi(Y(t))\sigma(Y(t)) \diff W(t) + \frac{1}{2}\sum_{i,j=1}^D \partial_{ij}\Phi(Y(t)) (\sigma( Y(t))\sigma(Y(t))^\top)_{ij} \diff t
\end{align*}
where the second equality follows from the definition that $\Sigma_{\parallel} = \partial\Phi \Sigma \partial\Phi = \partial\Phi\sigma\sigma^\top\partial\Phi$.
This coincides with the formulation of the limiting diffusion in \Cref{thm:v2 limit diffusion of SGD}.
Therefore, further combining \Cref{lem:sgd_as_asymptotically_continuous_dynamic} and the second part of \Cref{thm:v2 limit diffusion of SGD}, we obtain the desired result.
\end{proof}

\begin{remark}
Our result suggests that for tiny LR $\eta$, SGD dynamics have two phases.
In Phase I of $\Theta(1/\eta)$ steps, the SGD iterates move towards the manifold $\Gamma$ of local minimizers along GF.
Then in Phase II which is of $\Theta(1/\eta^2)$ steps, the SGD iterates stay close to $\Gamma$ and diffuse approximately according to \eqref{eq:limiting_diffusion_final}. 
See \Cref{fig:two_phase} for an illustration of this two-phase dynamics.
However, since the length of Phase I gets negligible  compared to that of Phase II when $\eta\to 0$, \Cref{thm:main_result} only reflects the time scaling of Phase II.
\end{remark}

\begin{figure}[!htbp]
     \centering
     \begin{subfigure}[b]{0.8\textwidth}
         \centering
         \includegraphics[width=\textwidth]{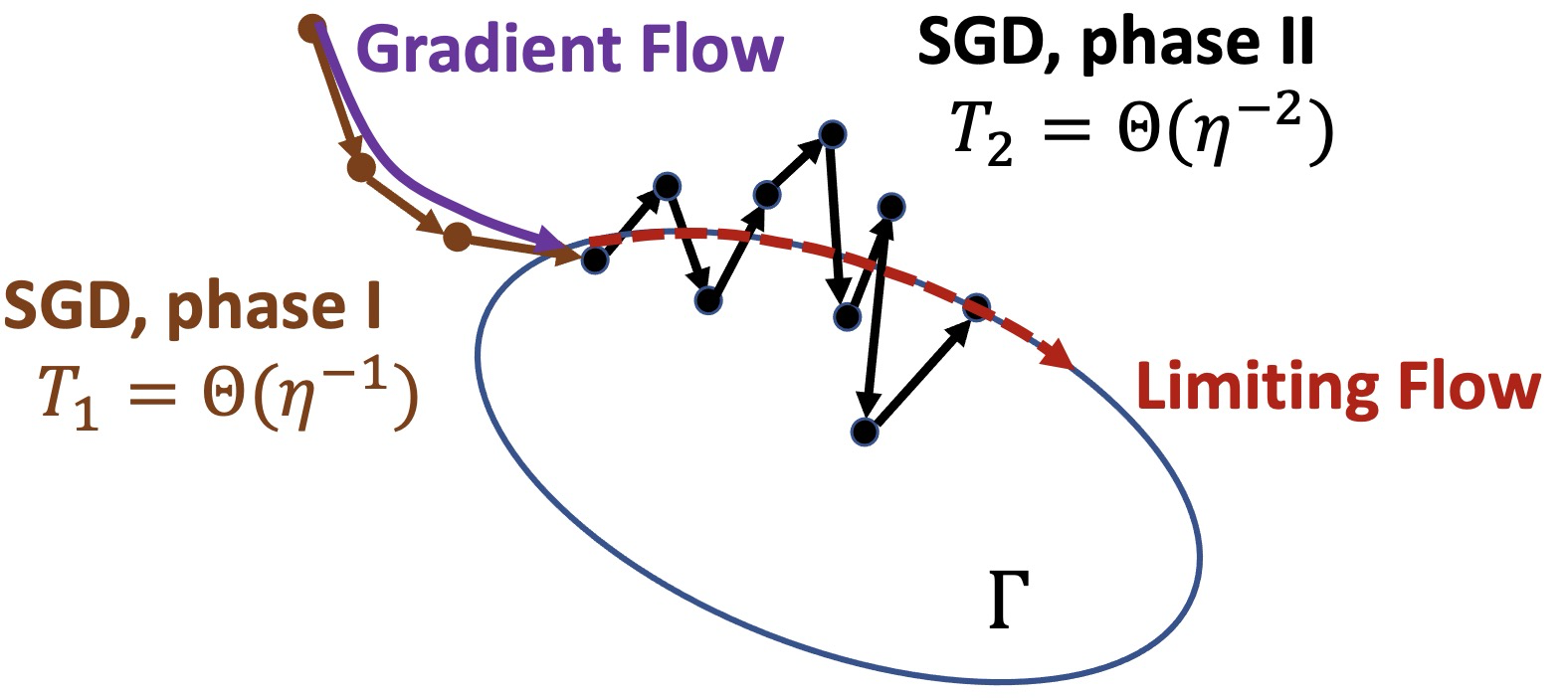}
     \end{subfigure}
        \caption{Illustration for two-phase dynamics of SGD with the same example as in \Cref{fig:2d_example} . $\Gamma$ is an 1D manifold of minimizers of loss $L$.}
        \label{fig:two_phase}
\end{figure}

\section{Explicit Formula of the Limiting Diffusion}\label{sec:explicit_formula_proof}
In this section, we demonstrate how to compute the derivatives of $\Phi$ by 
relating to those of the loss function $L$, and then present the explicit 
formula of the limiting diffusion.

\subsection{Explicit Expression of the Derivatives}
For any $x\in\Gamma$, we choose an orthonormal basis of $ T_x(\Gamma)$ as $\{v_1,\ldots,v_{D-M}\}$. 
Let $\{v_{D-M+1},\ldots,v_D\}$ be an orthonormal basis of $T_x^\perp(\Gamma)$ so that $\{v_i\}_{i\in[D]}$ is an orthonormal basis of $\RR^D$.

\begin{lemma}\label{lem:hessian}
For any $x\in\Gamma$ and any $v\in T_x(\Gamma)$, it holds that $\nabla^2 L(x)v=0$.
\end{lemma}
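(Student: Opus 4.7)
The plan is to exploit the fact that every point of $\Gamma$ is a local minimizer of $L$, so the gradient $\nabla L$ vanishes identically on $\Gamma$. Differentiating this identity along a curve in $\Gamma$ will produce the Hessian applied to a tangent vector and show it equals zero.

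Concretely, I would first record the first-order condition: for every $x \in \Gamma$, $\nabla L(x) = 0$, which follows from \Cref{assump:manifold} since $x$ is a local minimizer of the $\cC^3$ function $L$. Next, given $x \in \Gamma$ and $v \in T_x(\Gamma)$, I would pick a $\cC^1$ curve $\gamma: (-\varepsilon, \varepsilon) \to \Gamma$ with $\gamma(0) = x$ and $\gamma'(0) = v$; such a curve exists because $\Gamma$ is a $\cC^2$-submanifold and $v$ lies in its tangent space. The identity $\nabla L \circ \gamma \equiv 0$ then holds on $(-\varepsilon, \varepsilon)$, and differentiating at $t = 0$ via the chain rule gives
\begin{equation*}
0 \;=\; \frac{\diff}{\diff t}\Big|_{t=0} \nabla L(\gamma(t)) \;=\; \nabla^2 L(x)\, \gamma'(0) \;=\; \nabla^2 L(x)\, v,
\end{equation*}
which is exactly the claim.

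There is essentially no obstacle: the argument is a one-line consequence of the first-order optimality condition together with the chain rule. The only thing worth noting is a sanity check on dimensions, which makes the statement sharp: since $\rank(\nabla^2 L(x)) = M$ by \Cref{assump:manifold}, we have $\dim \ker \nabla^2 L(x) = D - M$, and this coincides with $\dim T_x(\Gamma)$. Hence the containment $T_x(\Gamma) \subseteq \ker \nabla^2 L(x)$ established above is in fact an equality, so the tangent space of $\Gamma$ and the kernel of the Hessian agree at every point of $\Gamma$ — a fact that will be useful in the subsequent computation of $\partial \Phi$ and $\partial^2 \Phi$.
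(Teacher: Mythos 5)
Your proof is correct and follows exactly the same approach as the paper: pick a smooth curve on $\Gamma$ through $x$ with velocity $v$, observe that $\nabla L$ vanishes identically along it, and differentiate at $t=0$. The extra dimension-count remark (that $T_x(\Gamma) = \ker \nabla^2 L(x)$ by comparing $D-M$ with $D-\rank$) is a nice addition but not needed for the lemma itself.
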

\begin{proof}
For any $x\in T_x(\Gamma)$, let $\{x(t)\}_{t\geq 0}$ be a parametrized smooth curve on $\Gamma$ such that $x(0)=x$ and $\frac{\diff x(t)}{\diff t}\big|_{t=0}=v$. 
Then $\nabla L(x_t)=0$ for all $t$. 
Thus $0=\frac{\diff \nabla L(x_t)}{\diff t}\big|_{t=0}=\nabla^2L(x)v$.
\end{proof}

\begin{lemma}\label{lem:phi jacobian grad}
For any $x\in\RR^D$, it holds that $\partial\Phi(x)\nabla L(x)=0$ and 
\begin{align*}
    \partial^2\Phi(x)[\nabla L(x),\nabla L(x)] = -\partial\Phi(x)\nabla^2 L(x) \nabla L(x).
\end{align*}
\end{lemma}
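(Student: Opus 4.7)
The plan is to exploit the invariance of $\Phi$ along the trajectory of gradient flow, which is the defining algebraic property of $\Phi$: since $\phi(\cdot,\cdot)$ satisfies the semigroup identity $\phi(\phi(x,s),t)=\phi(x,s+t)$ for all $s,t\geq 0$, passing $t\to\infty$ gives $\Phi(\phi(x,s))=\Phi(x)$ for every $s\geq 0$ and every $x\in U$ (so this is really a statement on $U$ where $\Phi$ is defined, not on all of $\RR^D$). Both conclusions of the lemma will drop out by differentiating this identity once and twice in $s$ at $s=0$, using $\frac{d}{ds}\phi(x,s)\big|_{s=0}=-\nabla L(x)$ and the fact that $\Phi\in\cC^2(U)$ by \Cref{eq:Phi_C2}.

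For the first claim, I would define $g(s):=\Phi(\phi(x,s))$ on a small interval around $s=0$, which is constant in $s$ by the invariance. Differentiating via the chain rule,
\begin{equation*}
0 \;=\; g'(0) \;=\; \partial\Phi(\phi(x,0))\,\tfrac{d}{ds}\phi(x,s)\big|_{s=0} \;=\; -\,\partial\Phi(x)\,\nabla L(x),
\end{equation*}
which gives $\partial\Phi(x)\nabla L(x)=0$.

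For the second claim, I would differentiate $g(s)$ a second time. Writing $g'(s)=-\partial\Phi(\phi(x,s))\,\nabla L(\phi(x,s))$ and applying the product/chain rule,
\begin{equation*}
g''(s) \;=\; \partial^2\Phi(\phi(x,s))\bigl[\nabla L(\phi(x,s)),\,\nabla L(\phi(x,s))\bigr] \;+\; \partial\Phi(\phi(x,s))\,\nabla^2 L(\phi(x,s))\,\nabla L(\phi(x,s)),
\end{equation*}
where I used $\tfrac{d}{ds}\nabla L(\phi(x,s)) = \nabla^2 L(\phi(x,s))\cdot(-\nabla L(\phi(x,s)))$ and the signs combine. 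Since $g$ is constant, $g''(0)=0$, and evaluating at $s=0$ rearranges to exactly $\partial^2\Phi(x)[\nabla L(x),\nabla L(x)] = -\partial\Phi(x)\nabla^2 L(x)\nabla L(x)$.

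I do not anticipate a genuine obstacle here; the only point that deserves care is the domain of validity. The statement as written says "for any $x\in\RR^D$", but $\Phi$ and its derivatives are only guaranteed on $U$ (by \Cref{assump:neighborhood} and \Cref{eq:Phi_C2}), and the semigroup identity only gives $\Phi(\phi(x,s))=\Phi(x)$ on the forward orbit, which stays in $U$ for $x\in U$ by assumption. So I would explicitly restrict to $x\in U$ and note that smoothness of $\phi(x,s)$ in $s$ together with $\Phi\in\cC^2(U)$ justifies both differentiations; everything else is just the chain rule applied to an identically constant function.
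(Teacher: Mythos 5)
Your proof is correct and follows essentially the same route as the paper's: both differentiate the invariance $\Phi(\phi(x,s))=\Phi(x)$ once and twice in $s$ at $s=0$ and apply the chain rule, arriving at identical formulas. Your additional remark about restricting to $x\in U$ (rather than all of $\RR^D$) is a reasonable point of care, but otherwise the two arguments coincide.
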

\begin{proof}
Fixing any $x\in\RR^D$, let $\frac{\diff x(t)}{\diff t} = -\nabla L(x(t))$ be initialized at $x(0)=x$. 
Since $\Phi(x(t))=\Phi(x)$ for all $t\geq 0$, we have
\begin{align*}
    \frac{\diff}{\diff t}\Phi(x(t)) =- \partial\Phi(x(t)) \nabla L(x(t))=0.
\end{align*}
Evaluating the above equation at $t=0$ yields $\partial\Phi(x)\nabla L(x)=0$. 
Moreover, take the second order derivative and we have
\begin{align*}
    \frac{\diff^2}{\diff t^2}\Phi(x_t) = -\partial^2\Phi(x(t))\left[\frac{\diff 
    x(t)}{\diff t},\nabla L(x(t))\right] - \partial\Phi(x(t)) \nabla^2 L(x(t)) 
    \frac{\diff x(t)}{\diff t}=0.
\end{align*}
Evaluating at $t=0$ completes the proof.
\end{proof}

Now we can prove \Cref{lem:jacobian}, restated in below.
\lemjacobian*
\begin{proof}[Proof of Lemma \ref{lem:jacobian}]
For any $v\in T_x(\Gamma)$, let $\{v(t),t\geq 0\}$ be a parametrized smooth curve on $\Gamma$ such that $v(0)=x$ and $\frac{\diff v(t)}{\diff t}\big|_{t=0}=v$. 
Since $v(t)\in\Gamma$ for all $t\geq 0$, we have $\Phi(v(t))=v(t)$, and thus
\begin{align*}
    \frac{\diff v(t)}{\diff t}\bigg|_{t=0}=\frac{\diff}{\diff t}\Phi(v(t))\bigg|_{t=0} = \partial \Phi(x)\frac{\diff v(t)}{\diff t}\bigg|_{t=0}.
\end{align*}
This implies that $\partial\Phi(x) v = v$ for all $v\in T_x(\Gamma)$.

Next, for any $u\in T_x^\perp(\Gamma)$ and $t\geq 0$, consider expanding $\nabla L(x+t\nabla^2L(x)^\dagger u)$ at $t=0$:
\begin{align*}
    \nabla L\left(x+t\nabla^2 L(x)^\dagger u\right) &= \nabla^2 L(x) \cdot t \nabla^2 L(x)^\dagger u + o(t)\\
    &= tu + o(t)
\end{align*}
where the second equality follows from the assumption that $\nabla^2 L(x)$ is full-rank when restricted on $T_x^\perp(\Gamma)$. 
Then since $\partial\Phi$ is continuous, it follows that 
\begin{align*}
    \lim_{t\to 0} \frac{\partial\Phi(x+t\nabla^2 L(x)^\dagger u) \nabla L(x+t\nabla^2 L(x)^\dagger u)}{t} &= \lim_{t\to 0}\partial\Phi(x+t\nabla^2 L(x)^\dagger)(u+o(1))\\
    &= \partial\Phi(x) u.
\end{align*}
By Lemma \ref{lem:phi jacobian grad}, we have $\partial\Phi(x+t(\nabla^2 L(x))^\dagger u))\nabla L(x+t(\nabla^2L(x))^\dagger u)=0$ for all $t>0$, which then implies that $\partial\Phi(x)u=0$ for all $u\in T_x^\perp(\Gamma)$.

Therefore, under the basis $\{v_1,\ldots,v_D\}$, $\partial\Phi(x)$ is given by
\begin{align*}
    \partial\Phi(x) = \begin{pmatrix}
    I_{D-M} & 0\\
    0 & 0
    \end{pmatrix}\in\RR^{D\times D},
\end{align*}
that is, the projection matrix onto $T_x(\Gamma)$.
\end{proof}

\begin{lemma}\label{lem:jacobian hessian}
For any $x\in\Gamma$, it holds that $\partial\Phi(x)\nabla^2 L(x)=0$.
\end{lemma}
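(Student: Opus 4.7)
The plan is to deduce the identity $\partial\Phi(x)\nabla^2 L(x)=0$ by combining the two preceding lemmas: Lemma \ref{lem:jacobian}, which identifies $\partial\Phi(x)$ with the orthogonal projection onto $T_x(\Gamma)$, and Lemma \ref{lem:hessian}, which asserts that $\nabla^2 L(x)$ kills every tangent vector. The key observation is that the image of $\nabla^2 L(x)$ lies entirely in the \emph{normal} space $T_x^\perp(\Gamma)$, on which the projection $\partial\Phi(x)$ vanishes.

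Concretely, I would first note that Lemma \ref{lem:hessian} gives $T_x(\Gamma)\subseteq\ker(\nabla^2 L(x))$. Since $\nabla^2 L(x)$ is symmetric, its range is the orthogonal complement of its kernel, so
\begin{equation*}
    \mathrm{range}(\nabla^2 L(x))=\ker(\nabla^2 L(x))^\perp\subseteq T_x(\Gamma)^\perp=T_x^\perp(\Gamma).
\end{equation*}
(One can also verify this dimension-count directly: $\dim T_x(\Gamma)=D-M$ and $\rank(\nabla^2 L(x))=M$ by Assumption \ref{assump:manifold}, so $\ker(\nabla^2 L(x))=T_x(\Gamma)$ and $\mathrm{range}(\nabla^2 L(x))=T_x^\perp(\Gamma)$ exactly.)

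Then, for any $w\in\RR^D$, the vector $\nabla^2 L(x)w$ lies in $T_x^\perp(\Gamma)$. By Lemma \ref{lem:jacobian}, $\partial\Phi(x)$ is the orthogonal projection onto $T_x(\Gamma)$, hence annihilates $T_x^\perp(\Gamma)$. Therefore $\partial\Phi(x)\nabla^2 L(x)w=0$ for every $w$, which gives $\partial\Phi(x)\nabla^2 L(x)=0$.

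There is essentially no technical obstacle here; the statement is a one-line corollary once the symmetry of $\nabla^2 L$ and the two preceding lemmas are in hand. The only point to be careful about is invoking symmetry of $\nabla^2 L(x)$ (which holds because $L\in\cC^3$ under Assumption \ref{assump:manifold}) to convert the tangent-space kernel statement into the normal-space range statement.
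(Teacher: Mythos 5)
Your argument is correct and is exactly the reasoning the paper leaves implicit (its proof is simply "directly follows from Lemma \ref{lem:hessian} and Lemma \ref{lem:jacobian}"). You have just spelled out the needed symmetry-of-Hessian step to pass from the kernel statement to the range statement.
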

\begin{proof}
It directly follows from Lemma \ref{lem:hessian} and Lemma \ref{lem:jacobian}.
\end{proof}

Next, we proceed to compute the second-order derivatives.

\begin{restatable}{lemma}{Phisecondtangent}\label{lem:Phi 2nd-order tangent}
For any $x\in\Gamma$, $u\in \RR^D$ and $v\in  T_x(\Gamma)$, it holds that
\begin{align*}
    \partial^2\Phi(x)[v, u] = - \partial\Phi(x) \partial^2(\nabla L)(x)[v,\nabla^2 L(x)^\dagger u] - \nabla^2 L(x)^\dagger \partial^2 (\nabla L)(x)[v,\partial\Phi(x) u] .
\end{align*}
\end{restatable}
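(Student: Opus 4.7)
The plan is to derive the formula by differentiating two complementary identities that $\Phi$ satisfies on $U$:
(i) the invariance $\partial\Phi(y)\nabla L(y)\equiv 0$ from \Cref{lem:phi jacobian grad}, and
(ii) the retraction property $\nabla L(\Phi(y))\equiv 0$, which holds because $\Phi$ maps into $\Gamma$.
Combined with the orthogonal decomposition $u=\partial\Phi(x)u+(I-\partial\Phi(x))u$ along $T_x(\Gamma)\oplus T_x^\perp(\Gamma)$ guaranteed by \Cref{lem:jacobian}, each of (i) and (ii) will furnish one of the two summands in the claim.

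For the summand arising from the normal component of $u$, I would take one derivative of (i) in the fixed direction $v\in T_x(\Gamma)$, giving
\[\partial^2\Phi(y)[v,\nabla L(y)]+\partial\Phi(y)\nabla^2 L(y)v=0\qquad\text{for all }y\in U.\]
To extract the needed second-order information at $x$ without invoking $\partial^3\Phi$, I would evaluate this identity along $y=x+\epsilon w$, Taylor-expand $\nabla L(x+\epsilon w)=\epsilon\nabla^2 L(x)w+O(\epsilon^2)$, use $\nabla^2 L(x)v=0$ from \Cref{lem:hessian}, and then divide by $\epsilon$ and let $\epsilon\to 0$ (invoking only continuity of $\partial^2\Phi$, which holds since $\Phi\in\cC^2$). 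This produces
\[\partial^2\Phi(x)[v,\nabla^2 L(x)w]=-\partial\Phi(x)\,\partial^2(\nabla L)(x)[v,w].\]
Setting $w=\nabla^2 L(x)^\dagger u$, so that $\nabla^2 L(x)w=(I-\partial\Phi(x))u$ since $\nabla^2 L(x)\nabla^2 L(x)^\dagger$ is the orthogonal projector onto $T_x^\perp(\Gamma)=\mathrm{range}\,\nabla^2 L(x)$, yields
\[\partial^2\Phi(x)\bigl[v,(I-\partial\Phi(x))u\bigr]=-\partial\Phi(x)\,\partial^2(\nabla L)(x)\bigl[v,\nabla^2 L(x)^\dagger u\bigr].\]

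For the summand arising from the tangent component of $u$, I would differentiate (ii) twice (first in direction $u$, then in direction $v$), which at $y=x$ gives
\[\partial^2(\nabla L)(x)[\partial\Phi(x)v,\partial\Phi(x)u]+\nabla^2 L(x)\,\partial^2\Phi(x)[v,u]=0.\]
Using $\partial\Phi(x)v=v$ and applying $\nabla^2 L(x)^\dagger$, together with $\nabla^2 L(x)^\dagger\nabla^2 L(x)=I-\partial\Phi(x)$, this produces
\[(I-\partial\Phi(x))\,\partial^2\Phi(x)[v,u]=-\nabla^2 L(x)^\dagger\,\partial^2(\nabla L)(x)[v,\partial\Phi(x)u].\]

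To assemble the two identities, I would first note that differentiating the idempotency $\Phi\circ\Phi=\Phi$ twice at $x$ gives $(I-\partial\Phi(x))\partial^2\Phi(x)[\xi,\eta]=\partial^2\Phi(x)[\partial\Phi(x)\xi,\partial\Phi(x)\eta]$; specializing to $\xi=v\in T_x(\Gamma)$ and $\eta=u$ shows that the left-hand side of the last display equals $\partial^2\Phi(x)[v,\partial\Phi(x)u]$. Writing $\partial^2\Phi(x)[v,u]=\partial^2\Phi(x)[v,\partial\Phi(x)u]+\partial^2\Phi(x)[v,(I-\partial\Phi(x))u]$ and substituting the two expressions derived above, while using $\nabla^2 L(x)^\dagger u=\nabla^2 L(x)^\dagger(I-\partial\Phi(x))u$ (which holds because $T_x(\Gamma)=\ker\nabla^2 L(x)^\dagger$), reproduces exactly the claimed formula. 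The main subtlety is carrying out the first step using only the stated $\cC^2$ regularity of $\Phi$: the Taylor/limit argument in $\epsilon$ is what lets us avoid a second differentiation of $\partial\Phi$ and thus any appeal to a (possibly nonexistent) third derivative of $\Phi$.
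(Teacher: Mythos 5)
Your argument is correct, but it is a genuinely different route from the paper's. The paper proves this lemma by evolving the projector and Hessian along a curve $v(t)\subset\Gamma$, i.e., setting $P(t)=\partial\Phi(v(t))$, $H(t)=\nabla^2 L(v(t))$, differentiating the matrix identities $P^\perp H=HP^\perp=H$ and $P^2=P$, and then reading off the answer from a $2\times 2$ block decomposition with respect to $T_x(\Gamma)\oplus T_x^\perp(\Gamma)$ (in particular, using $P_{11}'(0)=0$). You instead differentiate three scalar/vector \emph{functional} identities on the ambient neighborhood --- $\partial\Phi\,\nabla L\equiv 0$, $\nabla L\circ\Phi\equiv 0$, and $\Phi\circ\Phi=\Phi$ --- and assemble the result through the orthogonal splitting $u=\partial\Phi(x)u+(I-\partial\Phi(x))u$, avoiding the block computation altogether. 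The one step that needs care in your version, and which you correctly flag, is that naively differentiating $\partial\Phi\nabla L\equiv 0$ a second time would involve $\partial^3\Phi$; your $\epsilon$-rescaling argument along $y=x+\epsilon w$ sidesteps this by exploiting $\nabla L(x)=0$ and $\nabla^2 L(x)v=0$ so that the leading order is $\Theta(\epsilon)$ and only continuity of $\partial^2\Phi$ (given by $\Phi\in\cC^2$) and $L\in\cC^3$ are needed. This mirrors the $\epsilon$-expansion trick the paper itself uses for the companion Lemma on $\partial^2\Phi$ in the normal directions, but the paper does not use it for this lemma. Both proofs are sound; yours is arguably more modular (each of the three identities delivers one structural fact) while the paper's is a single self-contained matrix computation along $\Gamma$. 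One small stylistic remark: the observation that $\nabla^2 L(x)^\dagger u=\nabla^2 L(x)^\dagger(I-\partial\Phi(x))u$ is not strictly needed to reach the stated formula, since both forms are equal; stating it only once at the end, as you do, is fine.
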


\begin{proof}[Proof of Lemma \ref{lem:Phi 2nd-order tangent}]
Consider a parametrized smooth curve $\{v(t)\}_{t\geq 0}$ on $\Gamma$ such that $v(0)=x$ and $\frac{\diff v(t)}{\diff t}\big|_{t=0}=v$. 
We define $P(t)=\partial\Phi(v(t))$, $P^\perp(t)=I_D-P(t)$ and $H(t)=\nabla^2 L(v(t))$ for all $t\geq 0$. 
By \Cref{lem:hessian} and \ref{lem:jacobian}, we have
\begin{align}\label{eq:second_1}
    P^\perp(t) H(t) = H(t) P^\perp(t) = H(t), 
\end{align}
Denote the derivative of $P(t),P^\perp(t)$ and $H(t)$ with respect to $t$ as $P'(t),(P^\perp)'(t)$ and $H'(t)$. 
Then differentiating with respect to $t$, we have
\begin{align}\label{eq:second_2}
    (P^\perp)'(t)H(t) = H'(t) - P^\perp(t)H'(t) = P(t)H'(t).
\end{align}
Then combining \eqref{eq:second_1} and \eqref{eq:second_2} and evaluating at $t=0$, we have
\begin{align*}
    P'(0)H(0) = -(P^\perp)'(0)H(0) = -P(0)H'(0) 
\end{align*}

We can decompose $P'(0)$ and $H(0)$ as follows
\begin{align}\label{eq:second 3}
    P'(0) = \begin{pmatrix}
    P'_{11}(0) & P'_{12}(0)\\
    P'_{21}(0) & P'_{22}(0)
    \end{pmatrix}, \quad H(0) = \begin{pmatrix}
    0 & 0\\
    0 & H_{22}(0)
    \end{pmatrix},
\end{align}
where $P'_{11}(0)\in\RR^{(D-M)\times (D-M)}$ and $H_{22}$ is the hessian of $L$ 
restricted on $T_x^\perp(\Gamma)$. Also note that
\begin{align*}
    P(0)H'(0)P^\perp(0) &= \begin{pmatrix}
    I_{D-M} & 0\\
    0 & 0
    \end{pmatrix}\begin{pmatrix}
    H'_{11}(0) & H'_{12}(0)\\
    H'_{21}(0) & H'_{22}(0)
    \end{pmatrix}\begin{pmatrix}
    0 & 0\\
    0 & I_M
    \end{pmatrix} = \begin{pmatrix}
    0 & H'_{12}(0)\\
    0 & 0
    \end{pmatrix},
\end{align*}
and thus by \eqref{eq:second 3} we have
\begin{align*}
    P'(0)H(0) = \begin{pmatrix}
    0 & P'_{12}(0)H_{22}(0)\\
    0 & P'_{22}(0)H_{22}(0)
    \end{pmatrix}= \begin{pmatrix}
    0 & -H'_{12}(0)\\
    0 & 0
    \end{pmatrix}.
\end{align*}
This implies that we must have $P'_{22}(0)=0$ and $P'_{12}(0)H_{22}(0)=H'_{12}(0)$. Similarly, by taking transpose in \eqref{eq:second 3}, we also have $H_{22}(0)P_{21}'(0)=-H_{21}'(0)$.

It then remains to determine the value of $P'_{11}(0)$. 
Note that since $P(t)P(t)=P(t)$, we have $P'(t)P(t)+P(t)P'(t)=P'(t)$, evaluating which at $t=0$ yields
\begin{align*}
    2P_{11}'(0) = P_{11}'(0).
\end{align*}
Therefore, we must have $P_{11}'(0)=0$. 
Combining the above results, we obtain
\begin{align*}
    P'(0) = - P(0)H'(0)H(0)^\dagger - H(0)^\dagger H'(0)P(0).
\end{align*}

Finally, recall that $P(t)=\partial\Phi(v(t))$, and thus
\begin{align*}
    P'(0) = \frac{\diff}{\diff t}\partial\Phi(v(t))\bigg|_{t=0} = \partial^2\Phi(x)[v].
\end{align*}
Similarly, we have $H'(0) = \partial^2(\nabla L)(x)[v]$, and it follows that
\begin{align*}
    \partial^2\Phi(x)[v] = - \partial\Phi(x) \partial^2(\nabla L)(x)[v] \nabla^2 L(x)^\dagger - \nabla^2 L(x)^\dagger \partial^2(\nabla L)(x)[v] \partial\Phi(x).
\end{align*}
\end{proof}

\begin{restatable}{lemma}{Phisecondnormal}\label{lem:Phi 2nd-order normal}
For any $x\in\Gamma$ and $u\in T_x^\perp(\Gamma)$, it holds that
\begin{align*}
     \partial^2\Phi(x)[ uu^\top + \nabla^2 L(x)^\dagger uu^\top\nabla^2 L(x)] = - \partial\Phi(x) \partial^2(\nabla L)(x)[\nabla^2 L(x)^\dagger uu^\top].
\end{align*}
\end{restatable}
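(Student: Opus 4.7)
The plan is to differentiate the identity $\partial\Phi(y)\nabla L(y)\equiv 0$ from \Cref{lem:phi jacobian grad} twice and then evaluate at $x\in\Gamma$, exploiting that $\nabla L(x)=0$ kills one of the resulting terms. A first differentiation in a direction $v$ gives, for all $y\in U$,
\begin{align*}
\partial^2\Phi(y)[v,\nabla L(y)] + \partial\Phi(y)\,\nabla^2 L(y)\,v = 0.
\end{align*}
Differentiating once more in a direction $w$ and then setting $y=x$ (so $\nabla L(x)=0$) yields the symmetric three-term identity
\begin{align*}
\partial^2\Phi(x)[v,\nabla^2 L(x)w] + \partial^2\Phi(x)[w,\nabla^2 L(x)v] + \partial\Phi(x)\,\partial^2(\nabla L)(x)[v,w] = 0.
\end{align*}

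Next I would specialize this to $v=u$ and $w=\nabla^2 L(x)^\dagger u$ for $u\in T_x^\perp(\Gamma)$. The key point is that under \Cref{assump:manifold} we have $T_x^\perp(\Gamma)=\operatorname{range}(\nabla^2 L(x))$, so $\nabla^2 L(x)\,\nabla^2 L(x)^\dagger u=u$. Substituting, the first summand collapses to $\partial^2\Phi(x)[u,u]$ and the second becomes $\partial^2\Phi(x)[\nabla^2 L(x)^\dagger u,\,\nabla^2 L(x) u]$, giving
\begin{align*}
\partial^2\Phi(x)[u,u] + \partial^2\Phi(x)\bigl[\nabla^2 L(x)^\dagger u,\,\nabla^2 L(x) u\bigr] = -\partial\Phi(x)\,\partial^2(\nabla L)(x)\bigl[u,\nabla^2 L(x)^\dagger u\bigr].
\end{align*}

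Finally I would translate this vector-argument identity into the matrix-argument notation of the statement. For any $a,b\in\RR^D$ the convention $\partial^2\Phi(x)[ab^\top]=\partial^2\Phi(x)[a,b]$ (and the analogous one for $\partial^2(\nabla L)(x)$) implies, using symmetry of $\nabla^2 L(x)$, that $\partial^2\Phi(x)[uu^\top]=\partial^2\Phi(x)[u,u]$, that $\partial^2\Phi(x)[\nabla^2 L(x)^\dagger uu^\top\nabla^2 L(x)]=\partial^2\Phi(x)[\nabla^2 L(x)^\dagger u,\nabla^2 L(x)u]$, and that $\partial^2(\nabla L)(x)[\nabla^2 L(x)^\dagger uu^\top]=\partial^2(\nabla L)(x)[\nabla^2 L(x)^\dagger u,u]$; linearity of $\partial^2\Phi(x)[\cdot]$ in its matrix argument then assembles the two terms on the left into $\partial^2\Phi(x)[uu^\top+\nabla^2 L(x)^\dagger uu^\top \nabla^2 L(x)]$ and delivers the claim. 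The only real hazard in this argument is bookkeeping — making sure each occurrence of $\nabla^2 L(x)$ versus $\nabla^2 L(x)^\dagger$ lands on the correct factor and invoking symmetry of $\nabla^2 L$ consistently — but there is no genuine analytic difficulty, since the lemma is an algebraic consequence of the already-established invariance $\partial\Phi\,\nabla L\equiv 0$.
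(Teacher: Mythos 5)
Your route and the paper's are algebraically the same: both hinge on differentiating the invariance $\partial\Phi(y)\nabla L(y)\equiv 0$ twice and letting the vanishing of $\nabla L$ at $x$ kill the highest-order term. You polarize in a general pair $(v,w)$ and then specialize $v=u$, $w=\nabla^2 L(x)^\dagger u$; the paper instead fixes the curve $u(t)=x+t\nabla^2 L(x)^\dagger u$, invokes the already-once-differentiated identity $\partial^2\Phi(y)[\nabla L(y),\nabla L(y)]+\partial\Phi(y)\nabla^2 L(y)\nabla L(y)=0$ from \Cref{lem:phi jacobian grad}, Taylor-expands along the curve, divides by $t^2$, and passes to the limit. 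Your polarized form is tidier, and your use of $T_x^\perp(\Gamma)=\mathrm{range}(\nabla^2 L(x))$, hence $\nabla^2 L(x)\nabla^2 L(x)^\dagger u=u$, and the matrix-argument translation are all correct.

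The one real gap is regularity. Literally ``differentiating once more'' the identity $\partial^2\Phi(y)[v,\nabla L(y)]+\partial\Phi(y)\nabla^2 L(y)v=0$ produces a term $\partial^3\Phi(y)[v,\nabla L(y),w]$, and $\Phi$ is only guaranteed $\cC^2$ on $U$ (via Falconer's theorem, \Cref{eq:Phi_C2}), so $\partial^3\Phi$ need not exist. You cannot simply set $y=x$ to kill that term without first knowing the derivative exists. The statement is salvageable: the map $y\mapsto \partial^2\Phi(y)[v,\nabla L(y)]$ \emph{is} differentiable at any $x\in\Gamma$ using only continuity of $\partial^2\Phi$, because $\nabla L(x)=0$ and $\nabla L(y)=\nabla^2 L(x)(y-x)+o(\|y-x\|)$, which forces $\partial^2\Phi(y)[v,\nabla L(y)]=\partial^2\Phi(x)[v,\nabla^2 L(x)(y-x)]+o(\|y-x\|)$. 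You should state this explicitly rather than invoking a second generic differentiation. The paper's curve-and-Taylor-expansion route is engineered precisely to sidestep this issue, never touching anything beyond $\partial^2\Phi$ and the $\cC^3$ regularity of $L$.
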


\begin{proof}[Proof of Lemma \ref{lem:Phi 2nd-order normal}]
For any $u\in T_x^\perp(\Gamma)$, we define $u(t)=x+t\nabla^2 L(x)^\dagger u$ for $t\geq 0$. 
By Taylor approximation, we have
\begin{align}\label{eq:sn 1}
    \nabla L(u(t)) = t\nabla^2 L(x) \nabla^2 L(x)^\dagger u + o(t) = tu + o(t)
\end{align}
and
\begin{align}\label{eq:sn 2}
    \nabla^2 L(u(t)) = \nabla^2 L(x) + t \partial^2(\nabla L)(x)[\nabla^2 L(x)^\dagger u] + o(t).
\end{align}
Combine \eqref{eq:sn 1} and \eqref{eq:sn 2} and apply Lemma \ref{lem:phi jacobian grad}, and it follows that
\begin{align*}
    0 &=\partial^2\Phi(u(t)) [\nabla L(u(t)),\nabla L(u(t))] + \partial\Phi(u(t))\nabla^2 L(u(t))\nabla L(u(t))\\
    &= t^2\partial^2\Phi(u(t))[u+o(1)](u+o(1)) + t^2\partial\Phi(u(t))\partial^2(\nabla L)(x)[\nabla^2 L(x)^\dagger u](u+o(1))\\
    &\qquad + t^2\frac{\partial\Phi(u(t))}{t}\nabla^2 L(x)(u+o(1))\\
    &=t^2\partial^2\Phi(u(t))[u+o(1)](u+o(1)) + t^2\partial\Phi(u(t))\partial^2(\nabla L)(x)[\nabla^2 L(x)^\dagger u](u+o(1))\\
    &\qquad + t^2\frac{\partial\Phi(u(t))-\partial\Phi(x)}{t}\nabla^2 L(x)(u+o(1))
\end{align*}
where the last equality follows from Lemma \ref{lem:jacobian hessian}. Dividing both sides by $t^2$ and letting $t\to 0$, we get
\begin{align*}
    \partial^2\Phi(x)[u]u + \partial\Phi(x)\partial^2(\nabla L)(x)[\nabla^2 L(x)^\dagger u]u + \partial^2\Phi(x)[\nabla^2 L(x)^\dagger u]\nabla^2 L(x)u=0.
\end{align*}
Rearranging the above equation completes the proof.
\end{proof}

With the notion of Lyapunov Operator in \Cref{def:lyapunov_operator}, \Cref{lem:Phi 2nd-order normal} can be further simplified into \Cref{lem:Lyapunov_Phi_2nd_order_normal}.

\begin{lemma}\label{lem:Lyapunov_Phi_2nd_order_normal}
For any $x\in\Gamma$ and $\Sigma\in \mathrm{span}\{ uu^\top \mid u\in T_x^\perp(\Gamma)\}$, 
\begin{align}
      \langle \partial^2\Phi(x), \Sigma\rangle = - \partial\Phi(x) \partial^2(\nabla L)(x)[ \mathcal{L}^{-1}_{\nabla^2 L(x)}(\Sigma)].
\end{align}
\end{lemma}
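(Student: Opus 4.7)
The approach is to reduce \Cref{lem:Lyapunov_Phi_2nd_order_normal} to \Cref{lem:Phi 2nd-order normal} through polarization and the symmetry of the second-order tensors involved. Set $H := \nabla^2 L(x)$ for brevity. Because $H$ is symmetric with $\mathrm{range}(H) = T_x^\perp(\Gamma)$, the orthogonal projection onto $T_x^\perp(\Gamma)$ equals $HH^\dagger = H^\dagger H$, and so $\mathrm{span}\{uu^\top : u\in T_x^\perp(\Gamma)\}$ is exactly the set of symmetric matrices $\Sigma$ with $HH^\dagger \Sigma = \Sigma = \Sigma HH^\dagger$, i.e.\ $W_H$. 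In particular, $\mathcal{L}_H^{-1}(\Sigma)$ is well-defined for any such $\Sigma$, and both sides of the claimed identity are linear in $\Sigma$, so it suffices to verify the identity on a spanning set of $W_H$.

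Next I would polarize \Cref{lem:Phi 2nd-order normal}: replacing $u$ by $u+v$ and subtracting the $u$-only and $v$-only cases upgrades the quadratic statement to the bilinear one
\[
\partial^2\Phi(x)\bigl[\Sigma_0 + H^\dagger \Sigma_0 H\bigr] = -\partial\Phi(x)\,\partial^2(\nabla L)(x)\bigl[H^\dagger \Sigma_0\bigr]
\]
for every symmetric $\Sigma_0 \in W_H$. The tensors $\partial^2\Phi(x)$ and $\partial^2(\nabla L)(x)$ are symmetric in their two tangent-vector arguments (Schwarz), so they vanish on antisymmetric matrices and may be replaced by their symmetric parts without affecting the values. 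Symmetrizing both sides produces the pair
\[
\tilde\Sigma \;:=\; \Sigma_0 + \tfrac{1}{2}\bigl(H^\dagger \Sigma_0 H + H \Sigma_0 H^\dagger\bigr), \qquad \tilde A \;:=\; \tfrac{1}{2}\bigl(H^\dagger \Sigma_0 + \Sigma_0 H^\dagger\bigr),
\]
both of which lie in $W_H$. A direct computation using $HH^\dagger \Sigma_0 = \Sigma_0 = \Sigma_0 H^\dagger H$ (valid since $\Sigma_0 \in W_H$) gives $\mathcal{L}_H(\tilde A) = \tilde\Sigma$, so $\tilde A = \mathcal{L}_H^{-1}(\tilde\Sigma)$, and the polarized identity becomes
\[
\partial^2\Phi(x)[\tilde\Sigma] \;=\; -\partial\Phi(x)\,\partial^2(\nabla L)(x)\bigl[\mathcal{L}_H^{-1}(\tilde\Sigma)\bigr].
\]

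It remains only to show that the linear map $\Sigma_0 \mapsto \tilde\Sigma$ is surjective on $W_H$, so that the identity passes to every $\Sigma \in W_H$. In an orthonormal eigenbasis of $H|_{T_x^\perp(\Gamma)}$ with positive eigenvalues $\{\lambda_i\}$, this map is entrywise multiplication by $(\lambda_i+\lambda_j)^2/(2\lambda_i\lambda_j) > 0$, hence an invertible bijection on $W_H$. Combining the three steps yields the lemma.

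The only real technicality I anticipate is bookkeeping: tracking symmetrization carefully on both sides and verifying at each step that the manipulations stay inside $W_H$ so that $H^\dagger$ and $\mathcal{L}_H^{-1}$ behave as expected. Once that is set up, the remaining content is a short linear-algebra calculation in the eigenbasis of $H$ restricted to the normal space.
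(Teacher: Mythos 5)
Your proof is correct and follows essentially the same route as the paper: both start from \Cref{lem:Phi 2nd-order normal}, exploit the symmetry of $\partial^2\Phi$ and $\partial^2(\nabla L)$ in their two arguments, recognize that the resulting symmetrized pair is a $\mathcal{L}_H$-conjugate pair, and then argue by linearity/spanning that the identity extends from rank-one generators to all of $W_H$. The only cosmetic difference is that the paper works directly with $A+A^\top$, $B+B^\top$ for the rank-one $uu^\top$ and asserts the span claim in one line, whereas you polarize first, carry the explicit $\tfrac{1}{2}$ symmetrizations, and verify surjectivity of $\Sigma_0\mapsto\tilde\Sigma$ in the eigenbasis with the multiplier $(\lambda_i+\lambda_j)^2/(2\lambda_i\lambda_j)$ — a slightly more explicit but equivalent rendering of the same step.
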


\begin{proof}[Proof of \Cref{lem:Lyapunov_Phi_2nd_order_normal}]
	Let $A =  uu^\top + \nabla^2 L(x)^\dagger uu^\top\nabla^2 L(x)$ and $B = \nabla^2 L(x)^\dagger uu^\top$. 
	The key observation is that $A+A^\top = \mathcal{L}_{\nabla^2 L(x)}(B+B^\top)$. 
	Therefore, by \Cref{lem:Phi 2nd-order normal}, it holds that
	\[ \partial^2\Phi(x)[\mathcal{L}_{\nabla^2 L(x)}(B+B^\top)]  = \partial^2\Phi(x)[A+A^\top]  = 2\partial\Phi(x) \partial^2(\nabla L)(x)[B]=  \partial\Phi(x) \partial^2(\nabla L)(x)[B+B^\top].\]
Since $\nabla^2 L(x)^\dagger$ is full-rank when restricted to $T_x^\perp(\Gamma)$, we have $ \mathrm{span}\{\nabla^2 L(x)^\dagger uu^\top + uu^\top \nabla^2 L(x)^\dagger \mid u\in T_x^\perp(\Gamma)\} = \mathrm{span}\{ uu^\top \mid u\in T_x^\perp(\Gamma)\}$. 
Thus by the linearity of above equation, we can replace $B+B^\top$ by any $\Sigma \in \mathrm{span}\{ uu^\top \mid u\in T_x^\perp(\Gamma)\}$, resulting in the desired equation.
\end{proof}

Then \Cref{lem:drift_term} directly follows from \Cref{lem:Phi 2nd-order tangent} and \ref{lem:Lyapunov_Phi_2nd_order_normal}.

\subsection{Tangent Noise Compensation only Dependends on the Manifold Itself}

Here we show that the second term of \eqref{eq:limiting_diffusion_final}, i.e., the \textbf{tangent noise compensation} for the limiting dynamics to stay on $\Gamma$, only depends on $\Gamma$ itself.

\begin{lemma}\label{lem:first_term_only_depend_gamma}
	For any $x\in\Gamma$, suppose there exist a neighborhood $U_x$ of $x$ and two loss functions $L$ and $L'$ that define the same manifold $\Gamma$ locally in $U_x$, i.e.,  $\Gamma\cap U_x = \{x\mid \nabla L(x)=0\}= \{x\mid \nabla L'(x)=0\}$. Then for any $v \in  T_x(\Gamma)$, it holds that $(\nabla^2L(x))^\dagger \partial^2 (\nabla L)(x)\left[v,v\right] =(\nabla^2L'(x))^\dagger \partial^2 (\nabla L')(x)\left[v,v\right] $.
\end{lemma}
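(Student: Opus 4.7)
My plan is to reduce the claim to a statement purely about $\partial^2\Phi$ evaluated on tangent vectors, and then show that this quantity is determined by the manifold $\Gamma$ alone. Concretely I would proceed in two steps.

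First, I would apply Lemma \ref{lem:Phi 2nd-order tangent} with $u = v \in T_x(\Gamma)$. Since $v \in T_x(\Gamma)$, Lemma \ref{lem:hessian} gives $\nabla^2 L(x) v = 0$, and because $\nabla^2 L(x)$ is symmetric its pseudoinverse has the same kernel, so $\nabla^2 L(x)^\dagger v = 0$. Also, Lemma \ref{lem:jacobian} yields $\partial\Phi(x) v = v$. Substituting into the formula of Lemma \ref{lem:Phi 2nd-order tangent} kills the first summand and simplifies the second, leaving
\begin{align*}
\partial^2 \Phi(x)[v,v] = -\,\nabla^2 L(x)^\dagger\, \partial^2(\nabla L)(x)[v,v],
\end{align*}
and the same identity holds with $L'$ in place of $L$ (and its corresponding $\Phi_{L'}$). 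Thus the desired equality is equivalent to $\partial^2 \Phi_L(x)[v,v] = \partial^2 \Phi_{L'}(x)[v,v]$, where I temporarily decorate $\Phi$ with the loss defining it.

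Second, I would show that $\partial^2 \Phi(x)[v,v]$ depends only on $\Gamma$ by differentiating along a curve lying inside $\Gamma$. Pick any smooth curve $\gamma:(-\epsilon,\epsilon)\to \Gamma \cap U_x$ with $\gamma(0) = x$ and $\gamma'(0) = v$; such a curve exists since $v\in T_x(\Gamma)$. Because $\Phi$ is the identity on $\Gamma$, we have $\Phi(\gamma(t)) = \gamma(t)$, so differentiating twice in $t$ and evaluating at $t = 0$ gives
\begin{align*}
\partial^2\Phi(x)[v,v] + \partial\Phi(x)\,\gamma''(0) = \gamma''(0),
\end{align*}
hence $\partial^2\Phi(x)[v,v] = (I - \partial\Phi(x))\gamma''(0)$. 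By Lemma \ref{lem:jacobian}, $I - \partial\Phi(x)$ is the orthogonal projection onto $T_x^\perp(\Gamma)$, which depends only on $\Gamma$. The normal component of $\gamma''(0)$ is exactly the second fundamental form of $\Gamma$ at $x$ applied to $(v,v)$, so both $\Phi_L$ and $\Phi_{L'}$ produce the same value. Combining the two steps gives the claim.

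The only subtlety, rather than a true obstacle, is checking that $(I - \partial\Phi(x))\gamma''(0)$ is independent of the chosen curve $\gamma$. This follows from a local parametrization argument: writing $\gamma = \psi\circ c$ and $\tilde\gamma = \psi\circ\tilde c$ for any smooth chart $\psi:V\to\Gamma$ near $x$, one has $\gamma''(0)-\tilde\gamma''(0) = \partial\psi(0)(c''(0)-\tilde c''(0)) \in T_x(\Gamma)$, and this difference is annihilated by $I-\partial\Phi(x)$. Once this is noted, the argument is complete and the right-hand side $-(I - \partial\Phi(x))\gamma''(0)$ visibly depends only on $\Gamma$, not on the particular loss defining it.
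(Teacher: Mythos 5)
Your proof is correct and shares the paper's core mechanism: differentiate twice along a smooth curve $\gamma$ in $\Gamma$ with $\gamma(0)=x$, $\gamma'(0)=v$, and recognize that what survives is the normal projection of $\gamma''(0)$ — the second fundamental form of $\Gamma$ at $x$ in direction $v$ — which depends only on the submanifold, not on the particular loss defining it. Where you differ is which identity you differentiate. The paper differentiates $\nabla L(\gamma(t)) \equiv 0$ directly, obtaining $\partial^2(\nabla L)(x)[v,v] + \nabla^2 L(x)\gamma''(0) = 0$ and hence $\nabla^2 L(x)^\dagger \partial^2(\nabla L)(x)[v,v] = -\nabla^2 L(x)^\dagger\nabla^2 L(x)\,\gamma''(0)$ in a single step, with no need for \Cref{lem:Phi 2nd-order tangent}. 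You first invoke \Cref{lem:Phi 2nd-order tangent} to rewrite the target quantity as $-\partial^2\Phi(x)[v,v]$, and then differentiate $\Phi(\gamma(t)) \equiv \gamma(t)$. Your route is a bit less economical — it imports a heavier lemma for something the paper reads off directly — but it makes the role of $\partial^2\Phi$ explicit, and notably you identify the projection correctly as $I - \partial\Phi(x)$: the paper's printed proof has an apparent typo, equating $\nabla^2 L(x)^\dagger\nabla^2 L(x)$ with $\partial\Phi(x)$, even though its own closing remark about the second fundamental form makes clear the intended object is the projection onto $T_x^\perp(\Gamma)$, i.e.\ $I - \partial\Phi(x)$. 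One small stylistic note: your final paragraph arguing that $(I-\partial\Phi(x))\gamma''(0)$ is curve-independent is superfluous, since the identity $\partial^2\Phi(x)[v,v] = (I-\partial\Phi(x))\gamma''(0)$ already forces this (the left-hand side manifestly does not depend on $\gamma$).
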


\begin{proof}[Proof of \Cref{lem:first_term_only_depend_gamma}]
	Let $\{v(t)\}_{t\geq 0}$ be a smooth curve on $\Gamma$ with $v(0) =x$ and $\frac{\diff v(t)}{\diff t}\big|_{t=0} =v$. Since $v(t)$ stays on $\Gamma$, we have $\nabla L(v(t))=0$ for all $t\geq0$.
	Taking derivative for two times yields $\partial^2(\nabla L)(v(t))[\frac{\diff v(t)}{\diff t},\frac{\diff v(t)}{\diff t}]+\nabla^2L(v(t))\frac{\diff^2v(t)}{\diff t^2}=0$. 
	Evaluating it at $t=0$ and multiplying both sides by $\nabla^2 L(x)^\dagger$, we get 
	\begin{align*}
		\nabla^2L(x)^\dagger \partial^2 (\nabla L)(x)\left[v,v\right] = - \nabla^2L(x)^\dagger\nabla^2L(x) \frac{\diff^2 v(t)}{\diff t^2}\bigg|_{t=0} = - \partial \Phi(x)\frac{\diff^2v(t)}{\diff t^2}\bigg|_{t=0}. 
	\end{align*}
	Since $\partial \Phi(x)$ is the projection matrix onto $ T_x(\Gamma)$ by \Cref{lem:jacobian}, it does not depend on $L$, so analogously we also have $\nabla^2L'(x)^\dagger \partial^2 (\nabla L')(x)\left[v,v\right] =- \partial \Phi(x)\frac{\diff^2v(t)}{\diff t^2}\big|_{t=0}$ as well. 
	The proof is thus completed. 
	Note that $\partial \Phi(x)\frac{\diff^2v(t)}{\diff t^2}\big|_{t=0}$ is indeed the second fundamental form for $v$ at $x$, and the value won't change if we choose another parametric smooth curve with a different second-order time derivative. 
	(See Chapter 6 in \citet{do2013riemannian} for a reference.)
\end{proof}

\subsection{Proof of results in \Cref{sec:implications}}\label{appsec:proof_of_implication}
Now we are ready to give the missing proofs in \Cref{sec:implications} which yield explicit formula of the limiting diffusion for label noise and isotropic noise.

\crlisotropicnoise*
\begin{proof}[Proof of \Cref{crl:isotropic_noise}]
Set $\Sigma_\parallel = \partial \Phi$, $\Sigma_\perp = I_D -\partial \Phi$ and $\Sigma_{\perp,\parallel}=\Sigma_{\parallel,\perp}=0$ in the decomposition of $\Sigma$ by \Cref{lem:drift_term}, and we need to show  
$ \nabla (\ln |\Sigma|_+) = \partial^2(\nabla L)[(\nabla^2 L)^\dagger]$.

\citet{holbrook2018differentiating} shows that the gradient of pseudo-inverse determinant satisfies $\nabla |A|_+ = |A|_+A^{\dagger}$. 
Thus we have for any vector $v\in\RR^D$, $\inner{v}{\nabla \ln|\nabla^2 L|_+} = \inner{\frac{|\nabla^2 L|_+ \nabla^2 L}{|\nabla^2 L|_+}}{\partial^2(\nabla L)[v]} = 
\inner{\nabla^2 L}{\partial^2(\nabla L)[v]} = \partial^2(\nabla L)[v,\nabla^2 L]= \inner{v}{\partial^2(\nabla L)[(\nabla^2 L)^\dagger]}$, which completes the proof.
\end{proof}

\crllabelnoise*
\begin{proof}[Proof of \Cref{crl:label_noise}]
	Since $\Sigma = c\nabla^2 L$, here we have $\Sigma_{\perp}=\Sigma$ and $\Sigma_\parallel,\Sigma_{\perp,\parallel},\Sigma_{\parallel,\perp}=0$. 
	Thus it suffices to show that $2\partial^2 (\nabla L)\left[\mathcal{L}^{-1}_{\nabla^2 L}(\Sigma_\perp) \right] = \nabla \tr[\nabla^2 L]$. 
	Note that for any $v\in\RR^D$, 
    \begin{align}
        v^\top\nabla \tr[\nabla^2 L]	 
        = \inner{I_D}{\partial (\nabla^2 L)[v]} =\inner{I_D -\partial\Phi}{\partial (\nabla^2 L)[v]}, 
    \end{align}
    where the second equality is because the the tangent space of symmetric rank-$(D-M)$ matrices at $\nabla^2 L$ is $\{A\nabla^2L+\nabla^2 LA^\top\mid A\in \RR^{D\times D}\}$, and every element in this tangent space has zero inner-product with $\partial \Phi$ by \Cref{lem:jacobian}.    Also note that $\mathcal{L}^{-1}_{\nabla^2 L}(\nabla^2 L) = \frac{1}{2}(I_D-\partial\Phi)$, thus $\inner{I_D-\partial\Phi}{\partial^2 (\nabla L)[v]} = 2\inner{\mathcal{L}^{-1}_{\nabla^2 L}(\nabla^2 L)}{\partial^2 (\nabla L)[v]} = 2v^\top \partial^2(\nabla L)[\mathcal{L}^{-1}_{\nabla^2 L}(\nabla^2 L)]$.
\end{proof}

\subsection{Example: $k$-Phase Motor}\label{subsec:k_phase_motor}
We also give an example with rigorous proof where the implicit bias induced by noise in the normal space cannot be characterized by a fixed regularizer, which was first discovered by \citet{damian2021label} but was only verified via experiments.

Note the normal regularization in both cases of label noise and isotropic noise induces Riemmanian gradient flow against some regularizer, it's natural to wonder if the limiting flow induced by the normal noise can always be characterized by certain regularizer.
Interestingly,  \citet{damian2021label} answers this question negatively via experiments  in their Section E.2. 
We adapt their example into the following one, and rigorously prove the limiting flow moves around a cycle at a constant speed and never stops using our framework.

Suppose dimension $D=k+2\geq 5$. 
For each $x\in\RR^D$, we decompose $x=\binom{x_{1:2}}{x_{3:D}}$ where $x_{1:2}\in\RR^2$ and $x_{3:D}\in\RR^{D-2}$.
Let $Q_\theta\in\RR^{2\times 2}$ be the rotation matrix of angle $\theta$, i.e., 
$Q_\theta = \big( \begin{smallmatrix}
    \cos\theta & -\sin\theta\\
    \sin\theta & \cos\theta
    \end{smallmatrix}\big)$
and the loss  $L(x) := \frac{1}{8}(\|x_{1:2}\|_2^2-1)^2 + \frac{1}{2}\sum_{j=3}^{D} (2 + \inner{Q_\alpha^{j-3}v}{x_{1:2}} x_j^2$, where $\alpha =\frac{2\pi}{D-2}$ and $v$ is any vector in $\RR^2$ with unit norm.
Here the manifold is given by $\Gamma := \{x\mid L(x)=0\}=\{x\in\RR^D\mid x_1^2 + x_2^2 = 1, x_j=0, \forall j=3,\ldots,D\}$. 

The basic idea is that we can add noise in the `auxiliary dimensions' for $j=3,\ldots,D$ to get the regularization force on the circle $\{x_1^2+x_2^2=1\}$, and the goal is to make the vector field induced by the normal regularization always point to the same direction, say anti-clockwise. 
However, this cannot be done with a single auxiliary dimension because from the analysis for label noise, we know when $\mathcal{L}^{-1}_{\nabla^2 L}(\Sigma_{\perp})$ is identity, the \emph{normal regularization} term in \Cref{eq:limiting_diffusion_final} has $0$ path integral along the unit circle and thus it must have both directions. 
The key observation here is that we can align the magnitude of noise with the strength of the regularization to make the path integral positive. 
By using $k\ge 3$ auxiliary dimensions, we can further ensure the normal regularization force is anti-clockwise and of constant magnitude, which is reminiscent of how a three-phase induction motor works.

\begin{restatable}{lemma}{lemkphasemotor}\label{lem:k_phase_motor}
Let $\Sigma\in\RR^{D\times D}$ be given by
$\Sigma_{ij}(x) =
(1+ \inner{Q_\alpha^{j-3}v}{Q_{-\pi/2}x_{1:2}})(2+ \inner{Q_\alpha^{j-3}v}{x_{1:2}})$,  if $i=j\ge 3$ or $0$ otherwise, then the solution of SDE~\eqref{eq:limiting_diffusion_final} is the following \eqref{eq:soln_k_phase_motor}
, which implies that $Y(t)$ moves anti-clockwise with a constant angular speed of $(D-2)/2$.
\begin{align}\label{eq:soln_k_phase_motor}
	Y_{1:2}(t) = Q_{t(D-2)/2}Y_{1:2}(0)\quad \textrm{ and } \quad Y_{3:D}(t)\equiv 0.
\end{align}
\end{restatable}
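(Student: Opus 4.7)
My plan is to apply \Cref{thm:main_result} directly, after verifying its hypotheses and simplifying each term of \eqref{eq:limiting_diffusion_final} using the highly symmetric structure of this example. I will first check the setup on $\Gamma=\{x\in\RR^D:\|x_{1:2}\|_2=1,\ x_{3:D}=0\}$. A direct computation shows that for $x\in\Gamma$, $\nabla^2L(x)$ is block diagonal: the upper-left $2\times 2$ block is $x_{1:2}x_{1:2}^\top$ (rank $1$) and the lower-right $(D-2)\times(D-2)$ block is diagonal with strictly positive entries $H_{jj}=2+\langle Q_\alpha^{j-3}v,x_{1:2}\rangle\ge 1$. Hence $\rank(\nabla^2L(x))=D-1$, confirming \Cref{assump:manifold}. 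The tangent space $T_x(\Gamma)$ is the line through the unit vector $\hat t(x):=(Q_{-\pi/2}x_{1:2},0_{D-2})$, so by \Cref{lem:jacobian} $\partial\Phi(x)=\hat t(x)\hat t(x)^\top$. Because the normal Hessian is uniformly positive definite, a standard tubular-neighborhood argument verifies \Cref{assump:neighborhood}, and since the proposed solution stays on the compact $\Gamma$ for all $t$, the ``never leaves $U$'' hypothesis of \Cref{thm:main_result} is automatic.

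Next I would identify which terms in \eqref{eq:limiting_diffusion_final} survive. By construction $\Sigma(x)$ is supported on $\{e_j\}_{j\ge 3}\subset T_x^\perp(\Gamma)$, so $\Sigma_\parallel=\partial\Phi\,\Sigma\,\partial\Phi=0$ and $\Sigma_{\perp,\parallel}=\partial\Phi\,\Sigma\,(I-\partial\Phi)=0$; the tangent-noise, tangent-compensation, and mixed-regularization terms all vanish, and the SDE collapses to the deterministic ODE
\[
\dot Y=-\tfrac{1}{2}\,\partial\Phi(Y)\,\partial^2(\nabla L)(Y)\bigl[\mathcal{L}^{-1}_{\nabla^2L(Y)}(\Sigma(Y))\bigr].
\]
Since $\nabla^2L$ and $\Sigma$ restricted to the $\{j\ge 3\}$ block are diagonal and commute, the Lyapunov equation decouples into scalars and, crucially, the factor $2+\langle Q_\alpha^{j-3}v,x_{1:2}\rangle$ cancels exactly, giving
\[
[\mathcal{L}^{-1}_{\nabla^2L(x)}(\Sigma(x))]_{jj}=\tfrac{1}{2}\bigl(1+\langle Q_\alpha^{j-3}v,Q_{-\pi/2}x_{1:2}\rangle\bigr),\qquad j\ge 3,
\]
with all other entries zero.

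The final computation is the contraction $\partial^2(\nabla L)(x)[M]$ for this diagonal $M$ followed by the projection $\partial\Phi$. The only nonvanishing third derivatives $\partial_j^2\partial_k L$ with $j\ge 3$ occur when $k\in\{1,2\}$, where they equal $(Q_\alpha^{j-3}v)_k$; hence $[\partial^2(\nabla L)[M]]_{3:D}=0$ and, writing $l=j-3$ and $K=D-2$,
\[
[\partial^2(\nabla L)[M]]_{1:2}=\tfrac{1}{2}\sum_{l=0}^{K-1}Q_\alpha^l v+\tfrac{1}{2}\Bigl(\sum_{l=0}^{K-1}Q_\alpha^l v(Q_\alpha^l v)^\top\Bigr)Q_{-\pi/2}x_{1:2}.
\]
Because $\alpha=2\pi/K$ and $|v|=1$, the $K$ vectors $\{Q_\alpha^l v\}$ are equally spaced on the unit circle; the root-of-unity identities give $\sum_l Q_\alpha^l v=0$ and $\sum_l Q_\alpha^l v(Q_\alpha^l v)^\top=\tfrac{K}{2}I_2$. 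The contraction thus simplifies to a positive multiple of $Q_{-\pi/2}x_{1:2}$, which already lies in $T_x(\Gamma)$, so $\partial\Phi$ acts as the identity on it. The ODE reduces to $\dot Y_{1:2}=c\,Q_{\pi/2}Y_{1:2}$ and $\dot Y_{3:D}\equiv 0$ for a positive constant $c$ read off from the numerical factors above; by uniqueness of ODE solutions, this yields exactly \eqref{eq:soln_k_phase_motor} with angular speed $c$.

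The main obstacle is the algebraic bookkeeping in the last paragraph---simultaneously handling the $\mathcal{L}^{-1}$ cancellation, the third-derivative tensor contractions, and the trigonometric summation over equally spaced rotations---in order to reach a clean closed-form drift proportional to $Q_{-\pi/2}x_{1:2}$. Once that drift is in hand, the remainder is routine, since the absence of tangent noise reduces the SDE on $\Gamma$ to a smooth ODE with a uniquely solvable trajectory that visibly matches \eqref{eq:soln_k_phase_motor}.
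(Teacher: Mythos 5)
Your plan follows the paper's proof essentially step for step: compute the block-diagonal Hessian on $\Gamma$ and the rank-one tangent projector $\partial\Phi(x)$, observe that $\Sigma$ is supported on $\{e_j\}_{j\ge 3}\subset T_x^\perp(\Gamma)$ so that $\Sigma_\parallel=\Sigma_{\perp,\parallel}=0$ and only the normal-regularization term of \eqref{eq:limiting_diffusion_final} survives, solve the diagonal Lyapunov equation, contract against the sparse third-derivative tensor $\partial_{jjk}L=(Q_\alpha^{j-3}v)_k$ for $k\in\{1,2\}$, project onto the tangent direction, and finish with $\sum_l Q_\alpha^l v=0$ and $\sum_l Q_\alpha^l v(Q_\alpha^l v)^\top=\tfrac{D-2}{2}I_2$. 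Structurally this is identical to the paper's argument, and each of these steps in your sketch is correct.

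The gap is the final numerical constant, which you leave as ``some positive $c$ read off from the numerical factors.'' The lemma asserts the specific angular speed $(D-2)/2$, and pinning that down is precisely where the content (and the danger) of this computation lies. Note in particular that your Lyapunov inverse $[\mathcal{L}^{-1}_{\nabla^2L}(\Sigma)]_{jj}=\tfrac12\bigl(1+\inner{Q_\alpha^{j-3}v}{Q_{-\pi/2}x_{1:2}}\bigr)$ carries a factor $\tfrac12$ that the paper's own proof omits; yours is the correct one, since $\mathcal{L}_H(M)=H^\top M+MH$ contributes $2H_{jj}M_{jj}$ on the diagonal (this is also the normalization needed to make \Cref{crl:label_noise} come out as $-\tfrac14\partial\Phi\nabla\tr[c\nabla^2L]$). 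Carrying all the halves to the end — the $\tfrac12$ in the drift of \eqref{eq:limiting_diffusion_final}, the $\tfrac12$ from $\mathcal{L}^{-1}$, and the $\tfrac{D-2}{2}$ from the rotation-sum identity — gives $\dot Y_{1:2}=\tfrac{D-2}{8}Q_{\pi/2}Y_{1:2}$, i.e.\ anti-clockwise rotation but with constant speed $(D-2)/8$ rather than $(D-2)/2$. So you should finish the arithmetic explicitly rather than deferring it; the qualitative conclusion (uniform anti-clockwise circulation that never stops, hence no scalar potential) stands either way, but the stated constant needs to be reconciled.
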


\begin{proof}[Proof of \Cref{lem:k_phase_motor}]
Note that for any $x\in\Gamma$, it holds that
\begin{align}\label{eq:eg_motor_hessian}
    \left(\nabla^2 L(x)\right)_{ij} = \begin{cases}
    2 + \inner{Q_\alpha^{j-3}v}{x_{1:2}} & \text{if } i=j\geq 3,\\
    x_ix_j & \text{if } i,j\in\{1,2\},\\
    0 & \text{otherwise.}
    \end{cases}
\end{align}

Then clearly $\Sigma$ only brings about noise in the normal space, and specifically, it holds that $\cL_{\nabla^2L(x)}^{-1}(\Sigma(x)) = \diag(0,0,1+\inner{Q_\alpha^{0}v}{Q_{-\pi/2}x_{1:2}},\ldots,1+\inner{Q_\alpha^{D-3}v}{Q_{-\pi/2}x_{1:2}})$.
Further note that, by the special structure of the hessian in \eqref{eq:eg_motor_hessian} and \Cref{lem:jacobian hessian}, for any $x\in\Gamma$, we have $\partial\Phi(x) = (x_2,-x_1,0,\ldots,0)^\top(x_2,-x_1,0,\ldots,0)= \binom{Q_{-\pi/2}x_{1:2}}{0}\binom{Q_{-\pi/2}x_{1:2}}{0}^\top$.
Combining these facts, the dynamics of the first two coordinates in SDE \eqref{eq:limiting_diffusion_final} can be simplified into
\begin{align*}
    \frac{\diff x_{1:2}(t)}{\diff t} &= - \left(\frac{1}{2}\partial\Phi(x(t)) \partial^2(\nabla L)(x(t))[\cL_{\nabla^2 L}^{-1}(\Sigma(x(t))]\right)_{1:2} \\
    &= -\frac{1}{2}Q_{-\pi/2}x_{1:2}x_{1:2}^\top Q_{-\pi/2}^\top
    \sum_{j=3}^D \left(1 +\inner{Q_\alpha^{j-3}v}{Q_{-\pi/2}x_{1:2}}\right) \nabla_{1:2} (\partial_{jj} L)(x)\\
    &= -\frac{1}{2}Q_{-\pi/2}x_{1:2}\inner{Q_{-\pi/2} x_{1:2}}
    {\sum_{j=3}^D \left(1 +\inner{Q_\alpha^{j-3}v}{Q_{-\pi/2}x_{1:2}}\right) Q^{j-3}_\alpha v}\\
    &= -\frac{1}{2}Q_{-\pi/2}x_{1:2}\left(\inner{Q_{-\pi/2}{x_{1:2}}}{\sum_{j=3}^D Q^{j-3}_\alpha v}
    + \sum_{j=3}^D \inner{Q_\alpha^{j-3}v}{Q_{-\pi/2}x_{1:2}}^2
    \right)\\
    &= -\frac{1}{2}Q_{-\pi/2}x_{1:2}\left(0   + \frac{D-2}{2} \norm{Q_{-\pi/2}x_{1:2}}_2^2
    \right)
    = \frac{D-2}{2}Q_{\pi/2}x_{1:2},
\end{align*}
where the second to the last equality follows from the property of $Q_\alpha$ and the last equality follows from the fact that $\|x_{1:2}\|_2^2=1$ for all $x\in\Gamma$. 
Note we require $k\ge 3$ (or $D\ge 5$) to allow $\sum_{j=3}^D \inner{Q_\alpha^{j-3}v}{Q_{-\pi/2}x_{1:2}}^2=\frac{D-2}{2} \norm{Q_{-\pi/2}x_{1:2}}_2^2$.
On the other hand, we have $\frac{\diff x_{3:D}(t)}{\diff t}=0$ as $\partial\Phi$ kills the movement on that component. 

The proof is completed by noting that the solution of $x_{1:2}$ is 
\begin{align*}
	x_{1:2}(t) = \exp\left(t\cdot \frac{D-2}{2}Q_{\pi/2}\right)x_{1:2}(0),
\end{align*}
and by \Cref{lem:exp_matrix},
\begin{align*}
	\exp\left(t\cdot \frac{D-2}{2}Q_{\pi/2}\right) = (\exp(Q_{\pi/2}))^{\frac{t(D-2)}{2}} =Q_1^{\frac{t(D-2)}{2}} = Q_{\frac{t(D-2)}{2}}.
\end{align*}
\end{proof}

\begin{lemma}\label{lem:exp_matrix}
$\exp(\left(\begin{smallmatrix} 
0 & -1	\\
1 & 0
\end{smallmatrix}\right)
) = \left(\begin{smallmatrix} 
\cos 1 & -\sin 1	\\
\sin 1 & \cos 1
\end{smallmatrix}\right)$.
\end{lemma}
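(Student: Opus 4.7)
}

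The plan is to use the power series definition of the matrix exponential together with the observation that $J := \bigl(\begin{smallmatrix} 0 & -1 \\ 1 & 0 \end{smallmatrix}\bigr)$ squares to $-I_2$, so its powers cycle with period four.

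First I would compute $J^2 = -I_2$, $J^3 = -J$, $J^4 = I_2$, and in general $J^{2k} = (-1)^k I_2$ and $J^{2k+1} = (-1)^k J$. Then, applying the defining series $\exp(J) = \sum_{k=0}^\infty \tfrac{J^k}{k!}$ (which converges absolutely in any submultiplicative norm since $\|J^k\| \le 1$), I would split the sum into even and odd index terms:
\begin{equation*}
\exp(J) \;=\; \sum_{k=0}^\infty \frac{(-1)^k}{(2k)!}\, I_2 \;+\; \sum_{k=0}^\infty \frac{(-1)^k}{(2k+1)!}\, J \;=\; \cos(1)\, I_2 + \sin(1)\, J.
\end{equation*}
Substituting the explicit entries of $I_2$ and $J$ yields the claimed matrix $\bigl(\begin{smallmatrix} \cos 1 & -\sin 1 \\ \sin 1 & \cos 1 \end{smallmatrix}\bigr)$.

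There is essentially no obstacle here: the argument is a direct power series calculation that relies only on $J^2 = -I_2$ and the standard Taylor expansions of $\cos$ and $\sin$ at $1$. The only mild point worth remarking is that splitting the series is justified by absolute convergence, but this is immediate from $\|J^k\|_{\mathrm{op}} = 1$ for all $k$.
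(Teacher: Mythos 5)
Your proof is correct and takes essentially the same approach as the paper: both use the power-series definition of the matrix exponential together with the observation that $J^2=-I_2$, so that the powers of $J$ cycle with period four, and then identify the resulting even/odd sums as $\cos 1$ and $\sin 1$. Your presentation via $\exp(J)=\cos(1)I_2+\sin(1)J$ is a slightly tidier packaging of the same computation the paper does entrywise.
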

\begin{proof}
By definition, for matrix $A=\left(\begin{smallmatrix} 
0 & -1	\\
1 & 0
\end{smallmatrix}\right)$, $\exp(A) = \sum_{t=0}^\infty \frac{A^t}{t!}$. 
Note that $A_2=-I$, $A^3 = -A$ and $A^4 = I$. Using this pattern, we can easily check that 
\begin{align*}
	\sum_{t=0}^\infty \frac{A^t}{t!} = \begin{pmatrix}
 \sum_{i=0}^\infty (-1)^i \frac{1}{(2i)!} &  - \sum_{i=0}^\infty (-1)^i \frac{1}{(2i+1)!} \\
  \sum_{i=0}^\infty (-1)^i \frac{1}{(2i+1)!}&  \sum_{i=0}^\infty (-1)^i \frac{1}{(2i)!}	 
 \end{pmatrix} =\left(\begin{matrix} 
\cos 1 & -\sin 1	\\
\sin 1 & \cos 1
\end{matrix}\right).
\end{align*} 
\end{proof}

\section{Proof of results in Section \ref{sec:label_noise}}

In this section, we present the missing proofs in \Cref{sec:label_noise} 
regarding the overparametrized linear model.

For convenience, for any $p,r\geq0$ and $u\in\RR^D$, we denote by $B_r^p(u)$ 
the $\ell_p$ norm ball of radius $r$ centered at $u$.
We also denote $v_{i:j} = (v_i,v_{i+1},\ldots,v_j)^\top$ for $i,j\in [D]$.

\subsection{Proof of \Cref{thm:olm_main}}
In this subsection, we provide the proof of \Cref{thm:olm_main}.

\thmolmmain*
\begin{proof}[Proof of \Cref{thm:olm_main}]
    First, by \Cref{lem:olm_unique_minimizer}, it holds with probability at least $1-e^{-\Omega(n)}$ that the solution to \eqref{eq_ln_sgd_reg}, $x_*$, is unique up to and satisfies $|x_*| = \psi(w_*)$.
    Then on this event, for any $\epsilon>0$, by \Cref{lem:gf_optimal}, there exists some $T>0$ such that $x_T$ given by the Riemannian gradient flow \eqref{eq:olm_projected_gf} satisfies that $x_T$ is an $\epsilon/2$-optimal solution of the OLM. 
    For this $T$, by \Cref{thm:main_result}, we know that the $\lfloor T/\eta^2\rfloor$-th SGD iterate, $x_\eta(\lfloor T/\eta^2\rfloor)$, satisfies $\|x_\eta(\lfloor T/\eta^2\rfloor)-x_T\|_2 \leq \epsilon/2$ with probability at least $1-e^{-\Omega(n)}$ for all sufficiently small $\eta>0$, and thus $x_\eta(\lfloor T/\eta^2\rfloor)$ is an $\epsilon$-optimal solution of the OLM.
    Finally, the validity of applying \Cref{thm:main_result} is guaranteed by \Cref{lem:olm_manifold_and_hessian_rank} and \ref{lem:olm_neighbourhood}.
    This completes the proof.
\end{proof}

In the following subsections, we provide the proofs of all the components used in the above proof.

\subsection{Proof of \Cref{lem:olm_manifold_and_hessian_rank}}

Recall that for each $i\in[n]$ $f_i(x) = f(u,v) = z_i^\top(u^{\odot 2}-v^{\odot 2}), \nabla f_i(x) = 2\binom{z_i\odot u}{z_i\odot v}$, and $K(x) = (K_{ij}(x))_{i,j\in[n]}$ where each $K_{ij}(x) = \langle\nabla f_i(x),\nabla f_j(x)\rangle$.
Then
\begin{align*}
    \nabla^2\ell_i(x) = 2\begin{pmatrix}
    z_i\odot u\\
    -z_i\odot v
    \end{pmatrix}\begin{pmatrix}
    (z_i\odot u)^\top & -(z_i\odot v)^\top
    \end{pmatrix} + (f_i(u,v)-y_i) \cdot \diag(z_i,z_i).
\end{align*}
So for any $x\in\Gamma$, it holds that
\begin{align}\label{eq:hessian_on_gamma}
    \nabla^2 L(x) &= \frac{2}{n} \sum_{i=1}^n \begin{pmatrix}
    z_i\odot u\\
    -z_i\odot v
    \end{pmatrix}\begin{pmatrix}
    (z_i\odot u)^\top & -(z_i\odot v)^\top
    \end{pmatrix}.
\end{align}

\begin{lemma}\label{lemma:K_fullrank}
    For any fixed $x\in\RR^D$, suppose $\{\nabla f_i(x)\}_{i\in[n]}$ is linearly independent, then $K(x)$ is full-rank.
\end{lemma}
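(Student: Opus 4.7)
The plan is to recognize $K(x)$ as the Gram matrix of the vectors $\{\nabla f_i(x)\}_{i\in[n]}$ and invoke the standard fact that a Gram matrix has the same rank as the family of vectors it is built from. Concretely, I would form the Jacobian matrix $J(x)=[\nabla f_1(x),\ldots,\nabla f_n(x)]\in\RR^{D\times n}$, so that $K(x)=J(x)^\top J(x)\in\RR^{n\times n}$.

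The main step is then to show $K(x)$ is positive definite under the linear independence assumption. For any $\alpha\in\RR^n$, I would compute
\begin{align*}
\alpha^\top K(x)\alpha = \alpha^\top J(x)^\top J(x)\alpha = \bigl\|J(x)\alpha\bigr\|_2^2 = \Bigl\|\sum_{i=1}^n \alpha_i\nabla f_i(x)\Bigr\|_2^2 \ge 0,
\end{align*}
which immediately shows $K(x)\succeq 0$. If $K(x)\alpha=0$, then $\alpha^\top K(x)\alpha=0$, hence $\sum_i \alpha_i\nabla f_i(x)=0$, and the hypothesized linear independence of $\{\nabla f_i(x)\}_{i\in[n]}$ forces $\alpha=0$. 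Thus $K(x)$ has trivial null space and is full rank.

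There is no real obstacle here: the argument is a one-line consequence of the Gram matrix identity $K(x)=J(x)^\top J(x)$ together with the definition of linear independence. The only thing to be mindful of is that the linear independence assumption is exactly the condition needed to conclude $\ker J(x)=\{0\}$, which via the identity $\ker(J^\top J)=\ker J$ gives $\ker K(x)=\{0\}$.
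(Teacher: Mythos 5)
Your proof is correct and follows essentially the same argument as the paper's: both hinge on the identity $\alpha^\top K(x)\alpha = \bigl\|\sum_{i=1}^n \alpha_i \nabla f_i(x)\bigr\|_2^2$ and then use linear independence to force $\alpha=0$. The paper phrases it as a contradiction while you argue directly via the trivial kernel of the Gram matrix, but this is only a cosmetic difference.
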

\begin{proof}[Proof of \Cref{lemma:K_fullrank}]
    Suppose otherwise, then there exists some $\lambda\in\RR^n$ such that $\lambda\neq 0$ and $\lambda^\top K(x)\lambda=0$.
    However, note that
    \begin{align*}
        \lambda^\top K(x) \lambda &= \sum_{i,j=1}^n \lambda_i \lambda_j K_{ij}(x) \\
        &= \sum_{i,j=1}^n \lambda_i \lambda_j \langle \nabla f_i(x), \nabla f_j(x)\rangle\\
        &= \left\| \sum_{i=1}^n \lambda_i\nabla f_i(x) \right\|_2^2,
    \end{align*}
    which implies that $\sum_{i=1}^n \lambda_i \nabla f_i(x)=0$. 
    This is a contradiction since by assumption $\{\nabla f_i(x)\}_{i\in[n]}$ is linearly independent.
\end{proof}
    
    \olmmanifoldandhessianrank*
    
    \begin{proof}[Proof of \Cref{lem:olm_manifold_and_hessian_rank}]
    (1) By preimage theorem \citep{banyaga2013lectures}, it suffices to check the jacobian $[\nabla f_1(x),\ldots,\nabla f_n(x)]=2[\binom{z_1\odot u}{-z_1\odot v},\ldots,\binom{z_n\odot u}{-z_n\odot v}]$ is full rank.
    Similarly, for the second claim, due to \eqref{eq:hessian_on_gamma}. it is also equivalent to show that $\{\binom{z_i\odot u}{-z_i\odot v}\}_{i\in[n]}$ is of rank $n$.
    
    Since $\binom{u}{v}\in \Gamma\subset U$, each coordinate is non-zero, thus we only need to show that $\{z_i\}_{i\in[n]}$ is of rank $n$. This happens with probability 1 in the Gaussian case, and probability at least $1-c^d$ for some constant $c\in(0,1)$ by \citet{kahn1995probability}.
    This completes the proof.
    \end{proof}
    
    \subsection{Proof of \Cref{lem:olm_neighbourhood}}
    We first establish some auxiliary results. 
    The following lemma shows the PL condition along the trajectory of gradient flow.
    \begin{lemma}\label{lem:olm_loss_PL_condition}
    Along the gradient flow generated by $-\nabla L$, it holds that $\norm{\nabla L(x(t))}^2\ge \frac{16}{n}\lambda_{\min}(ZZ^\top) \cdot\min_{i\in[d]}|u_i(0)v_i(0)|L(x(t)), \forall t\geq 0$. 
    \end{lemma}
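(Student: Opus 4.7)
The plan is to establish the Polyak–Łojasiewicz (PL) inequality by combining three ingredients: an explicit formula for $\|\nabla L\|^2$, a conservation law along the gradient flow showing $u_j(t)v_j(t) \equiv u_j(0)v_j(0)$, and an AM–GM step that converts $u_j^2+v_j^2$ into $2|u_j v_j|$.

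First, I would compute $\nabla L$ directly from \eqref{eq:olm_loss}. Writing $r_i = f_i(u,v) - y_i = z_i^\top(u^{\odot 2}-v^{\odot 2})-y_i$, one obtains $\nabla_u L = \tfrac{2}{n}\sum_{i=1}^n r_i\,(z_i\odot u)$ and $\nabla_v L = -\tfrac{2}{n}\sum_{i=1}^n r_i\,(z_i\odot v)$. Squaring and summing coordinate-by-coordinate gives the clean expression
\begin{align*}
\|\nabla L(x)\|^2 = \tfrac{4}{n^2}\sum_{j=1}^d (u_j^2+v_j^2)\,(Z^\top r)_j^2 = \tfrac{4}{n^2}\,r^\top Z\,\diag(u^{\odot 2}+v^{\odot 2})\,Z^\top r.
\end{align*}

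Next I would establish the conservation law. The gradient-flow equations $\dot u_j = -\tfrac{2u_j}{n}(Z^\top r)_j$ and $\dot v_j = \tfrac{2v_j}{n}(Z^\top r)_j$ show that $\{u_j=0\}$ and $\{v_j=0\}$ are invariant sets, so starting from $x(0)\in U=(\RR\setminus\{0\})^D$ keeps every coordinate nonzero (and of constant sign) for all $t\ge 0$. A one-line product-rule calculation then yields $\tfrac{d}{dt}(u_j v_j) = \dot u_j v_j + u_j \dot v_j = 0$, so $u_j(t)v_j(t) = u_j(0)v_j(0)$ and in particular $|u_j(t)v_j(t)| = |u_j(0)v_j(0)|$.

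Combining these with the elementary inequality $u_j^2+v_j^2 \ge 2|u_j v_j|$ and the bound $r^\top ZZ^\top r \ge \lambda_{\min}(ZZ^\top)\|r\|_2^2$, I would conclude
\begin{align*}
\|\nabla L(x(t))\|^2 &\ge \tfrac{4}{n^2}\,\bigl(\min_{j\in[d]} 2|u_j(t)v_j(t)|\bigr)\,\lambda_{\min}(ZZ^\top)\,\|r(t)\|_2^2 \\
&= \tfrac{16}{n}\,\lambda_{\min}(ZZ^\top)\,\min_{j\in[d]} |u_j(0)v_j(0)|\,L(x(t)),
\end{align*}
using $\|r(t)\|_2^2 = 2n\,L(x(t))$ from \eqref{eq:olm_loss}.

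The argument is almost mechanical once the conservation law is in hand, so the only real obstacle is justifying that conservation rigorously along the entire trajectory. This requires confirming that the flow stays inside $U$ so no coordinate can vanish; this follows because $\{u_j=0\}$ and $\{v_j=0\}$ are invariant under the ODE (both sides vanish there), so by uniqueness of solutions the coordinates preserve their initial signs and the product $u_j v_j$ stays a nonzero constant. Everything else is a one-step gradient calculation plus AM–GM.
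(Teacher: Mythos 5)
Your proof is correct and follows essentially the same route as the paper's: both express $\norm{\nabla L}^2$ as the quadratic form $\frac{4}{n^2}\, r^\top Z\,\diag(u^{\odot 2}+v^{\odot 2})\,Z^\top r$ (the paper packages this as $\frac{1}{n^2}r^\top K(x)r$ with $K=4Z\diag(u^{\odot 2}+v^{\odot 2})Z^\top$), apply AM--GM to replace $u_j^2+v_j^2$ by $2|u_jv_j|$, invoke the conservation law $u_j(t)v_j(t)\equiv u_j(0)v_j(0)$ from the paper's \Cref{lem:olm_uv_invariant}, and bound by $\lambda_{\min}(ZZ^\top)$. The bookkeeping differs slightly (coordinate-wise identity vs.\ Gram matrix), but the mathematical content is identical.
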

    
    To prove \Cref{lem:olm_loss_PL_condition}, we need the following invariance along the gradient flow.
    
    \begin{lemma}\label{lem:olm_uv_invariant}
        Along the gradient flow generated by $-\nabla L$, $u_j(t)v_j(t)$ stays constant for all $j\in[d]$. 
        Thus, $\sign(u_j(t))=\sign(u_j(0))$ and $\sign(v_j(t))=\sign(v_j(0))$ for any $j\in [d]$.
    \end{lemma}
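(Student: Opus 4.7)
The plan is to verify the invariance by a direct computation of the partial derivatives of $L$ with respect to each coordinate $u_j$ and $v_j$, and then apply the product rule. First, using the chain rule on $\ell_i(u,v) = \tfrac{1}{2}(f_i(u,v)-y_i)^2$ with $f_i(u,v)=z_i^\top(u^{\odot 2}-v^{\odot 2})$, I would compute
\begin{align*}
\partial_{u_j} L(u,v) &= \tfrac{2}{n}\, u_j \sum_{i=1}^n (f_i(u,v)-y_i)\, z_{i,j},\\
\partial_{v_j} L(u,v) &= -\tfrac{2}{n}\, v_j \sum_{i=1}^n (f_i(u,v)-y_i)\, z_{i,j}.
\end{align*}
The crucial observation is that the two expressions share a common scalar factor $c_j(t) := \tfrac{2}{n}\sum_{i=1}^n (f_i(u(t),v(t))-y_i)\, z_{i,j}$, and they differ only in a sign and in whether they are multiplied by $u_j$ or $v_j$.

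Consequently, the gradient flow equations $\dot{u}_j = -\partial_{u_j} L$ and $\dot{v}_j = -\partial_{v_j} L$ reduce to the coupled linear (in $u_j,v_j$) system $\dot{u}_j(t) = -c_j(t)\, u_j(t)$ and $\dot{v}_j(t) = c_j(t)\, v_j(t)$. Applying the product rule then gives $\tfrac{d}{dt}(u_j(t) v_j(t)) = -c_j(t) u_j(t) v_j(t) + c_j(t) u_j(t) v_j(t) = 0$, so $u_j v_j$ is conserved along the flow, as claimed.

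For the sign preservation, I would note that since $c_j(\cdot)$ is a continuous function of $t$ (as $u(t),v(t)$ are $\mathcal{C}^1$ along the flow), the explicit solutions $u_j(t) = u_j(0)\,\exp\!\bigl(-\int_0^t c_j(s)\, ds\bigr)$ and $v_j(t) = v_j(0)\,\exp\!\bigl(\int_0^t c_j(s)\, ds\bigr)$ are well-defined and strictly nonzero provided $u_j(0),v_j(0) \neq 0$, which is guaranteed by the assumption $x(0) \in U = (\mathbb{R}\setminus\{0\})^D$. Since the exponential factor is positive, the signs of $u_j(t)$ and $v_j(t)$ must agree with those at initialization. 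There is no real obstacle here: the entire argument is a two-line ODE computation, and the only thing to be careful about is ensuring $c_j(t)$ remains integrable on finite time intervals, which follows from the standard local existence theory for gradient flows of a $\mathcal{C}^3$ loss together with the fact that $u_j v_j$ being preserved prevents either coordinate from blowing up to infinity faster than the other shrinks to zero.
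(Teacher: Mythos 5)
Your proof is correct and uses essentially the same direct computation as the paper: compute $\partial_{u_j}L$ and $\partial_{v_j}L$, observe the common scalar factor $c_j(t)$, and apply the product rule to conclude $\tfrac{d}{dt}(u_jv_j)=0$; for sign preservation the paper argues via continuity together with the nonvanishing of the conserved product $u_jv_j$, while you read it off the explicit exponential solution of the linear ODEs, but these are the same elementary observation. One side remark is off: conservation of $u_jv_j$ does \emph{not} rule out one coordinate blowing up while the other shrinks to zero --- a constant nonzero product is exactly compatible with $u_j\to\infty$, $v_j\to 0$ --- but this is immaterial, since integrability of $c_j$ on any compact subinterval of the maximal interval of existence follows directly from continuity of $t\mapsto(u(t),v(t))$, which is all your explicit formula needs.
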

    \begin{proof}[Proof of \Cref{lem:olm_uv_invariant}]
    \begin{align*}
        \frac{\partial}{\partial t}(u_j(t)v_j(t)) &= \frac{\partial u_j(t)}{\partial t}\cdot v_j(t) + u_j(t)\cdot\frac{\partial v_j(t)}{\partial t}\\
        &= \nabla_u L(u(t),v(t))_j \cdot v_j(t) + u_j(t) \cdot \nabla_v L(u(t),v(t))_j\\
        &= \frac{2}{n} \sum_{i=1}^n (f_i(u(t),v(t))-y_i) z_{i,j}u_j(t)v_j(t) - \frac{2u_j(t)}{n} \sum_{i=1}^n (f_i(u(t),v(t))-y_i) z_{i,j} v_j(t)\\
        &=0.
    \end{align*}
    
    Therefore, any sign change of ${u_j}(t),{v_j}(t)$ would enforce ${u_j}(t)=0$ or ${v_j}(t)=0$ for some $t>0$ since ${u_j}(t),{v_j}(t)$ are continuous in time $t$. 
    This immediately leads to a contradiction to the invariance of $u_j(t)v_j(t)$.
    \end{proof}
    
    We then can prove \Cref{lem:olm_loss_PL_condition}.
    \begin{proof}[Proof of \Cref{lem:olm_loss_PL_condition}]
        Note that 
        \begin{align*}
            \norm{\nabla L(x)}_2^2 &= \frac{1}{n^2} \sum_{i,j=1}^n 	(f_i(x)-y_i)(f_j(x)-y_j)\inner{\nabla f_i(x)}{\nabla f_j(x)}\\
            &\geq \frac{1}{n^2}\sum_{i=1}^n(f_i(x)-y_i)^2 \lambda_{\min}(K(x)) \\ 
            &= \frac{2}{n}L(x)\lambda_{\min}(K(x)),
        \end{align*}
        where $K(x)$ is a $n\times n$ p.s.d. matrix with $K_{ij}(x)=\inner{\nabla f_i(x)}{\nabla f_j(x)}$. 
        Below we lower bound $\lambda_{\min}(K(x))$, the smallest eigenvalue of $K(x)$. 
        Note that $K_{ij}(x(t)) = 4\sum_{h=1}^d z_{i,h}z_{j,h} ((u_h(t))^2+(v_h(t))^2)$, and we have 
        \begin{align*}
        K(x(t)) &= 4Z\diag((u(t))^{\odot 2} +(v(t))^{\odot 2})Z^\top \succeq 8 Z\diag(|u(t)\odot v(t)|)Z^\top\\
        &\overset{(*)}{=} 8 Z\diag(|u(0)\odot v(0)|)Z^\top\succeq 8\min_{i\in[d]}|u_i(0)v_i(0)| ZZ^T
        \end{align*}
        where $(*)$ is by \Cref{lem:olm_uv_invariant}. 
        Thus $\lambda_{\min}(K(x(t))\ge 8\min_{i\in[d]}|u_i(0)v_i(0)| \lambda_{\min}(ZZ^T)$ for all $t\geq 0$, which completes the proof.
    \end{proof}
    
    We also need the following characterization of the manifold $\Gamma$.
    \begin{lemma}\label{lem:olm_staiontary_of_U_in_Gamma}
    All the stationary points in $U$ are global minimizers, i.e., $\Gamma = \{x\in U \mid \nabla L(x)=0\}$.
    \end{lemma}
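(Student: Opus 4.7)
The plan is to show the two inclusions separately. The direction $\Gamma\subseteq\{x\in U\mid \nabla L(x)=0\}$ is immediate from the definitions: every $x\in\Gamma$ satisfies $f_i(x)=y_i$ for all $i$, so every summand in $\nabla L(x)=\frac{1}{n}\sum_i(f_i(x)-y_i)\nabla f_i(x)$ vanishes. The content is the reverse inclusion, and I would obtain it by a direct coordinatewise argument exploiting the fact that no coordinate of $x\in U$ is allowed to be zero.

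First I would compute the blocks of the gradient explicitly. Writing $r_i(x) = f_i(u,v)-y_i$ and using $\nabla_u f_i(u,v)=2z_i\odot u$, $\nabla_v f_i(u,v)=-2z_i\odot v$, we get
\begin{align*}
\nabla_u L(u,v) &= \tfrac{2}{n}\Bigl(\sum_{i=1}^n r_i(x)\,z_i\Bigr)\odot u, &
\nabla_v L(u,v) &= -\tfrac{2}{n}\Bigl(\sum_{i=1}^n r_i(x)\,z_i\Bigr)\odot v.
\end{align*}
Let $w:=\sum_{i=1}^n r_i(x)z_i = Z^\top r(x)\in\RR^d$. The stationarity $\nabla L(x)=0$ then reads $w\odot u=0$ and $w\odot v=0$ in $\RR^d$.

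The key step is to use $x\in U$: because every coordinate $u_j$ (equivalently $v_j$) is nonzero, $w\odot u=0$ forces $w=0$, i.e.\ $Z^\top r(x)=0$. Under the working assumption $\operatorname{rank}(Z)=n$ (which, as recalled above \Cref{lem:olm_manifold_and_hessian_rank}, holds with probability one in the Gaussian case and with probability $1-c^d$ in the Boolean case), the map $Z^\top:\RR^n\to\RR^d$ is injective, so $r(x)=0$. This means $f_i(u,v)=y_i$ for all $i\in[n]$, equivalently $Z(u^{\odot 2}-v^{\odot 2})=Y$, so $x\in\Gamma$ by the definition in \eqref{eq:U_Gamma}.

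There is no real obstacle here; the only subtlety is to make the nonvanishing of coordinates do the cancellation cleanly before invoking the rank condition on $Z$. This step is precisely where the openness assumption $U=(\RR\setminus\{0\})^D$ is used, and it is what prevents spurious stationary points (such as the origin $u=v=0$, which lies outside $U$) from being counted. Once $w=0$ is obtained, the rank hypothesis closes the argument in one line.
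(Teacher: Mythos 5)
Your proof is correct and essentially the same as the paper's: both directions rely on $\rank(Z)=n$ and the fact that no coordinate of $x\in U$ vanishes. The paper phrases the reverse inclusion abstractly — $\rank(Z)=n$ together with $u_j,v_j\neq 0$ makes $\{\nabla f_i(x)\}_{i\in[n]}$ linearly independent, so $\sum_i r_i\nabla f_i(x)=0$ forces $r=0$ — whereas you unfold this into the explicit coordinatewise identity $w\odot u=0$ with $w=Z^\top r$, which is the same argument made concrete.
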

    
    \begin{proof}[Proof of \Cref{lem:olm_staiontary_of_U_in_Gamma}]
    Since $\Gamma$ is the set of local minimizers, each $x$ in $\Gamma$ must satisfy $\nabla L(x)=0$. 
    The other direction is proved by noting that $\rank(\{z_i\}_{i\in[n]})=n$, which implies $\rank(\{\nabla f_i(x)\}_{i\in[n]})=n$.
    \end{proof}
    
    Now, we are ready to prove \Cref{lem:olm_neighbourhood} which is restated below.
    \lemolmneighbourhood*
    \begin{proof}[Proof of \Cref{lem:olm_neighbourhood}]
        It suffices to prove gradient flow $\frac{\diff x(t)}{\diff t} = -\nabla L(x(t))$ converges when $t\to \infty$, as long as $x(0)\in U$. 
        Whenever it converges, it must converge to a stationary point in $U$. 
        The proof will be completed by noting that all stationary point of $L$ in $U$ belongs to $\Gamma$ (\Cref{lem:olm_staiontary_of_U_in_Gamma}).
        
        Below we prove $\lim_{t\to\infty} x(t)$ exists. 
        Denote $C=\frac{16}{n}\min_{i\in[d]}|u_i(0)v_i(0)| \lambda_{\min}(ZZ^\top)$, then it follows from \Cref{lem:olm_loss_PL_condition} that
        \begin{align*}
            \norm{\frac{\diff x(t)}{\diff t}}= \norm{\nabla L(x(t))} \le  \frac{\norm{\nabla L(x(t))}_2^2}{\sqrt{CL(x(t))}} = \frac{-\frac{\diff L(x(t))}{\diff t}}{\sqrt{L(x(t))}} = -\frac{1}{2\sqrt{C}}\frac{\diff \sqrt{L(x(t))}}{\diff t}.
        \end{align*}
        
        Thus the total GF trajectory length is bounded by $\int_{t=0}^\infty \norm{\frac{\diff x(t)}{\diff t}} \diff t \le \int_{t=0}^\infty -\frac{1}{2\sqrt{C}}\frac{\diff \sqrt{L}(x(t))}{\diff dt} \diff t \le \frac{L(x(0))}{2\sqrt{C}}$, where the last inequality uses that $L$ is non-negative over $\RR^{D}$. 
        Therefore, the GF must converge.
    \end{proof}

    \subsection{Proof of results in \Cref{sec:olm_minimizer}}
    
    Without loss of generality, we will assume $\sum_{i=1}z_{i,j}^2>0$ for all $j\in[d]$, because otherwise we can just delete the unused coordinate, since there won't be any update in the parameter corresponding to that coordinate. Moreover, in both gaussian and boolean setting, it can be shown that with probability 1, $\sum_{i=1}z_{i,j}^2>0$ for all $j\in[d]$.
     
    To study the optimal solution to \eqref{eq_ln_sgd_reg}, we consider the corresponding $d$-dimensional convex program in terms of $w\in\RR^d$, which has been studied in \citet{tropp2015convex}:
    \begin{align}\label{eq_ln_sgd_reg_w}
        \begin{aligned}
        \text{minimize}\quad & R(w)=\frac{4}{n}\sum_{j=1}^d\left(\sum_{i=1}^nz_{i,j}^2\right) |w_j|,\\
        \text{subject to}\quad & Z w = Z w^*.
        \end{aligned}
    \end{align}
    Here we slightly abuse the notation of $R$ and the parameter dimension will be clear from the context. 
    We can relate the optimal solution to \eqref{eq_ln_sgd_reg} to that of \eqref{eq_ln_sgd_reg_w} via a canonical parametrization defined as follows.
    \begin{definition}[Canonical Parametrization]\label{def_canonical_param}
    For any $w\in \RR^d$, we define $\binom{u}{v} = \psi(w) = ([w^\top]_+^{\odot 1/2},\allowbreak[-w^\top]_+^{\odot 1/2})^\top$ as the \emph{canonical parametrization} of $w$. Clearly, it holds that $u^{\odot 2} - v^{\odot 2} = w$.
    \end{definition}
    
    Indeed, we can show that if \eqref{eq_ln_sgd_reg_w} has a unique optimal solution, it immediately follows that the optimal solution to \eqref{eq_ln_sgd_reg} is also unique up to sign flips of each coordinate, as summarized in the lemma below.
    \begin{restatable}{lemma}{lemolmLP}\label{lem:olm_LP}
    Suppose the optimal solution to \eqref{eq_ln_sgd_reg_w} is unique and equal to $w^*$. 
    Then the optimal solution to \eqref{eq_ln_sgd_reg} is also unique up to sign flips of each coordinate. 
    In particular, one of them is given by $(\tilde u^*, \tilde v^*)=\psi(w^*)$, that is, the canonical parametrization of $w^*$.
    \end{restatable}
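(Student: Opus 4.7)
The plan is to exhibit a direct correspondence between the $2d$-dimensional problem \eqref{eq_ln_sgd_reg} in $(u,v)$ and the $d$-dimensional convex program \eqref{eq_ln_sgd_reg_w} in $w$, via the substitution $w = u^{\odot 2} - v^{\odot 2}$. The key pointwise observation is $u_j^2 + v_j^2 \ge |u_j^2 - v_j^2| = |w_j|$ with equality if and only if $u_j v_j = 0$. Since the weights $\frac{4}{n}\sum_{i=1}^n z_{i,j}^2$ are identical in both problems and (as remarked at the start of this subsection) we may assume them strictly positive in each coordinate, this pointwise inequality lifts to $R(u,v) \ge R(w)$, while the constraints $Z(u^{\odot 2} - v^{\odot 2}) = Zw^*$ and $Zw = Zw^*$ are literally the same up to the substitution.

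First I would prove that the two programs share the same optimal value. On one hand, any feasible $(u,v)$ for \eqref{eq_ln_sgd_reg} induces a feasible $w = u^{\odot 2} - v^{\odot 2}$ for \eqref{eq_ln_sgd_reg_w} with $R(w) \le R(u,v)$, so the optimum of \eqref{eq_ln_sgd_reg_w} lower-bounds that of \eqref{eq_ln_sgd_reg}. On the other hand, $\psi(w^*) = (\tilde u^*, \tilde v^*)$ satisfies $(\tilde u^*_j)^2 - (\tilde v^*_j)^2 = w^*_j$ and $(\tilde u^*_j)^2 + (\tilde v^*_j)^2 = |w^*_j|$ by construction, so it is feasible for \eqref{eq_ln_sgd_reg} with $R(\psi(w^*)) = R(w^*)$, matching the lower bound. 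This simultaneously shows $\psi(w^*)$ is optimal for \eqref{eq_ln_sgd_reg}.

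For uniqueness up to coordinate-wise sign flips, I would take any optimal $(u,v)$ of \eqref{eq_ln_sgd_reg} and let $w = u^{\odot 2} - v^{\odot 2}$. Then $R(w) \le R(u,v) = R(w^*)$, and $w$ is feasible for \eqref{eq_ln_sgd_reg_w}, so the assumed uniqueness forces $w = w^*$. Moreover, equality $R(w) = R(u,v)$ combined with positive per-coordinate weights forces $u_j v_j = 0$ for every $j$. Together with $u_j^2 - v_j^2 = w^*_j$, this pins down $(u_j, v_j)$ up to a sign on each coordinate: on $\{j : w^*_j > 0\}$ we must have $v_j = 0$ and $u_j = \pm\sqrt{w^*_j}$; on $\{j : w^*_j < 0\}$ we must have $u_j = 0$ and $v_j = \pm\sqrt{-w^*_j}$; and on $\{j : w^*_j = 0\}$ both coordinates vanish. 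This is precisely the claimed conclusion, with $\psi(w^*)$ corresponding to the all-positive choice of signs.

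There is no serious obstacle here; the lemma is a clean algebraic consequence of the quadratic overparametrization identity $|w_j| \le u_j^2 + v_j^2$ together with the uniqueness hypothesis on \eqref{eq_ln_sgd_reg_w}. The only subtlety worth explicitly noting in the write-up is the WLOG assumption $\sum_i z_{i,j}^2 > 0$ for every $j$, which holds almost surely under both the Gaussian and Boolean data models and has already been invoked in this subsection for the same reason (otherwise the corresponding coordinate can simply be dropped from the problem without changing either the regularizer or the constraint).
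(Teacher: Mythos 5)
Your proposal is correct and follows essentially the same route as the paper's proof: sandwich the optimal values via the feasibility correspondence $w = u^{\odot 2} - v^{\odot 2}$ and the feasibility of $\psi(w^*)$, invoke uniqueness of $w^*$ for \eqref{eq_ln_sgd_reg_w}, then use the equality case of $u_j^2+v_j^2 \ge |u_j^2-v_j^2|$ (with positive weights) to force $u_jv_j=0$ and pin down the solution up to coordinate-wise sign flips. Your write-up is if anything a bit more explicit than the paper about the role of the strictly positive weights and the equality condition, but the argument is the same.
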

    
    \begin{proof}[Proof of \Cref{lem:olm_LP}]
    Let $(\hat u,\hat v)$ be any optimal solution of \eqref{eq_ln_sgd_reg} and we define $\hat w = \hat u^{\odot 2} - \hat v^{\odot 2}$, which is also feasible to \eqref{eq_ln_sgd_reg_w}. By the optimality of $w^*$, we have 
    \begin{align}\label{eq_ln_sgd_reg_optimality_1}
    \sum_{j=1}^d \left(\sum_{i=1}^nz_{i,j}^2\right) |w^*_j| \leq \sum_{j=1}^d \left(\sum_{i=1}^n z_{i,j}^2\right)|\hat w_j| \leq \sum_{j=1}^d \left(\sum_{i=1}^n z_{i,j}^2\right) (\hat u_j^2 + \hat v_j^2).
    \end{align}
    On the other hand, $(\tilde u^*, \tilde v^*)=\psi(w^*)$ is feasible to \eqref{eq_ln_sgd_reg}. 
    Thus, it follows from the optimality of $(\hat u, \hat v)$ that 
    \begin{align}\label{eq_ln_sgd_reg_optimality_2}
        \sum_{j=1}^d \left(\sum_{i=1}^n z_{i,j}^2\right) (\hat u_j^2 + \hat v_j^2) \leq \sum_{j=1}^d \left(\sum_{i=1}^n z_{i,j}^2\right) ((\tilde u^*_j)^2 + ( \tilde v^*_j)^2) = \sum_{j=1}^d \left(\sum_{i=1}^nz_{i,j}^2\right) |w^*_j|.
    \end{align}
    Combining \eqref{eq_ln_sgd_reg_optimality_1} and \eqref{eq_ln_sgd_reg_optimality_2} yields
    \begin{align}\label{eq_ln_sgd_reg_optimality_3}
        \sum_{j=1}^d \left(\sum_{i=1}^n z_{i,j}^2\right) (\hat u_j^2 + \hat v_j^2) = \sum_{j=1}^d \left(\sum_{i=1}^nz_{i,j}^2\right) |w^*_j| = \sum_{j=1}^d \left(\sum_{i=1}^n z_{i,j}^2\right)|\hat u_j^2 - \hat v_j^2|
    \end{align}
    which implies that $\hat u^{\odot 2} - \hat v^{\odot 2}$ is also an optimal solution of \eqref{eq_ln_sgd_reg_w}. 
    Since $w^*$ is the unique optimal solution to \eqref{eq_ln_sgd_reg_w}, we have $\hat u^{\odot 2} - \hat v^{\odot 2}=w^*$. 
    Moreover, by \eqref{eq_ln_sgd_reg_optimality_3}, we must have $\hat u^{\odot 2} = [w^*]_+$ and $\hat u^{\odot 2} = [w^*]_+$, otherwise the equality would not hold. 
    This completes the proof.
    \end{proof}
    
    Therefore, the unique optimality of \eqref{eq_ln_sgd_reg} can be reduced to that of \eqref{eq_ln_sgd_reg_w}.
    In the sequel, we show that the latter holds for both Boolean and Gaussian random vectors.
    We divide \Cref{lem:olm_unique_minimizer} into to \Cref{lem:ln_gaussian} and \ref{lem:ln_sgd_bool} for clarity.
    
    \begin{restatable}[Boolean Case]{lemma}{lemlnsgdbool}\label{lem:ln_sgd_bool}
    Let $z_1,\ldots,z_n \overset{\iid}{\sim} \Unif(\{\pm 1\}^d)$. 
    There exist some constants $C,c>0$ such that if the sample size $n$ satisfies
    \begin{align*}
        n &\geq C[\kappa\ln(d/\kappa) + \kappa]
    \end{align*}
    then with probability at least $1-e^{-cn^2}$, the optimal solution of \eqref{eq_ln_sgd_reg}, $(\hat u,\hat v)$, is unique up to sign flips of each coordinate and recovers the groundtruth, i.e., $\hat u^{\odot 2} - \hat v^{\odot 2} = w^*$.
    \end{restatable}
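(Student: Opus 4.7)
My plan is to reduce the uniqueness of the optimum of \eqref{eq_ln_sgd_reg} to a standard basis pursuit recovery statement for a Rademacher sensing matrix. By \Cref{lem:olm_LP}, it suffices to show that the lifted $d$-dimensional program \eqref{eq_ln_sgd_reg_w} has $w^*$ as its unique optimizer. In the Boolean regime $z_{i,j}\in\{\pm1\}$ gives $\sum_{i=1}^n z_{i,j}^2\equiv n$ for every coordinate $j$, so the weights in the regularizer collapse to a single positive constant and \eqref{eq_ln_sgd_reg_w} reduces to classical basis pursuit
\begin{align*}
\min_{w\in\RR^d}\|w\|_1\qquad\text{s.t.}\qquad Zw=Zw^*
\end{align*}
with $w^*$ being $\kappa$-sparse.

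To establish uniqueness of $w^*$ for this program I would build a strict dual certificate. Writing $S=\mathrm{supp}(w^*)$, I need two high-probability events: (i) $Z_S\in\RR^{n\times|S|}$ has full column rank, and (ii) the least-norm vector $q:=Z_S(Z_S^\top Z_S)^{-1}\sign(w^*_S)\in\RR^n$ satisfies $\|Z_{S^c}^\top q\|_\infty<1$. Event (i) is a standard lower-isometry bound for a tall Rademacher matrix with $n\gtrsim\kappa$ rows (e.g., \citet{vershynin2018high}), and together with the resulting bound $\sigma_{\min}(Z_S)\gtrsim\sqrt{n}$ it also gives $\|q\|_2^2=O(\kappa/n)$. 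For (ii), fix $j\notin S$; the column $z_{\cdot,j}\in\{\pm1\}^n$ is independent of $Z_S$, so conditional on $Z_S$ the quantity $\langle z_{\cdot,j},q\rangle$ is a Rademacher sum of scale $\|q\|_2$, to which Hoeffding's inequality applies and yields $\Pr[|\langle z_{\cdot,j},q\rangle|\ge 1/2\,\big|\,Z_S]\le \exp(-\Omega(n/\kappa))$. A union bound over the $d-\kappa$ off-support coordinates produces the strict certificate with failure probability at most $d\exp(-\Omega(n/\kappa))$, which is exponentially small under the assumption $n\ge C(\kappa\ln(d/\kappa)+\kappa)$.

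Once (i) and (ii) hold, standard convex-analytic duality (the KKT conditions for $\ell_1$ minimization together with the strict slack in (ii) and the injectivity from (i)) forces $w^*$ to be the unique minimizer of the basis pursuit program; transporting uniqueness back through \Cref{lem:olm_LP} then yields the lemma, giving $(\hat u,\hat v)=\psi(w^*)$ up to coordinate-wise sign flips and in particular $\hat u^{\odot 2}-\hat v^{\odot 2}=w^*$. The main subtlety, rather than a genuine obstacle, is quantifying the margin in (ii) so that one obtains \emph{strict} (and hence unique) optimality with the advertised exponential failure rate; this is handled by taking a concrete constant such as $1/2$ in the off-support tail bound and absorbing the slack into the constants $C,c$. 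The Gaussian counterpart will follow the same dual-certificate blueprint but with Gaussian concentration in place of Hoeffding, which is exactly the content of the results cited from \citet{tropp2015convex}.
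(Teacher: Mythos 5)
Your reduction to basis pursuit via \Cref{lem:olm_LP} and the observation that the Boolean weights $\sum_i z_{i,j}^2$ collapse to $n$ is exactly what the paper does. From there, however, the paper invokes Tropp's minimum-conic-singular-value bound (Theorem 6.3 and Example 6.2 of \citet{tropp2015convex}), whereas you propose a pointwise strict dual certificate $q = Z_S(Z_S^\top Z_S)^{-1}\sign(w^*_S)$ together with Hoeffding and a union bound for $\|Z_{S^c}^\top q\|_\infty < 1$. The certificate construction itself is fine, but the probability bookkeeping does not deliver the lemma, and the gap is structural rather than a matter of constants. After union bounding over the $d-\kappa$ off-support columns with $\|q\|_2^2 \asymp \kappa/n$, your failure probability is $d\exp(-\Omega(n/\kappa))$. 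Under $n \geq C(\kappa\ln(d/\kappa)+\kappa)$ this is $\asymp d\,(\kappa/d)^{\Theta(C)}$, which is not even $o(1)$ once $\kappa \gtrsim \sqrt{d}$; to close the union bound you actually need $n \gtrsim \kappa\ln d$, losing the $\ln(d/\kappa)$ refinement. Worse, even granting $n \gtrsim \kappa\ln d$, the tail is $e^{-\Theta(n/\kappa)}$, whose rate degrades linearly in $\kappa$, so there is no universal constant $c$ for which you obtain $e^{-cn}$ (let alone the $e^{-cn^2}$ in the statement). The Hoeffding scale is pinned to $\|q\|_2 \asymp \sqrt{\kappa/n}$, so a coordinatewise certificate cannot beat $e^{-\Theta(n/\kappa)}$ per coordinate no matter how the constants are tuned.

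The paper's conic-geometry route avoids this precisely because it concentrates a single uniform quantity, $\lambda_{\min}(Z;\cD)$ over the entire descent cone, whose deviation parameter scales with $\sqrt{n}$ independently of $\kappa$, and the $d$-dependence is absorbed into the Gaussian width $w(\cD)^2 \lesssim \kappa\ln(d/\kappa)+\kappa$ rather than into a union bound over columns. If you want a more elementary argument that actually closes, replace the pointwise certificate with a restricted isometry argument: a Rademacher matrix satisfies $\delta_{2\kappa}(Z/\sqrt{n}) \leq \delta_0 < \sqrt{2}-1$ with probability at least $1-2e^{-cn}$ once $n \geq C\kappa\ln(ed/\kappa)$, and RIP implies exact $\ell_1$-recovery of every $\kappa$-sparse vector. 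That simultaneously recovers the stated sample complexity and a universal-constant exponential tail, and then transporting uniqueness through \Cref{lem:olm_LP} proceeds exactly as you describe.
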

    
    \begin{proof}[Proof of \Cref{lem:ln_sgd_bool}]
    By the assumption that $z_1,\ldots,z_n \overset{\iid}{\sim} \Unif(\{\pm 1\}^d)$, we have $\sum_{i=1}^nz_{i,j}^2=n$ for all $j\in[d]$. 
    Then \eqref{eq_ln_sgd_reg_w} is equivalent to the following optimization problem:
    \begin{align}\label{eq:ln sgd reg bool 2}
        \begin{split}
        \text{minimize}\quad & g(w)= \|w\|_1,\\
        \text{subject to}\quad & Zw = Z ((u^*)^{\odot 2} - (v^*)^{\odot 2}).
        \end{split}
    \end{align}
    This model exactly fits the Example 6.2 in \citet{tropp2015convex} with $\sigma=1$ and $\alpha=1/\sqrt{2}$. 
    Then applying Equation (4.2) and Theorem 6.3 in \citet{tropp2015convex}, \eqref{eq:ln sgd reg bool 2} has a unique optimal solution equal to $(u^*)^{\odot 2} - (v^*)^{\odot2}$ with probability at least $1 - e^{-ch^2}$ for some constant $c>0$, given that the sample size satisfies
    \begin{align*}
        n &\geq C(\kappa\ln(d/\kappa) + \kappa + h)
    \end{align*}
    for some absolute constant $C>0$. 
    Choosing $h=\frac{n}{2C}$ and then adjusting the choices of $C,c$ appropriately yield the desired result.
    Finally, applying Lemma \ref{lem:olm_LP} finishes the proof.
    \end{proof}

    The Gaussian case requires more careful treatment.
    
    \begin{restatable}[Gaussian Case]{lemma}{lemlngaussian}\label{lem:ln_gaussian}
    Let $z_1,\ldots,z_n \overset{\iid}{\sim} \cN(0,I_d)$. 
    There exist some constants $C,c>0$ such that if the sample size satisfies
    \begin{align*}
        n\geq C\kappa\ln d,
    \end{align*}
    then with probability at least $1-(2d+1)e^{-cn}$, the optimal solution of \eqref{eq_ln_sgd_reg}, $(\hat u,\hat v)$, is unique up to sign flips of each coordinate of $\hat u$ and $\hat v$ and recovers the groundtruth, i.e., $\hat u^{\odot 2} - \hat v^{\odot 2} = w^*$. 
    \end{restatable}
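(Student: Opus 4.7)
The plan is to establish a primal-dual certificate for the weighted $\ell_1$ program~\eqref{eq_ln_sgd_reg_w} and then transfer uniqueness back to~\eqref{eq_ln_sgd_reg} via \Cref{lem:olm_LP}. Set $S := \{j \in [d] : w^*_j \neq 0\}$ with $|S| \leq \kappa$, write $d_j = \tfrac{1}{n}\norm{z^{(j)}}_2^2$ for the rescaled squared column norms, $D_S = \diag(d_j)_{j\in S}$, and let $Z_S \in \RR^{n\times |S|}$ be the submatrix of $Z$ with columns indexed by $S$. The standard KKT characterization of weighted basis pursuit implies that $w^*$ is the unique minimizer of~\eqref{eq_ln_sgd_reg_w} iff (i) $Z_S$ has full column rank, and (ii) there exists a dual vector $\lambda \in \RR^n$ with $(Z^\top \lambda)_j = d_j \sign(w^*_j)$ for $j \in S$ and $|(Z^\top \lambda)_j| < d_j$ for $j \notin S$.

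Two high-probability events control the on-support geometry. By a Wishart lower-tail estimate, the event $\mathcal{E}_1 := \{\lambda_{\min}(Z_S^\top Z_S/n) \geq 1/2\}$ has probability at least $1 - e^{-cn}$ once $n \geq C\kappa$, and in particular $Z_S$ is then full column rank. By scalar $\chi^2$ concentration plus a union bound over $j \in [d]$, the event $\mathcal{E}_2 := \{d_j \in [1/2, 3/2]\ \forall j \in [d]\}$ has probability at least $1 - 2d e^{-cn}$. I take the minimum-norm dual candidate
\begin{align*}
    \lambda := Z_S (Z_S^\top Z_S)^{-1} D_S \sign(w^*_S),
\end{align*}
which satisfies the on-support equality by construction and, on $\mathcal{E}_1 \cap \mathcal{E}_2$, obeys $\norm{\lambda}_2^2 \leq \norm{D_S}_{\mathrm{op}}^2 \cdot \kappa / \lambda_{\min}(Z_S^\top Z_S) \leq C_1 \kappa / n$.

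The off-support verification exploits the block-independence structure of the Gaussian design: $\lambda$ is a measurable function of $Z_S$ alone (as $D_S$ is itself a function of $Z_S$), so for each $j \notin S$ the column $z^{(j)}$ is independent of $\lambda$. Conditional on $\lambda$, the scalar $(Z^\top \lambda)_j = \langle z^{(j)}, \lambda\rangle$ is therefore centered Gaussian with variance $\norm{\lambda}_2^2$, giving $\PP[|\langle z^{(j)}, \lambda\rangle| \geq 1/4 \mid \lambda] \leq 2\exp(-1/(32\norm{\lambda}_2^2)) \leq 2\exp(-n/(32 C_1 \kappa))$ on the favorable event. A union bound over the at most $d$ off-support coordinates, combined with the lower bound $d_j \geq 1/2$ enforced by $\mathcal{E}_2$, shows that $|(Z^\top \lambda)_j| < d_j$ simultaneously for all $j \notin S$, provided $n \geq C\kappa \ln d$ with $C$ sufficiently large. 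Intersecting with $\mathcal{E}_1, \mathcal{E}_2$ gives the stated success probability $1 - (2d+1)e^{-cn}$, and \Cref{lem:olm_LP} then lifts uniqueness from~\eqref{eq_ln_sgd_reg_w} to~\eqref{eq_ln_sgd_reg} up to per-coordinate sign flips.

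The main obstacle is that the weights $\{d_j\}$ appearing in the objective are themselves random functions of the design matrix, which obstructs a direct application of off-the-shelf basis-pursuit recovery theorems that assume independence between the measurement matrix and the regularizer. The resolution is the observation above: conditional on the columns of $Z$ indexed by $S$, the dual certificate $\lambda$ and the on-support weights $\{d_j\}_{j \in S}$ are frozen, while each off-support column $z^{(j)}$ remains an independent standard Gaussian vector, reducing the critical strict-inequality check to a one-line Gaussian tail estimate against a deterministic-given-$Z_S$ vector.
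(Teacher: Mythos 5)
Your argument is a valid alternative route, and it is genuinely different from the paper's. The paper does not construct a dual certificate at all: it conditions on the event that all column norms $\tfrac{1}{n}\sum_i z_{i,j}^2$ lie in $[1/2,3/2]$, observes that the random weighted $\ell_1$ objective then belongs to the deterministic class $\cG$ of weighted $\ell_1$ norms with weights in $[2,6]$, and bounds the minimum conic singular value of $Z$ over the \emph{union} of descent cones $\cD_\Upsilon$ via its Gaussian width ($\lesssim \sqrt{\kappa\ln d}$), following \citet{tropp2015convex}. Your resolution of the same dependence obstacle (random weights depending on $Z$) is the cleaner conditional one: freeze $Z_S$, note that the least-squares certificate $\lambda = Z_S(Z_S^\top Z_S)^{-1}D_S\sign(w^*_S)$ and the on-support weights are $Z_S$-measurable, and exploit that each off-support column is an independent Gaussian. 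The descent-cone approach buys a bound that is uniform over the weight class and hence immune to this dependence; the certificate approach buys a more elementary, coordinate-wise verification. Your identification of the key structural point (independence of off-support columns from the certificate) is correct, and the on-support steps (Wishart lower tail, norm bound $\norm{\lambda}_2^2 \lesssim \kappa/n$) are fine; the "iff" in your KKT characterization is slightly too strong, but only the sufficient direction is used.

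There is, however, a genuine quantitative gap at the last step. Your off-support tail bound is $\PP[|\inner{z^{(j)}}{\lambda}|\geq 1/4] \leq 2\exp(-1/(32\norm{\lambda}_2^2)) \leq 2\exp(-cn/\kappa)$, so after the union bound over $j\notin S$ the failure probability of the certificate is of order $d\,e^{-cn/\kappa}$, not $d\,e^{-cn}$. Under $n\geq C\kappa\ln d$ this is $d^{1-cC}$, which is polynomially small in $d$ but \emph{not} $e^{-\Omega(n)}$ when $\kappa$ grows with $n$ (e.g.\ $\kappa = \Theta(n/\ln d)$). So your argument proves exact recovery with high probability at the stated sample complexity, but it does not deliver the success probability $1-(2d+1)e^{-cn}$ claimed in the lemma (and propagated to \Cref{thm:olm_main} as $1-e^{-\Omega(n)}$); the exponent you obtain necessarily degrades by a factor of $\kappa$ because the certificate has variance $\Theta(\kappa/n)$ in each off-support coordinate. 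The paper's Gaussian-width argument avoids this because its concentration step is a single Lipschitz-concentration bound at scale $\sqrt{n}$ rather than $d$ separate scalar tail bounds at scale $\sqrt{n/\kappa}$. To close the gap you would either need to adopt the conic-geometry argument for the probability claim, or weaken the stated probability to $1-(2d+1)e^{-cn/\kappa}$ and check that this still suffices downstream.
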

    
    \begin{proof}[Proof of \Cref{lem:ln_gaussian}]
    Since $z_1,\ldots,z_n\overset{\iid}{\sim} \cN(0,I_d)$, we have
    \begin{align*}
        \PP\left[\sum_{i=1}^n z_{i,j}^2 \in [n/2,3n/2],\forall j\in[d]\right] \geq 1 - 2de^{-c n}
    \end{align*}
    for some constant $c>0$, and we denote this event by $\cE_n$. 
    Therefore, on $\cE_n$, we have
    \begin{align*}
        2\sum_{j=1}^D (u_j^2 + v_j^2) \leq R(x) \leq 6 \sum_{j=1}^D (u_j^2 + v_j^2)
    \end{align*}
    or equivalently,
    \begin{align*}
        2(\|u^{\odot 2}\|_1 + \|v^{\odot 2}\|_1) \leq R(x)\leq 6 (\|u^{\odot 2}\|_1 + v^{\odot 2}\|_1).
    \end{align*}
    
    Define $w^*=(u^*)^{\odot 2} - (v^*)^{\odot 2}$, and \eqref{eq_ln_sgd_reg_w} is equivalent to the following convex optimization problem
    \begin{align}\label{eq_ln_sgd_reg_gaussian_3}
        \begin{aligned}
        \text{minimize}\quad & g(w)=\frac{4}{n}\sum_{j=1}^d\left(\sum_{i=1}^nz_{i,j}^2\right) |w_j+w^*_j|,\\
        \text{subject to}\quad & Z w = 0.
        \end{aligned}
    \end{align}
    The point $w=0$ is feasible for \eqref{eq_ln_sgd_reg_gaussian_3}, and we claim that this is the unique optimal solution when $n$ is large enough. 
    In detail, assume that there exists a non-zero feasible point $w$ for \eqref{eq_ln_sgd_reg_gaussian_3} in the descent cone \citep{tropp2015convex} $\cD(g,w^*)$ of $g$, then
    \begin{align*}
        \lambda_{\min} (Z;\cD(g,w^*)) \leq \frac{\|Zw\|_2}{\|w\|_2}=0
    \end{align*}
    where the equality follows from that $w$ is feasible. Therefore, we only need to show that $\lambda_{\min}(Z;\cD(g,x^*))$ is bounded from below for sufficiently large $n$.
    
    On $\cE_n$, it holds that $g$ belongs to the following function class
    \begin{align*}
        \cG = \left\{h:\RR^d\to\RR\mid h(w) = \sum_{j=1}^d \upsilon_j |w_j|, \upsilon\in\Upsilon \right\}\text{ with } \Upsilon=\{\upsilon\in\RR^d: \upsilon_j\in[2, 6],\forall j\in[d]\}.
    \end{align*}
    We identify $g_\upsilon\in\cG$ with $\upsilon\in\Upsilon$, then $\cD(g,w^*)\subseteq \cup_{\upsilon\in\Upsilon} \cD(g_\upsilon,w^*)):= \cD_\Upsilon$, which further implies that
    \begin{align*}
        \lambda_{\min}(Z;\cD(g,w^*)) \geq \lambda_{\min} (Z;\cD_\Upsilon).
    \end{align*}
    
    Recall the definition of minimum conic singular value \citep{tropp2015convex}:
    \begin{align*}
        \lambda_{\min}(Z;\cD_\Upsilon) &= \inf_{p\in \cD_\Upsilon\cap \cS^{d-1}} \sup_{q\in \cS^{n-1}} \langle q, Z p\rangle.
    \end{align*}
    where $\cS^{n-1}$ denotes the unit sphere in $\RR^n$. 
    Applying the same argument as in \citet{tropp2015convex} yields
    \begin{align*}
        \PP\left[\lambda_{\min}(Z;\cD_\Upsilon) \geq \sqrt{n-1} - w(\cD_\Upsilon)-h\right]\geq 1- e^{-h^2/2}.
    \end{align*}
    Take the intersection of this event with $\cE_n$, and we obtain from a union bound that
    \begin{align}\label{eq:min conic sigval}
        \lambda_{\min}(Z;\cD(g,w^*)) \geq \sqrt{n-1} - w(\cD_\Upsilon) - h
    \end{align}
    with probability at least  $1 - e^{-h^2/2} - 2de^{-cn}$. 
    It remains to determine $w(\cD_\Upsilon)$, which is defined as
    \begin{align}\label{eq:gaussian width}
        w(\cD_\Upsilon) =  \EE_{z\sim\cN(0,I_d)}\left[\sup_{p\in\cD_\Upsilon\cap \cS^{d-1}} \langle z,p\rangle\right] = \EE_{z\sim\cN(0,I_d)}\left[\sup_{\upsilon\in\Upsilon} \sup_{p\in\cD(g_\upsilon,x^*)\cap \cS^{d-1}} \langle z,p\rangle\right].
    \end{align} 
    Without loss of generality, we assume that $w^*=(w^*_1,\ldots,w^*_\kappa,0,\ldots,0)^\top$ with $w^*_1,\ldots,w^*_\kappa>0$, otherwise one only needs to specify the signs and the nonzero set of $w^*$ in the sequel.
    For any $\upsilon\in\Upsilon$ and any $p\in\cD(g_\upsilon,w^*)\cap\cS^{d-1}$, there exists some $\tau>0$ such that $g_\upsilon(w^*+\tau\cdot p)\leq g_\upsilon(w^*)$, i.e.,
    \begin{align*}
        \sum_{j=1}^d \upsilon_j |w^*_j + \tau p_j| \leq \sum_{j=1}^d \upsilon_j |w^*_j|
    \end{align*}
    which further implies that
    \begin{align*}
        \tau \sum_{j=\kappa+1}^d \upsilon_j |p_j| &\leq \sum_{j=1}^\kappa \upsilon_j(|w^*_j| - |w^*_j-\tau p_j|)\leq  \tau \sum_{j=1}^\kappa \upsilon_j |p_j|
    \end{align*}
    where the second inequality follows from the triangle inequality. 
    Then since each $\upsilon_j\in[2,6]$, it follows that
    \begin{align*}
        \sum_{j=\kappa+1}^d |p_j| \leq 3\sum_{j=1}^\kappa |p_j|.
    \end{align*}
    Note that this holds for all $\xi\in\Xi$ simultaneously. 
    Now let us denote $p_{1:\kappa} = (p_1,\ldots,p_\kappa)\in\RR^\kappa$ and $p_{(\kappa+1):d} = (p_{\kappa+1},\ldots,p_d) \in\RR^{d-\kappa}$, and similarly for other $d$-dimensional vectors. 
    Then for all $p\in\cD_\Upsilon\cap\cS^{d-1}$, by Cauchy-Schwartz inequality, we have
    \begin{align*}
        \|p_{(\kappa+1):d}\|_1\leq 3\|p_{1:\kappa}\|_1\leq 3\sqrt{\kappa} \|p_{1:\kappa}\|_2.
    \end{align*}
    Thus, for any $z\in\RR^d$ and any $p\in\cD_\Upsilon\cap \cS^{d-1}$, it follows that
    \begin{align*}
        \langle z,p\rangle &= \langle z_{1:\kappa}, p_{1:\kappa}\rangle + \langle z_{(\kappa+1):d}, p_{(\kappa+1):d} \rangle \\
        &\leq \|z_{1:\kappa}\|_2 \|p_{1:\kappa}\|_2 + \|p_{(\kappa+1):d}\|_1 \cdot \max_{j\in\{\kappa+1,\ldots,d\}} |z_j|\\
        &\leq \|z_{1:\kappa}\|_2 \|p_{1:\kappa}\|_2 + 3\sqrt{\kappa}\|p_{1:\kappa}\|_2 \cdot \max_{j\in\{\kappa+1,\ldots,d\}}|z_j|\\
        &\leq \|z_{1:\kappa}\|_2 + 3\sqrt{\kappa} \cdot \max_{j\in\{\kappa+1,\ldots,d\}} |z_j|
    \end{align*}
    where the last inequality follows from the fact that $p\in\cS^{d-1}$. 
    Therefore, combine the above inequality with \eqref{eq:gaussian width}, and we obtain that
    \begin{align}\label{eq:gaussian width ub}
        w(\cD_\Upsilon) &\leq \EE\left[ \|z_{1:\kappa}\|_2 + 3\sqrt{\kappa} \cdot \max_{j\in\{\kappa+1,\ldots,d\}} |z_j| \right]\notag\\
        &\leq \sqrt{\kappa} + 3\sqrt{\kappa} \cdot \EE\left[\max_{j\in\{\kappa+1,\ldots,d\}} |z_j|\right].
    \end{align}
    where the second inequality follows from the fact that $\EE[\|z_{1:\kappa}\|_2]\leq \sqrt{\EE[\|z_{1:\kappa}\|_2^2]}=\sqrt{\kappa}$. 
    To bound the second term in \eqref{eq:gaussian width ub}, applying Lemma \ref{lem:gaussian max}, it follows from \eqref{eq:gaussian width ub} that
    \begin{align}\label{eq:gaussian width ub 2}
        w(\cD_\Upsilon) &\leq \sqrt{\kappa} + 3\sqrt{2\kappa\ln(2(d-\kappa))}.
    \end{align}
    Therefore, combining \eqref{eq:gaussian width ub 2} and \eqref{eq:min conic sigval}, we obtain
    \begin{align*}
        \lambda_{\min}(Z;\cD(g,w^*)) \geq \sqrt{n-1} - \sqrt{\kappa} - 3\sqrt{2\kappa\ln(2(d-\kappa))} - h.
    \end{align*}
    Therefore, choosing $h=\sqrt{n-1}/2$, as long as $n$ satisfies that $n \geq C(\kappa \ln d)$ for some constant $C>0$, we have $\lambda_{\min}(Z;\cD(g,w^*))>0$ with probability at least $1-(2d+1)e^{-cn}$.
    Finally, the uniqueness of the optimal solution to \eqref{eq_ln_sgd_reg} in this case follows from Lemma \ref{lem:olm_LP}.
    \end{proof}

    \begin{lemma}\label{lem:gaussian max}
    Let $z\sim\cN(0,I_d)$, then it holds that $\EE\left[\max_{i\in[d]} |z_i|\right]\leq \sqrt{2\ln (2d)}$.
    \end{lemma}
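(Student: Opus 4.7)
The plan is to use the standard Chernoff-style bound on the expected maximum of sub-Gaussian random variables, combined with the observation that $\max_{i\in[d]} |z_i|$ can be written as the maximum of $2d$ standard Gaussians.

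First, I would rewrite $\max_{i\in[d]} |z_i| = \max_{(i,s)\in[d]\times\{\pm 1\}} s z_i$. Let $Y_1, \ldots, Y_{2d}$ enumerate the random variables $\{z_i, -z_i : i\in[d]\}$; each $Y_k$ is marginally $\cN(0,1)$ (though they are not independent, which does not matter for this argument). Next, for any $\lambda > 0$, by Jensen's inequality applied to the convex function $x \mapsto e^{\lambda x}$, I would bound
\[
\exp\!\left(\lambda \,\EE\!\left[\max_{k\in[2d]} Y_k\right]\right) \leq \EE\!\left[\exp\!\left(\lambda \max_{k\in[2d]} Y_k\right)\right] = \EE\!\left[\max_{k\in[2d]} e^{\lambda Y_k}\right] \leq \sum_{k=1}^{2d} \EE[e^{\lambda Y_k}] = 2d\cdot e^{\lambda^2/2},
\]
where the final equality uses the standard Gaussian moment generating function $\EE[e^{\lambda Y_k}] = e^{\lambda^2/2}$.

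Taking logarithms and dividing through by $\lambda$ yields
\[
\EE\!\left[\max_{i\in[d]} |z_i|\right] \leq \frac{\ln(2d)}{\lambda} + \frac{\lambda}{2}.
\]
The final step is to optimize over $\lambda > 0$; setting $\lambda = \sqrt{2\ln(2d)}$ balances the two terms and gives the claimed bound $\sqrt{2\ln(2d)}$. There is no real obstacle here: the argument is routine once one recognizes that $|z_i|$ can be handled by doubling the index set to encompass both signs, so that the classical sub-Gaussian maximum bound applies verbatim.
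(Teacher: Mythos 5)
Your proof is correct and follows essentially the same route as the paper: both use Jensen's inequality on $e^{\lambda\cdot}$ and bound $\EE[e^{\lambda M}]$ by a sum of $2d$ Gaussian moment-generating functions, then optimize $\lambda$. The only cosmetic difference is where the factor of $2$ enters—you double the index set to $\{\pm z_i\}$, while the paper uses $\EE[e^{\lambda|z_i|}]\leq 2\,\EE[e^{\lambda z_i}]$—but these are the same bound.
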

    \begin{proof}[Proof of Lemma \ref{lem:gaussian max}]
    Denote $M=\max_{i\in[d]}|z_i|$. 
    For any $\lambda>0$, by Jensen's inequality, we have
    \begin{align*}
        e^{\lambda\cdot\EE[M]}&\leq \EE\left[e^{\lambda M}\right] = \EE\left[\max_{i\in[d]} e^{\lambda|z_i|}\right]\leq \sum_{i=1}^d \EE\left[e^{\lambda|z_i|}\right]. 
    \end{align*}
    Note that $\EE[e^{\lambda|z_i|}] \leq 2\cdot\EE[e^{\lambda z_i}]$. 
    Thus, by the expression of the Gaussian moment generating function, we further have
    \begin{align*}
        e^{\lambda\cdot\EE[M]} &\leq 2\sum_{i=1}^d\EE\left[e^{\lambda z_i}\right] = 2de^{\lambda^2/2},
    \end{align*}
    from which it follows that
    \begin{align*}
        \EE[M] &\leq \frac{\ln(2d)}{\lambda} + \frac{\lambda}{2}.
    \end{align*}
    Choosing $\lambda=\sqrt{2\ln(2d)}$ yields the desired result. 
    \end{proof}
    
    \subsection{Proof of \Cref{lem:gf_optimal}}
    
    Instead of studying the convergence of the Riemannian gradient flow directly, it is more convenient to consider it in the ambient space $\RR^D$.
    To do so, we define a Lagrange function $\lag{x;\lambda} = R(x) + \sum_{i=1}^n \lambda_i (f_i(x)-y_i)$ for $\lambda\in\RR^n$. 
    Based on this Lagrangian, we can continuously extend $\partial\Phi(x)\nabla R(x)$ to the whole space $\RR^D$. 
    In specific, we can find a continuous function $F:\RR^D \to \RR^D$ such that $F(\cdot)|_{\Gamma} = \partial\Phi(\cdot)\nabla R(\cdot)$. 
    Such an $F$ can be implicitly constructed via the following lemma. 
    \begin{restatable}{lemma}{lemolmF}\label{lem:olm_F}
    The $\ell_2$ norm has a unique minimizer among $\{\nabla_x \lag{x;\lambda}\mid \lambda\in \RR^n\}$ for any fixed $x\in\RR^D$. 
    Thus we can define $F:\RR^D\to\RR^D$ by $F(x) = \argmin_{g\in \{\nabla_x\lag{x;\lambda}\mid \lambda\in\RR^n\}} \norm{g}_2$. 
    Moreover, it holds that $\langle F(x),\nabla f_i(x)\rangle=0$ for all $i\in[n]$.
    \end{restatable}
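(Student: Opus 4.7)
The plan is to rephrase the statement as a projection problem in the ambient Hilbert space $\RR^D$ and then apply the Hilbert projection theorem, which immediately delivers both uniqueness and the claimed orthogonality.

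First I would unfold the Lagrangian: since $\nabla_x \lag{x;\lambda} = \nabla R(x) + \sum_{i=1}^n \lambda_i \nabla f_i(x)$, the set
\[
S_x \;:=\; \{\nabla_x \lag{x;\lambda}\mid \lambda\in\RR^n\} \;=\; \nabla R(x) + V_x, \qquad V_x := \mathrm{span}\{\nabla f_1(x),\ldots,\nabla f_n(x)\}
\]
is an affine subspace of $\RR^D$. Because $V_x$ is a finite-dimensional linear subspace it is closed, and hence $S_x$ is a closed convex subset of $\RR^D$.

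Next I would invoke the fact that the squared norm $\|\cdot\|_2^2$ is strictly convex and coercive on $\RR^D$, so it attains its infimum on any nonempty closed convex set at a unique point; minimizing $\|\cdot\|_2$ is equivalent to minimizing $\|\cdot\|_2^2$. Therefore $F(x) := \arg\min_{g\in S_x} \|g\|_2$ is well defined and unique, and may be identified with the orthogonal projection of the origin onto $S_x$.

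Finally, for the orthogonality, I would use the first-order optimality condition for this projection: the minimizer of $g\mapsto \|g\|_2^2$ on $S_x = \nabla R(x) + V_x$ must have $F(x)$ orthogonal to the direction space $V_x$. Concretely, differentiating $\lambda \mapsto \tfrac{1}{2}\|\nabla R(x)+\sum_i \lambda_i \nabla f_i(x)\|_2^2$ and setting the gradient to zero at the optimal $\lambda^*$ yields $\langle F(x),\nabla f_j(x)\rangle = 0$ for every $j\in[n]$, which is exactly the second assertion. I do not foresee a genuine obstacle here; the only subtlety is simply to note that $V_x$ is automatically closed (being finite-dimensional) so that the projection theorem applies verbatim, and to observe that the uniqueness step really uses strict convexity of the squared norm rather than of the norm itself.
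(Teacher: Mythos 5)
Your proposal is correct and follows essentially the same route as the paper: both identify $\{\nabla_x\lag{x;\lambda}\}$ as an affine subspace (i.e., $\nabla R(x)$ shifted by $\mathrm{span}\{\nabla f_i(x)\}$), deduce uniqueness of the $\ell_2$-minimizer by projection onto a closed affine set, and obtain $\langle F(x),\nabla f_i(x)\rangle=0$ from the first-order stationarity condition of the quadratic $\lambda\mapsto\tfrac12\|\nabla_x\lag{x;\lambda}\|_2^2$. The paper phrases the last step via the normal equations involving $K(x)$, whereas you state the orthogonality directly; these are the same computation.
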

    
    \begin{proof}[Proof of \Cref{lem:olm_F}]
    Fix any $x\in\RR^D$. 
    Note that $\{\nabla_x\lag{x;\lambda}\mid \lambda\in\RR^n\}$ is the subspace spanned by $\{\nabla f_i(x)\}_{i\in[n]}$ shifted by $\nabla R(x)$, thus there is unique minimizer of the $\ell_2$ norm in this set. 
    This implies that $F(x) = \argmin_{g\in\{\nabla_x\lag{x;\lambda}\mid\lambda\in\RR^n\}}\|g\|_2$ is well-defined.
    
    To show the second claim, denote $h(\lambda) = \|\nabla_x\lag{x;\lambda}\|_2^2/2$, which is a quadratic function of $\lambda\in\RR^n$. 
    Then we have
    \begin{align*}
        \nabla h(\lambda) &= \begin{pmatrix}
        \langle\nabla R(x), \nabla f_1(x)\rangle\\
        \vdots\\
        \langle\nabla R(x), \nabla f_n(x)\rangle
        \end{pmatrix} + \begin{pmatrix}
        \sum_{i=1}^n\lambda_i\langle\nabla f_1(x),\nabla f_i(x)\rangle\\
        \vdots\\
        \sum_{i=1}^n \lambda_i\langle\nabla f_n(x),\nabla f_i(x)\rangle
        \end{pmatrix} = \begin{pmatrix}
        \langle\nabla R(x), \nabla f_1(x)\rangle\\
        \vdots\\
        \langle\nabla R(x), \nabla f_n(x)\rangle
        \end{pmatrix} + K(x)\lambda.
    \end{align*}
    For any $\lambda$ such that $\nabla_x\lag{x;\lambda}=F(x)$, we must have $\nabla h(\lambda)=0$ by the definition of $F(x)$, which by the above implies
    \begin{align*}
        (K(x)\lambda)_i = -\langle\nabla R(x),\nabla f_i(x)\rangle \qquad \text{for all }i\in[n].
    \end{align*}
    Therefore, we further have
    \begin{align*}
        \langle F(x),\nabla f_i(x)\rangle &= \langle \nabla R(x),\nabla f_i(x)\rangle + \sum_{j=1}^n \lambda_j\langle\nabla f_i(x),\nabla f_j(x)\rangle = \langle\nabla R(x),\nabla f_i(x)\rangle + (K(x)\lambda)_{i} = 0
    \end{align*}
    for all $i\in[n]$.
    This finishes the proof.
    \end{proof}
    
    Hence, with any initialization $x(0)\in\Gamma$, the limiting flow~\eqref{eq:olm_projected_gf} is equivalent to the following dynamics
    \begin{align}\label{eq:limiting_gf_ambient}
        \frac{\diff x(t)}{\diff t} = - \frac{1}{4}F(x(t)).
    \end{align}
    Thus \Cref{lem:gf_optimal} can be proved by showing that the above $x(t)$ converges to $x^*$ as $t\to\infty$.
    We first present a series of auxiliary results in below.
    
    \begin{lemma}[Implications for $F(x)=0$]\label{lem:olm_F_equal_0}
    Let $F:\RR^D\to\RR^D$ be as defined  in \Cref{lem:olm_F}. 
    For any $x = \binom{u}{v}\in \RR^D$ such that $F(x)=0$, it holds that for each $j\in[d]$, either ${u_j}=0$ or ${v_j}=0$.
    \end{lemma}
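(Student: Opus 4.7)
The plan is to unpack the definition of $F$ from the previous lemma: $F(x)=0$ means that $\nabla R(x)$ lies in the span of $\{\nabla f_i(x)\}_{i\in[n]}$, i.e., there exists some $\lambda\in\RR^n$ such that $\nabla R(x)+\sum_{i=1}^n \lambda_i\nabla f_i(x)=0$. From this, I would simply read off the $j$-th components of the $u$-block and $v$-block separately and check what each scalar identity forces.

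Concretely, recalling $\nabla_u R(x)_j=\tfrac{8}{n}\bigl(\sum_i z_{i,j}^2\bigr)u_j$, $\nabla_v R(x)_j=\tfrac{8}{n}\bigl(\sum_i z_{i,j}^2\bigr)v_j$, and $\nabla_u f_i(x)_j=2z_{i,j}u_j$, $\nabla_v f_i(x)_j=-2z_{i,j}v_j$, the vanishing condition for coordinate $j$ becomes
\begin{align*}
u_j\,(c_j+s_j)=0, \qquad v_j\,(c_j-s_j)=0,
\end{align*}
where $c_j:=\tfrac{8}{n}\sum_{i=1}^n z_{i,j}^2$ and $s_j:=2\sum_{i=1}^n\lambda_i z_{i,j}$. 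Assuming both $u_j\ne 0$ and $v_j\ne 0$ would force simultaneously $s_j=-c_j$ and $s_j=c_j$, hence $c_j=0$.

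The final step is to rule out $c_j=0$. This is exactly the standing reduction stated at the beginning of the section on the minimizer of the regularizer: without loss of generality $\sum_{i=1}^n z_{i,j}^2>0$ for every $j\in[d]$, since any unused coordinate can be dropped (and both the Gaussian and Boolean distributions satisfy this with probability one). Therefore $c_j>0$ for all $j$, contradicting the previous display, so at least one of $u_j,v_j$ must vanish. There is no genuine obstacle here; the only thing one has to be a little careful about is citing the right lemma to guarantee the existence of $\lambda$ witnessing $F(x)=0$ and invoking $c_j>0$ explicitly.
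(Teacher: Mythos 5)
Your proposal is correct and follows essentially the same route as the paper: read off the $u_j$- and $v_j$-components of $\nabla_x\mathcal{L}(x;\lambda)=0$, observe that $u_j\neq 0$ and $v_j\neq 0$ would force $\sum_{i=1}^n z_{i,j}^2=0$, and exclude that case. Your explicit appeal to the standing reduction $\sum_{i=1}^n z_{i,j}^2>0$ is, if anything, slightly more careful than the paper's "probability 0" remark.
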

    \begin{proof}
    Since $F(x)=0$, it holds for all $j\in[d]$ that,
    \begin{align*}
        0 &= \frac{\partial R}{\partial {u_j}}(x) + \sum_{i=1}^{n} \lambda_i(x)\frac{\partial f_i}{\partial {u_j}}(x) = 2{u_j} \left[\frac{4}{n}\sum_{i=1}^n z_{i,j}^2 + \sum_{i=1}^{n} \lambda_i(x) z_{i,j}\right], \\
        0 &= \frac{\partial R}{\partial {v_j}}(x) + \sum_{i=1}^{n} \lambda_i(x)\frac{\partial f_i}{\partial {v_j}}(x) = 2{v_j}  \left[\frac{4}{n}\sum_{i=1}^n z_{i,j}^2 - \sum_{i=1}^{n} \lambda_i(x) z_{i,j}\right].
    \end{align*}
    If there exists some $j\in[d]$ such that ${u_j}\neq 0$ and ${v_j}\neq 0$, then it follows from the above two identities that
    \begin{align*}
        \sum_{i=1}^n z_{i,j}^2 = 0
    \end{align*}
    which happens with probability 0 in both the Boolean and Gaussian case. 
    Therefore, we must have ${u_j}=0$ or ${v_j}=0$ for all $j\in[d]$.
    \end{proof}
    
    \begin{lemma}\label{lem:F_continuous}
    Let $F:\RR^D\to\RR^D$ be as defined  in \Cref{lem:olm_F}. 
    Then $F$ is continuous on $\RR^D$. 
    \end{lemma}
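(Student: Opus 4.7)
The plan is to fix $x_0 \in \RR^D$ and any sequence $x_n \to x_0$ and show $F(x_n) \to F(x_0)$. Since $\lambda = 0$ is always admissible in the definition of $F$, we have $\|F(x_n)\|_2 \leq \|\nabla R(x_n)\|_2$, so $\{F(x_n)\}$ is bounded; by Bolzano--Weierstrass every subsequence admits a further convergent sub-subsequence, and it suffices to show any such limit $G$ equals $F(x_0)$. Passing to the limit in the orthogonality identity $\langle F(x_n), \nabla f_i(x_n)\rangle = 0$ supplied by \Cref{lem:olm_F}, together with continuity of $\nabla f_i$, yields $\langle G, \nabla f_i(x_0)\rangle = 0$ for every $i$, so $G \perp V(x_0) := \mathrm{span}\{\nabla f_i(x_0)\}$. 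The remaining task is to show $G - \nabla R(x_0) \in V(x_0)$, which together with the orthogonality will identify $G$ as $F(x_0)$.

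The key structural observation is that, writing $x = (u^\top, v^\top)^\top$ and $A(x) := \mathrm{diag}(x) \in \RR^{D\times D}$, both $\nabla R(x) = \tfrac{8}{n}A(x)\binom{s}{s}$ (with $s_j := \sum_{i=1}^n z_{i,j}^2$) and $\nabla f_i(x) = 2 A(x)\binom{z_i}{-z_i}$ factor through $A(x)$. Consequently $F(x) \in \mathrm{range}(A(x))$, and writing $F(x) = A(x)\rho^*(x)$ the minimization becomes a weighted least-squares problem: $\rho^*(x)$ minimizes $\rho^\top A(x)^2 \rho$ over an affine subspace $r + Q \subset \RR^D$ that is independent of $x$. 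Setting $S := \{j : x_{0,j} \neq 0\}$, this immediately gives $F(x_0)|_{S^c} = 0$, and on $\RR^S$ the restricted weight $A(x_0)|_S$ is diagonal with nonzero entries, so the reduced minimization is strictly convex with a unique solution characterized by its first-order conditions.

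To combine these, I would show that the optimal value $V(x) := \|F(x)\|_2^2$ is continuous. Upper semicontinuity is immediate since $V(x_n) \leq \|\nabla R(x_n) + \sum_i \lambda_i \nabla f_i(x_n)\|_2^2$ for any fixed $\lambda$ and the right side is continuous in $x$. For lower semicontinuity, the bound $|F(x_n)_j| \leq \|F(x_n)\|_2$ together with $(A(x_n))_{jj} \to x_{0,j} \neq 0$ for $j \in S$ forces the components $\rho^{(n)}_j := F(x_n)_j / (A(x_n))_{jj}$ to stay bounded on $S$; extracting a further subsequence with $\rho^{(n_k)}|_S \to \rho^\infty|_S$ and taking termwise limits in $\langle F(x_n), \nabla f_i(x_n)\rangle = 0$---whose $S^c$ summands vanish because they carry a factor $(A(x_n))_{jj} \to 0$ multiplied by the bounded $F(x_n)_j$---shows $\rho^\infty|_S$ satisfies the first-order conditions of the strictly convex reduced problem at $x_0$, so $\rho^\infty|_S = \rho^*(x_0)|_S$ and $F(x_{n_k})|_S \to F(x_0)|_S$. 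Hence $V(x_n) \geq \|F(x_n)|_S\|_2^2 \to V(x_0)$, yielding lower semicontinuity. Once $V(x_n) \to V(x_0)$, the identity $V(x_n) = \|F(x_n)|_S\|_2^2 + \|F(x_n)|_{S^c}\|_2^2$ forces $\|F(x_n)|_{S^c}\|_2^2 \to 0$, matching $F(x_0)|_{S^c} = 0$, and combining the $S$ and $S^c$ limits gives $F(x_n) \to F(x_0)$.

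The main obstacle is the rank-drop phenomenon at points $x_0$ with some $x_{0,j} = 0$: there the projection operator $P_{V(x)^\perp}$ is genuinely discontinuous, and arguments routing through the Moore--Penrose pseudoinverse of $K(x) = J(x) J(x)^\top$ break down. The factorization $\nabla R(x), \nabla f_i(x) \in \mathrm{range}(A(x))$ is precisely the feature that rescues continuity: it confines $F(x)$ to $\mathrm{range}(A(x))$ and forces any ``extra'' directions opening up in $V(x_n)$ near rank-drop points to carry mass proportional to $(A(x_n))_{jj} \to 0$, thus contributing vanishingly to $F(x_n)$ in the limit.
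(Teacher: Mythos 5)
Your proof is correct and takes a genuinely different route from the paper's. Both arguments ultimately split coordinates into the ``active'' set $S=\{j: x_{0,j}\neq 0\}$ and its complement, show the $S$-part of $F$ converges, and show the $S^c$-part vanishes. The mechanics differ substantially, though. The paper works directly with the Lagrange multipliers $\lambda\in\RR^n$: it applies an invertible transformation to $Z$ so that a leading block of $\{z_i\}$ is linearly independent on the active coordinates and the rest vanish there, then uses a strong-convexity argument to pin down $\lambda_{1:q'}(x)$, and finally a chain of inequalities on the optimal value to kill the inactive block of $F$. Your proof instead reparametrizes by $\rho := \tfrac{8}{n}\binom{s}{s} + 2\sum_i\lambda_i\binom{z_i}{-z_i}$, exploiting that $\nabla R$ and every $\nabla f_i$ factor through $A(x)=\mathrm{diag}(x)$, so that $F(x)=A(x)\rho^*(x)$ and the Lagrangian minimization becomes a diagonal weighted least-squares problem $\min_{\rho\in r_0+Q}\rho^\top A(x)^2\rho$ over an affine subspace that does not depend on $x$. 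This factorization automatically absorbs the rank-drop at boundary points (the vanishing weights $x_{0,j}^2$ on $S^c$ replace the paper's ad hoc $Z$-transformation), unifies the paper's Cases I and II, and yields the continuity of the optimal value $V(x)=\|F(x)\|_2^2$ as the natural hinge. What the paper's route buys in exchange is that it keeps everything in the $\lambda$-variables, which are reused in the subsequent proof of \Cref{lem:gf_optimal}.

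Two small points worth tightening: the convergences $\|F(x_n)|_S\|_2^2\to V(x_0)$ and $\rho^{(n)}|_S\to\rho^\infty|_S$ hold a priori only along subsequences, so the ``for every subsequence there is a further subsequence converging to the same limit'' device should be invoked explicitly when you deduce the lower semicontinuity of $V$ and the final $F(x_n)\to F(x_0)$. You also implicitly use that $\rho^\infty|_S$ is feasible for the reduced problem, which follows since $\Pi_S(r_0+Q)$ is an affine subspace and hence closed; stating this makes the appeal to strict convexity plus first-order conditions airtight.
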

    \begin{proof}
    {\bf Case I.} We first consider the simpler case of any fixed $x^*\in U = (\RR\setminus\{0\})^{D}$, assuming that $K(x^*)$ is full-rank.  
    \Cref{lem:olm_F} implies that for any $\lambda\in\RR^n$ such that $\nabla_x\lag{x^*;\lambda}=F(x^*)$, we have 
    \begin{align*}
        K(x^*)\lambda = -[\nabla f_1(x) \ldots \nabla f_n(x)]^\top \nabla R(x).
    \end{align*}
    Thus such $\lambda$ is unique and given by
    \begin{align*}
        \lambda(x^*) = - K(x^*)^{-1} [\nabla f_1(x) \ldots \nabla f_n(x)]^\top \nabla R(x).
    \end{align*}
    Since $K(x)$ is continuous around $x^*$, there exists a sufficiently small $\delta>0$ such that for any $x\in B_\delta(x^*)$, $K(x)$ is full-rank, which further implies that $K(x)^{-1}$ is also continuous in $B_\delta(x)$. 
    Therefore, by the above characterization of $\lambda$, we see that $\lambda(x)$ is continuous for $x\in B_\delta(x^*)$, and so is $F(x) = \nabla R(x) + \sum_{i=1}^n \lambda_i(x) \nabla f_i(x)$. 
    
    {\bf Case II.} Next, we consider all general $x^*\in \RR^D$. 
    Here for simplicity, we reorder the coordinates as $x=(u_1,v_1,u_2,v_2,\ldots,u_d,v_d)$ with a slight abuse of notation. 
    Without loss of generality, fix any $x^*$ such that for some $q\in[d]$, $(u_i(0))^2 + (v_i(0))^2>0$ for all $i=1,\ldots,q$ and $u^*_i=v^*_i=0$ for all $i=q+1,\ldots,d$. 
    Then $\nabla R(x^*)$ and $\{\nabla f_i(x^*)\}_{i\in[n]}$ only depend on $\{z_{i,j}\}_{i\in[n],j\in[q]}$, and for all $i\in[n]$, it holds that 
    \begin{align*}
        (\nabla R(x^*))_{(2q+1):D} = (\nabla f_i(x^*))_{(2q+1):D} = 0.
    \end{align*}
    Note that if we replace $\{\nabla f_i(x)\}_{i\in[n]}$ by any fixed and invertible linear transform of itself, it would not affect the definition of $F(x)$. 
    In specific, we can choose an invertible matrix $Q\in \RR^{n\times n}$ such that, for some $q'\in[q]$, $(\tilde z_1,\ldots, \tilde z_n) = (z_1,\ldots,z_n)Q$ satisfies that $\{\tilde z_{i,1:q}\}_{i\in[q']}$ is linearly independent and $\tilde z_{i,1:q}=0$ for all $i=q'+1,\ldots,n$. We then consider $\left[\nabla \tilde f_1(x),\ldots,\nabla \tilde f_n(x)\right] = \left[\nabla f_1(x),\ldots,\nabla f_n(x)\right]Q$ and the corresponding $F(x)$.
    For notational simplicity, we assume that $Q$ can be chosen as the identity matrix, so that $(z_1,\ldots,z_n)$ itself satisfies the above property, and we repeat it here for clarity
    \begin{align}\label{z_property}
        \{z_{i,1:q}\}_{i\in[q']} \text{ is linearly independent and } \tilde z_{i,1:q}=0 \text{ for all } i=q'+1,\ldots,n.
    \end{align}
    This further implies that 
    \begin{align}\label{eq:f_property}
        (\nabla f_i(x))_{1:(2q)}=0,\quad \text{for all } i\in\{q'+1,\ldots,n\} \text{ and } x\in\RR^D.
    \end{align}
    
    In the sequel, we use $\lambda$ for $n$-dimensional vectors and $\bar\lambda$ for $q'$-dimensional vectors. 
    Denote\footnote{We do not care about the specific choice of $\lambda(x)$ or $\bar\lambda(x)$ when there are multiple candidates, and we only need their properties according to \Cref{lem:olm_F}, so they can be arbitrary. Also, the minimum of $\ell_2$-norm of an affine space can always be attained so $\argmin$ exists.}
    \begin{align*}
    \lambda(x) &\in \argmin_{\lambda\in\RR^n} \left\|\nabla R(x) + \sum_{i=1}^n\lambda_i\nabla f_i(x)\right\|_2,\\
    \bar\lambda(x) & \in \argmin_{\bar\lambda\in\RR^{q'}} \left\|\left(\nabla R(x) + \sum_{i=1}^{q'} \bar\lambda_i \nabla (f_i(x)\right)_{1:(2q)}\right\|_2.
    \end{align*} 
    Then due to \eqref{z_property} and \eqref{eq:f_property}, we have
    \begin{align}\label{eq:lambda_x0}
        \left\| \left(\nabla R(x^*) + \sum_{i=1}^{q'} \bar\lambda_i(x^*)\nabla f_i(x^*)\right)_{1:(2q)} \right\|_2 = \left\| \nabla R(x^*) + \sum_{i=1}^n \lambda_i(x)\nabla f_i(x^*) \right\|_2 = \|F(x^*)\|_2.
    \end{align}
    On the other hand, for any $x\in\RR^D$, by \eqref{eq:f_property}, we have
    \begin{align}
        \left\| \bigg( \nabla R(x) + \sum_{i=1}^{q'} \bar\lambda_i(x)\nabla f_i(x) \bigg)_{1:(2q)} \right\|_2 &= \min_{\lambda\in\RR^n} \left\|\left( \nabla R(x) + \sum_{i=1}^n \lambda_i(x) \nabla f_i(x)\right)_{1:(2q)}\right\|_2\notag\\
        &\leq \left\|\bigg( \nabla R(x) + \sum_{i=1}^n \lambda_i(x) \nabla f_i(x) \bigg)_{1:(2q)}\right\|_2 = \|F_{1:(2q)}(x)\|_2\notag\\
        &\leq \|F(x)\|_2 \leq \left\|\nabla R(x) + \sum_{i=1}^n \lambda_i(x^*)\nabla f_i(x)\right\|_2\label{eq:trunc_relation}
    \end{align}
    where the first and third inequalities follow from the definition of $F(x)$. 
    Let $x\to x^*$, by the continuity of $\nabla R(x)$ and $\{\nabla f_i(x)\}_{i\in[n]}$, we have
    \begin{align}\label{eq:converge1}
        \lim_{x\to x^*} \left\|\nabla R(x) + \sum_{i=1}^n \lambda_i(x^*)\nabla f_i(x)\right\|_2 = \left\|\nabla R(x^*) + \sum_{i=1}^n \lambda_i(x^*)\nabla f_i(x^*) \right\|_2
    \end{align}
    Denote $\tilde K(x) = (\tilde K_{ij}(x))_{(i,j)\in[q']^2} = (\langle \nabla f_i(x) _{1:(2q)}, \nabla f_i(x) _{1:(2q)}\rangle)_{(i,j)\in[q']^2}$. 
    By applying the same argument as in {\bf Case I}, since $\tilde K(x^*)$ is full-rank, it also holds that
    $\lim_{x\to x^*} \bar\lambda(x) = \bar\lambda(x^*)$, and thus
    \begin{align}\label{eq:converge2}
        \lim_{x\to x^*} \left\| \bigg( \nabla R(x) + \sum_{i=1}^{q'} \bar\lambda_i(x)\nabla f_i(x) _{1:(2q)} \right\|_2 = \left\| \bigg( \nabla R(x) + \sum_{i=1}^{q'} \bar\lambda_i(x^*)\nabla f_i(x^*)\bigg)_{1:(2q)} \right\|_2.
    \end{align}
    Combing \eqref{eq:lambda_x0}, \eqref{eq:trunc_relation}, \eqref{eq:converge1} and \eqref{eq:converge2} yields
    \begin{align}\label{eq:converge_F_firstk}
        \lim_{x\to x^*} \|F_{1:(2q)}(x)\|_2 = \lim_{x\to x^*} \min_{\lambda\in\RR^n} \left\|\bigg( \nabla R(x) + \sum_{i=1}^n \lambda_i\nabla f_i(x)\bigg)_{1:(2q)}\right\|_2 =\|F(x^*)\|_2.
    \end{align}
    Moreover, since $\|F_{(2q+1):D}(x)\|_2 = \sqrt{\|F(x)\|_2^2-\|F _{1:(2q)}(x)\|_2^2}$, we also have
    \begin{align}\label{eq:converge_residual}
         \lim_{x\to x^*} \|F_{(2q+1):D}(x)\|_2 = 0.
    \end{align}
    It then remains to show that $\lim_{x\to x^*} F_{1:(2q)}(x)  = F_{1:(2q)}(x^*)$, which directly follows from $\lim_{x\to x^*} \lambda_{1:q'}(x) = \lambda_{1:q'}(x^*) = \bar\lambda(x^*)$.
    
    Now, for any $\epsilon>0$, due to the convergence of $\bar\lambda(x)$ and that $\tilde K(x^*)\succ 0$, we can pick a sufficiently small $\delta_1$ such that for some constant $\alpha>0$ and all $x\in B_{\delta_1}(x^*)$, it holds that $\|\bar\lambda(x)-\bar\lambda(x^*)\|_2\leq \epsilon/2$ and 
    \begin{align}\label{eq:strong_convexity}
        \left\|\bigg( \nabla R(x) + \sum_{i=1}^{q'} \bar\lambda_i\nabla f_i(x) \bigg)_{1:(2q)}\right\|_2^2 \geq \left\| \bigg( \nabla R(x) + \sum_{i=1}^{q'} \bar\lambda_i(x) \nabla f_i(x) \bigg)_{1:(2q)} \right\|_2^2 + \alpha \|\bar\lambda - \bar\lambda(x)\|_2^2.
    \end{align}
    for all $\bar\lambda\in\RR^p$, where the inequality follows from the strong convexity.
    Meanwhile, due to \eqref{eq:f_property}, we have 
    \begin{align*}
        \lim_{x\to x^*}\left\|\bigg( \nabla R(x) + \sum_{i=1}^{q'} \lambda_i(x)\nabla f_i(x) \bigg)_{1:(2q)}\right\|_2 &= \lim_{x\to x^*}\left\|\bigg( \nabla R(x) + \sum_{i=1}^n \lambda_i(x)\nabla f_i(x) \bigg)_{1:(2q)}\right\|_2\\
        &= \left\|\bigg( \nabla R(x) + \sum_{i=1}^{q'} \bar\lambda_i(x^*) \nabla f_i(x^*)\bigg)_{1:(2q)}\right\|_2\\
        &= \lim_{x\to x^*} \left\|\bigg( \nabla R(x) + \sum_{i=1}^{q'} \bar\lambda_i(x)\nabla f_i(x) \bigg)_{1:(2q)}\right\|_2.
    \end{align*}
    where the second equality follows from \eqref{eq:converge_F_firstk} and the second equality is due to \eqref{eq:converge2}.
    Therefore, we can pick a sufficiently small $\delta_2$ such that
    \begin{align}\label{eq:converge_firstp}
        \left\|\bigg( \nabla R(x) + \sum_{i=1}^{q'} \lambda_i(x) \nabla f_i(x) \bigg)_{1:(2q)}\right\|_2 \leq \left\|\bigg( \nabla R(x) + \sum_{i=1}^{q'} \bar\lambda_i(x) \nabla f_i(x) \bigg)_{1:(2q)}\right\|_2 + \frac{\alpha\epsilon^2}{4}
    \end{align}
    for all $x\in B_{\delta_2}(x^*)$.
    Setting $\delta=\min(\delta_1,\delta_2)$, it follows from \eqref{eq:strong_convexity} and \eqref{eq:converge_firstp} that
    \begin{align*}
        \|\lambda_{1:q'}(x) - \bar\lambda(x)\|_2\leq \frac{\epsilon}{2},\quad \text{for all } x\in B_\delta(x^*).
    \end{align*}
    Recall that we already have $\|\bar\lambda(x)-\bar\lambda(x^*)\|\leq \epsilon/2$, and thus
    \begin{align*}
         \|\lambda_{1:q'}(x) - \lambda(x^*)_{1:q'}\|_2  = \|\lambda_{1:q'}(x) - \bar\lambda(x^*)\|_2 \leq 
         \|\lambda_{1:q'}(x) - \bar\lambda(x)\|_2 + \|\bar\lambda(x)-\bar\lambda(x^*)\|_2
         \leq \epsilon
    \end{align*}
    for all $x\in B_\delta(x^*)$. 
    Therefore, we see that $\lim_{x\to x^*}\lambda_{1:q'}(x) = \lambda(x^*)_{1:q'}$.
    
    Finally, it follows from the triangle inequality that
    \begin{align*}
        \|F(x)-F(x^*)\|_2 &\leq \left\|\bigg(F(x)  - F(x^*)\bigg)_{1:(2q)}\right\|_2 + \|F_{(2q+1):D}(x)\|_2 + \underbrace{\|F_{(2q+1):D}(x^*)\|_2}_{0}\\
        &= \left\| \left( \nabla R(x) + \sum_{i=1}^{q'} \lambda_i(x)\nabla f_i(x) - \nabla R(x^*) -\sum_{i=1}^{q'}\lambda_i(x^*)\nabla f_i(x^*) \right)_{1:(2q)}\right\|_2 + \|F_{(2q+1):D}(x)\|_2 \\
        &\leq \left\| \sum_{i=1}^{q'} \lambda_i(x)\nabla f_i(x) - \lambda_i(x^*)\nabla f_i(x^*) \right\|_2 + \|\nabla R(x) -\nabla R(x^*)\|_2 + \|F_{(2q+1):D}(x)\|_2
    \end{align*}
    where, as $x\to x^*$, the first term vanishes by the convergence of $\lambda_{1:q'}(x)$ and the continuity of each $\nabla f_i(x)$, the second term converges to 0 by the continuity of $\nabla R(x)$ and the third term vanishes by~\eqref{eq:converge_residual}. 
    Therefore, we conclude that
    \begin{align*}
        \lim_{x\to x^*} F(x)= F(x^*),
    \end{align*}
    that is, $F$ is continuous.
    \end{proof}
    
    \begin{lemma}\label{lem:olm_gf_infinite_time}
    For any initialization $x^*\in \Gamma$, the Riemmanian Gradient Flow  \eqref{eq:olm_projected_gf} (or equivalently, \eqref{eq:limiting_gf_ambient}) is defined on $[0,\infty)$.
    \end{lemma}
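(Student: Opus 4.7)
The plan is to prove that the solution of $\dot x = -\tfrac{1}{4}F(x)$ starting from $x^*\in\Gamma$ extends to all $t\ge 0$ by combining three ingredients: (i) local existence and uniqueness on $\Gamma$, (ii) a uniform a priori bound $\|x(t)\|_2\le M$ coming from monotonicity of $R$, and (iii) a pointwise positive lower bound on $|u_j(t)|$ and $|v_j(t)|$ on any $[0,T]$ coming from an explicit closed-form ODE satisfied by the product $u_j(t)v_j(t)$. Together these confine the trajectory to a compact subset of $U$ on every $[0,T]$, and a standard extension argument for $\cC^1$ vector fields then closes the proof.

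For local existence, \Cref{eq:Phi_C2} gives $\Phi\in\cC^2(U)$, hence $F|_\Gamma=\partial\Phi\,\nabla R$ is $\cC^1$ on $\Gamma$; \Cref{lem:jacobian} shows $\partial\Phi(x)$ is the orthogonal projection onto $T_x\Gamma$, so $F$ is tangent to $\Gamma$ at every point of $\Gamma$, and Picard--Lindel\"of yields a unique maximal solution that stays on $\Gamma$. Along this solution,
\begin{equation*}
\tfrac{d}{dt}R(x(t))=-\tfrac{1}{4}\langle\nabla R(x),\partial\Phi(x)\nabla R(x)\rangle=-\tfrac{1}{4}\|\partial\Phi(x)\nabla R(x)\|_2^2\le 0,
\end{equation*}
so $R(x(t))\le R(x^*)$. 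Since $R(x)\ge \tfrac{4}{n}\bigl(\min_j\sum_{i}z_{i,j}^2\bigr)\|x\|_2^2$ and $\sum_{i}z_{i,j}^2>0$ for every $j$ almost surely, this forces $\|x(t)\|_2\le M$ for some constant $M=M(x^*,Z)$.

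The crux is ruling out a coordinate of $u(t)$ or $v(t)$ reaching $0$ in finite time. Using \Cref{lem:olm_F} to write $F(x)=\nabla R(x)+\sum_{i}\lambda_i(x)\nabla f_i(x)$, a direct computation gives
\begin{align*}
\dot u_j &=-\tfrac{1}{4}u_j\Bigl[\tfrac{8}{n}\sum\nolimits_{i}z_{i,j}^2+2\sum\nolimits_{i}\lambda_i(x)z_{i,j}\Bigr],\\
\dot v_j &=-\tfrac{1}{4}v_j\Bigl[\tfrac{8}{n}\sum\nolimits_{i}z_{i,j}^2-2\sum\nolimits_{i}\lambda_i(x)z_{i,j}\Bigr].
\end{align*}
The $\lambda_i(x)$ terms cancel in $\tfrac{d}{dt}(u_jv_j)=\dot u_jv_j+u_j\dot v_j$, producing the closed-form linear ODE
\begin{equation*}
\tfrac{d}{dt}\bigl(u_j(t)v_j(t)\bigr)=-c_j\,u_j(t)v_j(t),\qquad c_j:=\tfrac{4}{n}\sum\nolimits_{i}z_{i,j}^2>0,
\end{equation*}
so $u_j(t)v_j(t)=u_j(0)v_j(0)\,e^{-c_j t}$. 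Because $x^*\in U$ forces $u_j(0)v_j(0)\ne 0$, this product is nonzero at every finite $t$; combined with $|u_j(t)|,|v_j(t)|\le M$ this yields $\min(|u_j(t)|,|v_j(t)|)\ge |u_j(0)v_j(0)|e^{-c_jT}/M$ uniformly on $[0,T]$. The trajectory therefore lies in a compact set $K_T\subset U$ on $[0,T]$, and iterating local existence at $x(T)$ extends the solution to all of $[0,\infty)$.

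The main obstacle is the cancellation that produces the ODE for $u_jv_j$: without it the Lagrange multiplier $\lambda(x)$ could a priori blow up as a coordinate approaches $0$ (since $K(x)=4Z\diag(u^{\odot 2}+v^{\odot 2})Z^\top$ can become poorly conditioned near $\partial U$), and there would be no direct way to preclude a finite-time crossing of $\{u_j=0\}$ or $\{v_j=0\}$. Verifying the cancellation is a short explicit computation, but it is the step that converts the implicit Lagrangian description of $F$ into a usable pointwise bound and makes the whole argument go through.
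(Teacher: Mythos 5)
Your proposal is correct and follows essentially the same route as the paper: the decisive step in both is the cancellation of the Lagrange multipliers in $\tfrac{\diff}{\diff t}(u_jv_j)$, yielding $u_j(t)v_j(t)=u_j(0)v_j(0)e^{-c_jt}\neq 0$ and hence no finite-time escape to the boundary of $U$. The only difference is cosmetic framing — you confine the trajectory to a compact $K_T\subset U$ and extend, while the paper argues by contradiction on the maximal interval of existence.
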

    \begin{proof}[Proof of \Cref{lem:olm_gf_infinite_time}]
    Let $[0,T)$ be the right maximal interval of existence of the solution of Riemannian gradient glow and suppose $T\neq \infty$. Since $R(x(t))$ is monotone decreasing, thus $R(x(t))$ is upper bounded by $R(x(0))$ and therefore $\norm{\nabla R(x(t))}$ is also upper bounded. Since $\norm{\frac{d x(t)}{dt}}_2\le \norm{\nabla R(x(t))}_2$ for any $t<T$, the left limit $x(T-):=\lim_{\tau\to T^-}x(\tau)$ must exist. By Corollary 1, \citet{Perko2001DifferentialEA}, $x(T-)$ belongs to boundary of $U$, i.e., $u_j(T-)=0$ or $v_j(T-)=0$ for some $j\in[d]$ by \Cref{lem:olm_F_equal_0}.
    By the definition of the Riemannian gradient flow in \eqref{eq:olm_projected_gf}, we have 
    \begin{align*}
        \frac{\diff}{\diff t}(u_j(t)v_j(t)) &= \begin{pmatrix}
        v_j(t) e_j^\top & u_j(t) e_j^\top
        \end{pmatrix} \frac{\diff x(t)}{\diff t}\\
        &= - \frac{1}{4} \begin{pmatrix}
        v_j(t) e_j^\top & u_j(t) e_j^\top
        \end{pmatrix} F(x(t)).
    \end{align*}
    By the expression of $F(x(t))=\nabla R(x(t)) + \sum_{i=1}^n\lambda_i(x(t))\nabla f_i(x(t))$, we then have
    \begin{align*}
        \frac{\diff }{\diff t}(u_j(t)v_j(t)) &= - \left[\frac{2}{n} \sum_{i=1}^n z_{i,j}^2 + \frac{1}{2} \sum_{i=1}^n\lambda_i(x(t)) z_{i,j}\right]u_j(t)v_j(t) - \left[\frac{2}{n}\sum_{i=1}^nz_{i,j}^2 - \frac{1}{2} \sum_{i=1}^n\lambda_i(x(t)) z_{i,j}\right]u_j(t)v_j(t)\\
        &= -\left(\frac{4}{n} \sum_{i=1}^n z_{i,j}^2\right) u_j(t)v_j(t).
    \end{align*}
    Denote $s_j=\frac{4}{n}\sum_{i=1}^n z_{i,j}^2$. 
    It follows that $|u_j(t)v_j(t)| = |u_j(0)v_j(0)| e^{-s_j t}$ for all $t\in[0,T)$. Taking the limit we have $|u_j(T-)v_j(T-)|\ge |u_j(0)v_j(0)| e^{-s_j T}>0$. Contradiction with $T\neq \infty$!
    \end{proof}
    
    Before showing that $F$ satisfies the PL condition, we need the following two intermediate results.
    Given two points $u$ and $v$ in $\RR^d$, we say $u$ weakly dominate $v$ (written as $u\le v$) if and only if ${u_i}\le {v_i}$, for all $i\in[d]$. 
    Given two subsets $A$ and $B$ of $\RR^D$, we say $A$ weakly dominates $B$ if and only if for any point $v$ in $B$, there exists a point $u\in A$ such that $u\le v$.
    
    \begin{lemma}\label{lem:olm_aux_finite_min}
        For some $q\in[D]$, let $S$ be any $q$-dimensional subspace of $\RR^D$ and $P = \{u\in\RR^D \mid {u_i}\ge 0, \forall i \in [D]\}$. Let $u_\star$ be an arbitrary point in $P$  and $Q = P\cap(u_\star+S)$. Then there exists a radius $r>0$, such that $B^1_r(0)\cap Q$ is non-empty and weakly dominates $Q$, where $B^1_r(0)$ is the $\ell_1$-norm ball of radius $r$ centered at $0$.
        
        As a direct implication, for any continuous function $f:P\to \RR$, which is coordinate-wise non-decreasing, $\min_{x\in U} f(x) $ can always be achieved. 
    \end{lemma}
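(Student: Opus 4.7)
The plan is to realize $Q$ as a polyhedron in standard form and decompose it via the Minkowski--Weyl theorem, then exploit the fact that its recession cone is contained in the non-negative orthant so that extreme-ray components automatically witness weak domination.

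First, I would observe that $Q = \{u \in \RR^D : u \geq 0,\ u - u_\star \in S\}$ is a non-empty polyhedron (it contains $u_\star$) defined by finitely many linear equalities together with the $D$ coordinate inequalities $u_i \geq 0$. Its recession cone is $S \cap \RR_{\geq 0}^D$: a direction $s$ is a recession direction of $Q$ iff $u_\star + t s \in Q$ for all $t \geq 0$, which forces $s \in S$ and $s \geq 0$. Since the only vector with both $s \geq 0$ and $-s \geq 0$ is $s = 0$, $Q$ contains no line. The Minkowski--Weyl decomposition then gives $Q = \mathrm{conv}(V) + \mathrm{cone}(R)$ for a finite vertex set $V \subseteq Q$ and a finite set of extreme rays $R \subseteq S \cap \RR_{\geq 0}^D$.

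Next, for any $u \in Q$, write $u = u' + \sum_{\rho \in R} \mu_\rho\, \rho$ with $u' \in \mathrm{conv}(V)$ and coefficients $\mu_\rho \geq 0$. Each extreme ray $\rho$ is coordinate-wise non-negative, so $u' \leq u$ coordinate-wise, and $u' \in \mathrm{conv}(V) \subseteq Q$ because $Q$ is convex and every vertex lies in $Q$. Since $\mathrm{conv}(V)$ is the convex hull of a finite set, it is compact, so one can pick $r > 0$ with $\mathrm{conv}(V) \subseteq B^1_r(0)$. Then $B^1_r(0) \cap Q \supseteq \mathrm{conv}(V)$ is non-empty and weakly dominates $Q$.

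For the implication, since $f$ is coordinate-wise non-decreasing, $u' \leq u$ implies $f(u') \leq f(u)$, hence $\inf_{u \in Q} f(u) = \inf_{u \in B^1_r(0) \cap Q} f(u)$. The set $B^1_r(0) \cap Q$ is closed and bounded, therefore compact, and $f$ is continuous, so the infimum is attained. The only mildly subtle point in the whole argument is verifying that the recession cone of $Q$ sits inside the non-negative orthant so that extreme rays of the polyhedron automatically serve as weak-domination witnesses; beyond that the argument is a direct application of the Minkowski--Weyl structure theorem.
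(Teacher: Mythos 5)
Your proof is correct, and it takes a genuinely different route from the paper's. The paper proceeds by induction on the ambient dimension $D$: for a point $v\in Q$ it picks any direction $\omega\in S$, follows the ray $\{v-\lambda|\omega|\}$ (in one or both directions depending on the sign pattern of $\omega$) until it hits a coordinate hyperplane, invokes the inductive hypothesis there, and combines the resulting dominating points by convexity. You instead realize $Q$ as a pointed polyhedron --- its recession cone $S\cap\RR^D_{\geq 0}$ contains no line --- and apply the Minkowski--Weyl decomposition $Q=\mathrm{conv}(V)+\mathrm{cone}(R)$. Because every generator of the recession cone lies in the non-negative orthant, the ``vertex part'' $u'\in\mathrm{conv}(V)$ of any $u\in Q$ automatically satisfies $u'\leq u$, and $\mathrm{conv}(V)$ is compact, giving the radius $r$ at once. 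Your argument is shorter, avoids the directional case split, and exposes the precise structural reason the lemma holds (pointedness of $Q$ plus non-negativity of the recession cone). The one thing the paper's inductive technique buys is that the same scaffolding is reused, with more work, to prove the companion \Cref{lem:olm_aux_rescale}; your Minkowski--Weyl route would need a separate quantitative argument there, so the paper's choice may have been driven by uniformity with the next lemma rather than by necessity here.
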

    
    \begin{proof}[Proof of \Cref{lem:olm_aux_finite_min}]
        We will prove by induction on the environment dimension $D$. 
        For the base case of $D=1$, either $S=\{0\}$ or $S=\RR$, and it is straight-forward to verify the desired for both scenarios.
        
        Suppose the proposition holds for $D-1$, below we show it holds for $D$. 
        For each $i\in[D]$, we apply the proposition with $D-1$ to $Q\cap\{u\in P\mid {u_i}=0\}$ (which can be seen as a subset of $\RR^{D-1}$), and let $r_i$ be the corresponding $\ell_1$ radius.
        Set $r=\max_{i\in [D]} r_i$, and we show that choosing the radius to be $r$ suffices.
        
        For any $v\in Q$, we take a random direction in $S$, denoted by $\omega$. 
        If $\omega\ge 0$ or $\omega\le 0$, we denote by $y$ the first intersection (i.e., choosing the smallest $\lambda$) between the line $\{v-\lambda |\omega|\}_{\lambda\ge 0}$ and the boundary of $U$, i.e., $\cup_{i=1}^D\{z \in \RR^D \mid z_i=0\}$. 
        Clearly $y\le v$. 
        By the induction hypothesis, there exists a $u\in B_r^1(0)\cap Q$ such that $u\le y$. Thus $u\le v$ and meets our requirement.
        
        If $\omega$ has different signs across its coordinates, we take $y_1,y_2$ to be the first intersections of the line $\{v-\lambda |\omega|\}_{\lambda\in\RR}$ and the boundary of $U$ in directions of $\lambda>0$ and $\lambda<0$, respectively. 
        Again by the induction hypothesis, there exist $u_1,u_2\in B_r^1(0)\cap Q$ such that $u_1\le y_1$ and $u_2\le y_2$. 
        Since $v$ lies in the line connecting $u_1$ and $u_2$, there exists some $h\in[0,1]$ such that $v = (1-h)u_1 + h u_2$.
        It then follows that $(1-h)u_1+hu_2 \leq (1-h)y_1+hy_2=v$. 
        Now since $Q$ is convex, we have $(1-h)u_1+hu_2\in Q$, and by the triangle inequality it also holds that $\norm{(1-h)u_1+hu_2}_1 \le r$, so $(1-h)u_1+hu_2\in B_r^1(0)\cap Q$. 
        Therefore, we conclude that $B^1_r(0)\cap Q$ weakly dominates $Q$, and thus the proposition holds for $D$. 
        This completes the proof by induction.
    \end{proof}
    
    \begin{lemma}\label{lem:olm_aux_rescale}
        For some $q\in[D]$, let $S$ be any $q$-dimensional subspace of $\RR^D$ and $P = \{u\in\RR^D \mid {u_i}\ge 0, \forall i\in[D]\}$. 
        Let $u_\star$ be an arbitrary point in $P$ and $Q = P\cap(u_\star+S)$. 
        Then there exists a constant $c\in(0,1]$ such that for any sufficiently small radius $r>0$, $c\cdot Q$ weakly dominates $P\cap(u_\star+S+B^2_r(0))$, where $B^2_r(0)$ is the $\ell_2$-norm ball of radius $r$ centered at $0$. 
    \end{lemma}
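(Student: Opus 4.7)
The plan is to reduce to Lemma \ref{lem:olm_aux_finite_min} via Hoffman's Lemma for polyhedra, together with a face-by-face analysis of $v'$. For each subset $I \subseteq [D]$, write $Q_I := Q \cap \{u : u_j = 0 \text{ for } j \notin I\}$ and $\mathcal{F} := \{I : Q_I \ne \emptyset\}$, which is a finite family of faces. The constant $c$ will be defined by assembling contributions from each $I \in \mathcal{F}$.

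First, Hoffman's Lemma applied to the polyhedron $Q = P \cap (u_\star + S)$ supplies a constant $C_H > 0$ such that every $v' \in P \cap (u_\star + S + B^2_r(0))$ admits a nearby point $\hat u \in Q$ with $\|\hat u - v'\|_\infty \le C_H r$; Lemma \ref{lem:olm_aux_finite_min} then yields $u^\star \in B^1_{r_0}(0) \cap Q$ with $u^\star \le \hat u$, so $u^\star_i \le v'_i + C_H r$ for all $i$. Fix a threshold $\delta > C_H$ and set $I := \{i : v'_i > \delta r\}$. A compactness argument on $B^1_{r_0}(0) \cap Q$ together with the finiteness of $\mathcal{F}$ forces $I \in \mathcal{F}$ for all sufficiently small $r$: if $Q_I = \emptyset$, then $\inf_{u \in B^1_{r_0}(0) \cap Q}\max_{j \notin I} u_j$ is a strictly positive constant $\epsilon_0(I)$, which eventually exceeds the bound $u^\star_j \le (\delta + C_H) r$ and produces a contradiction.

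When $I = [D]$, every coordinate of $v'$ exceeds $\delta r$, and the choice $u := c\, u^\star$ with $c \le \delta / (\delta + C_H)$ gives $u \in cQ$ and $c u^\star_i \le c v'_i + c C_H r \le v'_i$ for every $i$. When $I \subsetneq [D]$, the face $Q_I$ restricted to coordinates in $I$ is itself a polyhedron of the original form $P_I \cap (u_\star^I + T_I)$ in $\RR^I$, where $u_\star^I \in \RR^I_{\ge 0}$ and $T_I \subseteq \RR^I$ arise from the restriction of $S$. A Hoffman estimate for the subspace $T := \{s \in S : s_j = 0 \text{ for } j \notin I\}$ inside $S$ shows that $v'|_I$ lies within $\ell_2$-distance $O(r)$ of $u_\star^I + T_I$. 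An induction on $D$ (with the base $D=1$ handled by direct case analysis on $S=\{0\}$ versus $S=\RR$) then provides a constant $c_I > 0$ and a dominator $u^I \in c_I \cdot Q_I|_I$ with $u^I \le v'|_I$; zero-padding the coordinates outside $I$ lifts $u^I$ to an element of $c_I \cdot Q$ that dominates $v'$ component-wise.

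Setting $c := \min\!\bigl(\delta / (\delta + C_H),\; \min_{I \in \mathcal{F},\, I \subsetneq [D]} c_I\bigr) > 0$ assembles the cases. The main obstacle, which I expect to require the most care, is the second case: the lifted dominator naturally lives in $c_I \cdot Q$ rather than $c \cdot Q$. Bridging this gap requires translating the dominator by $(c_I - c)\, u_\star$ and compensating with a vector in the recession cone $S \cap P$ of $Q$, producing an element of $cQ$ that is still bounded above by $v'$. Verifying the existence of such a compensator with controlled magnitude --- particularly at coordinates $i \in I$ where $v'_i$ may be only marginally larger than $\delta r$ --- is the delicate step, and I would address it using the bounded-$\ell_1$ dominators within each face of $cQ$ supplied by Lemma \ref{lem:olm_aux_finite_min}.
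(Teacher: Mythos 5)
Your strategy is sound and genuinely different from the paper's. The paper also inducts on the ambient dimension $D$ and reduces to faces $H_i=\{u_i=0\}$ of the orthant, but its engine is geometric: from a perturbed point $v+z$ it shoots a ray along a direction $\omega\in S$ until it hits some $H_i$ (using two intersections and convexity of $Q$ when $\omega$ has mixed signs), shows the hitting point lies in a face-restricted set of the same form with radius $\alpha_i r$, and invokes the inductive hypothesis there. You instead classify $v'$ by which coordinates exceed $\delta r$, use Hoffman's error bound (for the polyhedron $Q=P\cap(u_\star+S)$, and again for the affine slice defining the face) to control distances, and use \Cref{lem:olm_aux_finite_min} to force the resulting index set $I$ to correspond to a nonempty face. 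What your route buys is that the ``all coordinates large'' case needs no induction at all and the quantitative dependence on $r$ is explicit; what it costs is reliance on Hoffman-type error bounds, which the paper's elementary ray argument avoids. Both are valid.

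One correction: the step you single out as the delicate unresolved one --- passing from a dominator in $c_I\cdot Q$ to one in $c\cdot Q$ with $c=\min_I c_I$ --- is not delicate at all, and no translation by $(c_I-c)u_\star$ or recession-cone compensation is needed. Weak domination is transitive, and $c\cdot Q$ weakly dominates $c_I\cdot Q$ for any $0<c\le c_I$ simply because every $w\in Q\subseteq P$ has nonnegative coordinates, so $cw\le c_I w$ coordinatewise and $cw\in cQ$. Hence if $c_I w\le v'$ then $cw\le v'$, and your assembly $c:=\min\bigl(\delta/(\delta+C_H),\min_{I}c_I\bigr)$ goes through immediately. With that observation your argument is complete modulo routine verification of the two Hoffman estimates (both polyhedra involved are nonempty, so the bounds apply with constants depending only on $S$ and $I$, not on $r$).
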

    
    \begin{proof}[Proof of \Cref{lem:olm_aux_rescale}]
        We will prove by induction on the environment dimension $D$. 
        For the base case of $D=1$, either $S=\{0\}$ or $S=\RR$. $S=\RR$ is straight-forward; for the case $S=\{0\}$, we just need to ensure $c|u_\star|\le |u_\star|-r$, and it suffices to pick $r=|u_\star|$ and $c=0.5$.
        
        Suppose the proposition holds for $D-1$, below we show it holds for $D$.  
        For each $i\in[D]$, we first consider the intersection between $P\cap(u_\star+S+B^2_r(0))$ and $H_i:=\{u\in \RR^D\mid u_i=0\}$. 
        Let $u_i$ be an arbitrary point in  $P\cap(u_\star+S)\cap H_i$, then $P\cap(u_\star+S)\cap H_i= P\cap(u_i+S)\cap H_i= P\cap(u_i+S\cap H_i)$. 
        Furthermore, there exists $\{\alpha_i\}_{i\in[D]}$ which only depends on $S$ and satisfies $P\cap(u^*+S+B^2_r(0))\cap H_i \subset P\cap(u_i+S\cap H_i+ B^2_{\alpha_i r}(0)\cap H_i)$. 
        Applying the induction hypothesis to $P\cap(u_i+S\cap H_i+ B^2_{\alpha_ir}(0)\cap H_i)$, we know there exists a $c>0$ such that for sufficiently small $r$, $c(P\cap(u_\star+S)\cap H_i) = c(P\cap(u_i+S\cap H_i))$ weakly dominates $P\cap(u_i+S\cap H_i+ B^2_{\alpha_ir}(0)\cap H_i)$.
        
        For any point $v$ in $Q$ and any $z\in B^2_r(0)$, we take a random direction in $S$, denoted by $\omega$. 
        If $\omega\ge 0$ or $\omega\le 0$, we denote by $y$ the first intersection between $\{v+z-\lambda |\omega|\}_{\lambda\ge 0}$ and the boundary of $U$. 
        Clearly $y\le v$. 
        Since $y\in P\cap(u_\star+S+B^2_r(0))\cap H_i \subset P\cap(u_i+S\cap H_i+ B^2_{\alpha_ir}(0)\cap H_i)$, by the induction hypothesis, there exists a $u\in c(P\cap(u_\star+S)\cap H_i)$ such that $u\le y$.
        Thus $z\le v+z$ and $z\in c(P\cap(u_\star+S))=c\cdot Q$.
        
        If $\omega$ has different signs across its coordinates, we take $y_1,y_2$ to be the first intersections of the line $\{v+z-\lambda |\omega|\}_{\lambda\in\RR}$ and  the boundary of $U$ in directions of $\lambda>0$ and $\lambda<0$, respectively. 
        By the induction hypothesis, there exist $u_1,u_2\in c\cdot Q$ such that $u_1\le y_1$ and $u_2\le y_2$. 
        Since $v+z$ lies in the line connecting $u_1$ and $u_2$, there exists some $h\in[0,1]$ such that  $v+z = (1-h)y_1+hy_2$. 
        It then follows that $(1-h)u_1+hu_2 \le (1-h)y_1+hy_2=v+z$. Since $Q$ is convex, we have $(1-h)u_1+hu_2\in cQ$.  
        Therefore, we conclude that $cQ\cap Q$ weakly dominates $P\cap(u_\star+S+B^2_r(0))$ for all sufficiently small $r$, and thus the proposition holds for $D$. This completes the proof by induction.
    \end{proof}

    \begin{lemma}(Polyak-Łojasiewicz condition for $F$.)\label{lem:olm_PL_condition}
    For any $x^*$ such that $L(x^*)=0$, i.e., $x^*\in \overline{\Gamma}$, there exist a neighbourhood $U'$ of $x^*$ and a constant $c>0$, such that $\norm{F(x)}_2^2\ge c\cdot\max(R(x)-R(x^*),0)$ for all $x \in U'\cap \overline{\Gamma}$. Note this requirement is only non-trivial when $\norm{F(x^*)}_2= 0$ since $F$ is continuous.
    \end{lemma}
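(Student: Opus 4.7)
The plan is to split into two cases according to whether $F(x^*)=0$. The easy case $F(x^*)\neq 0$ is immediate from \Cref{lem:F_continuous}: on a small enough neighborhood $U'$ of $x^*$ we have $\|F(x)\|_2\geq\tfrac{1}{2}\|F(x^*)\|_2>0$, while $R(x)-R(x^*)$ is bounded by continuity, so any sufficiently small $c$ works. The remark in the statement makes clear this is the trivial situation.

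The substantive case is $F(x^*)=0$. By \Cref{lem:olm_F_equal_0} we have $u_j^*v_j^*=0$ for every $j$, so I partition $[d]=I_u\sqcup I_v\sqcup I_0$ with $I_u=\{j:u_j^*\neq 0\}$, $I_v=\{j:v_j^*\neq 0\}$, and $I_0=\{j:u_j^*=v_j^*=0\}$. I introduce local coordinates that split naturally into \emph{active} perturbations $\Delta u_j=u_j-u_j^*$ ($j\in I_u$), $\Delta v_j=v_j-v_j^*$ ($j\in I_v$) and \emph{inactive} small coordinates $v_j$ ($j\in I_u$), $u_j$ ($j\in I_v$), and both $u_j,v_j$ ($j\in I_0$). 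A direct Taylor expansion gives
\begin{align*}
  R(x)-R(x^*) = \tfrac{4}{n}\sum_{j\in I_u} s_j\bigl(2u_j^*\Delta u_j+\Delta u_j^2+v_j^2\bigr) + \tfrac{4}{n}\sum_{j\in I_v} s_j\bigl(2v_j^*\Delta v_j+\Delta v_j^2+u_j^2\bigr) + \tfrac{4}{n}\sum_{j\in I_0} s_j\bigl(u_j^2+v_j^2\bigr),
\end{align*}
while the feasibility constraint $Z(u^{\odot 2}-v^{\odot 2})=Zw^*$ is linear in the active coordinates (through the $2u_j^*\Delta u_j,-2v_j^*\Delta v_j$ terms) and quadratic in the inactive ones. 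I would solve the constraint for the active perturbations via (a possibly block-wise application of) the implicit function theorem, using that the Jacobian of the constraint with respect to the active coordinates is $Z\,\mathrm{diag}(2u^*_{I_u},-2v^*_{I_v})$, whose effective rank is controlled by the NTK structure at $x^*$. After substitution $R(x)-R(x^*)$ becomes a homogeneous quadratic form $\mathcal{Q}(\eta)$ in the inactive coordinate vector $\eta$ (up to cubic remainders), and the same substitution expresses $F(x)$ to leading order as a linear map $M\eta$, so the claim reduces to the finite-dimensional inequality $\|M\eta\|_2^2\geq c\,\mathcal{Q}(\eta)^+$ on the effective tangent cone of $\overline{\Gamma}$ at $x^*$.

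The main obstacle I expect is verifying this reduced inequality in the presence of two complications. First, when $I_0\neq\emptyset$ the tangent cone at $x^*$ is a genuine cone, not a linear subspace, because $(u_j,v_j)$ for $j\in I_0$ enter the constraint only through $u_j^2-v_j^2$; this should be handled by case analysis on the sign patterns inside $I_0$. Second, the quadratic form $\mathcal{Q}$ need not be PSD: a direct computation shows that the Hessian of the Lagrangian at $x^*$ is diagonal with entries $\tfrac{16s_j}{n}$ in the inactive direction for $j\in I_u\cup I_v$ and $\tfrac{8s_j}{n}\pm 2(Z^\top\lambda)_j$ in the $I_0$ block, and the latter can be negative exactly when $x^*$ violates the subgradient condition for the associated weighted $\ell_1$ program. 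These negative eigenvalues are nevertheless harmless: they live on directions where $\mathcal{Q}(\eta)<0$, on which the $(\cdot)^+$ kills the right-hand side, while on the complementary positive directions the ratio $\|M\eta\|_2^2/\mathcal{Q}(\eta)$ is bounded below by (twice) the smallest positive eigenvalue of the reduced Hessian. Absorbing cubic-and-higher Taylor remainders by shrinking $U'$ further then yields the required constant $c$.
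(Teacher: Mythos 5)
Your proposal takes a genuinely different route from the paper's. The paper never Taylor-expands $F$ or $R$; it keeps the variational form $\|F'(w)\|_2^2 = \min_\lambda \langle(\ldots)^{\odot 2},|w|\rangle$ and then argues directly on the dual variable $\lambda$ via an explicit case split (Case I.1/I.2 on $\|\Delta\lambda_a\|$ versus $\sqrt{\delta}$), with the non-compactness of the cone handled by the weak-domination \Cref{lem:olm_aux_finite_min,lem:olm_aux_rescale}, and the non-canonical case handled by the explicit superadditivity decomposition $\|F(x)\|_2^2 \geq \|F'(w)\|_2^2 + \min_\lambda\|\ldots\odot\sqrt{m}\|_2^2$. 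Yours is a second-order constrained-optimization analysis: linearize the constraint via IFT, reduce to a quadratic form $\mathcal{Q}$ versus $\|M\eta\|_2^2$ on a tangent cone, and compare eigenvalues. Your identification of the trivial case $F(x^*)\neq 0$ and your computation of the diagonal Lagrangian Hessian (with the $16s_j/n$ entries on $I_u\cup I_v$ and the possibly-negative $8s_j/n \pm 2(Z^\top\lambda)_j$ entries on $I_0$) are both correct and do match the structure implicit in the paper.

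However the final reduction is where the proof has a real gap, and it is exactly the step the paper devotes its auxiliary lemmas to. You assert that the negative eigendirections of $\mathcal{Q}$ are "harmless" because the $(\cdot)^+$ kills them, and that on the complementary positive directions the ratio $\|M\eta\|_2^2/\mathcal{Q}(\eta)$ is bounded below by the smallest positive eigenvalue of the reduced Hessian. This does not follow. Since $\mathcal{Q}$ is indefinite, the level set $\{\eta\in C:\mathcal{Q}(\eta)=1\}$ is unbounded, so the infimum of the ratio is not a simple eigenvalue: you must rule out, e.g., a sequence $\eta_k$ in the cone along which $\mathcal{Q}(\eta_k)=1$ stays fixed while $\|M\eta_k\|_2\to 0$, which can happen whenever $M$ has a nontrivial kernel meeting the cone, or even just asymptotically, along a "mixed" direction combining a positive $\mathcal{Q}$-eigendirection with a negative one. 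Your claim also silently assumes the kernel of $M$ and the negative cone of $\mathcal{Q}$ are compatible, which is never verified. The paper's Case I.2 together with \Cref{lem:olm_aux_finite_min,lem:olm_aux_rescale} is a substitute for exactly this: it shows the infimum $\inf_{w_b,u}\langle u^{\odot 2},w_b\rangle$ over the relevant unbounded affine-plus-cone set is attained and strictly positive by a weak-domination argument, which is not an eigenvalue fact.

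Two further issues. First, your Taylor expansion $F(x) \approx M\eta$ requires $F$ to be differentiable at $x^*$, but the paper only establishes continuity (\Cref{lem:F_continuous}), and at points with $u_j^*=v_j^*=0$ the minimizing $\lambda$ can be non-unique, so differentiability of $F$ is genuinely in doubt; the paper sidesteps this by working with the $\min_\lambda$ inside the norm rather than with $F$ itself. Second, the IFT step is undersold: the Jacobian $Z\,\mathrm{diag}(2u^*_{I_u},-2v^*_{I_v})$ has rank only $q'\leq n$ in general, leaving $n-q'$ residual constraints that are purely quadratic in the inactive variables (this is the paper's $Z_D$ block), so the "tangent cone" is cut out by quadratic equations and is not a standard polyhedral critical cone — your sign-pattern case split addresses the $I_0$ source of non-polyhedrality but not the $Z_D$ one.
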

   \begin{proof}[Proof of \Cref{lem:olm_PL_condition}]
    It suffices to show the PL condition for $\{x \mid F(x) =0\}$. 
    We need to show for any $x^*$ satisfying $F(x^*)=0$, there exist some $\eps>0$ and $C>0$, such that for all $x\in\overline{\Gamma} \cap B_\epsilon^2(x^*)$ with $R(x)>R(x^*)$, it holds that $\norm{F(x)}^2_2 \ge C(R(x)-R(x^*))$.  
    
    \paragraph{Canonical Case.} We first prove the case where $x=\binom{u}{v}$ itself is a canonical parametrization of $w=u^{\odot2}-v^{\odot 2}$, i.e., ${u_j}{v_j}=0$ for all $j\in[d]$. 
    Since $x^*$ satisfies $\nabla F(x^*)=0$, by \Cref{lem:olm_F_equal_0}, we have $x^* = \psi(w^*)$ where $w^* = (u^*)^{\odot 2} - (v^*)^{\odot 2}$. 
    In this case, we can rewrite both $R$ and $F$ as functions of $w\in\RR^d$. 
    In detail, we define $R'(w) = R(\psi(w))$ and $F'(w) = F(\psi(w))$ for all $w\in\RR^d$. 
    For any $w$ in a sufficiently small neighbourhood of $w^*$, it holds that $\sign(w_j)=\sign(w^*_j)$ for all $j\in[q]$. 
    Below we show that for each possible sign pattern of $w_{(q+1):d}$, there exists some constant $C$ which admits the PL condition in the corresponding orthant.
    Then we take the minimum of all $C$ from different orthant and the proof is completed. W.L.O.G., we assume that $w_j\ge0$, for all $j=q+1\ldots,d$.
    
    We temporarily reorder the coordinates as $x = (u_1,v_1,u_2,v_2,\ldots,u_d,v_d)^\top$. 
    Recall that $Z=[z_1,\ldots,z_n]^\top$ is a $n$-by-$d$ matrix, and we have 
    \begin{align*}
        \norm{F'(w)}_2^2= \min_{\lambda\in\RR^n}\inner{(a-\sign(w)\odot Z^\top\lambda)^{\odot 2}}{|w|},
    \end{align*}
    where $a=\frac{8}{n}\sum_{i=1}^n z_i^{\odot 2}\in\RR^d$. 
    Since $F(x^*)=0$, there must exist $\lambda^* \in \RR^n$, such that the first $2q$ coordinates of $\nabla R(x^*) +\sum_{i=1}^n \lambda_i^*\nabla f_i(x^*)$ are equal to $0$. 
    As argued in the proof of \Cref{lem:F_continuous}, we can assume the first $q'$  rows of $Z$ are linear independent on the first $q$ coordinates for some $q'\in[q]$. 
    In other words, $Z$ can be written as $\begin{bmatrix}
    Z_A & Z_B \\ 0 & Z_D
    \end{bmatrix}$ where $Z_A\in \RR^{q'\times q}$. 
    We further denote $\lambda_a := \lambda_{1:q'}$, $\lambda_b := \lambda_{(q'+1):n}$, $a_a:= a_{1:q}$ and $a_b:=a_{(q+1):d}$ , $w_a:= w_{1:q}$ and $w_b:=w_{(q+1):d}$ for convenience, then we have 
    \begin{align}\label{eq:olm_PL_decomposition}
        \norm{F'(w)}_2^2 = \min_{\lambda\in\RR^n}\inner{(a_a+\sign(w_a)\odot Z_A^\top\lambda_a)^{\odot 2}}{|w_a|} + \inner{(a_b+Z_B^\top \lambda_a +Z_D^\top \lambda_b)^{\odot 2}}{w_b}.	
    \end{align}
    
    Since every $w$ in $\overline{\Gamma}$ is a global minimizer, $R'(w)= R'(w)+\sum_{i=1}^n\lambda_i^* (z_i^\top w-y_i):= g^\top w + R'(w_*)$, where $g= \sign(w)\odot a+Z^\top\lambda^*$. 
    Similarly we define $g_a := g_{1:q}$ and $g_b:=g_{(q+1):d}$. 
    It holds that $g_a = 0$ and we assume $Z_Dg_b=0$ without loss of generality, because this can always be done by picking suitable $\lambda_i^*$ for $i=q'+1,\ldots,n$. (We have such freedom on $\lambda_{q'+1:n}^*$ because they doesn't affect the first $2q$ coordinates.)
    
    We denote $\lambda_a-\lambda^*_a$ by $\Delta\lambda_a$, then since $0 = g_a = \sign(w_a)\odot  Z_A^\top\lambda^*_a + a_a $, we further have
    \begin{align*}
        \inner{(a_a+\sign(w_a)\odot Z_A^\top\lambda_a)^{\odot 2}}{|w_a|} &= \inner{(a_a+ \sign(w_a)\odot Z_A^\top \lambda^*_a + \sign(w_a)\odot Z_A^\top\Delta\lambda_a)^{\odot 2}}{|w_a|}\\
        &= \inner{(\sign(w_a)\odot Z_A^\top\Delta\lambda_a)^{\odot 2}}{|w_a|}.
    \end{align*}
    On the other hand, we have $g_b = \sign(w_b)\odot a_b + Z_B^\top\lambda^*_a + Z_D^\top\lambda^*_b = a_b + Z_B^\top \lambda^*_a + Z_D^\top \lambda^*_b$ by the assumption that each coordinate of $w_b$ is non-negative.
    Combining this with the above identity, we can rewrite \Cref{eq:olm_PL_decomposition} as:
    \begin{align}\label{eq:olm_PL_decomposition2}
         \norm{F'(w)}_2^2 = \min_{\lambda\in\RR^D}\inner{(Z_A^\top\Delta\lambda_a)^{\odot 2}}{|w_a|} + \inner{(g_b + Z_B^\top \Delta\lambda_a +Z_D^\top \lambda_b)^{\odot 2}}{w_b}.
    \end{align}
    
    Now suppose $R'(w)-R'(w^*)= g_b^\top w_b = \delta$ for some sufficiently small $\delta$ (which can be controlled by $\eps$). 
    We will proceed in the following two cases separately.  
    \begin{itemize}
        \item {\bf Case I.1}: $\norm{\Delta\lambda_a}_2 = \Omega(\sqrt{\delta})$. 
        Since $Z_A$ has full row rank, $\norm{(Z_A^\top\Delta\lambda_a)^{\odot 2}}_1 = \norm{(Z_A^\top\Delta\lambda_a)}_2^2 \ge \norm{\Delta \lambda_a}_2^2 \lambda_{\min}^2(Z_A)$ is lower-bounded. On the other hand, we can choose $\eps$ small enough such that $\forall i\in[q] |(w_a)_i^2|\ge \frac{1}{2}(w^*_a)_i^2$. Thus the first term of \Cref{eq:olm_PL_decomposition2} is lower bounded by $  \norm{\Delta \lambda_a}_2^2 \lambda_{\min}^2(Z_A)\cdot \min_{i\in [q]} \frac{1}{2}(w^*_a)_i^2 =\Omega(\delta) = \Omega(R'(w)-R'(w^*))$.
        
        \item {\bf Case I.2}: $\norm{\Delta\lambda_a}_2 = O(\sqrt{\delta})$. 
        Let $u = g_b + Z_B^\top \Delta\lambda_a +Z_D^\top \lambda_b$, then we have $u\in S + B_{c\sqrt{\delta}}^2(0)$ for some constant $c>0$, where $S = \{g_b + Z_D^\top \lambda_b\mid \lambda_b \in \RR^{n-q'}\}$. 
        By \Cref{lem:olm_aux_finite_min}, there exists some constant $c_0\ge 1$, such that $\frac{1}{c_0}\cdot S$ weakly dominates $S+B_{c\sqrt{\delta}}^2(0)$. 
        Thus we have $\norm{F'(w)}_2^2\ge  \inf_{u\in S+B_{c\sqrt{\delta}}(0)} \inner{u^{\odot 2}}{w_b} \ge \inf_{u\in \frac{1}{c_0}\cdot S} \inner{s^{\odot 2}}{w_b}$, where the last step is because  each coordinate of $w_b$ is non-negative.
    
        Let $A$ be the orthogonal complement of $\mathrm{span}(Z_D,g_b)$, i.e., the spanned space of columns of $Z_D$ and $g_b$, we know $w_b\in \frac{\delta }{\norm{g_b}_2^2}g_b+ A$, since $Z_Dw_b = Z_Dw_*^2=0$ and $g_b^\top w_b=\delta$. 
        Therefore,
        \begin{align}
            \inf_{w:R'(w)-R'(w^*)=\delta>0} \frac{\norm{F'(w)}_2^2}{R'(w)-R'(w^*)} 
            \ge & \inf_{w_b: R'(w)-R'(w^*)=\delta>0} \inf_{u\in\frac{1}{c_0}\cdot S} \inner{u^{\odot 2}}{\frac{w_b}{\delta}} \notag\\
            \ge &\frac{1}{c_0^2}\inf_{w_b\in\frac{\delta}{\norm{g_b}_2^2}g_b+A, w_b\ge 0, u\in S} \inner{u^{\odot 2}}{w_b}.\label{eq:caseI2}
        \end{align}
        Note $\inner{u^{\odot 2}}{w_b}$ is a monotone non-decreasing function in the first joint orthant, i.e., $\{(u,w_b)\in\RR^d \times \RR^{d-q'}\mid u\ge 0, w_b\ge 0\}$, thus by \Cref{lem:olm_aux_rescale} the infinimum can be achieved by some finite $(u,w_b)$ in the joint first orthant. 
        Applying the same argument to each other orthant of $u\in\RR^d$, we conclude that the right-hand-side of \eqref{eq:caseI2} can be achieved. 
    
        On the other hand, we have $u^\top w_b = \delta >0$ for all $w_b \in \frac{\delta}{\norm{g_b}_2^2}g_b +A$ and $u\in S$, by $Z_Dg_b =0$ and the definition of $A$. 
        This implies there exists at least one $i\in [d-q']$ such that $w_{2,i}{u_i}>0$, which further implies $\inner{u^{\odot 2}}{w_b}>0$.  
         Therefore, we conclude that $\norm{F'(w)}_2^2=\Omega(R'(w)-R'(w_0))$.
    \end{itemize}
    
    \paragraph{General Case.}
    Next, for any general $x=\binom{u}{v}$, we define $w= u^{\odot 2}-v^{\odot 2}$ and $m = \min\{u^{\odot 2}, v^{\odot 2}\}$, where $\min$ is taken coordinate-wise. 
    Then we can rewrite $\|F(x)\|_2^2$ as
    \begin{align*}
        \norm{F(x)}_2^2 &= \min_{\lambda\in\RR^n}\norm{\left(\begin{bmatrix} a \\a \end{bmatrix} + \begin{bmatrix} Z \\-Z \end{bmatrix}\lambda \right)\odot  \begin{bmatrix} u\\ v \end{bmatrix}}_2^2\\
        &= \min_{\lambda\in\RR^n}\norm{\left(\begin{bmatrix} a \\a \end{bmatrix} + \begin{bmatrix} Z \\-Z \end{bmatrix}\lambda \right) ^{\odot 2}\odot  \begin{bmatrix} u^{\odot 2}\\ v^{\odot 2} \end{bmatrix}}_1\\
        &= \min_{\lambda\in\RR^n}\norm{\left(\begin{bmatrix} a \\a \end{bmatrix} + \begin{bmatrix} Z \\-Z \end{bmatrix}\lambda \right) ^{\odot 2}\odot \left(\psi(w)^{\odot 2}+ \begin{bmatrix} m\\ m \end{bmatrix}\right)}_1\\
        &\ge \min_{\lambda\in\RR^n}\norm{\left(\begin{bmatrix} a \\a \end{bmatrix} + \begin{bmatrix} Z \\-Z \end{bmatrix}\lambda \right)^{\odot 2}\odot  \psi(w)^{\odot 2}}_1 + \min_{\lambda\in\RR^n}\norm{\left(\begin{bmatrix} a \\a \end{bmatrix} + \begin{bmatrix} Z \\-Z \end{bmatrix}\lambda \right)^{\odot 2} \odot \begin{bmatrix} m\\ m \end{bmatrix}}_1\\
        &= \min_{\lambda\in\RR^n}\norm{\left(\begin{bmatrix} a \\a \end{bmatrix} + \begin{bmatrix} Z \\-Z \end{bmatrix}\lambda \right)\odot  \psi(w)}_2^2 
        + \min_{\lambda\in\RR^n}\norm{\left(\begin{bmatrix} a \\a \end{bmatrix} + \begin{bmatrix} Z \\-Z \end{bmatrix}\lambda \right)\odot \begin{bmatrix} \sqrt{m}\\ \sqrt{m} \end{bmatrix}}_2^2.
    \end{align*}
    Then applying the result for the previous case yields the following for some constant $C\in(0,1)$:
    \begin{align*}
        \norm{F(x)}_2^2 &\ge C(R(\psi(w))-R(\psi(w^*)) + \min_{\lambda\in\RR^n}\norm{\left(\begin{bmatrix} a \\a \end{bmatrix} + \begin{bmatrix} Z \\-Z \end{bmatrix}\lambda \right)\odot \begin{bmatrix} \sqrt{m}\\ \sqrt{m} \end{bmatrix}}_2^2\\	
        &= C (R(\psi(w))-R(x^*) + 2 \inner{a^{\odot 2}}{m}\\
        &\geq C (R(\psi(w))-R(x^*) + 2 \min_{i\in[d]} a_i \inner{a}{m}\\
        &= C (R(\psi(w))-R(x^*) +  \min_{i\in[d]}a_i (R(x) - R(\psi(w)))\\
        &\geq \min\left\{C, \min_{i\in[d]}a_i \right\} (R(x) - R(x^*)),
    \end{align*}
    where the first equality follows from the fact that $x^*=\psi(w^*)$ and the last inequality is due to the fact that both $R(\psi(w)-R(\psi(w^*))$ and $R(x)-R(\psi(w))$ are non-negative. This completes the proof.
    \end{proof}

    Now, based on the PL condition, we can show that \eqref{eq:olm_projected_gf} indeed converges.
    \begin{lemma}\label{lem:olm_finite_length}
    The trajectory of the flow defined in \eqref{eq:olm_projected_gf} has finite length, i.e., $\int_{t=0}^\infty \|\frac{\diff x}{\diff t}\|_2 \diff t<\infty$ for any $x^*\in \Gamma$. 
    Moreover, $x(t)$ converges to some $x(\infty)$ when $t\to\infty$ with $F(x(\infty))=0$.
    \end{lemma}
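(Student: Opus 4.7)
The plan is to use the Polyak-Łojasiewicz bound from \Cref{lem:olm_PL_condition} to convert a decreasing loss value into a finite length bound for the trajectory. First I would verify that $R$ is a Lyapunov function: combining the orthogonality $\langle F(x),\nabla f_i(x)\rangle=0$ of \Cref{lem:olm_F} with the decomposition $F=\nabla R+\sum_i\lambda_i\nabla f_i$ gives $\langle\nabla R,F\rangle=\|F\|_2^2$, hence $\frac{\diff R(x(t))}{\diff t}=-\tfrac14\|F(x(t))\|_2^2\le 0$. Since $R\ge 0$, this yields a limit $R_\infty:=\lim_{t\to\infty}R(x(t))$ and the integrability $\int_0^\infty\|F(x(t))\|_2^2\diff t\le 4(R(x(0))-R_\infty)<\infty$. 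Because $R(x)\ge c_0(\|u\|_2^2+\|v\|_2^2)$ with $c_0=\tfrac{4}{n}\min_j\sum_i z_{i,j}^2>0$ (almost surely), the sublevel set $\{R\le R(x(0))\}\cap\Gamma$ is bounded, so the entire trajectory lies in a compact set.

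Next I would extract a limit point $x_\infty$ with $F(x_\infty)=0$. The integrability of $\|F\|_2^2$ forces $\liminf_{t\to\infty}\|F(x(t))\|_2=0$, so one can pick $t_n\to\infty$ with $\|F(x(t_n))\|_2\to 0$; compactness yields a subsequential limit $x_\infty$, and continuity of $F$ (from \Cref{lem:F_continuous}), $R$, and $L$ gives $F(x_\infty)=0$, $R(x_\infty)=R_\infty$, and $L(x_\infty)=0$ (the flow stays on $\Gamma$ because $F$ is tangent to $\Gamma$ by \Cref{lem:olm_F}). Thus $x_\infty\in\overline{\Gamma}$, so \Cref{lem:olm_PL_condition} supplies a neighborhood $U'$ of $x_\infty$ and a constant $c>0$ with $\|F(x)\|_2^2\ge c(R(x)-R_\infty)$ for every $x\in U'\cap\overline{\Gamma}$.

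The heart of the argument is turning PL into a length bound. While $x(t)\in U'$ one has $\|F(x(t))\|_2\ge\sqrt{c(R(x(t))-R_\infty)}$, so
\[
\left\|\tfrac{\diff x}{\diff t}\right\|_2=\tfrac14\|F\|_2\le\tfrac{\|F\|_2^2}{4\sqrt{c(R-R_\infty)}}=-\tfrac{2}{\sqrt c}\tfrac{\diff}{\diff t}\sqrt{R(x(t))-R_\infty}.
\]
To make this globally applicable I would pick a sub-neighborhood $U''$ with $x_\infty\in U''\Subset U'$, then choose $T$ from the sequence $\{t_n\}$ large enough that $x(T)\in U''$ and $\tfrac{2}{\sqrt c}\sqrt{R(x(T))-R_\infty}<d(U'',\partial U')$. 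Setting $T^{*}:=\sup\{T'\ge T:x(t)\in U'\text{ for all }t\in[T,T']\}$ and integrating the displayed inequality on $[T,T^{*})$ yields $\|x(T^{*})-x(T)\|_2\le\tfrac{2}{\sqrt c}\sqrt{R(x(T))-R_\infty}<d(U'',\partial U')$, which together with $x(T)\in U''$ contradicts $x(T^{*})\in\partial U'$ unless $T^{*}=\infty$.

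Hence the trajectory remains in $U'$ for all $t\ge T$ and the same integration gives $\int_T^\infty\|\tfrac{\diff x}{\diff t}\|_2\diff t<\infty$, so the trajectory has finite total length, is Cauchy, and converges; the limit must coincide with $x_\infty$ and satisfies $F(x_\infty)=0$ by continuity. The main obstacle is precisely this localization step: PL is only local, so one cannot naively integrate the differential inequality to infinity; the delicate point is the self-consistent continuation argument where the length bound itself prevents the trajectory from ever leaving $U'$, which is what closes the loop and turns subsequential convergence into full convergence.
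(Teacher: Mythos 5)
Your proof is correct and follows essentially the same approach as the paper: establish that $R$ decreases and the trajectory is bounded, extract a limit point $x_\infty$, invoke the local PL condition (\Cref{lem:olm_PL_condition}) near $x_\infty$, integrate $\|\diff x/\diff t\|_2\le -\tfrac{2}{\sqrt c}\,\diff\!\sqrt{R-R_\infty}/\diff t$ to bound trajectory length, and use the resulting length bound to show the flow cannot exit the PL neighborhood. Your explicit subsequential extraction of a point with $F(x_\infty)=0$ and the $U''\Subset U'$ continuation argument are slightly more fastidious than the paper's terse version, but the underlying mechanism is identical.
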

    
    \begin{proof}[Proof of \Cref{lem:olm_finite_length}]
    Note that along the Riemannian gradient flow, $R(x(t))$ is non-increasing, thus $\norm{x(t)}_2$ is bounded over time and $\{x(t)\}_{t\geq 0}$ has at least one limit point, which we will call $x^*$. 
    Therefore, $R(x^*)$ is a limit point of $R(x(t))$, and again since $R(x(t))$ is non-increasing, it follows that $R(x(t))\ge R(x^*)$ and $\lim_{t\to\infty} R(x(t))=R(x^*)$. 
    Below we will show $\lim_{t\to\infty}x(t)=x^*$. 
    
    Note that $\frac{\diff R(x(t))}{\diff t} = \inner{\nabla R(x(t))}{\frac{\diff x(t)}{\diff t}} = 
    -\inner{\nabla R(x(t))}{\frac{1}{4}F(x(t))} = -\frac{1}{4}\norm{F(x(t))}_2^2$ where the last equality applies \Cref{lem:olm_F}. 
    By \Cref{lem:olm_PL_condition}, there exists a neighbourhood of $x^*$, $U'$, in which PL condition holds of $F$. 
    Since $x^*$ is a limit point, there exists a time $T_0$, such that $x_{T_0}\in U$. 
    Let $T_1=\inf_{t\ge T_0}\{ x(t)\notin U'\}$ (which is equal to $\infty$ if $x(t)\in U'$ for all $t\geq T_0$). 
    Since $x(t)$ is continuous in $t$ and $U$ is open, we know $T_1>T_0$ and for all $t\in[T_0,T_1)$, we have $\norm{F(x(t))}_2\ge \sqrt{c}(R(x(t))-R(x^*))^{1/2}$.
    
    Thus it holds that for $t\in[T_0,T_1)$, 
    \begin{align*}
        \frac{\diff (R(x(t))-R(x^*))}{\diff t} \le - \frac{\sqrt{c}}{4} (R(x(t))-R(x^*))^{1/2}\norm{F(x(t))}_2,
    \end{align*}
    that is, 
    \begin{align*}
        \frac{\diff (R(x(t))-R(x^*))^{1/2}}{\diff t} \le - \frac{\sqrt{c}}{8}\norm{F(x(t))}_2.
    \end{align*}
    Therefore, we have 
    \begin{align}\label{eq:olm_finite_length}
        \int_{t=T_0}^{T_1} \norm{F(x(t))}_2\diff t \le \frac{8}{\sqrt{c}} (R(x(T_0))-R(x^*))^{1/2}.
    \end{align}
    Thus if we pick $T_0$ such that $R(x({T_0}))-R(x^*)$ is sufficiently small, $R(T_1)$ will remain in $U$, which implies that $T_1$ cannot be finite and has to be $\infty$. 
    Therefore, \Cref{eq:olm_finite_length} shows that the trajectory of $x(t)$ is of finite length, so $x(\infty) := \lim_{t\to\infty} x(t)$ exists and is equal to $x^*$. 
    As a by-product, $F(x^*)$ must be $0$.
    \end{proof}
    
    Finally, collecting all the above lemmas, we are able to prove \Cref{lem:gf_optimal}. 
    In \Cref{lem:olm_finite_length} we already show the convergence of $x(t)$ as $t\to \infty$, the main part of the proof of \Cref{lem:gf_optimal} is to show the $x(\infty)$ cannot be sub-optimal stationary points of $R$ on $\overline{\Gamma}$, the closure of $\Gamma$.  The key idea here is that  we can construct a different potential $\phi$ for each such sub-optimal stationary point $x^*$, such that (1) $\phi(x_t)$ is locally increasing in a sufficiently neighborhood of $x^*$ and (2) $\lim_{x\to x^*}\phi(x)=-\infty$.
    \lemgfoptimal*
    \begin{proof}[Proof of \Cref{lem:gf_optimal}]
    We will prove by contradiction. 
    Suppose $x(\infty) = \binom{u(\infty)}{v(\infty)} = \lim_{t\to \infty} x(t)$ is not the optimal solution to \eqref{eq_ln_sgd_reg}.
    Denote $w(t) = (u(t))^{\odot 2} - (v(t))^{\odot 2}$, then $w(\infty) = \lim_{t\to\infty} w(t)$ is not the optimal solution to \eqref{eq_ln_sgd_reg_w}.
    Thus we have $R(w(t))>R(w^*)$.
    Without loss of generality, suppose there is some $q\in[d]$ such that $(u_i(\infty))^2 + (v_i(\infty))^2>0$ for all $i=1,\ldots,q$ and $u_i(\infty) = v_i(\infty) = 0$ for all $i=q+1,\ldots,d$. 
    Again, as argued in the proof of \Cref{lem:F_continuous}, we can assume that, for some $q'\in[q]$, 
    \begin{align}\label{z_property2}
        \{z_{i,1:q}\}_{i\in[q']} \text{ is linearly independent and } z_{i,1:q}=0 \text{ for all } i=q'+1,\ldots,n.
    \end{align}
    Since both $w(\infty)$ and $w^*$ satisfy the constraint that $Zw(\infty) = Zw^* = Y$, we further have
    \begin{align}\label{eq:zw_innerprod}
        0 = \langle z_i, w(\infty)\rangle = \langle z_i,w^*\rangle = \langle z_{i,{(q+1):d}}, w^*_{(q+1):d}\rangle,\quad \text{ for all } i=q'+1,\ldots,n.
    \end{align}
    
    Consider a potential function $\varphi:U\to \RR$ defined as
    \begin{align*}
        \varphi(x) = \varphi(u,v) = \sum_{j=q+1}^d w^*_j\left[\ln ({u_j})^2\ind\{w^*_j>0\} - \ln ({v_j})^2\ind\{w^*_j<0\}\right].
    \end{align*}
    
    Clearly $\lim_{t\to \infty}\varphi(x(t)) = -\infty$ if $\lim_{t\to \infty }x(t) = x(\infty)$.  Below we will show contradiction if $x(\infty)$ is suboptimal. 
    Consider the dynamics of $\varphi(x)$ along the Riemannian gradient flow:
    \begin{align}\label{eq:potential_dynamics1}
        \frac{\diff \varphi}{\diff t}(x(t))&= \left\langle\nabla \varphi(x(t)), \frac{\diff x(t)}{\diff t} \right\rangle = -\left\langle \nabla\varphi(x(t)), \frac{1}{4} F(x(t)) \right\rangle
    \end{align}
    where $F$ is defined previously in \Cref{lem:olm_F}. 
    Recall the definition of $F$, and we have
    \begin{align}\label{eq:potential_dynamics2}
        \langle\nabla\varphi(x(t)), F(x(t))\rangle &= \underbrace{\left\langle \nabla\varphi(x(t)), \frac{1}{4} \nabla R(x(t)) + \frac{1}{4}\sum_{i=1}^{q'} \lambda_i(x(t))\nabla f_i(x(t))  \right\rangle}_{\cI_1}\notag\\
        &\qquad + \underbrace{\left\langle \nabla\varphi(x(t)), \frac{1}{4}\sum_{i=q'+1}^n \lambda_i(x(t)) \nabla f_i(x(t)) \right\rangle}_{\cI_2}.
    \end{align}
    To show $\langle\nabla\varphi(x(t)),F(x(t))\rangle<0$, we analyze $\cI_1$ and $\cI_2$ separately. By the definition of $\varphi(x)$, we have
    \begin{align*}
        \nabla\varphi(x) = \sum_{j=q+1}^d 2w^*_j\left[\frac{\ind\{w^*_j>0\}}{{u_j}} \cdot e_j - \frac{\ind\{w^*_j<0\}}{{v_j}} \cdot e_{D+j} \right]
    \end{align*}
    where $e_j$ is the $j$-th canonical base of $\RR^d$. 
    Recall that $\nabla f_i(x) = 2\binom{z_i\odot u}{-z_i\odot v}$, and we further have
    \begin{align}
        \cI_2 &= \sum_{i=q'+1}^n \lambda_i(x(t)) \sum_{j=q+1}^d w^*_j\left[\frac{\ind\{w^*_j>0\}}{{u_j}}\langle e_j,z_i\odot u\rangle + \frac{\ind\{w^*_j<0\}}{{v_j}}\langle e_j,z_i\odot v\rangle\right] \notag\\
        &= \sum_{i=q'+1}^n \lambda_i(x(t))\sum_{j=q+1}^d w^*_j\left[\frac{\ind\{w^*_j>0\}}{{u_j}} z_{i,j} {u_j} + \frac{\ind\{w^*_j<0\}}{{v_j}} z_{i,j}{v_j}\right] \notag\\
        &= \sum_{i=q'+1}^n\lambda_i(x(t))\sum_{j=q+1}^d w^*_jz_{i,j} = \sum_{i=q'+1}^n \lambda_i(x(t)) \langle z_{i,(q+1):d},w^*_{(q+1):d}\rangle = 0 \label{eq:I2}
    \end{align}
    where the last equality follows from \eqref{eq:zw_innerprod}. 
    
    Next, we show that $\cI_1<0$ by utilizing the fact that $w^* - w(\infty)$ is a descent direction of $R'(w)$. 
    For $w\in\RR^d$, define $\tilde f_i(w) = z_i^\top w$ and
    \begin{align*}
        \tilde R(w) = R(w) + \sum_{i=1}^{q'} \lambda_i(x(\infty))(\tilde f_i(w)-y_i).
    \end{align*}
    Clearly, for any $w\in\RR^D$ satisfying $Zw=Y$, it holds that $\tilde f_i(w)-y_i=0$ for each $i\in[n]$, and thus $R(w) = \tilde R(w)$. 
    In particular, we have $\tilde R(w(\infty)) = R(w(\infty)) > R(w^*) = \tilde R(w^*)$. 
    Since $\tilde R(w)$ is a convex function, it follows that $\tilde R(w(\infty) + s(w^*-w(\infty)))\le s\tilde R(w^*) + (1-s)\tilde R(\infty) < \tilde R(w(\infty))$ for all  $0< s\le 1$, which implies $\frac{\diff \tilde R}{\diff t}(w(\infty)+s(w^*-w(\infty)))|_{s=0}<-2c<0^+$ for some constant $c>0$. 
    Note that, for small enough $s>0$, we have
    \begin{align*}
        R(w(\infty) + s(w^*-w(\infty))) &= \frac{4}{n}\sum_{j=1}^d\left(\sum_{i=1}^nz_{i,j}^2\right)|w_j(\infty) + s(w^*_j - w_j(\infty))|\\
        &= \frac{4}{n}\sum_{j=1}^q \left(\sum_{i=1}^n z_{i,j}^2\right)\sign(w_j(\infty)) (w_j(\infty)+s(w^*_j-w_j(\infty)))\\
        &\qquad  + \frac{4}{n}\sum_{j=q+1}^d \left(\sum_{i=1}^n z_{i,j}^2\right)s|w^*_j|.
    \end{align*}
    Therefore, we can compute the derivative with respect to $s$ at $s=0$ as
    \begin{align}\label{eq:tildeR_derivative}
        -2c &>\frac{\diff \tilde R}{\diff t} (w(\infty)+s(w^*-w(\infty)))\bigg|_{s=0}\notag \\
        &= \frac{4}{n}\sum_{j=1}^q \left(\sum_{i=1}^n z_{i,j}^2\right) \sign(w_j(\infty))(w^*_j-w_j(\infty)) + \frac{4}{n}\sum_{j=q+1}^d\left(\sum_{i=1}^n z_{i,j}^2\right)|w^*_j| \notag\\
        &\quad +\sum_{i=1}^{q'}\lambda_i(x(\infty))z_i^\top(w^*-w_j(\infty))\notag\\
        &= \frac{4}{n}\sum_{j=1}^q \left(\sum_{i=1}^n z_{i,j}^2\right) \sign(w_j(\infty))(w^*_j-w(\infty)) + \frac{4}{n}\sum_{j=q+1}^d\left(\sum_{i=1}^n z_{i,j}^2\right)|w^*_j|\notag\\
        &\quad+\sum_{j=1}^q(w^*_j-w_j(\infty))\sum_{i=1}^{q'}\lambda_i(x(\infty))z_{i,j} + \sum_{j=q+1}^d w^*_j\sum_{i=1}^{q'}\lambda_i(x(\infty))z_{i,j}
    \end{align}
    where the second equality follows from the fact that $w_{(q+1):d}(\infty)=0$. 
    Since $x(t)$ converges to $x(\infty)$, we must have $F(x(\infty))=0$, which implies that for each $j\in\{1,\ldots,q\}$,
    \begin{align*}
         0 &= \frac{\partial R}{\partial {u_j}}(x(\infty)) + \sum_{i=1}^{q'} \lambda_i(x(\infty))\frac{\partial f_i}{\partial {u_j}}(x(\infty)) = 2u_j(\infty) \left[\frac{4}{n}\sum_{i=1}^n z_{i,j}^2 + \sum_{i=1}^{q'} \lambda_i(x(\infty)) z_{i,j}\right], \\
        0 &= \frac{\partial R}{\partial {v_j}}(x(\infty)) + \sum_{i=1}^{q'} \lambda_i(x(\infty))\frac{\partial f_i}{\partial {v_j}}(x(\infty)) = 2v_j(\infty) \left[\frac{4}{n}\sum_{i=1}^n z_{i,j}^2 - \sum_{i=1}^{q'} \lambda_i(x(\infty)) z_{i,j}\right].
    \end{align*}
    Combining the above two equalities yields
    \begin{align*}
        \frac{4}{n}\sum_{i=1}^n z_{i,j}^2 = - \sign(w_j(\infty)) \sum_{i=1}^{q'} \lambda_i(x(\infty)) z_{i,j},\quad \text{for all } j\in [q].
    \end{align*}
    Apply the above identity together with \eqref{eq:tildeR_derivative}, and we obtain
    \begin{align}\label{eq:tildeR_derivative2}
        -2c &> \sum_{j=1}^q -\sign(w_j(\infty))^2(w^*_j-w(\infty))\sum_{i=1}^{q'}\lambda_i(x(\infty))z_{i,j} + \frac{4}{n}\sum_{j=q+1}^d\left(\sum_{i=1}^n z_{i,j}^2\right)|w^*_j|\notag\\
        &\qquad + \sum_{j=1}^q(w^*_j-w_j(\infty))\sum_{i=1}^{q'}\lambda_i(x(\infty))z_{i,j} + \sum_{j=q+1}^d w^*_j\sum_{i=1}^{q'}\lambda_i(x(\infty))z_{i,j}\notag\\
        &= \frac{4}{n}\sum_{j=q+1}^d\left(\sum_{i=1}^n z_{i,j}^2\right)|w^*_j| + \sum_{j=q+1}^d w^*_j\sum_{i=1}^{q'}\lambda_i(x(\infty))z_{i,j}
    \end{align}
    
    On the other hand, by directly evaluating $\nabla R(x(t))$ and each $\nabla f_i(x(t))$, we can compute $\cI_1$ as
    \begin{align*}
        \cI_1 &= \sum_{j=q+1}^d\frac{w^*_j\ind\{w^*_j>0\}}{u_j(t)}\left[\frac{2}{n}\sum_{i=1}^n z_{i,j}^2u_j(t) +\frac{1}{2} \sum_{i=1}^{q'}\lambda_i(x(t))z_{i,j}u_j(t)\right]\\
        &\qquad - \sum_{j=q+1}^d\frac{w^*_j\ind\{w^*_j<0\}}{v_j(t)}\left[\frac{2}{n}\sum_{i=1}^nz_{i,j}^2v_j(t) - \frac{1}{2} \sum_{i=1}^{q'}\lambda_i(x(t))z_{i,j}v_j(t)\right]\\
        &= \frac{2}{n}\sum_{j=q+1}^d \left(\sum_{i=1}^n z_{i,j}^2\right)|w^*_j| + \frac{1}{2}\sum_{j =q+1}^d w^*_j\sum_{i=1}^{q'}\lambda_i(x(t))z_{i,j}\\
        &= \frac{2}{n}\sum_{j=q+1}^d \left(\sum_{i=1}^nz_{i,j}^2\right)|w^*_j| + \frac{1}{2}\sum_{j =q+1}^d w^*_j\sum_{i=1}^{q'}\lambda_i(x(\infty))z_{i,j}\\
        &\qquad + \frac{1}{2}\sum_{j=q+1}^d w^*_j\sum_{i=1}^{q'} \left(\lambda_i(x(t)) - \lambda_i(x(\infty))\right)z_{i,j}.
    \end{align*}
    We already know that $\lambda_{1:q'}(x)$ is continuous at $x(\infty)$ by the proof of \Cref{lem:F_continuous}, so the third term converges to 0 as $x(t)$ tends to $x(\infty)$. 
    Now, applying \eqref{eq:tildeR_derivative2}, we immediately see that there exists some $\delta>0$ such that $\cI_1<-c$ for $x(t)\in B_\delta(x(\infty))$. 
    As we have shown in the above that $\cI_2=0$, it then follows from \eqref{eq:potential_dynamics1} and \eqref{eq:potential_dynamics2} that
    \begin{align}\label{eq:potential_positive}
        \frac{\diff\varphi}{\diff t}(x(t)) >c, \quad\text{for all }x(t)\in B_\delta(x(\infty)).
    \end{align}
    Since $\lim_{t\to\infty} x(t) = x(\infty)$, there exists some $T>0$ such that $x(t)\in B_\delta(x(\infty))$ for all $t>T$. 
    By the proof of\Cref{lem:olm_gf_infinite_time}, we know that $\varphi(x(T))>-\infty$, then it follows from \eqref{eq:potential_positive} that
    \begin{align*}
        \lim_{t\to\infty} \varphi(x(t)) &= \varphi(x(T)) + \int_T^\infty \frac{\diff\varphi(x(t))}{\diff t} \diff t >  \varphi(x(T)) + \int_T^\infty c\diff t = \infty
    \end{align*}
    which is a contradiction. 
    This finishes the proof.
    \end{proof}
    
    \subsection{Proof of \Cref{thm:olm_kernel_lower_bound}}\label{appsec:olf_proof_of_lower_bound}
    Here we present the lower bound on the sample complexity of GD in the kernel regime.
    
    \thmolmkernellowerbound*
    \begin{proof}[Proof of \Cref{thm:olm_kernel_lower_bound}]
        We first simplify the loss function by substituting $x'= x-x(0)$, so correspondingly $x'_0=0$ and we consider $L'(x'):= L(x)= (\inner{\nabla f_i(x(0))}{x'}-y_i)^2$.
        We can think as if GD is performed on $L'(x')$. 
        For simplicity, we still use the $x$ and $L(x)$ notation in below.
        
        In order to show test loss lower bound against a single fixed target function, we must take the properties of the algorithm into account. 
        The proof is based on the observation that GD is rotationally equivariant \citep{ng2004feature,li2020convolutional} as an iterative algorithm, i.e., if one rotates the entire data distribution (including both the training and test data), the expected loss of the learned function remains the same. 
        Since the data distribution and initialization are invariant under any rotation, it means the expected loss of $x(T)$ with ground truth being $w^*$ is the same as the case where the ground truth is uniformly randomly sampled from all vectors of $\ell_2$-norm $\norm{w^*}_2$. 
        
        Thus the test loss of $x(T)$ is 
        \begin{align}
            \EE_{z}\left[(\inner{\nabla f_z(x(0))}{x(T)}- \inner{z}{w^*})^2\right] = \EE_{z}\left[\left(\inner{z}{w^* - (u(T)-v(T))}\right)^2\right] = \norm{w^* - (u(T)-v(T))}_2^2.
        \end{align}
        Note $x(T)\in \textrm{span}\{\nabla f_x(x(0))\}$, which is at most an $n$-dimensional space spanned by the gradients of model output at $x(0)$, so is $u(T)-v(T)$. 
        We denote the corresponding space for $u(T)-v(T)$ by $S$, so $\dim(S)\leq n$ and it holds that $\norm{w^* - (u(T)-v(T))}_2^2\ge \norm{(I_D-P_S)w^*}_2^2$, where $P_S$ is projection matrix onto space $S$.
        
        The expected test loss is lower bounded by
        \begin{align*}
            \EE_{w^*}\left[\EE_{z_i}\left[\norm{w^* - (u(T)-v(T))}_2^2\right]\right] &= \EE_{z_i}\left[\EE_{w^*}\left[\norm{w^* - (u(T)-v(T))}_2^2\right]\right]\\
            &\geq \min_{\{z_i\}_{i\in[n]}}\EE_{w^*} \left[\norm{(I_D-P_S)w^*}_2^2\right]\\ 
            &\ge \left(1-\frac{n}{d}\right) \norm{w^*}_2^2.
         \end{align*}
    \end{proof}

\end{document}